\def \w {\mathbf{w}}
\def \z {\mathbf{z}}
\def \W {\mathbf{W}}
\def \E {\mathbb{E}}
\def \I {\mathbb{I}}
\def \G {\mathcal{G}} 
\def \R {\mathbb{R}} 
\def \ty {\tilde{y}}  
\def \a {\mathbf{a}}
\def \bx {\bar{x}}
\def \by {\bar{y}}
\def \bu {\bar{u}}
\def \bv {\bar{v}}
\newtheorem{assumption}{Assumption} 
\newlength\myindent
\begin{document}

\title{Fast Objective \& Duality Gap Convergence for Non-Convex Strongly-Concave Min-Max Problems with PL 
Condition}

\author{\name Zhishuai Guo$^\dagger$ 
        \email zhishguo@tamu.edu \\
        \name Yan Yan$^\ddagger$
        \email yan.yan1@wsu.edu\\
        \name Zhuoning Yuan$^\S$
        \email zhuoning-yuan@uiowa.edu\\
        \name Tianbao Yang$^\dagger$ \email tianbao-yang@tamu.edu\\ 
        \addr $^\dagger$Department of Computer Science and Engineering,  Texas A$\&$M University \\
        \addr $^\ddagger$School of Electrical Engineering and Computer Science, Washington State University\\
        \addr $^\S$Department of Computer Science, The University of Iowa
        } 
\editor{Francesco Orabona} 

\maketitle

\begin{abstract}
This paper focuses on stochastic methods for solving smooth non-convex strongly-concave min-max problems, which have received increasing attention  due to their potential applications in deep learning (e.g., deep AUC maximization, distributionally robust optimization).  
However, most of the existing algorithms are slow in practice, and their analysis revolves around the convergence to a nearly stationary point.
We consider leveraging the  Polyak-\L ojasiewicz (PL) condition to design faster stochastic algorithms with stronger convergence guarantee. Although PL condition has been utilized for designing many stochastic minimization algorithms, their applications for non-convex min-max optimization remain rare. 
In this paper, we propose and analyze a generic framework of proximal stage-based method with many well-known stochastic updates embeddable. Fast convergence is established in terms of both {\bf the primal objective gap and the duality gap}. Compared with existing studies,  (i) our analysis is  based on a novel Lyapunov function consisting  of the primal objective gap and the duality gap of a regularized function, and (ii) the results are more comprehensive with improved rates that have better dependence on the condition number under different assumptions.   
We also conduct deep and non-deep learning experiments to verify the effectiveness of our methods. 
\end{abstract}
\begin{keywords}
Min-Max Problems, Non-Convex Optimization, Stochastic Optimization, PL Condition, Proximal Stage-Based Method
\end{keywords}

\section{Introduction}
Min-max optimization has a broad range of applications in machine learning. In this paper, we consider a family of min-max optimization problems where the objective function is non-convex in terms of the min variable and is strongly concave in terms of the max variable. It covers a number of important applications in machine learning, such as deep AUC maximization \citep{ying2016stochastic,liu2019stochastic,guo2020communication} and distributionally robust optimization (DRO) \citep{namkoong2016stochastic,namkoong2017variance,rafique2018non}.  In particular,  we study stochastic gradient methods for solving the following {\bf non-convex strongly-concave (NCSC)} min-max problem: 
\begin{align}\label{eqn:op} 
    \min\limits_{x\in \R^d} \max\limits_{y\in \mathcal{Y}} f(x, y),
\end{align} 
where $\mathcal{Y}\subseteq\R^{d'}$ is a convex closed set, $f(x, y)$ is smooth, non-convex in $x$ and strongly concave in $y$. 
We assume the optimization is only through a stochastic gradient oracle that for any $x, y$ returns unbiased stochastic gradient $(\mathcal G_x(x, y; \xi), \mathcal G_y(x, y; \xi))$, i.e.,  $\E[\mathcal G_x(x, y; \xi)] = \nabla f_x(x, y)$ and $\E[\mathcal G_y(x, y; \xi)] = \nabla f_y(x, y)$.  

Stochastic algorithms for solving~(\ref{eqn:op}) have been studied in some recent papers~\citep{lin2019gradient,lin2020near,liu2019stochastic,rafique2018non,yan2020sharp,yang2020global}. However, most of them are slow in  practice by suffering from a high order of stochastic first-order oracle call complexity, while others hinge on a special structure of the objective function for constructing the  update~\citep{liu2019stochastic}.  {\it How to improve the convergence for generic non-convex strongly-concave min-max problems remains an active research area.}  There are two lines of work trying to reduce the stochastic first-order oracle call complexity of stochastic algorithms for NCSC min-max optimization.  The first line is to leverage the geometrical structure of the objective function, in particular the   Polyak-\L ojasiewicz (PL) condition~\citep{liu2019stochastic,yang2020global}.  The second line is 
leverage variance-reduction techniques ~\citep{luo2020stochastic,yang2020global,DBLP:journals/corr/abs-2008-08170,xu2020enhanced,rafique2018non}.  

In this paper, we conduct a comprehensive study to improve the convergence for NCSC min-max optimization by leveraging the Polyak-\L ojasiewicz (PL) condition of the objective function. 
A smooth function $h(x)$ satisfies $\mu$-PL condition on $\R^d$, if for any $x\in \R^d$ there exists $\mu>0$ such that $\|\nabla h(x)\|^2\geq 2\mu(h(x) - h(x_*))$, where $x_*$ denotes a global minimum of $h$. 
Although the PL condition has been utilized extensively to improve the convergence for minimization problems~\citep{allen2018convergence,arora2018convergence,charles2017stability,du2018gradient,hardt2016identity,karimi2016linear,lei2017non,li2018learning,li2017convergence,li2018simple,nguyen2017stochastic,polyak1963gradient,reddi2016stochastic,wang2018spiderboost,zhou2018stochastic,zhou2017characterization}, its application to non-convex min-max problems remains rare~\citep{liu2019stochastic,nouiehed2019solving,yang2020global}.  The key difference between the present work and these previous studies is that we focus on {\bf improving the dependence of convergence rate on the condition number} (the ratio of smoothness parameter to the PL constant)  for NCSC min-max optimization.  Our contributions are summarized below. 

\begin{itemize}[leftmargin=*]
\item{\bf Algorithms.} We analyze a generic framework of  proximal stage-wise stochastic (PES) method, which in design is similar to practical stochastic gradient methods for deep learning.  In particular, the step sizes are decreased geometrically in a stage-wise manner. Various stochastic updates can be leveraged as a plug-in in the PES framework, including stochastic optimistic gradient descent ascent (OGDA) update,  stochastic gradient descent ascent (SGDA) update, and min-max adaptive stochastic gradient (AdaGrad) update, and min-max STORM update (a recursive variance reduced method). 

\item {\bf Analysis.} We conduct novel analysis of the proposed stochastic methods by establishing fast convergence in terms of both {\it the primal objective gap} and {\it the duality gap} under different PL conditions. The analysis is based on a novel Lyapunov function that consists of the primal objective gap and the duality gap of a regularized problem. The convergence of the primal objective gap  only requires a weaker PL condition defined on the primal objective. For the convergence of the duality gap,  the objective function satisfying a pointwise PL condition in terms of $x$ is assumed. 

\item {\bf Improvements.} We make non-trivial improvements of the basic convergence rate by improving its dependence on the condition number under different conditions, include the almost-convexity condition with a small weak-convexity parameter, the slow growth condition of stochastic gradient for AdaGrad update, the individual smoothness condition for STORM update. The  dependence on the condition number can be reduced from $O(\ell^4/\mu^2)$ to $O(\ell^2/\mu)$ and $O(\ell/\mu)$ under appropriate conditions. We summarize our convergence results  on both objective gap and duality gap in Table \ref{tab:compare_with_two_sided_PL}.

\end{itemize}
\vspace*{-0.1in}
Finally, we demonstrate the effectiveness of the proposed methods on non-convex AUC maximization with a square surrogate loss and non-convex distributionally robust optimization. 
It is also notable that the proposed method has been used in the literature for maximizing a robust objective for deep AUC maximization~\citep{robustdeepAUC}, which further demonstrates the effectiveness of the proposed methods. 

\begin{table*}[t]
\caption{Comparison of sample complexities for achieving $\epsilon$-Objective Gap and and $\epsilon$-Duality Gap. $P(x)$ is $L$-smooth and is assumed to obey $\mu$-PL condition;  $f(x, y)$ is $\ell$-smooth in terms of $x$ and $y$, and is $\mu_y$ strongly concave in terms of $y$. For duality gap convergence, it requires a stronger assumption that $f(x, y)$  satisfies $x$-side $\mu_x$-PL condition. $^*$ marks the results that are not available in the original work but are derived by us.}

\scalebox{0.65}{ 
\begin{tabular}{c|c|c|c|c|c} 
\hline
& \multicolumn{2}{|c|}{Objective Gap} & \multicolumn{2}{|c|}{Duality Gap} & Remarks on \\  
& \multicolumn{2}{|c|}{} & \multicolumn{2}{|c|}{} & Conditions \\ 
\hline
& $L = \ell+\frac{\ell^2}{\mu_y} $   & $L<\ell+\frac{\ell^2}{\mu_y}$ & $L = \ell+\frac{\ell^2}{\mu_y} $   & $L<\ell+\frac{\ell^2}{\mu_y}$ \\
\hline
\makecell{Stoc-AGDA\\\citep{yang2020global}}& $O\left(\frac{\ell^5}{\mu^2 \mu_y^4 \epsilon}\right)$  &$O\left(\frac{\ell^5}{\mu^2 \mu_y^4 \epsilon}\right)$  & $O\left(\frac{\ell^7}{\mu^2\mu_x \mu_y^5 \epsilon}\right)^*$ 
& $O\left(\frac{\ell^7}{\mu^2\mu_x \mu_y^5 \epsilon}\right)^*$ & w/o strong concavity\\
\hline 
\makecell{PES-OGDA\\PES-SGDA}& $\widetilde{O}\left(\frac{\ell^4}{\mu^2 \mu_y^3 \epsilon}\right)$  & $\widetilde{O}\left(\frac{(L+\ell)^2}{\mu^2 \mu_y\epsilon }\right)$ &   $\widetilde{O}\left(\frac{\ell^5}{\mu^2 \mu_x \mu_y^3 \epsilon}\right)$ & $\widetilde{O}\left(\frac{(L+\ell)^2 \ell}{\mu^2\mu_x \mu_y\epsilon}\right)$ & w/ strong concavity\\ 
\hline 
\hline 
\makecell{PES-OGDA\\PES-SGDA}& $\widetilde{O}\left(\frac{\ell}{\min\{\mu,\mu_y\} \epsilon}\right)$  &    $\widetilde{O}\left(\frac{\ell}{\min\{\mu,\mu_y\} \epsilon}\right)$  &    $\widetilde{O}\left(\frac{\mu\ell}{\mu_x\min\{\mu,\mu_y\} \epsilon}\right)$ & $\widetilde{O}\left(\frac{\mu\ell}{\mu_x\min\{\mu,\mu_y\} \epsilon}\right)$ & \makecell{$\rho$-weakly Convex\\ $\rho<O(\mu)$} \\ 
\hline
PES-AdaGrad& $\widetilde{O}\left(\left(\frac{\ell^4}{\mu^2\mu^3_y\epsilon} \right)^{\frac{1}{2(1-\alpha)}}\right)$  &$\widetilde{O}\left(\frac{(L+\ell)^2}{\mu^2\mu_y\epsilon}\right)^{\frac{1}{2(1-\alpha)}}$ & $\widetilde{O}\left( \left(\frac{\ell^5}{\mu^2\mu_x\mu^3_y\epsilon}   \right)^{\frac{1}{2(1-\alpha)}}\right)$   &$\widetilde{O}\left(\left(\frac{(L+\ell)^2\ell}{\mu^2\mu_x\mu_y\epsilon}\right)^{\frac{1}{2(1-\alpha)}} \right)$ & \makecell{Slow SG Growth\\ (growth rate $\alpha\in(0,1/2)$)} \\
\hline 
PES-STORM& $\widetilde{O} \left( \frac{\ell^2}{\mu \mu_y^2 \epsilon} \right)$  &  $\widetilde{O} \left( \frac{\ell^2}{\mu \mu_y^2 \epsilon} \right)$  & $\widetilde{O} \left( \frac{\ell^4}{\mu \mu_x \mu_y^3 \epsilon} \right)$
& $\widetilde{O} \left( \frac{\ell^4}{\mu \mu_x \mu_y^3 \epsilon} \right)$    &Individual Smoothness \\   
\hline 
\end{tabular} 
}

\label{tab:compare_with_two_sided_PL}
\end{table*}
\section{Related Work}
\label{sec:relatedwork}
\subsection{Non-Convex Min-Max Optimization}
Recently, there has been an increasing interest on non-convex min-max optimization  \citep{rafique2018non,jin2019minmax,lin2018solving,lin2019gradient,liu2020towards,lu2019hybrid,nouiehed2019solving,sanjabi2018solving,DBLP:conf/nips/Thekumparampil019,ostrovskii2020efficient,lin2020near,yang2020global,luo2020stochastic,xu2020enhanced,DBLP:journals/corr/abs-2008-08170,DBLP:conf/nips/Tran-DinhLN20,lu2019hybrid,2020arXiv200713605I,zhao2020primal,wang2020zeroth,yang2020catalyst,zhang2021complexity,qiu2020single,han2021lower,tran2020hybrid,huang2021efficient,xian2021faster,luo2021finding,fiez2021minimax,xu2021zeroth,lei2021stability}.   
Below, we focus on related works on stochastic optimization for non-convex concave min-max problems.  \citet{rafique2018non} proposed stochastic algorithms for solving non-smooth weakly-convex and concave problems based on a proximal point method \citep{rockafellar1976monotone}. 
They established a convergence to a nearly stationary point of the primal objective function in the order of $O(1/\epsilon^6)$, where $\epsilon$ is the  level for the first-order stationarity. When the objective function is strongly concave in terms of $y$ and has certain special structure, they can reduce the stochastic first-order oracle call complexity to $O(1/\epsilon^4)$. The same order stochastic first-order oracle call complexity was achieved in~\citep{yan2020sharp} for weakly-convex strongly-concave problems without  a special structure of the objective function.  \citet{lin2019gradient} analyzed  a single-loop stochastic gradient descent ascent method for smooth non-convex (strongly)-concave min-max problems. Their analysis yields an stochastic first-order oracle call complexity of $O(1/\epsilon^8)$ for smooth non-convex concave problems and  $O(1/\epsilon^4)$ for smooth non-convex strongly-concave problems.
Recently, \citet{2020arXiv200713605I} extends the analysis to stochastic alternating (proximal) gradient descent ascent method.
Improved first-order convergence for smooth problems has been established by leveraging  variance-reduction techniques  in~\citep{luo2020stochastic,yang2020global,DBLP:journals/corr/abs-2008-08170,xu2020enhanced,rafique2018non}. However, none of these works explicitly use the PL condition to improve the convergence. Directly applying PL condition to the first-order convergence result leads to a stochastic first-order oracle call complexity worse than $O(1/\epsilon)$ for the objective gap.

\subsection{PL Games}  PL conditions have been considered in min-max games. For example, \citet{nouiehed2019solving} assumed that $h_x(y) = - f(x, y)$ satisfies PL condition for any $x$, which is referred to as {\bf $y$-side PL condition}. The authors utilize the condition to design deterministic multi-step gradient descent ascent method for finding a first-order stationary point. In contrast, we consider the objective is strongly concave in terms of $y$, which is stronger than $y$-side pointwise PL condition. Recently, \citet{liu2018fast} assume a PL condition for a NCSC formulation of deep AUC maximization, in which the PL condition is defined over the primal objective $P(x) = \max_{y\in \mathcal{Y}}f(x, y)$, which is referred to as {\bf primal PL condition}. They established a stochastic first-order oracle call complexity  of $O(1/\epsilon)$ for the primal objective gap convergence only. However, their algorithm and analysis are not applicable to a general NCSC problem without a special structure. In contrast, our algorithm is more generic and simpler as well, and we derive stronger convergence result in terms of the duality gap. In addition, our analysis is based on a novel Lyapunov function that consists of the primal objective gap and the duality gap of a regularized function, which allows us to establish the convergence of both the primal objective gap and the duality gap.

More recently,  \citet{yang2020global} considered a class of smooth non-convex non-concave problems, which satisfy both the $y$-side PL condition and $x$-side  PL condition\footnote{We notice that the $x$-side PL condition can be replaced by the primal PL condition for their analysis.}.  They proposed stochastic alternating gradient descent ascent (Stoc-AGDA)  algorithms and established a global convergence for a Lyapunov function $P(x_t) - P_* + \lambda (P(x_t) - f(x_t, y_t))$ for a constant $\lambda$, which directly implies the convergence for the primal objective gap. After some manipulation, we can also derive the convergence for the duality gap under the assumption that $x$-side PL condition holds. This work is different from \citep{yang2020global} in several perspectives: (i) their algorithm is based on alternating gradient descent ascent method with polynomially decreasing or very small step sizes, in contrast our algorithm is based on  stage-wise stochastic methods with geometrically decreasing step sizes. This feature makes our algorithm more amenable to deep learning applications~\citep{robustdeepAUC}; 
 (ii) we make use of strong concavity of the objective function in terms of $y$ and develop stronger convergence results. In particular, our stochastic first-order oracle call complexities have better dependence on condition numbers. 
 
Finally, we note that there are a lot of research on deep learning to justify the PL condition.
PL condition of a risk minimization problem has been shown to hold globally or locally on some networks
with certain structures, activation or loss  functions \citep{allen2018convergence,arora2018convergence,charles2017stability,du2018gradient,hardt2016identity,li2018learning,li2017convergence,zhou2017characterization}. 
For example, in \citep{du2018gradient}, they have shown that if the width of a two layer neural network is sufficiently large, PL condition holds within a ball centered at the initial solution and the global optimum would lie in this ball.
\citet{allen2018convergence} further shows that in overparameterized deep neural networks with ReLU activation, PL condition holds for a global optimum around a random initial solution. 

\section{Preliminaries}
We denote by $\|\cdot\|$ the Euclidean norm of a vector. 
A function $h(x)$ is $\lambda$-strongly convex on $\mathcal{X}$ if for any $x, x'\in \mathcal{X}$, $\nabla h(x')^{\top} (x-x') + \frac{\lambda}{2}\|x - x'\|^2 \leq h(x) - h(x')$.
A function $h(x)$ is $\rho$-weakly convex on $\mathcal{X}$ if for any $x, x'\in \mathcal{X}$, 
$\nabla h(x')^{\top} (x - x') - \frac{\rho}{2} \|x-x'\|^2 \leq h(x) - h(x')$. $h(x)$ is $L$-smooth if its gradient is $L$-Lipchitz continuous, i.e., $\|\nabla h(x) - \nabla h(x')\|\leq L\|x - x'\|, \forall x, x'\in \mathcal{X}$. An $L$-smooth function is also a $L$-weakly convex function.  A smooth function $h(x)$ satisfies $\mu$-PL condition on $\R^d$, if for any $x\in \R^d$ there exists $\mu>0$ such that $\|\nabla h(x)\|^2\geq 2\mu(h(x) - h(x_*))$, where $x_*$ denotes a global minimum of $h$. 
Let $\hat{x}(y) = \arg\min_{x'} f(x', y)$ denote the set of optimal $x$ for the fixed $y$ and when the context is clear we abuse the notation $\hat{x}(y)$ to denote any point in that set.  
Let $\hat{y}(x) = \arg\max\limits_{y'\in\mathcal{Y}} f(x, y')$ denote the optimal $y$ for the fixed $x$. 
 
For simplicity, we let $z= (x, y)^{\top}$, $\mathcal{Z}=\mathcal{X} \times \mathcal{Y}=\R^d\times \mathcal{Y}$, $F(z) = (\nabla_x f(x, y), - \nabla_y f(x, y))^{\top}$ and $\mathcal G(z;\xi) = (\nabla_x f(x, y; \xi), - \nabla_y f(x, y; \xi))^{\top}\in\R^{d+d'}$. We abuse the notations $\|z\|^2 = \|x\|^2 + \|y\|^2$ and $\|F(z) - F(z')\|^2 =\|\nabla_x f(x, y) - \nabla_x f(x', y')\|^2+ \|\nabla_y f(x, y)- \nabla_y f(x', y')\|^2 $. Let $P(x)=\max_{y\in \mathcal{Y}}f(x, y)$.  The primal objective gap of a solution $x\in \mathcal{X}$ is defined as $P(x) - \min_{x\in \mathcal{X}}P(x)$. 
Below, we state some assumptions that will be used  in our analysis.

\begin{assumption}\label{ass1}
(i) $F$ is $\ell$-Lipchitz continuous, i.e., $\|F(z) - F(z')\| \leq \ell \|z - z'\|$, for any $z, z'\in Z$
(ii) $f(x, y)$ is $\mu_y$-strongly concave in $y$ for any $x$;
(iii) $P(x)=\max_{y\in \mathcal{Y}}f(x, y)$ is $L$-smooth and has a non-empty optimal set. 
\end{assumption} 

{\bf Remark:} Assumption~\ref{ass1}(i) implies that $f(x,y)$ is $\ell$-smooth in terms of $x$ for any $y\in \mathcal{Y}$.  Note that under Assumption~\ref{ass1}(i) and (ii), we can derive that $P(x)$ is  $(\ell+\ell^2/\mu_y)$-smooth~\citep{lin2019gradient}. However, we note that the smoothness parameter $L$ could be much smaller than $(\ell+\ell^2/\mu_y)$, and hence we keep  dependence on $L$, $\ell$, $\mu_y$ explicitly. 
For example, consider $f(x,y) = x^{\top}y - \frac{\mu_y}{2}\|y\|^2 -(\frac{1}{2\mu_y} - \frac{L}{2})\|x\|^2$, $\mathcal{Y}=\R^{d'}$ with $L\ll 1\ll 1/\mu_y$. Then we can see that $F(z)$ is $\ell = (1+\frac{1}{\mu_y}  - L)$-Lipchitz continuous. 
However, $P(x) =  \frac{L}{2}\|x\|^2$ is $L$-smooth function and $L$ could be much smaller than $\ell+\ell^2/\mu_y$. 

The following assumption is assumed regarding the stochastic gradients unless specified otherwise.
\begin{assumption}\label{ass2}
There exists $\sigma>0$ such that $\E[\|\nabla_x f(x, y; \xi) - \nabla_x f(x, y)\|^2]\leq \sigma^2$ and $\E[\|\nabla_y f(x, y; \xi) - \nabla_y f(x, y)\|^2]\leq \sigma^2$. 
\end{assumption}
{\bf Remark: } In order to use a simple stochastic gradient descent ascent update, we need to impose a different (non-typical) assumption on stochastic gradients for analysis, i.e., there exists $B>0$ such that $\E[\|\nabla_x f(x, y; \xi)\|^2]\leq B^2$ and $\E[\|\nabla_y f(x, y; \xi)\|^2]\leq B^2$.

If $f(x, y)$ is $\ell$-smooth, it is then weakly convex with a coefficient $\rho$ no greater than $\ell$, however, $\rho$ can be much less than $\ell$. 
In order to explore possibilities for deriving faster convergence, we could leverage the weak convexity of $f(x, y)$ in terms of $x$. 
\begin{assumption}\label{ass5}
$f(x, y)$ is $\rho$-weakly convex in terms of $x$ for any $y\in \mathcal{Y}$ with $0<\rho\leq \ell$. 
\end{assumption}
For example, consider $f(x, y) = \ell x^{\top}y - \frac{\mu_y}{2}\|y\|^2 - \frac{\rho}{2}\|x\|^2$ with $\rho\leq  \ell$. Then $F(z)$ is $(\ell+\max(\rho,\mu_y))$-Lipchitz continuous. However, $f(x,y)$ is $\rho$-weakly convex in terms of $x$ for any $y$. 

In the algorithms, let $\Pi_{\bar z}(\mathcal G)\in \mathcal{Z}$ and $\Pi^{\gamma}_{\bar z, x_0}(\mathcal G)\in \mathcal{Z}$ be defined as 
\begin{equation}
\begin{aligned}
&\Pi_{\bar z}(\mathcal G) = \arg\min_{z\in \mathcal{Z}} \mathcal G^{\top}z + \frac{1}{2}\|z - \bar z\|^2, \\ 
&\Pi^{\gamma}_{\bar z, x_0}(\mathcal G) = \arg\min_{z\in \mathcal{Z}} \mathcal G^{\top}z + \frac{1}{2}\|z - \bar z\|^2 + \frac{\gamma}{2}\|x - x_0\|^2.
\end{aligned} 
\end{equation}  
Let $\mathcal{P}_{\mathcal{Y}}(\cdot)$ denote an Euclidean projection to $\mathcal{Y}$.

\section{PL-Strongly-Concave Problems and Applications in Machine Learning}
\label{sec:app}
Firstly, based on the definition of PL condition given in the last section, we define the different PL conditions for the min-max problem. 
\begin{definition}
$f(x, y)$ satisfies a primal $\mu$-PL condition for some constant $\mu>0$ if $P(x) = \max_{y\in \mathcal{Y}} f(x, y)$ satisfies $\mu$-PL condition, i.e., $\|\nabla P(x)\|^2 \geq 2\mu (P(x) - \min_{x'} P(x'))$.  
\end{definition}  

\begin{definition}
$f(x, y)$ satisfies a $x$-side $\mu_x$-PL condition for some constant $\mu_x>0$ if for any $y\in \mathcal{Y}$, $f(x, y)$ satisfies $\mu_x$-PL condition, i.e., $\forall y\in \mathcal{Y}$, $\|\nabla_x f(x, y)\|^2 \geq 2\mu_x (f(x, y) - f(\hat{x}(y), y))$.  
\end{definition} 

We  define almost PL conditions as follows.
\begin{definition}
$f(x, y)$ satisfies an $\epsilon$-almost primal $\mu$-PL condition if for $P(x) = \max_{y\in \mathcal{Y}} f(x, y)$, there exists  $\mu>0$ such that $\|\nabla P(x)\|^2 \geq 2\mu (P(x) - \min_{x'} P(x') - \epsilon)$, where $\epsilon>0$ is the accuracy level.  
\end{definition} 

\begin{definition}
$f(x, y)$ satisfies an $\epsilon$-almost $x$-side $\mu_x$-PL condition if there exists  $\mu_x>0$ such that $\|\nabla_x f(x, y)\|^2 \geq 2\mu_x (f(x, y) - f(\hat{x}(y), y) - \epsilon)$, where $\epsilon>0$ is the accuracy level.   
\end{definition} 
It is not hard to see that convergence rates under the $\epsilon$-almost $x$-side PL condition or the $\epsilon$-almost primal PL condition are identical to that under the $x$-side PL condition or the primal PL condition, respectively. Therefore, in the convergence analysis we focus on the $x$-side PL condition and the primal PL condition. 

We define two kinds of PL-strongly-concave problems as follows.
\begin{definition}
$f(x, y)$ is primal-PL-strongly-concave if $f(x, y)$ satisfies a primal $\mu$-PL condition and is strong concave in $y$ for any $x$.
\end{definition}
\begin{definition}
$f(x, y)$ is $x$-side-PL-strongly-concave if $f(x, y)$ satisfies a $x$-side $\mu_x$-PL condition and is strong concave in $y$ for any $x$.
\end{definition}

It has been shown in \cite{yang2020global} that the $x$-side $\mu_x$-PL condition of $f(x,y)$ is stronger than $\mu$-PL condition of $P(x)$ under strong concavity of $f(x,y)$ in terms of $y$. 

\begin{lemma}[Lemma A.3 of \citet{yang2020global}] 
If  $f(x, y)$ satisfies $x$-side $\mu_x$-PL condition on $\R^d$ and is strongly concave in $y$,  then $P(x)=\max_{y\in \mathcal{Y}}f(x, y)$  satisfies $\mu$-PL condition for some $\mu\geq \mu_x$. 
\label{lem:primal_PL} 
\end{lemma}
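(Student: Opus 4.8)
The plan is to reduce the primal PL inequality for $P$ to the $x$-side PL inequality for $f$ evaluated at the optimal dual variable. Since $f(x,\cdot)$ is $\mu_y$-strongly concave and $Y$ is closed convex, the inner maximizer $y^*(x) := \arg\max_{y\in Y} f(x,y)$ is unique for every $x$, and by Danskin's theorem (as used in \cite{lin2019gradient}) $P$ is differentiable with $\nabla P(x) = \nabla_x f(x, y^*(x))$. This is the only ``structural'' input I would need, and it is already available under Assumption~\ref{ass1}(i)--(iii).

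First I would apply Assumption~\ref{ass4} (the $x$-side $\mu_x$-PL condition of $f(\cdot,y)$) at the point $(x, y^*(x))$, which gives
\begin{align*}
\|\nabla P(x)\|^2 = \|\nabla_x f(x, y^*(x))\|^2 \ \ge\ 2\mu_x\Big(f(x, y^*(x)) - \min_{x'\in\R^d} f(x', y^*(x))\Big) \ =\ 2\mu_x\Big(P(x) - \min_{x'\in\R^d} f(x', y^*(x))\Big).
\end{align*}
The remaining step is to bound the dual-dependent quantity $\min_{x'} f(x', y^*(x))$ from above by the global primal optimum $P_* := \min_{x'\in\R^d} P(x')$. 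Let $x_*$ be any minimizer of $P$ (nonempty by Assumption~\ref{ass1}(iii)). Then
\begin{align*}
\min_{x'\in\R^d} f(x', y^*(x)) \ \le\ f(x_*, y^*(x)) \ \le\ \max_{y\in Y} f(x_*, y) \ =\ P(x_*) \ =\ P_*,
\end{align*}
where the first inequality specializes ``$\min$ over $x'$'' to the point $x_*$ and the second uses $f(x_*, y^*(x)) \le \max_{y} f(x_*, y)$. Combining the two displays yields $\|\nabla P(x)\|^2 \ge 2\mu_x(P(x) - P_*)$ for all $x$, i.e., $P$ obeys the $\mu_x$-PL condition; hence it obeys the $\mu$-PL condition for $\mu = \mu_x$, and the largest valid PL modulus of $P$ is therefore $\ge \mu_x$.

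I do not expect a genuine obstacle: once Danskin's identity $\nabla P(x) = \nabla_x f(x, y^*(x))$ is in hand, the argument is a short chain of inequalities. The one point that deserves care is the comparison $\min_{x'} f(x', y^*(x)) \le P_*$ — here $y^*(x)$ is tied to the current $x$, not to $x_*$, so the step works precisely because $P_*$ is itself a $\max$ over $y$ and thus dominates $f(x_*,\cdot)$ pointwise, in particular at $y = y^*(x)$. (If one wanted to avoid Danskin, an alternative would be to argue via the $L$-smoothness of $P$ and a descent-lemma comparison, but invoking Danskin is cleaner and the smoothness of $P$ is already assumed.)
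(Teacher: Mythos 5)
Your proof is correct. The paper itself does not supply a proof for this lemma---it cites it from \cite{yang2020global}---and your argument is exactly the standard one used there: invoke Danskin's theorem (valid under strong concavity in $y$ and smoothness) to identify $\nabla P(x)=\nabla_x f(x,y^*(x))$, apply the $x$-side $\mu_x$-PL inequality at the pair $(x,y^*(x))$, recognize $f(x,y^*(x))=P(x)$, and then dominate the reference value $\min_{x'}f(x',y^*(x))$ by $P_*$ via the chain $\min_{x'}f(x',y^*(x))\le f(x_*,y^*(x))\le \max_{y}f(x_*,y)=P_*$. Each inequality points in the right direction, so the conclusion $\|\nabla P(x)\|^2\ge 2\mu_x(P(x)-P_*)$ follows, which is precisely the claim (taking $\mu=\mu_x$). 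No gap, and no meaningfully different route from the cited source.
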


Here we show cases where the $x$-side $\mu_x$-PL condition holds or does not hold. Fortunately, $x$-side PL condition (Assumption \ref{ass4}) is only needed in  Section \ref{sec:dualitygap} to develop duality gap convergence. 
We can construct a function that does not obey a $x$-side  $\mu_x$-PL condition but satisfies a primal $\mu$-PL condition. Let us consider $f(x, y)= xy - \frac{1}{2}y^2 - \frac{1}{4}x^2$ and $\mathcal{Y}=\R$. First, we show that $\mu_x$-PL condition does not hold. To this end, fix $y=1$, we can see that $|\nabla_x f(x, y)|^2 = (1-x/2)^2$, and $\min_{x\in \mathcal{X}} f(x, 1) = \min_{x}x(1-x/4) -  \frac{1}{2}= - \infty$. Hence, for $x=2+\epsilon$, we have $|\nabla_x f(x, y)|^2=(\epsilon/2)^2$ and $f(x, 1) - \min_{x\in \mathcal{X}} f(x, 1) =  \infty$. However, there exists no constant $\mu_x$ such that $|\nabla_x f(x, y)|^2\geq \mu_x(f(x, 1) - \min_{x\in \mathcal{X}} f(x, 1))$ for $\epsilon\rightarrow 0$. Second, we can see that $P(x)=\max_{y}f(x, y) = \frac{x^2}{4}$ satisfies $\mu$-PL condition with $\mu = 1/2$.
This argument together with Theorem~\ref{thm:1} implies that our result for the convergence of the primal objective gap only requires a weaker $\mu$-PL condition other than the $x$-side $\mu_x$-PL condition imposed in~\citep{yang2020global}. 
An example that satisfies both the $x$-side $\mu_x$-PL condition and $y$-side strong concavity is $f(x, y) = \frac{1}{2}x^2 + \sin^2 x \sin^2 y - 2y^2$, which is verified in Lemma \ref{lem:x_side_PL_case} in the Appendix.

Instead of imposing the $x$-side PL condition as in \citep{yang2020global},
we use primal PL condition (Assumption~\ref{ass3}) for proving the convergence of the primal objective gap, and use $x$-side PL condition (Assumption~\ref{ass4}) only for proving the convergence of the duality gap. \cite{yang2020global} also makes an extra assumption that there exists a saddle point, i.e., there exists $(x_*, y_*)$ such that $f(x_*, y) \leq f(x_*, y_*) \leq f(x, y_*)$. However, we show in Lemma \ref{lem:saddle_point} that a saddle point $(x_*, y_*)$  exists for the $x$-side-PL-strongly-concave problem. 

\begin{lemma}
\label{lem:saddle_point} 
Assume $f(x, y)$ satisfies a $x$-side $\mu_x$-PL condition and is strongly concave in $y$ and let $x_* = \arg\min_{x'} P(x')$ where $P(x) = \max_{y\in \mathcal{Y}} f(x, y)$. Then $(x_*, \hat{y}(x_*))$ is a saddle point of $f(x, y)$. 
\end{lemma}

It has been shown in Lemma 2.1 of \citep{yang2020global} that if the $x$-side $\mu_x$-PL condition holds, then the saddle points, global min-max points, and stationary points are equivalent when $\mathcal{Y}=\mathbb{R}^{d'}$, where global min-max points, and stationary points are defined as 
\begin{enumerate}
    \item $(x_*, y_*)$ is a global min-max point if for any $(x, y)$: $f(x_*, y) \leq f(x_*, y_*) \leq \max_{y'} f(x, y')$.
    
    \item $(x_*, y_*)$ is a stationary point if $\nabla_x f(x_*, y_*) = \mathbf{0}$ and  $\nabla_y f(x_*, y_*) = \mathbf{0}$. 
\end{enumerate}

Next we show two concrete application examples of PL-strongly-concave problems in machine learning. 

{\bf Deep AUC Maximization}  
The area under the ROC curve (AUC) on a population level for a scoring function $h: \mathcal{X}\rightarrow\mathbb{R}$ is defined as 
\vspace{-0.1in} 
\begin{equation} 
AUC(h) = \text{Pr}(h(\mathbf{a}) \geq h(\mathbf{a}') | b=1, b'=-1),
\end{equation}  
where $\mathbf{a}, \mathbf{a}' \in \mathbf{R}^{d_0}$ are data features, $b, b' \in \{-1, 1\}$ are the labels, $\z = (\mathbf{a}, b)$ and $\z'=(\mathbf{a}', b')$ are drawn independently from $\mathbb{P}$.
By employing the squared loss as the surrogate for the indicator function which is commonly used by previous studies~\citep{ying2016stochastic,liu2018fast,liu2019stochastic}, the deep AUC maximization problem can be formulated as 
\begin{equation}
\min\limits_{\w\in\mathbb{R}^d} \E_{\z, \z'}\left[(1-h(\w;\mathbf{a})+h(\w;\mathbf{a}'))^2| b=1, b'=-1\right], 
\label{prob:auc_square} 
\end{equation}  
where $h(\w; \mathbf{a})$ denotes the prediction score for a data sample $\mathbf{a}$ made by a deep neural network parameterized by $\w$. 
It was shown in \citep{ying2016stochastic} that the above problem is equivalent to the following min-max problem: 
\begin{equation}
\label{auc_min-max-1}
\min\limits_{(\w, s, r)} \max\limits_{y\in \mathbb{R}} f(\w, s, r, y)=\E_\z[F(\mathbf{w}, s, r, y, \z)], 
\end{equation}
where 
\begin{equation}
\begin{split}
F(\w, s, r, y; \z) =& (1-p) (h(\w; \mathbf{a})- s)^2 \mathbb{I}_{[b=1]} 
+ p(h(\w; \mathbf{a}) - r)^2\mathbb{I}_{[b=-1]}  \\
&
+ 2(1+y)(p h(\w; \mathbf{a})\mathbb{I}_{[b=-1]} 
- (1-p) h(\w;\mathbf{a}) \mathbb{I}_{[b=1]}) - p(1-p)y^2,
\end{split} 
\end{equation}
where $p=\Pr(b=1)$ denotes the prior probability that an example belongs to the positive class, and $\mathbb I$ denotes an indicator function whose output is $1$ when the condition holds and $0$ otherwise. 
We denote the primal variable by $x=(\w, s, r)$. 

Obviously, the problem \eqref{auc_min-max-1} is strongly concave on dual variable $y$ for any primal variable $x$. Also, in the next lemma we show that $f(x, y)$ satisfies an $\epsilon$-almost $\mu$-PL condition with a high probability following the theory of over-parameterized deep learning for minimization problems in Theorem 1, 2, 3, 5 of \citep{allen2018convergence}. We put all the proof in the appendix. 
\begin{lemma} 
Assume that input data $\{(\a_1, b_1),\ldots, (\a_n, b_n)\}$, where $\a_i\in \mathbb{R}^{d_0}, b_i\in \{-1, 1\}$, satisfies $\|\a_i\|=1$ and  $\|\a_i - \a_j\|\geq \delta$. 
Consider a deep neural network with $h_{i,0} = \phi(A\a_i), h_{i,l} = \phi(W_l h_{i, l-1}), l=1,\ldots, \tilde{L}, \hat b_i = B^T h_{i,\tilde{L}}$ where $A\in \R^{m\times d_0}, W_l\in \R^{m\times m}, B\in \R^{m}$ are randomly initialized, and $\phi$ is the ReLU activation function. Let $\w$ denote the vectorization of $(\W_1, \cdots, \W_{\tilde{L}})$ 
and $x=(\w, s, r)$ denote the primal variable. $h(\w; a_i) = \hat{b}_i$ be the output logit for the $i$-th data.    
Take $m = \widetilde{\Omega}(\text{poly}(n, \Tilde{L}, \delta^{-1}, \epsilon))$, then with a high probability over randomness of $W_0, A, B$ for every $x$ with $\|\w - \w_0\|\leq O(\frac{\log m}{\sqrt{m}})$, $f(x, y)$ satisfies an $\epsilon$-almost primal $\mu$-PL condition.  
\label{lem:pl_auc}  
\end{lemma} 

{\bf Distributionally Robust Optimization (DRO)} 
DRO problem \citep{namkoong2017variance,rafique2018non} has a min-max formulation of 
\begin{equation}
\begin{split}
\min\limits_{x} \max\limits_{y\in \mathcal{Y}} \frac{1}{n} \sum\limits_{i=1}^{n} y_i f_i(x) - r(y), 
\end{split}
\end{equation} 
where $f_i(x)$ can be a loss function on the $i$-th data using a neural network backbone parameterized by $x$, and $r(y)$ is a reguralization function. 
The spirit of this formulation is to put more weights to the data points with high losses, thus to increase the robustness of models.
It would be strongly concave on $y$ for any $x$ if $r(y)$ is a strongly convex function. It has been shown in proof of Lemma 2 of \citep{qi2020practical} that $f(x, y)$ satisfies an $\epsilon$-almost $x$-side $\mu_x$-PL condition with a high probability for a similar network structure as in the above Lemma \ref{lem:pl_auc}. 

\section{Algorithms and Objective Gap Convergence} 
\label{sec:primal_convergence}
In this section, we make the assumption of the primal PL condition. 
\begin{assumption}\label{ass3}
$P(x) = \max\limits_{y\in \mathcal{Y}} f(x, y)$ satisfies $\mu$-PL condition.
\end{assumption} 

We present the proposed stochastic method in Algorithm~\ref{alg:main}. We would like to point out that our method follows the proximal point framework analyzed in \citep{liu2019stochastic,rafique2018non,yan2020sharp}. In particular, the proposed method includes multiple consecutive stages. In each stage, we employ a stochastic algorithm to solve the  following proximal problem approximately:
\begin{align}
f_k(x, y) = f(x, y) + \frac{\gamma}{2} \|x - x^k_{0}\|^2,
\end{align}
where $\gamma$ is an appropriate regularization parameter to make $f_k$ to be strongly convex and strongly concave.
The reference point $x^k_{0}=\bx_{k-1}$ is updated after each stage, i.e., after each  inner loop.
Let $\hat{x}_k(y) = \arg\min_{x'} f_k(x', y)$ denote the optimal $x$ for the fixed $y$ and $\hat{y}_k(x) = \arg\max\limits_{y'\in\mathcal{Y}} f_k(x, y')$
denote the optimal $y$ for the fixed $x$. 

However, there are some key differences between the proposed method from that are analyzed in ~\citep{liu2019stochastic,rafique2018non,yan2020sharp}. We highlight the differences below. First, our method explicitly leverages the PL condition of the objective function by decreasing $\eta_k, 1/T_k$ geometrically (e.g, $e^{-\alpha k}$ for some $\alpha>0$). In contrast, \citet{rafique2018non} and \citet{yan2020sharp} proposed to decrease  $\eta_k, 1/T_k$ polynomially (e.g., $1/k$). Second, the restating point and the reference point $(\bx_{k-1}, \by_{k-1})$ is simply the averaged or sampled solution of stochastic updates in our employed stochastic algorithm $\mathcal A$.  In contrast, \citet{liu2019stochastic} and \citet{rafique2018non} assumed a special structure of the objective function and leverage its structure to compute a restarted solution for $y$. This makes our method much simpler to be implemented but makes the analysis more involved. 

For stochastic algorithm $\mathcal A$, one can employ many stochastic primal-dual methods to solve $\min_{x}\max_{y}f_k(x,y)$. We consider four well-known methods with different stochastic updates. Stochastic gradient descent ascent (SGDA) update (option I) and min-max adaptive stochastic gradient (MinMax-AdaGrad) update (option III) are mostly interesting to practitioners. Stochastic optimistic gradient descent ascent (OGDA) update (option II) yields an algorithm with provable convergence result under standard assumptions for smooth problems that is more interesting to theoreticians, which was originated from stochastic mirror prox method proposed by~\citep{
juditsky2011solving}.   Min-max stochastic update based on the recursive variance reduced estimator STORM~\citep{cutkosky2019momentum} (option IV) can lead to an improved rate without using large mini-batch.

\begin{algorithm}[t]
\caption {Proximal Stage Stochastic Method: PES-$\mathcal A$}
\begin{algorithmic}[1]
\STATE{Initialization: $\bx_0\in \R^d, \by_0 \in \mathcal{Y}, \gamma, T_1, \eta_1, a$. } 
\STATE{Option III: $\bu_0 = \nabla_x f(\bx_0, \by_0;\bar{\xi}), \bv_0 = \nabla_y f(\bx_0, 
\by_0;\bar{\xi})$}.
\FOR{$k=1,2, ..., K$} 
\STATE{$x_0^k = \bx_{k-1}$, $y_0^k = \by_{k-1}$;} 
\STATE{Option I$\sim$ III: $ (\bx_k, \by_k)$ = $\mathcal A(f, x^k_0, y^k_0, \eta_k, T_k, \gamma)$;} 
\STATE{Option IV: $ (\bx_k, \by_k, \bu_k, \bv_k)$ = $\mathcal A(f, x^k_0, y^k_0, \eta_k, T_k, \gamma, \bu_{k-1}, \bv_{k-1})$;} 
\STATE{$\eta_{k+1}=\eta_k/a$,  $\eta^y_{k+1}=\eta^y_k/a$, $T_{k+1}=a T_k$;} 
\ENDFOR 
\STATE{\textbf{return} $(\bx_K, \by_K)$.}
\end{algorithmic}
\label{alg:main}
\end{algorithm}

\begin{algorithm}[t]
\caption {Stochastic Algorithm for Each Stage}
Option I$\sim$III: $\mathcal A$($f, x_0, y_0, \eta, T, \gamma)$,\\
Option IV: $\mathcal{A}$($f, x_0, y_0, \eta, T, \gamma, u_0, v_0$)
\begin{algorithmic}
\STATE{Initialization: 
$\tilde{z}_0 = z_0= (x_0, y_0)$, Option III: $g_{1:0}=[]$}
\STATE Let $\{\xi_0, \xi_1,\ldots, \xi_T\}$
be independent random variables, and $ \mathcal G_\gamma(z; \xi) = \left(\begin{array}{c}\nabla_x f(x, y; \xi) + \gamma (x - x_0)\\  - \nabla_y f(x, y; \xi)\end{array}\right)$. 
\FOR{$t=1, ..., T$} 
\STATE {Option I: \text{SGDA update:} }
\STATE {~~~~~$z_t = \Pi^{\gamma}_{z_{t-1}, x_0}(\eta \mathcal G(z_{t-1}; \xi_{t-1})) $;}
\vspace*{0.1in}
\STATE {Option II: \text{OGDA update:}} 
\hspace*{0.1in}\framebox[0.59\columnwidth]{
\begin{minipage}[b]{0.59\columnwidth} 
 $z_{t} = \Pi_{\tilde{z}_{t-1}}(\eta \mathcal G_\gamma(z_{t-1}; \xi_{t-1}))$;\newline
 $\tilde z_{t} = \Pi_{\tilde{z}_{t-1}}(\eta \mathcal G_\gamma(z_{t}; \xi_t))$;
 \end{minipage}} 
\vspace*{0.1in}
\STATE {Option III: \text{Min-Max AdaGrad update:}}

\hspace*{0.05in}\framebox[0.9\columnwidth]{
\begin{minipage}[b]{0.9\columnwidth}
$g_{1:t} = [g_{1:t-1}, \G_\gamma(z_t;\xi_t)], \text{~and~} s_{t, i} = \|g_{1:t, i}\|_2$;\newline 
Set $H_{t} = \delta I + \text{diag}(s_{t}), \psi_{t} (z) = \frac{1}{2}\langle z-z_0, H_{t}(z-z_0) \rangle$;\newline 
$z_{t+1} = \arg\min\limits_{z\in \mathcal{Z}} \eta z^T \left(  \frac{1}{t}\sum\limits_{\tau=1}^{t}\G_\gamma(z_{\tau}; \xi_{\tau})\right)+ \frac{1}{t}\psi_{t} (z)$;
\end{minipage}}
\vspace*{0.1in}
\STATE {Option IV: \text{Min-Max STORM update:}} \\
\hspace*{0.05in}\framebox[0.9\columnwidth]{
\begin{minipage}[b]{0.9\columnwidth}
\hspace*{0.02in}$x_t = x_{t-1} - \eta^x u_{t-1}$,\newline
\hspace*{0.02in}$y_t = y_{t-1} + \eta^y (\mathcal{P}_{\mathcal{Y}}(y_{t-1}+\lambda v_{t-1})- y_{t-1}))$; \\
\hspace*{0.02in}$u_t = (1-a_x)u_{t-1} + \nabla_x f(x_t, y_t; \xi_t) - (1-a_x)\nabla_x f(x_{t-1}, y_{t-1}; \xi_t)$,\\
\hspace*{0.02in}$v_t = (1-a_y)v_{t-1}  + \nabla_y f(x_t, y_t; \xi_t) - (1-a_y)\nabla_y f(x_{t-1}, y_{t-1}; \xi_t) $;
\end{minipage}
}
\ENDFOR  
\STATE{Option I$\sim$III: \textbf{return} $\bar{x} = \frac{1}{T}  \sum\limits_{t=1}^{T} x_t, \bar{y} = \frac{1}{T}  \sum\limits_{t=1}^{T} y_t$.}
\STATE{Option IV: \textbf{return} $(x_\tau, y_\tau, u_\tau, v_\tau)$ with a random index $\tau\in\{1, \ldots, T\}$.} 
\end{algorithmic} 
\label{alg:subroutine}  
\end{algorithm}

\subsection{Basic Results} 
Below, we present the basic convergence results of Algorithm~\ref{alg:main} by employing stochastic OGDA update. Let ${\text{Gap}}(x, y) = \max\limits_{y'\in \mathcal{Y}} f(x, y')- \min\limits_{x'\in \mathcal{X}} f(x', y)$ be the duality gap of $(x, y)$ on $f$ and ${\text{Gap}}_k(x, y) = \max\limits_{y'\in \mathcal{Y}} f_k(x, y')- \min\limits_{x'\in \mathcal{X}} f_k(x', y)$ be the duality gap of $(x, y)$ on $f_k$.

\vspace*{-0.1in}
\begin{theorem}\label{thm:1}
Consider Algorithm \ref{alg:main} that uses Option II: OGDA update in subroutine Algorithm \ref{alg:subroutine}.
Suppose Assumption \ref{ass1}, \ref{ass2}, \ref{ass5}, \ref{ass3}  hold.
Take $\gamma = 2\rho$ and
denote $\hat{L} = L + 2\rho$ and $c = 4\rho+\frac{248}{53} \hat{L} \in O(L + \rho)$.
Define $\Delta_k = P(x_0^k) - P(x_*) + \frac{8\hat{L}}{53c} \emph{\text{Gap}}_k (x_0^k, y_0^k)$ and $\epsilon_0=\emph{\text{Gap}}(\bx_0, \by_0)$.
Then we set $\eta_k = \eta_0  \exp(-(k-1)\frac{2\mu}{c+2\mu})
\leq \frac{1}{2\sqrt{2}\ell}$, $T_k = \left\lceil\frac{212}{\eta_0 \min\{\rho, \mu_y\}}
\exp\left((k-1)\frac{2\mu}{c+2\mu}\right)\right\rceil$. 
After $K =  \left\lceil\max\left\{\frac{c+2\mu}{2\mu}\log \frac{4\epsilon_0}{\epsilon},
\frac{c+2\mu}{2\mu} \log \frac{208\eta_0 \hat{L}K\sigma^2}{(c+2\mu)\epsilon} \right\}\right\rceil$ stages, we have $\E[\Delta_{K+1}] \leq \epsilon$. 
The total stochastic first-order oracle call complexity is 
$\widetilde{O}\left( \max\left\{\frac{\ell(L+\rho) \epsilon_0}{\mu\min\{\rho, \mu_y\} \epsilon}, 
\frac{(L + \rho)^2 \sigma^2} {\mu^2 \min\{\rho,\mu_y\} \epsilon} \right\} \right)$. 
\label{thm:ogda_primal}
\end{theorem}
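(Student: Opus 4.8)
The plan is to establish a one-epoch contraction for the Lyapunov function $\Delta_k = P(x_0^k) - P(x_*) + \frac{8\hat{L}}{53c}\mathrm{Gap}_k(x_0^k, y_0^k)$ and then unroll it over $K$ stages. First I would analyze a single epoch $k$: the subroutine $\mathcal{A}$ with OGDA updates is run on the regularized function $f_k(x,y) = f(x,y) + \frac{\gamma}{2}\|x - x_0^k\|^2$, which with $\gamma = 2\rho$ is $\rho$-strongly convex in $x$ (since $f$ is $\rho$-weakly convex in $x$) and $\mu_y$-strongly concave in $y$, hence $f_k$ is a strongly-convex-strongly-concave saddle problem with modulus $\min\{\rho,\mu_y\}$ and smoothness $O(\ell+\rho) = O(\hat{L})$. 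Standard stochastic mirror-prox / OGDA analysis (in the style of \cite{juditsky2011solving}) then gives, for the averaged iterate $(x_k,y_k)$, a bound on $\mathbb{E}[\mathrm{Gap}_k(x_k,y_k)]$ of the form $O\!\left(\frac{\|z_0^k - z_k^*\|^2}{\eta_k T_k} + \eta_k \sigma^2\right)$, where $z_k^*$ is the saddle point of $f_k$; and because $f_k$ is strongly-convex-strongly-concave, controlling the gap also controls $\|z_k - z_k^*\|^2$, hence $\|x_k - x_0^{k+1}\|^2$ etc.

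The crux is relating quantities for the \emph{regularized} problem $f_k$ back to the \emph{original} primal objective $P$. Let $\hat{x}_k = \arg\min_x \max_y f_k(x,y)$ be the proximal point. By strong convexity of the Moreau-type envelope and $L$-smoothness of $P$, one gets $P(x_0^{k+1}) - P(x_*) \le (1 - \text{something})\big(P(x_0^k) - P(x_*)\big) + (\text{error from inexactness of } \mathcal{A})$; here the PL condition on $P$ (Assumption~\ref{ass3}) is what converts the decrease in $\|x_0^k - \hat{x}_k\|^2$ (guaranteed by the proximal step with parameter $\gamma = 2\rho \ge \rho$, giving a genuine descent since $P$ is $L$-smooth and the prox is $\rho$-strongly-convex) into a multiplicative contraction of the objective gap. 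Simultaneously, I would bound $\mathrm{Gap}_{k+1}(x_0^{k+1}, y_0^{k+1})$: note the reference point changes from $x_0^k$ to $x_0^{k+1} = x_k$, so $f_{k+1}$ differs from $f_k$; I'd expand $\mathrm{Gap}_{k+1}(x_k,y_k)$ in terms of $\mathrm{Gap}_k(x_k,y_k)$ plus a term involving $\|x_k - x_0^k\|^2$ (the change in the quadratic), and the former is controlled by the subroutine's output guarantee. The careful bookkeeping of these cross terms — choosing the coefficient $\frac{8\hat{L}}{53c}$ and the constants $c = 4\rho + \frac{248}{53}\hat{L}$ exactly so that the combined recursion reads $\mathbb{E}[\Delta_{k+1}] \le \frac{c}{c+2\mu}\mathbb{E}[\Delta_k] + (\text{variance term scaled by } \eta_k)$ — is the main obstacle; this is where the specific numerical constants come from, and getting the gap-term coefficient right so both pieces contract at the same geometric rate $\frac{c}{c+2\mu}$ is delicate.

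Once the per-epoch recursion $\mathbb{E}[\Delta_{k+1}] \le \frac{c}{c+2\mu}\,\mathbb{E}[\Delta_k] + \frac{208\eta_0\hat{L}\sigma^2}{c+2\mu}\exp\!\big(-(k-1)\tfrac{2\mu}{c+2\mu}\big)$ is in hand (the last term using the geometric schedule $\eta_k T_k$ chosen so that $\frac{1}{\eta_k T_k}$ decays at the contraction rate and $\eta_k \sigma^2$ at the same rate), I would unroll: $\mathbb{E}[\Delta_{K+1}] \le \big(\tfrac{c}{c+2\mu}\big)^K \Delta_1 + K\big(\tfrac{c}{c+2\mu}\big)^{K}\cdot(\text{const}\cdot\eta_0\hat{L}\sigma^2)$, using $\frac{c}{c+2\mu} = e^{-2\mu/(c+2\mu)}$ approximately (or bounding $1 - \frac{2\mu}{c+2\mu} \le e^{-2\mu/(c+2\mu)}$). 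Setting $\big(\tfrac{c}{c+2\mu}\big)^K \le \epsilon/(4\epsilon_0)$ handles the first term (using $\Delta_1 \le P(x_0)-P(x_*) + \mathrm{Gap}_1 \le O(\epsilon_0)$ via Assumption~\ref{ass1}(iv) and PL), and the stated second lower bound on $K$ handles the noise term. Finally, the total iteration complexity is $\sum_{k=1}^K T_k = \sum_k \frac{212}{\eta_0\min\{\rho,\mu_y\}}e^{(k-1)\cdot 2\mu/(c+2\mu)} = O\!\big(\frac{e^{2\mu K/(c+2\mu)}}{\eta_0\min\{\rho,\mu_y\}}\big)$; substituting the two choices of $K$ and using $\eta_0 = \Theta(1/\ell)$, $c = O(L+\rho)$ gives the two terms $\widetilde{O}\big(\frac{\ell(L+\rho)\epsilon_0}{\mu\min\{\rho,\mu_y\}\epsilon}\big)$ and $\widetilde{O}\big(\frac{(L+\rho)^2\sigma^2}{\mu^2\min\{\rho,\mu_y\}\epsilon}\big)$, as claimed.
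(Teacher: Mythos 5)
Your proposal follows essentially the same route as the paper: a one-epoch contraction for the Lyapunov function $\Delta_k$, built from (i) the stochastic OGDA bound $\E[\text{Gap}_k(x_k,y_k)] \lesssim \frac{1}{\eta_k T_k}\|z_0^k - z^*\|^2 + \eta_k\sigma^2$, (ii) strong convexity/concavity of $f_k$ to trade distances for gaps, (iii) weak convexity of $P$ plus the $\mu$-PL condition to obtain $(4\rho+2\mu)(P(x_k)-P(x_*)) - 4\rho(P(x_{k-1})-P(x_*)) \le \|\partial P_k(x_k)\|^2$, (iv) $\hat{L}$-smoothness of $P_k$ to convert $\|\partial P_k(x_k)\|^2$ into $2\hat{L}\,\text{Gap}_k(x_k,y_k)$, and (v) a cross-epoch inequality relating $\text{Gap}_k(x_k,y_k)$ to $\text{Gap}_{k+1}(x_0^{k+1},y_0^{k+1})$ and $P(x_0^{k+1})-P(x_0^k)$, then unrolling the geometric recursion. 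The main thing you leave as a sketch is step (v): the paper imports this as a black-box lemma from Yan et al. (Lemma~\ref{lem:Yan5}, $\text{Gap}_k(x_k,y_k)\ge\frac{3}{50}\text{Gap}_{k+1}+\frac{4}{5}(P(x_0^{k+1})-P(x_0^k))$), whereas your ``expand $\text{Gap}_{k+1}$ via the change in the quadratic'' describes the right idea but would still require re-deriving that numerically precise inequality to make the Lyapunov coefficients close as claimed.
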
 
\vspace*{-0.1in} 
\noindent\textbf{Remark. } This result would imply that it takes $\widetilde{O}\left(\frac{(L+\ell)^2}{\mu^2\mu_y \epsilon}\right)$ stochastic first-order oracle calls to reach an $\epsilon$-level objective gap by setting $\rho=\ell$ (i.e., $f(x,y)$ is $\ell$-weakly convex in terms of $x$ under Assumption~\ref{ass1}).
With the worse-case value of $L=\ell + \frac{\ell^2}{\mu_y}$ (i.e.,  the $\ell$-smoothness of $f(x,y)$ and $\mu_y$-strongly concavity can imply the $\ell + \ell^2/\mu_y$-smoothness of $P(x)$~\citep{nouiehed2019solving}) , the total stochastic first-order oracle call complexity would be no greater than $\widetilde{O}\left(\frac{\ell^4}{\mu^2\mu^3_y \epsilon} \right)$. This is better than the stochastic first-order oracle call complexity of stochastic AGDA method in the order of ${O}\left(\frac{\ell^5}{\mu^2\mu^4_y \epsilon} \right)$~\citep{yang2020global}. 

The above result is achieved by analysis based on a novel Lyapunov function that consists of the primal objective gap $P(x_0^k) - P(x_*)$ and the duality gap of the proximal function $f_k(x, y)$. As a result, we can induce the convergence of duality gap in next section of the original problem with some extra assumptions. 

The convergence results of using SGDA update are similar to the results presented above except that $\sigma^2$ is replaced by the upper bound $B^2$ of stochastic gradients, i.e., there exists $B>0$ such that  $\E[\|\nabla_x f(x, y; \xi)\|^2]\leq B^2$ and $\E[\|\nabla_y f(x, y; \xi)\|^2]\leq B^2$. This is a more restrictive assumption but holds in many practical applications~\citep{JMLR:v15:hazan14a,duchi2011adaptive}. 

Note that the number of iterations in $k$-th stage (i.e. $T_k$) does not depend on the initial solution $(\bar{x}_{k-1}, \bar{y}_{k-1})$.  In each stage, we do not expect to solve the sub-problem accurately, i.e, to some $\epsilon$-accurate level. Instead, each stage just optimizes the sub-problem in order to make the upper bound of Lyaponov function $\Delta_k  = P(x_0^k) - P(x_*) + \frac{8\hat{L}}{53c}{\text{Gap}}_k(x_0^k, y_0^k)$ decrease by a constant factor. And as $k$ grows, $(\bar{x}_{k-1}, \bar{y}_{k-1})$ becomes a better and better solution to the original problem.

Below we highlight the proof sketch. For details of proof, please refer to that of Theorem 27 in the Appendix. What we need from the sub-problem solver is that it can provide a convergence bound as
\begin{align*}
\begin{split}
\E[{\text{Gap}}_k(\bar{x}_k, \bar{y}_k)] 
\leq \frac{C_1}{\eta_k T_k} \E[\|\hat{x}_k(\bar{y}_k) - x_0^k\|^2 + \|\hat{y}_k(\bar{x}_k) - y_0^k\|^2]
+ \eta_k C_2,
\end{split} 
\end{align*}
which has Lemma 24 as an instantiation of Option II: OGDA subroutine. It is notable that the above upper bound depends on the initial solution $(x_0^k, y_0^k)$ of this stage. To achieve this the number of iterations $T_k$ does not need to depend on $(x_0^k, y_0^k)$ and the constants $C_1$ and $C_2$ do not depend on the initial solution and do not depend on the stage index $k$. In Lemma 24, we can see $C_1=1$ and $C_2 = 13\sigma^2$, independent of the initial solution $(\bar{x}_{k-1}, \bar{y}_{k-1})$ and stage index $k$. 

Then by setting $\eta_k = \eta_0  \exp(-(k-1)\frac{2\mu}{c+2\mu})$, $T_k = \left\lceil\frac{212 C_1}{\eta_0 \min\{\rho, \mu_y\}}\exp\left((k-1)\frac{2\mu}{c+2\mu}\right)\right\rceil$, both independent of the initial solution, we can guarantee that 
\begin{equation}
\begin{split}
\E[{\text{Gap}}_k(\bar{x}_k, \bar{y}_k)] 
\leq \frac{\min\{\rho, \mu_y\}}{212} \E[\|\hat{x}_k(\bar{y}_k) - x_0^k\|^2 + \|\hat{y}_k(\bar{x}_k) - y_0^k\|^2]
+ \eta_k C_2, 
\end{split}
\end{equation}
which then by some theoretical deduction can lead to 
\begin{align}
\begin{split}
&\E[\Delta_{k+1}] \leq \frac{c}{c+2\mu} \E[\Delta_{k}] +  \frac{8\eta_k \hat{L} C_2}{c+2\mu}.
\end{split}
\end{align} 
As $\eta_k$ decreases exponentially as $k$ increases, we can then guarantee the convergence of the $\Delta_{k+1}$ and therefore the convergence of the original problem. 

\vspace*{-0.1in} 
\subsection{Improved Rates when $\rho<O(\mu)$} 
Our first improved rate is for almost convex function, whose weak convexity parameter $\rho$ is small enough.  Such a condition has been considered in the literature for improving the convergence of non-convex minimization problem~\citep{yuan2019stagewise,DBLP:conf/icml/ChenXHY19,DBLP:journals/siamjo/LanY19}.  In particular, we consider $\rho$ is smaller than $O(\mu)$.
\begin{theorem}\label{thm:2}
Suppose Assumption \ref{ass1}, \ref{ass2}, \ref{ass5} , \ref{ass3} hold and $0<\rho \leq \frac{\mu}{8}$.
Take $\gamma = \frac{\mu}{4}$.
Define $\Delta_k = 475(P(x_0^k) - P(x_*)) + 57\emph{\text{Gap}}_k(x_0^k, y_0^k)$  and $\epsilon_0=\emph{\text{Gap}}(\bx_0, \by_0)$.
Then we can set $\eta_k = \eta_0  \exp(-\frac{k-1}{16}) \leq \frac{1}{2\sqrt{2}\ell}$, $T_k = \left\lceil\frac{384}{\eta_0 \min\{\mu/8, \mu_y\}}\exp\left(\frac{k-1}{16}\right)\right\rceil$. 
After $K = \bigg\lceil\max\bigg\{16\log \frac{1200\epsilon_0}{\epsilon},$
$16\log \frac{15600\eta_0  K \sigma^2}{\epsilon} \bigg\}\bigg\rceil$ stages, we can have $\E[\Delta_{K+1}] \leq \epsilon$. 
The total stochastic first-order oracle call complexity is 
$\widetilde{O}\left(\frac{ \max\{\ell \epsilon_0, \sigma^2\}}{\min\{\mu, \mu_y\}\epsilon} \right)$. 
\label{thm:ogda_primal_rho<mu} 
\end{theorem}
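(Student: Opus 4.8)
The plan is to mirror the analysis of Theorem~\ref{thm:ogda_primal}, but exploit the regime $\rho \le \mu/8$ to choose the proximal regularization $\gamma = \mu/4$ rather than $\gamma = 2\rho$. The backbone is the same one-epoch guarantee for the subroutine $\mathcal{A}$ running stochastic OGDA on the strongly-convex–strongly-concave proximal problem $f_k(x,y) = f(x,y) + \frac{\gamma}{2}\|x - x_0^k\|^2$: because $f$ is $\rho$-weakly convex in $x$ and $\gamma = \mu/4 \ge 2\rho$, the function $f_k$ is $(\gamma - \rho) \ge \mu/8$-strongly convex in $x$ and $\mu_y$-strongly concave in $y$, so OGDA on $f_k$ over $T_k$ iterations with step $\eta_k$ contracts the duality gap $\text{Gap}_k$ at a linear rate governed by $\eta_k \min\{\gamma-\rho,\mu_y\} T_k$, up to an additive variance term of order $\eta_k \sigma^2$. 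This is exactly the per-epoch lemma already used for Theorem~\ref{thm:ogda_primal} (the OGDA/mirror-prox error bound of \cite{juditsky2011solving}), just instantiated with the new $\gamma$; I would cite it rather than re-derive it.

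**Next I would** track the Lyapunov function $\Delta_k = 475(P(x_0^k) - P(x_*)) + 57\,\text{Gap}_k(x_0^k,y_0^k)$ across one epoch. Two ingredients feed the recursion. First, the standard proximal-point inequality: since $x_k$ approximately minimizes $P_k(x) := \max_y f_k(x,y) = P(x) + \frac{\gamma}{2}\|x-x_0^k\|^2$, one gets $P(x_0^{k+1}) - P(x_*) \le (1 - c_1)(P(x_0^k) - P(x_*)) + (\text{error from inexactness}) $, where the contraction factor comes from combining the $\mu$-PL condition of $P$ (Assumption~\ref{ass3}) with the strong convexity $\gamma$ of the added quadratic — this is where $\rho \le \mu/8$ matters, because it guarantees $\gamma = \mu/4$ is simultaneously large enough to convexify $f_k$ and small enough that the proximal step does not overshoot, yielding a clean constant contraction (rate $\sim \mu/(\mu + \gamma) = \text{const}$) independent of $\rho$. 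Second, the inexactness error is controlled by $\text{Gap}_k(x_k,y_k)$, which in turn is bounded by the per-epoch OGDA guarantee above. Choosing the geometric schedule $\eta_k = \eta_0 e^{-(k-1)/16}$ and $T_k \propto \eta_0^{-1} e^{(k-1)/16}/\min\{\mu/8,\mu_y\}$ makes the product $\eta_k T_k$ constant, so each epoch's variance contribution $\eta_k\sigma^2$ decays geometrically while the gap is contracted by a fixed factor; then I would verify the explicit constants $475, 57, 384, 1200, 15600$ make the two contributions to $\Delta_{k+1}$ combine into $\E[\Delta_{k+1}] \le \beta \E[\Delta_k] + \eta_k \cdot (\text{const}\cdot\sigma^2)$ with $\beta = e^{-1/16}$ (hence the $16\log(\cdot)$ epoch count).

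**Unrolling the recursion** over $K$ epochs gives $\E[\Delta_{K+1}] \le \beta^K \Delta_1 + \sum_{k=1}^K \beta^{K-k}\eta_k\,\text{const}\cdot\sigma^2$. Since $\eta_k = \eta_0\beta^{k-1}$, the sum telescopes to $O(K\beta^{K}\eta_0\sigma^2)$, so demanding both $\beta^K\Delta_1 \le \epsilon/2$ (using $\Delta_1 \le \text{const}\cdot\epsilon_0$ via Assumption~\ref{ass1}(iv) and the smoothness-based bound $P(x_0)-P(x_*) \le \text{const}\cdot\text{Gap}(x_0,y_0)$ coming from strong concavity in $y$) and $O(K\beta^K\eta_0\sigma^2)\le\epsilon/2$ yields the stated $K = \max\{16\log(1200\epsilon_0/\epsilon),\,16\log(15600\eta_0 K\sigma^2/\epsilon)\}$. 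Finally the total iteration complexity is $\sum_{k=1}^K T_k = \Theta(T_K)$ since $T_k$ grows geometrically, and $T_K \propto \eta_0^{-1}e^{K/16}/\min\{\mu,\mu_y\} \propto \eta_0^{-1}\max\{\epsilon_0,\eta_0\sigma^2\}/(\min\{\mu,\mu_y\}\epsilon)$, matching $\widetilde{O}\!\left(\frac{\max\{\ell\epsilon_0,\sigma^2\}}{\min\{\mu,\mu_y\}\epsilon}\right)$ after absorbing $\eta_0^{-1} = O(\ell)$.

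**The main obstacle** I anticipate is making the proximal-contraction step genuinely $\rho$-free: one must show that with $\gamma = \mu/4$ the decrease $P(x_0^k) - P(x_0^{k+1})$ is a constant fraction of $P(x_0^k) - P(x_*)$ even though the inner solve is only approximate and the per-epoch gap tolerance is itself $\epsilon$-dependent. The delicate part is bounding $P(x_0^{k+1}) - P(x_*)$ in terms of $P_k(x_k) - \min_x P_k(x)$ plus the proximal displacement, and then showing the displacement term $\frac{\gamma}{2}\|x_0^{k+1} - x_0^k\|^2$ is itself absorbed by the PL-driven decrease — this requires the three-point/cosine inequality for the proximal map together with $\mu$-PL, and getting the constants to line up with $475$ and $57$ is where the bookkeeping is heaviest. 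A secondary subtlety is translating the OGDA convergence (which controls $\text{Gap}_k$) into control of $P_k(x_k) - \min P_k$, i.e. the primal sub-optimality on $f_k$, which needs the strong concavity in $y$ to pass from duality gap to primal gap without losing more than constant factors.
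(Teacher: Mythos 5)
Your proposal follows essentially the same route as the paper's proof: regularize with $\gamma=\mu/4$ (valid since $\rho\le\mu/8$), apply the one-epoch OGDA duality-gap bound to the now $\mu/8$-strongly-convex and $\mu_y$-strongly-concave $f_k$, convert the distance terms back to gaps via the strong-convexity/strong-concavity lemma, and drive the Lyapunov recursion $\E[\Delta_{k+1}]\le \tfrac{15}{16}\E[\Delta_k]+O(\eta_k\sigma^2)$ with geometrically decaying $\eta_k$ and growing $T_k$, then unroll. The step you flag as the main obstacle is handled in the paper by importing Lemma 8 of \cite{yan2020sharp}: two complementary lower bounds on $\text{Gap}_k(x_k,y_k)$ (one yielding $P(x_0^{k+1})-P(x_0^k)+\tfrac{\mu}{8}\|x_k-x_0^k\|^2$, the other a multiple of $\text{Gap}_{k+1}(x_0^{k+1},y_0^{k+1})$ minus a displacement term) are combined with weights $95/576$ and $475/576$ so the displacement terms cancel outright, while the $\mu$-PL condition enters only through $P(x_0^k)-P(x_*)\le\tfrac{16}{15}\text{Gap}_k(x_0^k,y_0^k)$ — a slightly different mechanism from your ``displacement absorbed by PL'' picture, but the same overall architecture and conclusion.
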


\subsection{Improved Rates of Using Min-Max AdaGrad}
Similar to the literature of AdaGrad for improving convergence of convex and non-convex minimization problems~\citep{duchi2011adaptive,DBLP:conf/iclr/ChenYYZCY19,chen2018sadagrad}, we can also improve the convergence of NCSC min-max optimization by leveraging Min-Max AdaGrad update. In particular, the dependence on $1/\epsilon$ can be further reduced if the growth rate of the stochastic gradients is slow. 
In particular, we have the following theorem regarding Min-Max AdaGrad. 
\begin{theorem}(Informal)
Suppose Assumption \ref{ass1}, \ref{ass5}, \ref{ass3} hold. Let
$g^k_{1:T_k}$ denote the cumulative matrix of gradients in $k$-th stage. 
Suppose $\|g^k_{1:T_k, i}\|_2 \leq \delta T^{\alpha}_k$ and with $\alpha\in(0,1/2]$. 
Then by setting parameters appropriately, 
PES-AdaGrad has the total stochastic first-order oracle call complexity of 
$\widetilde{O}\left( \left(
\frac{\delta^2 (L+\rho)^2 (d+d')}{\mu^2 \min\{\rho, \mu_y\} \epsilon}\right)^{\frac{1}{2(1-\alpha)}} \right)$ in order to have $\E[\Delta_{K+1}] \leq \epsilon$, { where $\Delta_{k}$ is defined as  in Theorem \ref{thm:1}}. 
\label{thm:ada_primal}
\end{theorem}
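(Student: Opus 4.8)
The plan is to run the same epoch-wise proximal-point scheme of Algorithm~\ref{alg:main} analysed in Theorem~\ref{thm:ogda_primal}, keeping the outer Lyapunov argument essentially unchanged and replacing only the inner-loop (per-epoch) estimate so that it is driven by the AdaGrad regret bound of Option~III in Algorithm~\ref{alg:subroutine} instead of the OGDA bound. Take $\gamma = 2\rho$ as in Theorem~\ref{thm:ogda_primal}, so that each subproblem $f_k(x,y) = f(x,y) + \frac{\gamma}{2}\|x - x_0^k\|^2$ is $\min\{\rho,\mu_y\}$-strongly-convex--strongly-concave, with saddle point $z_*^k$, and keep $\hat L = L+2\rho$ and $c = 4\rho+\frac{248}{53}\hat L$. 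The core per-epoch claim I would prove is that, running the AdaGrad subroutine for $T_k$ steps from $z_0^k=(x_{k-1},y_{k-1})$ with step size $\eta_k$, the averaged iterate satisfies
\begin{align*}
\E[\text{Gap}_k(x_k,y_k)] \;=\; O\!\left(D_k\,\delta\,(d+d')\,T_k^{\alpha-1}\right),
\end{align*}
where $D_k$ bounds $\max_t\|z_t - z_*^k\|$ over the epoch. This follows from the standard diagonal-AdaGrad regret bound $\sum_t\langle \mathcal G(z_t;\xi_t), z_t-z\rangle \le O\big((\|z_1-z\|_\infty^2/\eta_k+\eta_k)\sum_i\|g^k_{1:T_k,i}\|_2\big)$ together with (i) the growth hypothesis $\|g^k_{1:T_k,i}\|_2\le \delta T_k^{\alpha}$ summed over the $d+d'$ coordinates; (ii) an online-to-batch/martingale step, valid since $z_t$ depends only on $\xi_0,\dots,\xi_{t-1}$, so $\E[\langle \mathcal G(z_t;\xi_t), z_t-z\rangle]=\E[\langle F_k(z_t), z_t-z\rangle]$ and $\text{Gap}_k(\bar z)\le \frac{1}{T_k}\,\mathrm{Regret}_{T_k}$ by convexity-concavity of $f_k$; and (iii) balancing $\eta_k$ against the iterate diameter, which turns the bound into $O(D_k\sum_i\|g^k_{1:T_k,i}\|_2)$ --- making it depend on $D_k$ only to the first power, which is what produces the extra factor of $2$ in the denominator of the exponent $\tfrac{1}{2(1-\alpha)}$.

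Two auxiliary facts must be established. First, an iterate-boundedness lemma: since $f_k$ is $\min\{\rho,\mu_y\}$-strongly-convex--strongly-concave and, by construction of $\Delta_k$, $\text{Gap}_k(z_0^k)=O(\Delta_k)$, strong convexity/concavity gives $\|z_0^k-z_*^k\|^2=O(\Delta_k/\min\{\rho,\mu_y\})$, and I would show the AdaGrad iterates stay in a ball of radius $D_k=O(\sqrt{\Delta_k/\min\{\rho,\mu_y\}})$ around $z_*^k$. I expect this to be the main obstacle: the $x$-domain is unbounded and the adaptive preconditioner rules out the usual descent-lemma control, so one has to combine strong monotonicity of $F_k$ with a bound on the AdaGrad martingale terms to keep the iterates from drifting. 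Second, the outer Lyapunov recursion is inherited from the proof of Theorem~\ref{thm:ogda_primal}: with $\Delta_k = P(x_0^k)-P(x_*) + \frac{8\hat L}{53c}\text{Gap}_k(x_0^k,y_0^k)$, using Assumption~\ref{ass3} ($\mu$-PL of $P$), $L$-smoothness of $P$ and $\rho$-weak convexity of $f$ in $x$ (Assumption~\ref{ass5}),
\begin{align*}
\E[\Delta_{k+1}] \;\le\; \frac{c}{c+2\mu}\,\E[\Delta_k] \;+\; O(\hat L)\,\E[\text{Gap}_k(x_k,y_k)].
\end{align*}

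Finally I would fix the schedule, letting $\eta_k$ decrease and $T_k$ increase geometrically. Substituting the per-epoch bound and $D_k=O(\sqrt{\Delta_k/\min\{\rho,\mu_y\}})$ into the recursion and forcing the error term $O(\hat L)\E[\text{Gap}_k(x_k,y_k)]$ to be at most a $\Theta(\mu/(c+2\mu))$ fraction of $\E[\Delta_k]$ --- which is exactly what makes $\E[\Delta_k]$ contract geometrically at rate $\frac{c+\mu}{c+2\mu}$ --- requires
\begin{align*}
T_k^{1-\alpha} \;\gtrsim\; \frac{(c+2\mu)\,\hat L\,\delta\,(d+d')}{\mu\,\sqrt{\min\{\rho,\mu_y\}\,\Delta_k}},
\end{align*}
i.e. $T_k$ must grow like $\Delta_k^{-1/(2(1-\alpha))}$; since $\Delta_k$ decays geometrically, $T_k$ grows geometrically with its $k$-exponent scaled by $\frac{1}{2(1-\alpha)}$ relative to the OGDA schedule. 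Then $\sum_{k=1}^{K}T_k=\widetilde O(T_K)$, and combining with $K=\widetilde O\big(\frac{c+2\mu}{\mu}\log\frac1\epsilon\big)$ --- the number of epochs needed for $\E[\Delta_{K+1}]\le\epsilon$ from the geometric contraction --- and $c,\hat L\in O(L+\rho)$ gives the stated total iteration complexity $\widetilde O\big((\tfrac{\delta^2(L+\rho)^2(d+d')}{\mu^2\min\{\rho,\mu_y\}\epsilon})^{\frac{1}{2(1-\alpha)}}\big)$, with $\widetilde O$ absorbing logarithmic factors in $\epsilon_0/\epsilon$. As a consistency check, at $\alpha=\tfrac12$ this reduces to $\widetilde O\big(\tfrac{\delta^2(L+\rho)^2(d+d')}{\mu^2\min\{\rho,\mu_y\}\epsilon}\big)$, which matches Theorem~\ref{thm:ogda_primal} with $\sigma^2$ replaced by $\delta^2(d+d')$, while for slowly growing gradients ($\alpha\to 0$) it improves to the square root of that quantity.
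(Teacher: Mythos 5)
Your outer Lyapunov recursion, the geometric $\eta_k$/$T_k$ schedule, and the closing bookkeeping match the paper, and the $\alpha=\tfrac12$ consistency check is right. The gap is in the per-epoch estimate. You balance $\eta_k$ against a comparator diameter $D_k$ to get a regret bound linear in $D_k$, which then requires an \emph{a priori} bound on $D_k$; you flag this as the main obstacle and propose an iterate-boundedness lemma (all AdaGrad iterates stay within $O\big(\sqrt{\Delta_k/\min\{\rho,\mu_y\}}\big)$ of the saddle of $f_k$). That lemma is not proved, and it is genuinely hard: the $x$-domain is unbounded and the coordinate-wise adaptive preconditioner defeats the usual strongly-monotone contraction arguments. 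You have named the gap without closing it.

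The paper's proof never needs iterate boundedness, because it does not balance $\eta$ and does not reduce to a first-power diameter. Lemma~\ref{lem:adagrad} leaves the per-epoch bound in the form $\E[\text{Gap}_k]\le \frac{m}{\eta_k M_k}\|z-z_0^k\|^2 + \frac{4\eta_k}{m M_k}$, with a \emph{squared} distance from the starting point $z_0^k$ to the best-response comparator $z=(\hat x(\bar y),\hat y(\bar x))$; the AdaGrad regret $\frac{1}{\eta}\psi_{T_k}(z)+\cdots$ depends only on this, never on the iterate trajectory. Since $f_k$ is $\rho$-strongly-convex and $\mu_y$-strongly-concave, Lemma~\ref{lem:Yan1} converts that squared distance directly into $\text{Gap}_k(z_0^k)+\text{Gap}_k(z_k)$, both of which the Lyapunov function already controls; once $\frac{m}{\eta_kM_k}$ is a small enough constant nothing further about the iterates is needed. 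The exponent $\tfrac{1}{2(1-\alpha)}$ also arises differently than you describe: $\eta_k$ and the auxiliary scale $M_k$ are each moved at \emph{half} the geometric rate (so $\eta_kM_k$ stays constant while $\eta_k/M_k$ decays at the full rate), and the implicit constraint $T_k\ge M_k\max\{\cdot\}$ combined with $\|g^k_{1:T_k,i}\|_2\le\delta T_k^\alpha$ yields $T_k\lesssim(2\sqrt{d+d'}\,\delta M_k)^{1/(1-\alpha)}$; the $\tfrac12$ in the final exponent is the half rate on $M_k$, not a first-power dependence on a diameter. To repair your version, either prove a genuine iterate-stability result for adaptively preconditioned stochastic updates on an unbounded strongly-convex--strongly-concave problem, or switch to the squared-distance form of the regret and invoke Lemma~\ref{lem:Yan1} as the paper does.
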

\vspace*{-0.1in} 
{\bf Remark: } First let us justify the slow growth condition $\|g^k_{1:T_k, i}\|_2 \leq \delta T_k^{\alpha}$. Supposing the stochastic gradients are bounded, it is clear that $\|g^k_{1:T_k, i}\|_2 \leq  O(T^{1/2}_k)$. But the $\|g^k_{1:T_k, i}\|_2$ can actually grow in a slower order than that because as the algorithm goes on, more and more data would become easy for the model and thus generate small gradients. For example, in deep learning where the models are able to memorize a lot of the training data. We verify this assumption in the Experiment section (Figure \ref{fig:3}) and similar phenomena has been reported in previous research on min-max optimization (see Figure 2 of \cite{liu2020towards}). Indeed it has been observed that overparameterized deep neural networks exhibit interpolation phenomenon, meaning that the model will have zero gradient at every example in the limit \citep{zhang2021understanding}. Nevertheless, it is still nontrivial to prove this condition rigorously and we leave it as an open problem. 
The improvements lies at when the stochastic gradient grows slowly, the sample complexity has a better dependence than $1/\epsilon$, i.e. $O(1/\epsilon^{1/(2(1-\alpha)})\leq O(1/\epsilon)$ when $\alpha\in(0, 1/2)$. 
 
\subsection{Improved Rates of Using Min-Max STORM} 
Our last improved rate is by leveraging the recursive variance reduced stochastic gradient estimator called STORM~\citep{cutkosky2019momentum}. This estimator has been used for non-convex min-max optimization~\citep{DBLP:journals/corr/abs-2008-08170}. However, to the best of our knowledge, an improved rate under a  PL condition for a NCSC optimization problem has not been established before. 
We make an additional assumption about the problem \eqref{eqn:op}.  
\begin{assumption}\label{ass6}
$f(x, y; \xi)$ is $\ell$-smooth in terms of $x$ and $y$  in expectation, i.e., $\E_{\xi}[\|G(z; \xi) - G(z'; \xi)\|^2]\leq \ell \|z - z'\|^2$. 
\end{assumption} 
\begin{theorem}(Informal)
Suppose Assumption \ref{ass1}, \ref{ass2}, \ref{ass3}, \ref{ass6} hold. 
By setting parameters appropriately, 
PES-STORM has the total stochastic first-order oracle call complexity of 
$\widetilde{O}\left( \frac{\ell^2}{\mu \mu^2_y \epsilon }\right)$ in order to have $\E[P(\bx_{K}) - P(x_*)] \leq \epsilon$.
\label{thm:storm_primal_informal} 
\end{theorem}
{\bf Remark:} Compared to the complexity of PES-OGDA as implied by  Theorem~\ref{thm:1}, the complexity of PES-STORM has a better dependence on the PL constant $\mu$, which is usually small in practice.  

\section{Duality Gap Convergence} 
\label{sec:dualitygap}  
In this section, we provide a stronger guarantee by analyzing the duality gap convergence utilizing some extra assumptions. Similar to~\citep{yang2020global}, we make the following assumption.
However, a difference is that we can prove the existence of a saddle point instead of imposing it. 
\begin{assumption}\label{ass4} 
$h_y(x) = f(x, y)$ satisfies $x$-side $\mu_x$-PL condition for any $y\in \mathcal{Y}$, i.e., $\|\nabla_x f(x, y)\|^2\geq 2\mu_x(f(x, y) - \min_x f(x,y))$, for any $x, y\in \mathcal{Y}$. 
\end{assumption} 

Using Theorem \ref{thm:1} and Assumption \ref{ass4}, we have
\begin{corollary} \label{cor:1}
Under the same setting as in Theorem  \ref{thm:ogda_primal}, and suppose Assumption \ref{ass4} holds as well.
To achieve $\E[\emph{\text{Gap}}(\bx_K, \by_K)] \leq \epsilon$,
the total number of stochastic first-order oracle call is \\ $\widetilde{O}\left(\max\left\{\frac{(\rho/\mu_x+1)\ell(L+\rho)\epsilon_0}{\mu \min\{\rho, \mu_y\} \epsilon}, \frac{(\rho/\mu_x+1)(L + \rho)^2 \sigma^2}{\mu^2  \min\{\rho,\mu_y\}  \epsilon} \right\}\right)$. 
\label{cor:ogda_duality} 
\end{corollary} 
\noindent\textbf{Remark. } Note that compared with the stochastic first-order oracle call complexity of the primal objective gap convergence, the stochastic first-order oracle call complexity of the duality gap convergence  is worse by a factor of $\rho/\mu_x+1$. When $f(x, y)$ is $\ell$-weakly convex with $\rho=\ell$, the stochastic first-order oracle call complexity for having $\epsilon$-level duality gap is  $\widetilde{O}\left(\frac{(L+\ell)^2\ell}{\mu^2\mu_x\mu_y \epsilon}\right)$, which reduces to  $\widetilde{O}\left(\frac{\ell^5}{\mu^2\mu_x\mu_y^3 \epsilon}\right)$ for the worst-case value of $L$. This result is better than the stochastic first-order oracle call complexity of stochastic AGDA method in the order of ${O}\left(\frac{\ell^7}{\mu^2\mu_x\mu_y^5 \epsilon} \right)$ that is derived by us based on the result  of~\citep{yang2020global} (c.f. Lemma~\ref{lem:agda} in the Supplement).  In addition, when $f(x,y)$ is $\rho$-weakly convex with $\mu_x<\rho<\mu_y$, the stochastic first-order oracle call complexity of PES-OGDA for having $\epsilon$-level duality gap is $\widetilde{O}\left(\frac{(L+\ell)^2}{\mu^2\mu_x \epsilon}\right)$.
Further, when $\rho<\mu_x$, we can set $\rho=\mu_x$ and then the stochastic first-order oracle call complexity for having $\epsilon$-level duality gap is $\widetilde{O}\left(\frac{(L+\ell)^2}{\mu^2\min\{\mu_x, \mu_y\} \epsilon}\right)$.
  
Using Theorem \ref{thm:2} and Assumption \ref{ass4}, we have 
\begin{corollary}\label{cor:2}
Under the same setting as in Theorem \ref{thm:ogda_primal_rho<mu} and suppose  Assumption \ref{ass4} holds as well. 
To achieve $\E[\emph{\text{Gap}}(\bx_K, \by_K)] \leq \epsilon$,
the total number of stochastic first-order oracle calls is $\widetilde{O}\left(\frac{ \mu\max\{\ell\epsilon_0,  \sigma^2\}}{\mu_x\min\{\mu, \mu_y\}\epsilon}  \right)$. 
\label{cor:ogda_dual_rho<mu} 
\end{corollary}

{\bf Remark: } Compared with results in Theorem~\ref{thm:1} and Corollary~\ref{cor:1}, the sample complexities in Theorem~\ref{thm:2} and Corollary~\ref{cor:2} have better dependence on $\mu, \mu_y$. In addition, by setting $\mu=\mu_x$,
the rate in Corollary~\ref{thm:2} becomes $\widetilde{O}(\frac{ 1}{\min\{\mu_x, \mu_y\}\epsilon} )$, which matches that established optimal rate in~\citep{yan2020sharp} for $\mu_x$-strongly convex and $\mu_y$-strongly concave problems up to a logarithmic factor. But we only require $x$-side $\mu_x$-PL condition  instead of $\mu_x$-strongly convex in terms of $x$. 

\section{Experiments}
In this section, we show some empirical results to verify the effectiveness of the proposed algorithms for deep and non-deep learning tasks. 

\begin{figure*}[t]
\centering
\hspace{-0.1in}
\includegraphics[scale=0.3]{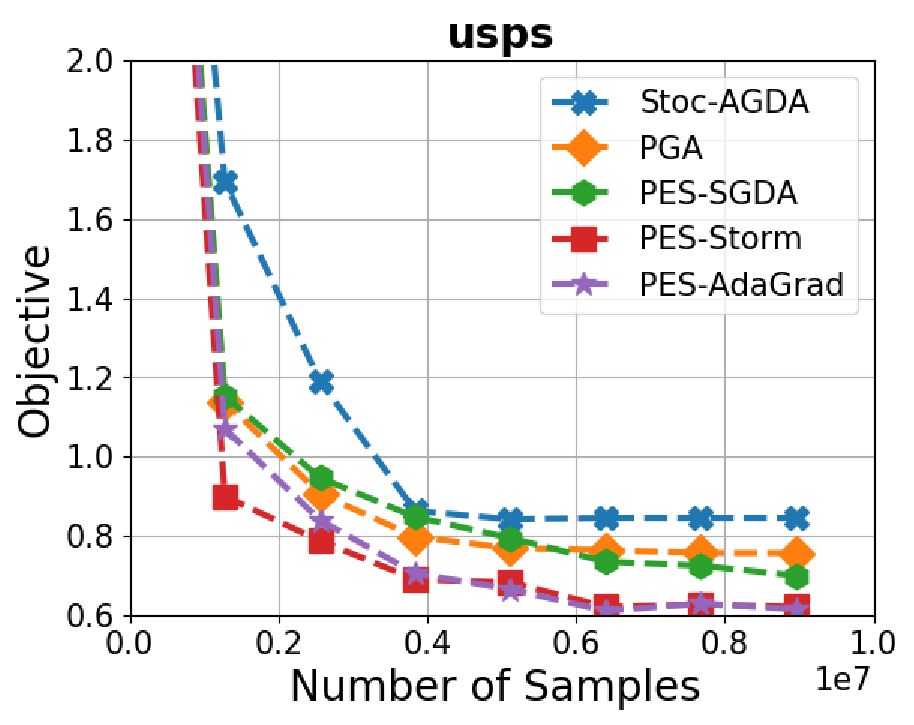}
\includegraphics[scale=0.3]{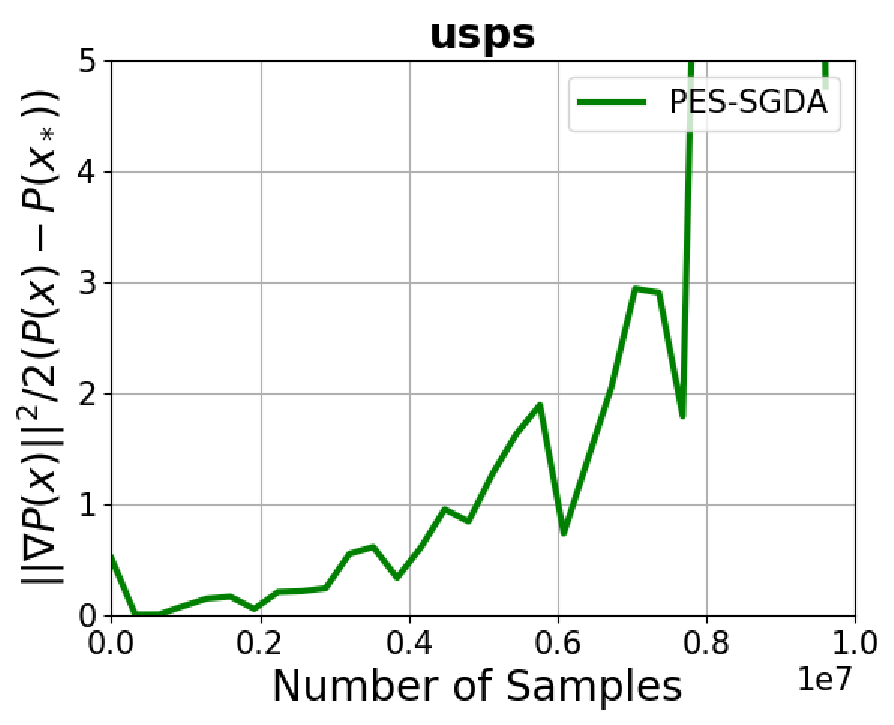} \\
\includegraphics[scale=0.3]{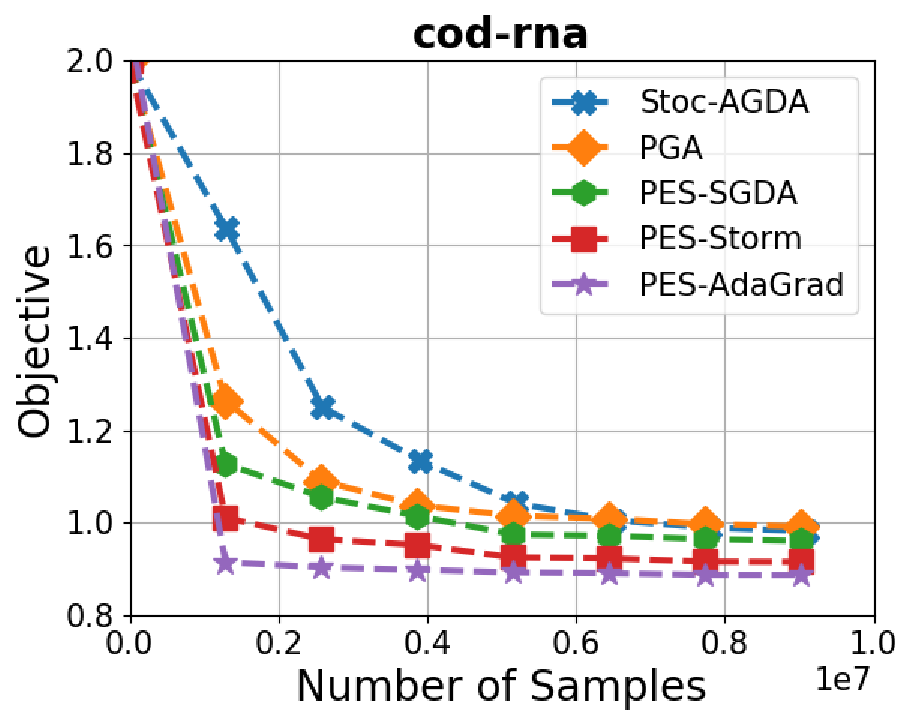} 
\includegraphics[scale=0.3]{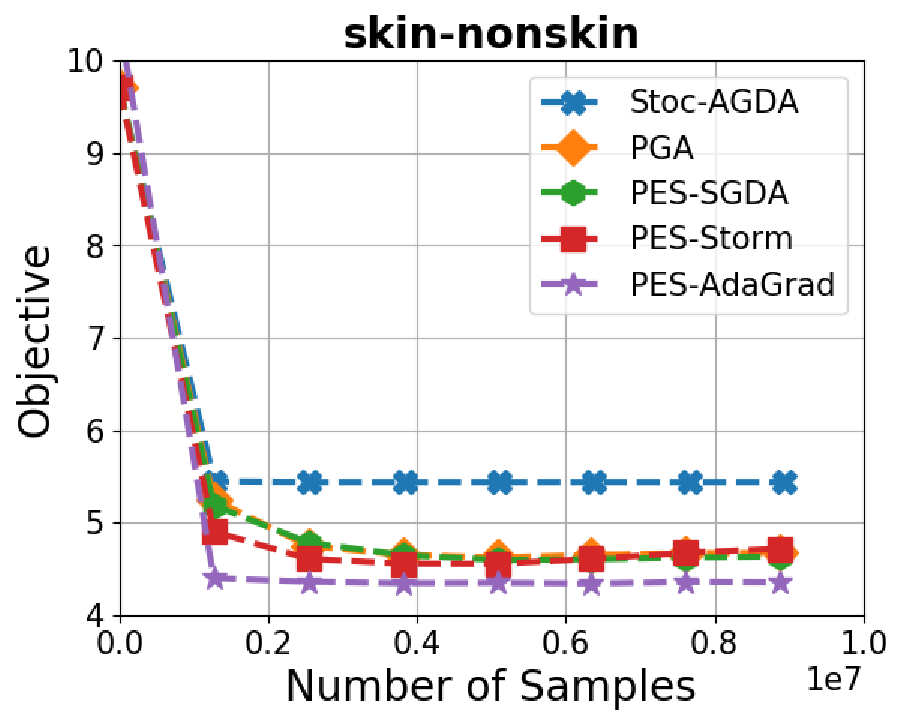} 
\includegraphics[scale=0.3]{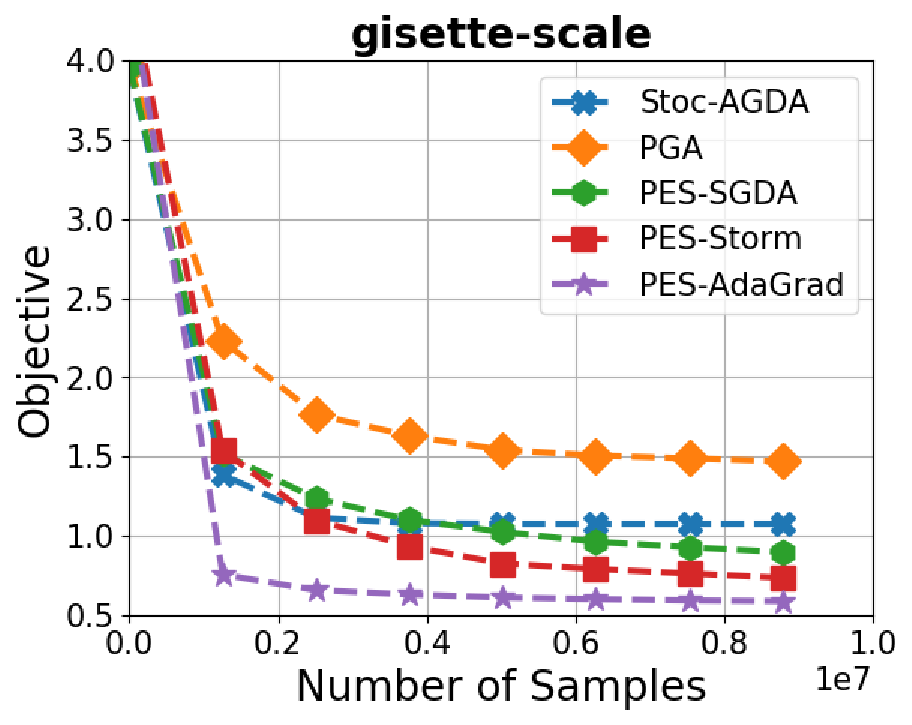} 
\vspace*{-0.1in}
\caption{Results for Non-convex DRO.}
\label{fig:1}
\end{figure*} 

{\bf Non-convex Distributionally Robust Optimization.} This task has been considered in~\citep{rafique2018non}. The problem is formulated as:
\begin{equation} 
\begin{split}
&\min\limits_{x\in\R^d} \max_{y\in\mathcal{S}} \sum_{i=1}^n y_i\phi(\log(1+\exp(-b_i\mathbf{a_i}^T x))) 
- \frac{\theta}{2} \|y - \frac{\mathbf{1}}{n}\|^2
\end{split}
\label{prob:log_truncation}
\end{equation}  
where $(\mathbf{a}_i, b_i)$ denotes feature label pair,  $b_i \in \{-1, 1\}$, $\phi(s)=\log(1+s/2)$ is a non-convex truncation function used to tackle outliers and noisy data, and $\mathcal{S}$ is a simplex. In experiments the simplex constraint is handled by a projection algorithm in \cite{duchi2008efficient}. We conduct experiments on four datasets from LibSVM website \citep{chang2011libsvm}, i.e., gisette-scale, cod-rna,  skin-nonskin and usps. For skin-nonskin, we randomly partition the dataset into training set and testing set of equal size. For other data sets we use the provided training/testing split. For usps, we make the first class to be the positive class and merge the other 9 classes into the negative class. 

We first verify the PL condition of primal problem $P(x)$ of (\ref{prob:log_truncation}) empirically. We plot $\|\nabla P(x)\|^2/2(P(x) - P(x_*))$ in the second figure of the Figure \ref{fig:1}. 

We compare three variants of our method (PES-SGDA, PES-STORM, PES-AdaGrad) with two baselines Stoc-AGDA \citep{yang2020global},  PGA (algorithm 1~\citep{rafique2018non}).
 For all algorithms, we set $\theta=10$. For Stoc-AGDA, the step sizes for $x$ and $y$ are set to be $\frac{\tau_1}{\lambda+t}$ and $\frac{\tau_1}{\lambda+t}$, respectively. $\tau_1$ and $\tau_2$ are tuned in $[1 \sim 1e3]$. $\gamma$ is tuned in $[1\sim 1e4]$. For PES-SGDA, PES-STORM, and PES-AdaGrad, we set $T_k = T_02^k$ and $\eta_k=\eta_0/2^k$, where $T_0$ and $\eta_0$ are tuned in $[500\sim 5000]$, $[0.1, 0.05, 0.01, 0.001]$. $\gamma$ is tuned in $[1\sim 2000]$.
The results are plotted in Figure~\ref{fig:1}. We can see that the proposed algorithms PES-SGDA, PES-STORM and PES-AdaGrad converge faster than the baselines in most cases. PES-STORM and PES-AdaGrad perform better than PES-SGDA on this task, which shows the potential to improve the performance by using STORM type variance techniques or adaptive methods when a task satisfies corresponding assumptions.

{\bf Deep AUC maximization.} This task is  similar to that considered in~\citep{liu2019stochastic}. Deep AUC maximization with a square surrogate loss function is formulated as a NCSC min-max problem which has been introduced in the Section \ref{sec:app}. 
We compare our algorithms, PES-SGDA (Option I), PES-OGDA (Option II), PES-AdaGrad (Option III), with five baseline methods, including stochastic gradient method (SGD) for solving a standard minimization formulation with cross-entropy loss, Stoc-AGDA \citep{yang2020global},  PGA (algorithm 1~\citep{rafique2018non}),  PPD-SG and PPD-AdaGrad~\citep{liu2019stochastic} for solving the same AUC maximization problem. 
 We learn ResNet-20 \citep{he2016identity} with an ELU activation function. 

For the parameter settings, we use a common stage-wise stepsize for SGD, i.e., the initial stepsize is divided by 10 at 40K, 60K of stochastic first-order oracle calls.  For PPD-SG and PPD-AdaGrad, we follow the instructions in their works, i.e., $T_{k}=T_{0} 3^{k}, \eta_{k}=\eta_{0} / 3^{k}$ and $T_0, \eta_0$ and $\gamma$ are tuned in $[500 \sim 2000]$, $[0.1, 0.05, 0.01, 0.001]$, and $[100 \sim 2000]$, respectively. For Stoc-AGDA, the stepsize strategy follows $\frac{\tau_1}{\lambda+t} $, $\frac{\tau_2}{\lambda+t}$ for the dual and primal variables, respectively, where $\tau_1 \ll \tau_2$. The initial values $\tau_1, \tau_2$, $\lambda$ are tuned in $[1, 5, 10, 15]$, $[5, 10, 15, 20]$, and $[1e3, 1e4]$, respectively. For our methods, we adopt the same strategy as PPD-SG and PPD-AdaGrad to tune the  parameters.   

We compare on three benchmark datasets: Cat\&Dog (C2) \citep{DBLP:conf/ccs/ElsonDHS07}, CIFAR10 (C10), CIFAR100 (C100) \citep{krizhevsky2009learning} which have 2, 10, 100 classes, respectively. To fit our task, we convert them into imbalanced datasets following the instructions in \citep{liu2019stochastic}. We firstly construct the binary dataset by splitting the original dataset into two portions with equal size (50\% positive: 50\% negative) and then we randomly remove $90\%, 80\%, 60\%$ data from negative samples on training data, which generate the imbalanced datasets with a positive:negative ratio of 91/9, 83/17, 71/29, respectively. We keep the testing data unchanged. We set the batch size to 128 for all datasets.

\begin{figure*}[t]
\centering
\hspace{-0.1in}
\includegraphics[scale=0.225]{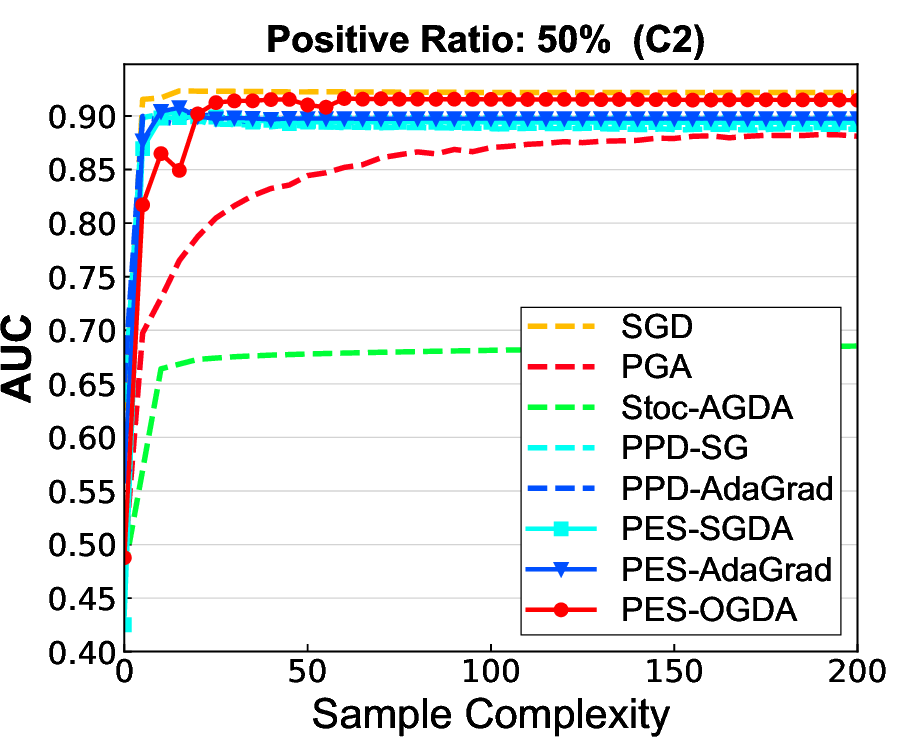}
\includegraphics[scale=0.225]{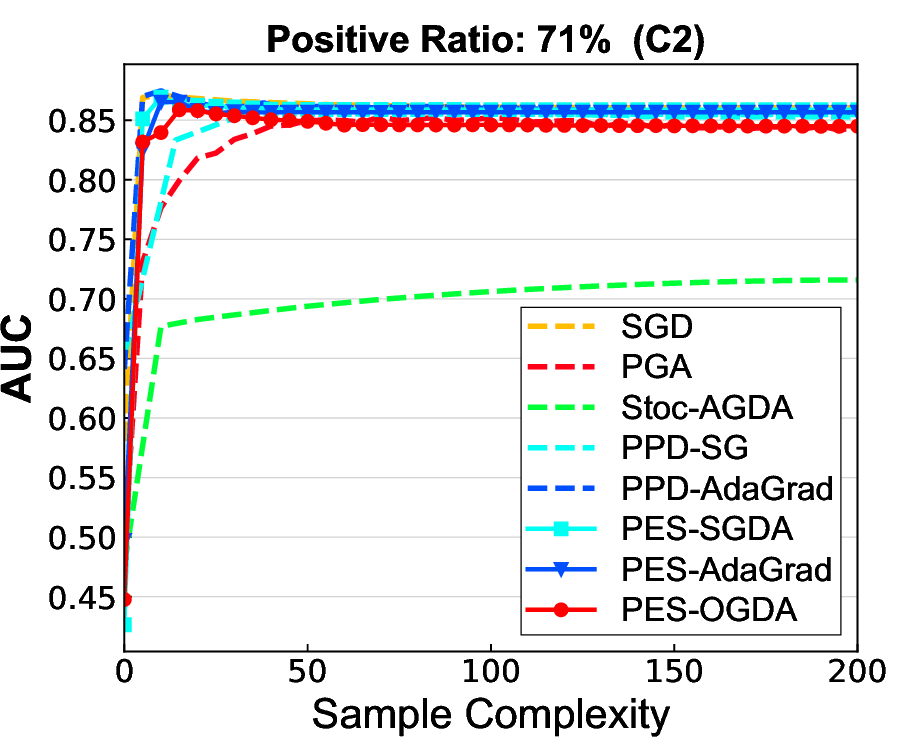}
\includegraphics[scale=0.225]{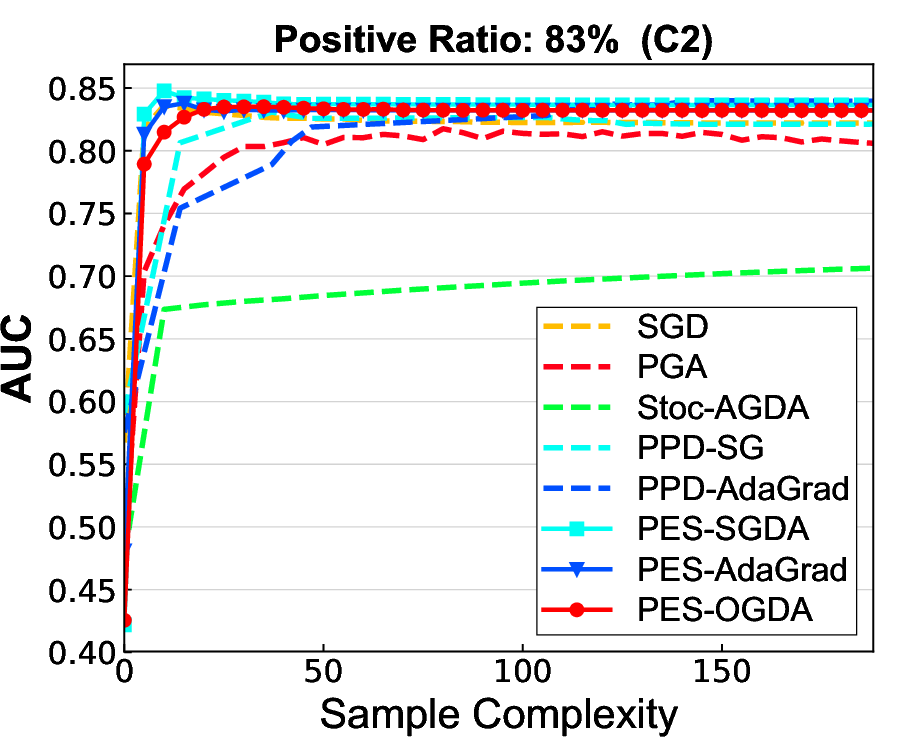}
\includegraphics[scale=0.225]{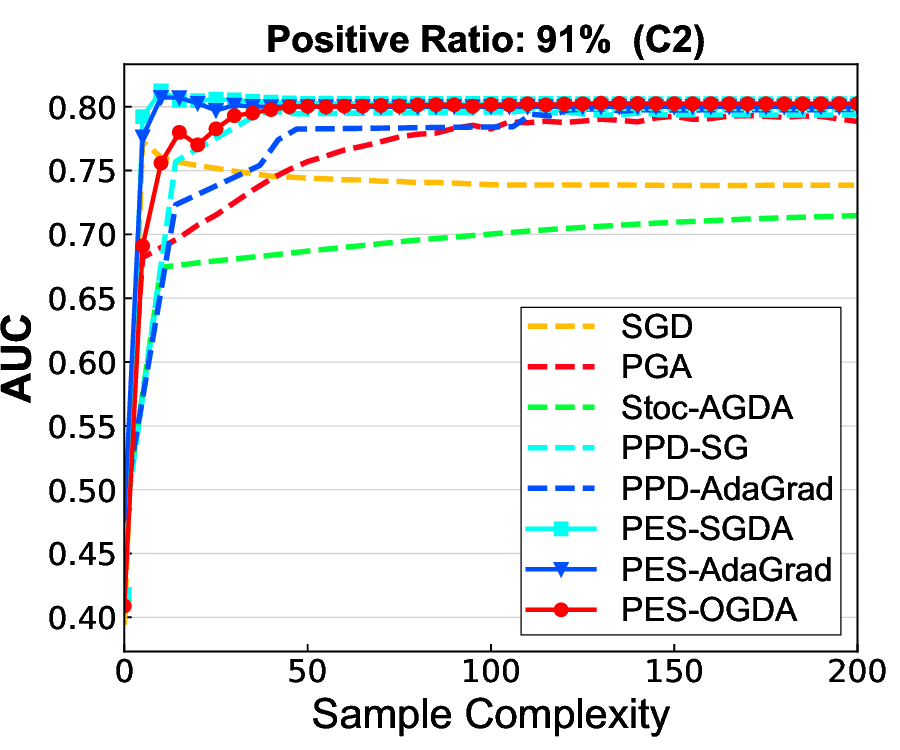}

\hspace{-0.1in}
\includegraphics[scale=0.225]{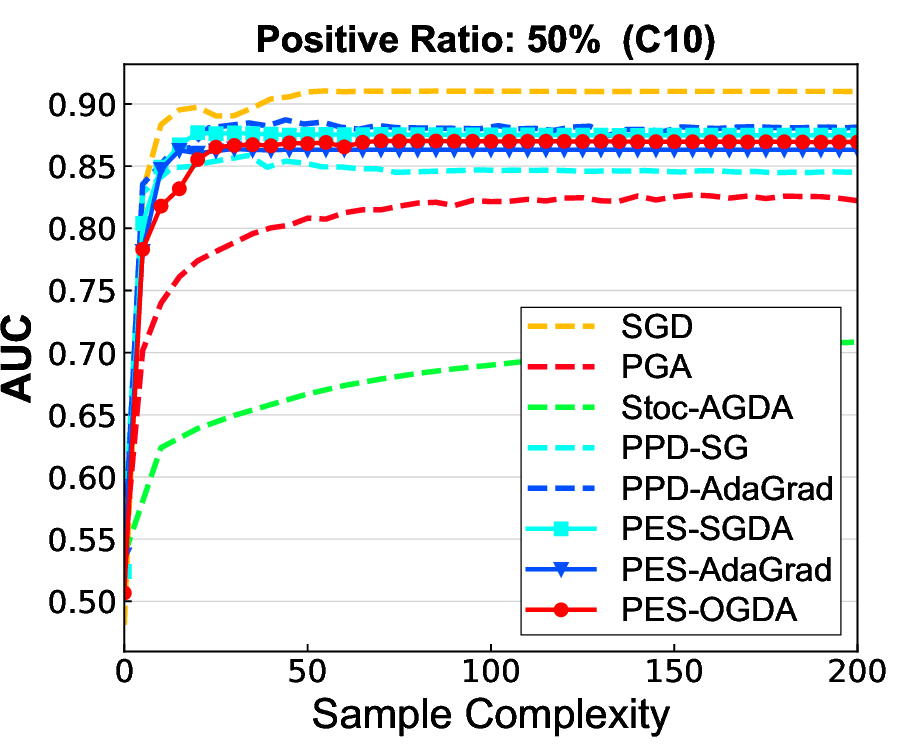}
\includegraphics[scale=0.225]{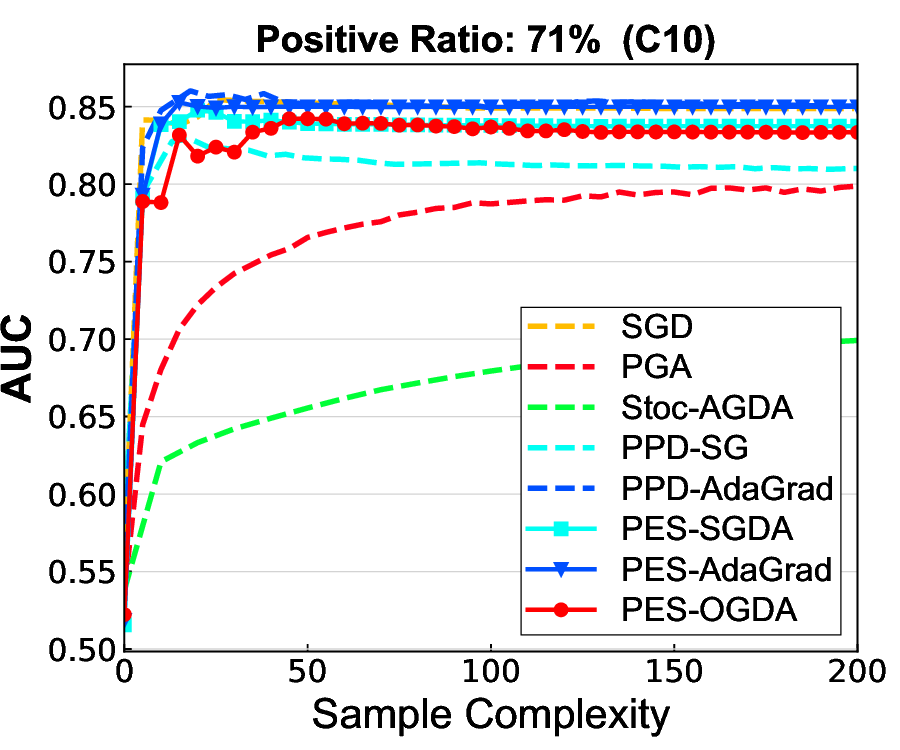}
\includegraphics[scale=0.225]{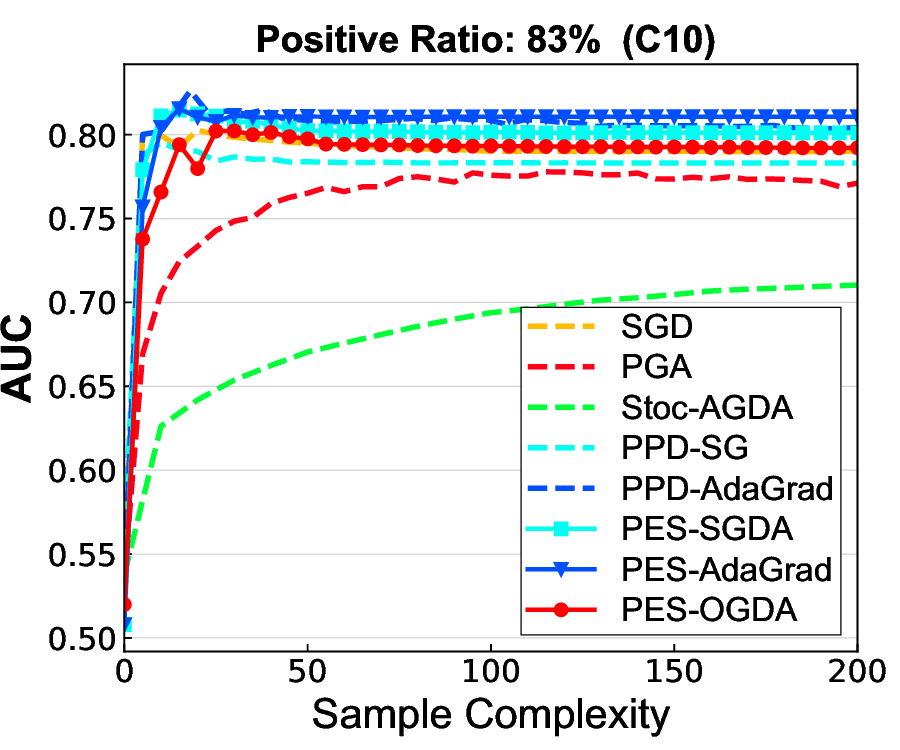}
\includegraphics[scale=0.225]{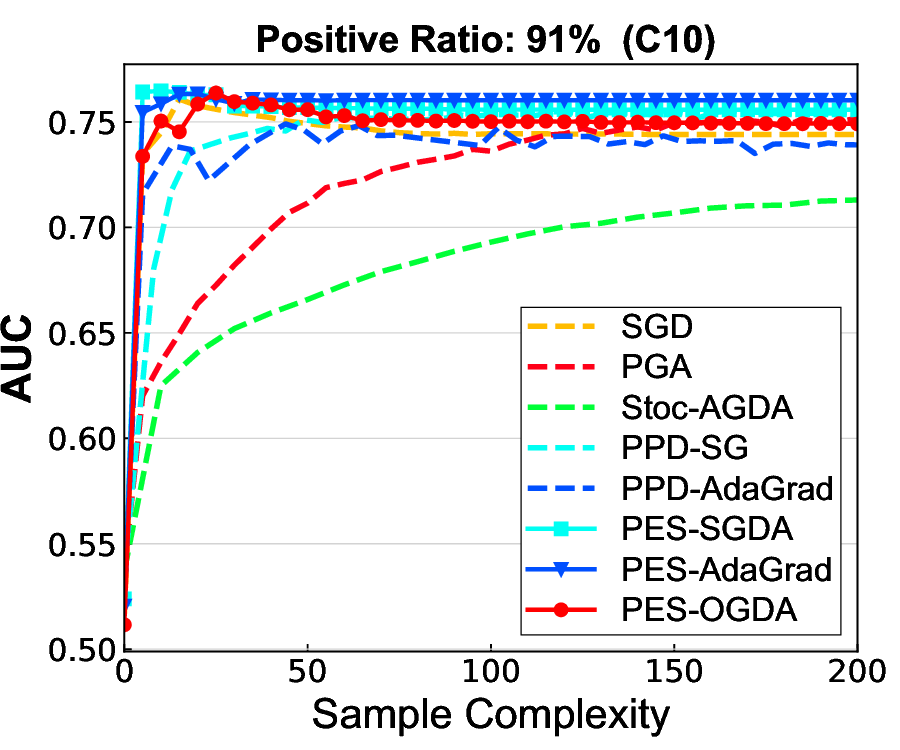}

\hspace{-0.1in}
\includegraphics[scale=0.225]{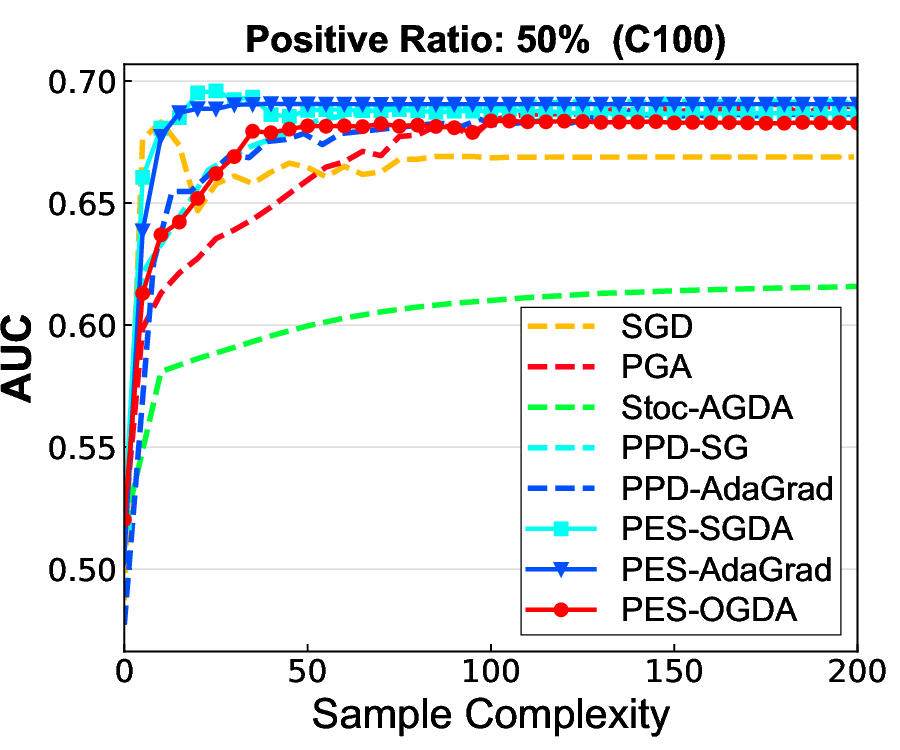}
\includegraphics[scale=0.225]{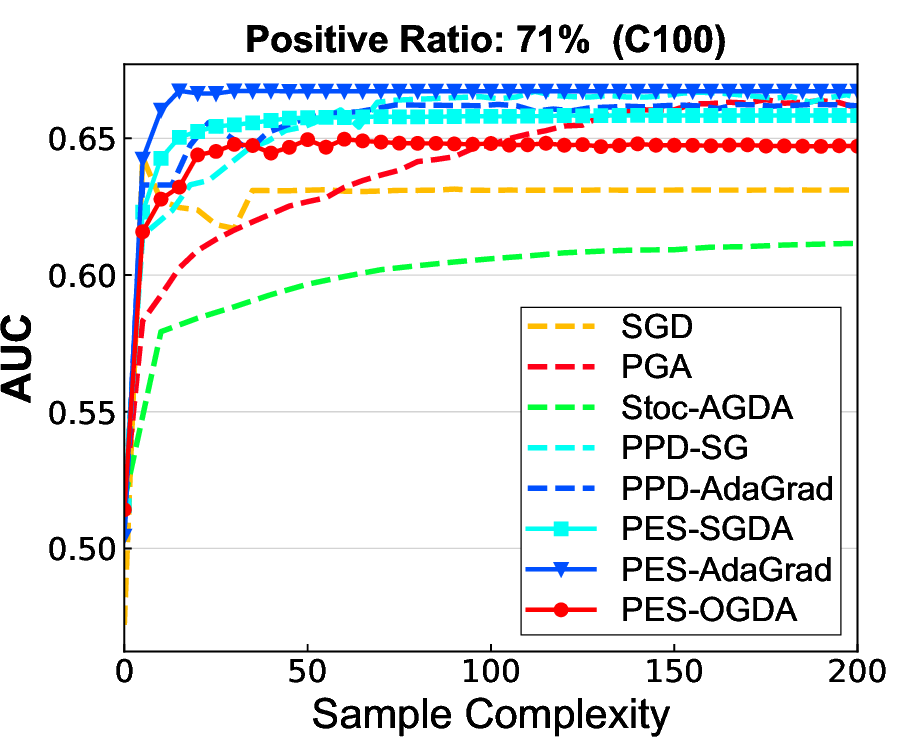}
\includegraphics[scale=0.225]{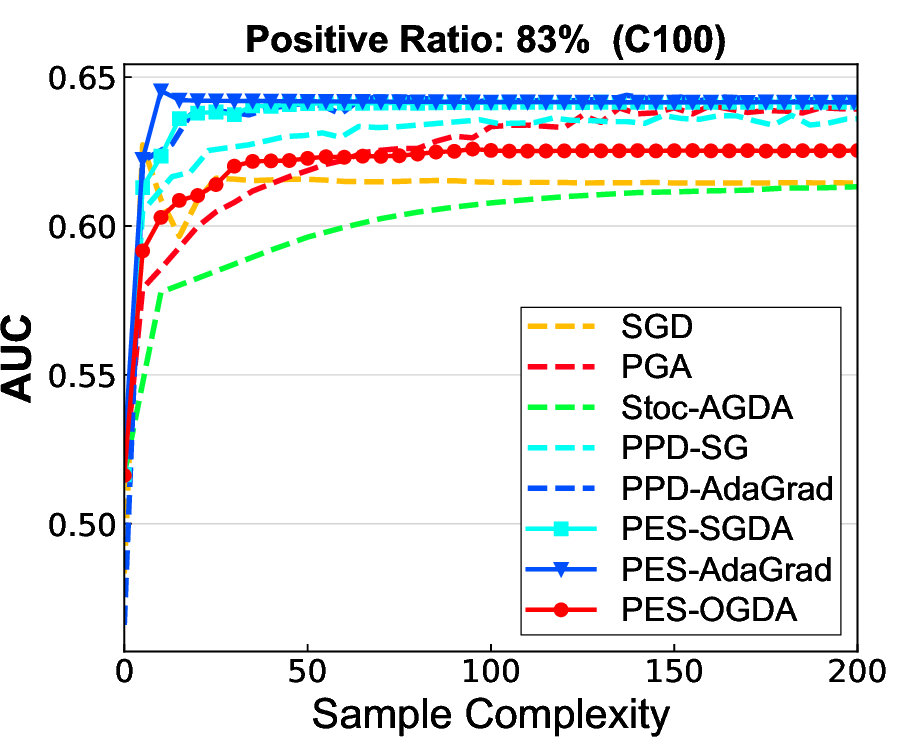}
\includegraphics[scale=0.225]{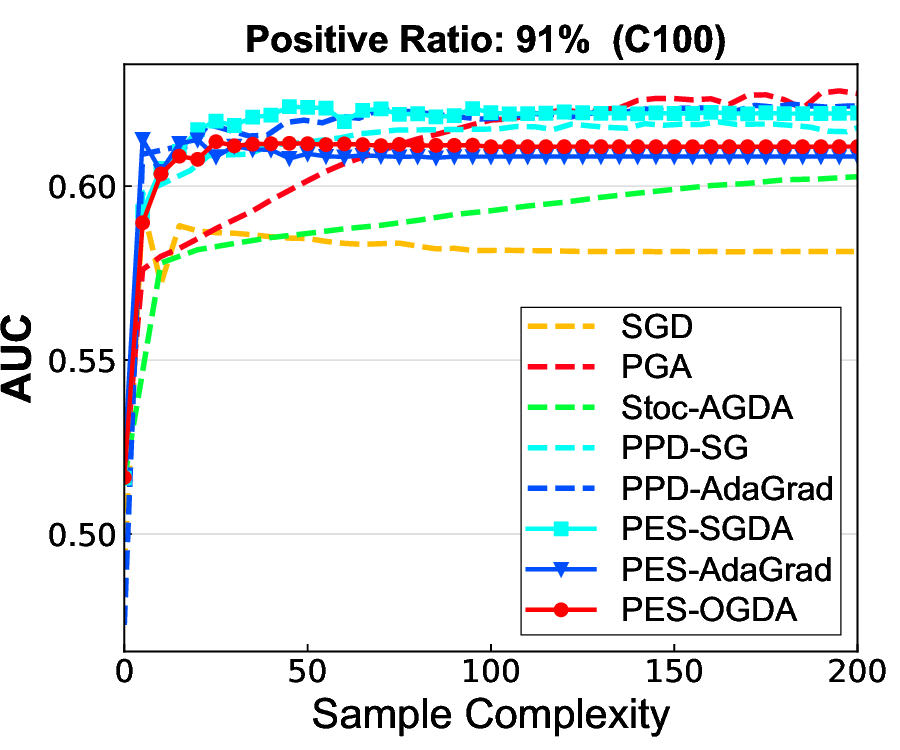} 
\vspace*{-0.1in}
\caption{Comparison of testing AUC on  Cat\&Dog, CIFAR10, CIFAR100.}
\label{fig:0}
\vspace*{-0.15in}
\end{figure*} 

\begin{figure*}[t]
\centering
\hspace{-0.1in}
\includegraphics[scale=0.35]{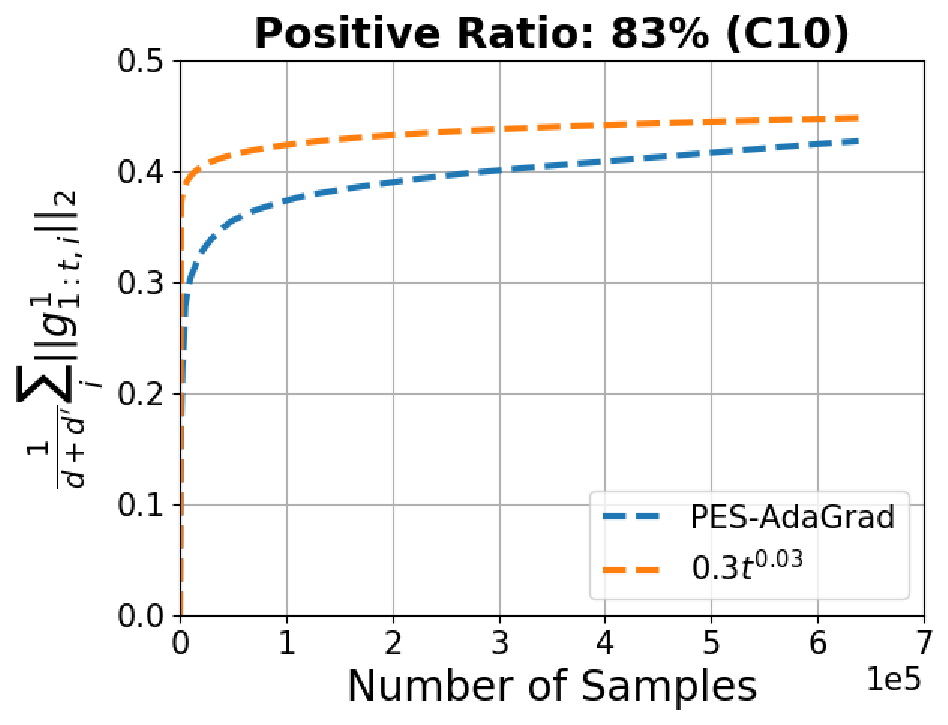} 
\includegraphics[scale=0.35]{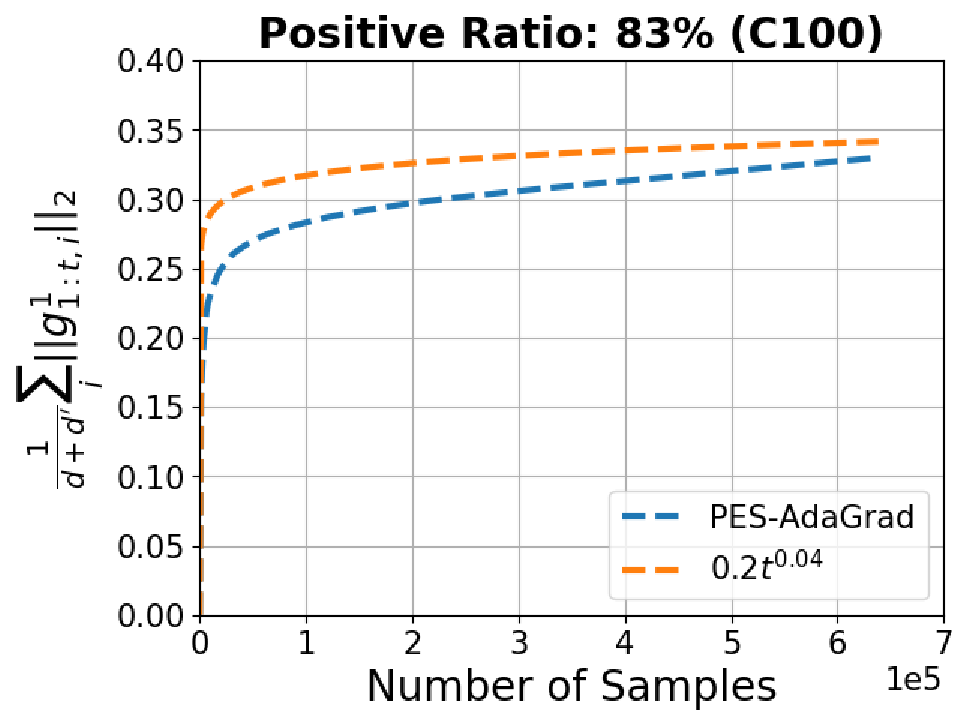} 
\caption{Verification of the Slow Growth Condition}  
\label{fig:3} 
\vspace*{-0.15in} 
\end{figure*}

The testing AUC curve of all algorithms are reported in Figure \ref{fig:0}, where the sample complexity indicates the number of samples used in the training up to 80K of stochastic first-order oracle calls. From the results, we can see that SGD works better (or similar to) than AUC-based methods on the balanced data (50\%). However, PES-SGDA and PES-AdaGrad generally outperform  SGD when the data is imbalanced,  and outperforms PGA and Stoc-AGDA in almost all cases. In addition, the proposed methods performs similarly sometimes better than PPD-SG/PPD-AdaGrad except on C100 (91\% positive ratio).
This is not surprising since PPD-SG/PPD-AdaGrad are designed for AUC maximization under the same PL condition by leveraging its structure and extra data samples for computing a restarted dual solution. In contrast, our algorithms directly use averaged dual solution for restarting.  When the positive ratio is $91\%$ on C100, we observe that PPD-AdaGrad performs better than our algorithms,  showing that using the extra data samples may help in the extreme imbalanced cases. We also observe that the Stoc-AGDA performs worst in all cases with $O(\frac{1}{t})$ stepsize. For our methods, PES-SGDA and PES-AdaGrad perform generally better than PES-OGDA. In Figure \ref{fig:3}, we verify the slow growth condition, i.e. $\|g^k_{1,T_k, i}\| \leq \delta T_k^{\alpha}$  used in the analysis of AdaGrad based algorithms, by plotting the $\frac{1}{d+d'}\sum\limits_{i} \|g^1_{1:t,i}\|_2$ versus the sample complexity. We can seen that the growth of the aggregate of stochastic gradients is slower than the order of $O(\sqrt{T})$.

\section{Conclusion}
In this paper, we have presented generic stochastic algorithms for solving non-convex and strongly concave min-max optimization problems. 
We established convergence for both the objective gap and the duality gap under PL conditions of the objective function for different stochastic updates.  The experiments on deep and non-deep learning tasks have demonstrated the effectiveness of our methods. 

\acks{The feedback provided by the anonymous reviewers is greatly valued. We also wish to acknowledge the support received from the NSF Career Award \#1844403, NSF Program \#2110545, and NSF-Amazon Joint Program \#2147253 for this work.}

\newpage
\bibliography{reference} 
\newpage
\appendix

\section{Convergence of Duality Gap by Stoc-AGDA Algorithm} 
\vspace*{-0.1in}  
To compare our algorithm with Stoc-AGDA in terms of convergence of duality gap, we derive Lemma  \ref{lem:agda} based on Theorem 3.3 of \citep{yang2020global}. 
We first present an auxiliary lemma which is an extension of the Danskin's theorem. 
\begin{lemma}[Corollary of Theorem 1 of \citep{bernhard1995theorem}]
In the \\
min-max problem, when $f(x,y)$ is strong concave in $y$ for any $x$ then the gradient of the function $P(x) =\max_{y\in \mathcal{Y}}$ is $\nabla P(x)=\nabla_x f(x,\hat{y}(x))$ where $\hat{y}(x) = \arg\max_{y\in \mathcal{Y}} f(x, y)$.
\label{lem:primal_gradient_danskin} 
\end{lemma}

Then the convergence of duality gap by Stoc-AGDA algorithm is given in next lemma. 
\begin{lemma}\label{lem:agda}
Supposes Assumption \ref{ass1}, \ref{ass2}, \ref{ass3} and \ref{ass4} hold.  
Stoc-AGDA would reach a $\epsilon$-duality gap by a stochastic first-order oracle call complexity of  $O\left(\frac{\ell^7}{\mu^2\mu_x\mu_y^5\epsilon}\right)$. 
\label{lem:implied_by_yang2020global}
\end{lemma}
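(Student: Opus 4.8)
The plan is to take the convergence guarantee for the Lyapunov function established in Theorem 3.3 of \cite{yang2020global}, which controls the quantity $P(x_t) - P_* + \lambda(P(x_t) - f(x_t,y_t))$ for an appropriate constant $\lambda$, and convert it into a bound on the duality gap $\text{Gap}(x_t,y_t) = \max_{y'} f(x_t,y') - \min_{x'} f(x',y_t)$. The first step is to split the duality gap as
\begin{align*}
\text{Gap}(x_t,y_t) = \big(P(x_t) - f(x_t,y_t)\big) + \big(f(x_t,y_t) - \min_{x'} f(x',y_t)\big),
\end{align*}
so that the first term is directly a summand of the Lyapunov function, while the second term is $f(x_t,y_t)-\min_{x'}f(x',y_t)$, which under Assumption~\ref{ass4} ($x$-side $\mu_x$-PL) is bounded by $\frac{1}{2\mu_x}\|\nabla_x f(x_t,y_t)\|^2$. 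Hence I would need Stoc-AGDA's guarantee to also control $\|\nabla_x f(x_t,y_t)\|^2$ (or equivalently $f(x_t,y_t)-\min_{x'}f(x',y_t)$); this can be obtained because the iterate gap $P(x_t)-f(x_t,y_t)$ together with $\mu_y$-strong concavity controls $\|y_t - y^*(x_t)\|^2$, and then $\ell$-smoothness of $f$ in $x$ relates $\|\nabla_x f(x_t,y_t)\|$ to $\|\nabla P(x_t)\| + \ell\|y_t - y^*(x_t)\|$, while $\|\nabla P(x_t)\|^2 \le 2L(P(x_t)-P_*)$ by smoothness of $P$. So up to constants involving $L,\ell,\mu_y,\mu_x$, the duality gap is bounded by a constant multiple of the Lyapunov function value.

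The second step is bookkeeping the rate and the parameters. Theorem 3.3 of \cite{yang2020global} gives a rate of the form $O\!\big(\text{(problem constants)}/(\mu^2\epsilon)\big)$ for driving the Lyapunov function below $\epsilon$, where the problem constants are powers of $\ell$ and $1/\mu_y$ coming from (a) the smoothness constant of $P$, which in the worst case is $L = \ell + \ell^2/\mu_y = O(\ell^2/\mu_y)$, (b) the condition number $\ell/\mu_y$ of the inner strongly-concave problem, and (c) the step-size restrictions in their alternating scheme. I would plug $L = \Theta(\ell^2/\mu_y)$ into their bound, which already accounts for several of the $\ell$ and $\mu_y$ powers, and then multiply by the extra $\ell^2/\mu_x$-type factor introduced by the conversion above (the $\frac{1}{2\mu_x}\|\nabla_x f\|^2$ bound contributes a $1/\mu_x$, and relating $\|\nabla_x f\|$ to the Lyapunov value through $\ell$-smoothness and strong concavity contributes additional $\ell$ and $1/\mu_y$ powers). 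Tallying the exponents should yield the claimed $O\!\big(\ell^7/(\mu^2\mu_x\mu_y^5\epsilon)\big)$.

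The main obstacle I anticipate is the careful propagation of constants: Theorem 3.3 of \cite{yang2020global} is stated for their two-sided PL setting with a particular Lyapunov weight $\lambda$ and particular step-size ratios, so I must verify that instantiating it under Assumptions~\ref{ass1} and \ref{ass3} (and using the primal PL condition in place of the $x$-side PL condition, as their footnote permits) indeed gives the Lyapunov decay with the constants as I have described, and then track precisely how each smoothness/strong-concavity constant enters the final exponent. There is no deep difficulty here — it is a reduction plus an accounting exercise — but it is error-prone, and the precise power of $\ell$ depends on not double-counting the contributions of the worst-case value of $L$. Everything else (the split of the duality gap, the PL and smoothness inequalities, the strong-concavity bound on $\|y_t - y^*(x_t)\|$) is routine.
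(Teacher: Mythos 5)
Your proposal is correct and reaches the same bound, but by a genuinely different decomposition of the duality gap. The paper decomposes around the saddle point: $\text{Gap}(x_t,y_t) = \big(P(x_t)-P(x_*)\big) + \big(D(y_*)-D(y_t)\big)$ where $D(y)=\min_{x'}f(x',y)$, then bounds $\|y_t-y_*\|^2 \le O(\hat{\epsilon}/\mu_y)$ via strong concavity (splitting through $\hat y(x_t)$), and finally invokes the $O(\ell^2/\mu_x)$-smoothness of $D$ — a consequence of the $x$-side PL condition cited from~\cite{nouiehed2019solving,yang2020global} — to turn $\|y_t-y_*\|^2$ into a bound on $D(y_*)-D(y_t)$. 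You instead telescope through the iterate itself, $\text{Gap}(x_t,y_t)=\big(P(x_t)-f(x_t,y_t)\big)+\big(f(x_t,y_t)-D(y_t)\big)$, take the first summand directly from the Lyapunov function, and bound the second by applying Assumption~\ref{ass4} literally: $f(x_t,y_t)-D(y_t)\le \frac{1}{2\mu_x}\|\nabla_x f(x_t,y_t)\|^2$, then control the gradient norm via $\nabla P(x_t)=\nabla_x f(x_t,\hat y(x_t))$ (Danskin), the cross-Lipschitzness from Assumption~\ref{ass1}(i), the strong-concavity bound on $\|y_t-\hat y(x_t)\|^2$, and $\|\nabla P(x_t)\|^2\le 2L(P(x_t)-P_*)$. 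Your route avoids introducing $y_*$ and the smoothness-of-$D$ implication of PL, using the PL inequality directly, while the paper's route is a shorter chain if one is willing to cite the smoothness of $D$; with the worst-case $L=\Theta(\ell^2/\mu_y)$ both give the same $O(\ell^2/(\mu_x\mu_y))$ blow-up factor and hence the same $O\!\big(\ell^7/(\mu^2\mu_x\mu_y^5\epsilon)\big)$ complexity. One small note: both proofs (yours and the paper's) require Assumption~\ref{ass4}, which you correctly flag but which is omitted from the lemma's stated hypotheses.
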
   
\begin{proof}
\citet{yang2020global} defines the measure as following potential function, 
\begin{align} 
\begin{split} 
P_t = E[P(x_t) - P(x_*)] + \frac{1}{10} E[P(x_t) - f(x_t, y_t)].
\end{split}
\end{align}
By Theorem 3.3 of \citep{yang2020global}, in Stoc-AGDA, $P_t \leq \hat{\epsilon}$ after $O\left(\frac{\ell^5}{\mu^2 \mu_y^4  \hat{\epsilon}}\right)$ stochastic first-order oracle calls.
It directly follows that the objective gap will be less than $\hat{\epsilon}$ after $O\left(\frac{\ell^5}{\mu^2 \mu_y^4 \hat{\epsilon}}\right)$ stochastic first-order oracle calls,  i.e.,
\begin{align} 
\begin{split} 
P(x_t) - P(x_*) \leq P_t \leq \hat{\epsilon}.
\end{split} 
\end{align} 

Besides, after $O\left(\frac{\ell^5}{\mu^2 \mu_y^4 \hat{\epsilon}}\right)$ stochastic first-order oracle calls, we also have 
\begin{align}
f(x_t, \hat{y}(x_t)) - f(x_t, y_t) = P(x_t) - f(x_t, y_t) \leq 10\hat{\epsilon},
\end{align}
where the equality holds by the Lemma \ref{lem:primal_gradient_danskin}. 
By the $\mu_y$-strong concavity of $f(x, \cdot)$, we have
\begin{align} 
\begin{split}
\|y_t - \hat{y}(x_t)\|^2 \leq \frac{f(x_t, \hat{y}(x_t)) - f(x_t, y_t)}{2\mu_y} \leq \frac{5\hat{\epsilon}}{\mu_y},
\end{split}
\end{align}
and 
\begin{align}
\begin{split}
&\|\hat{y}(x_t) - y_*\|^2 
\leq \frac{f(x_t, \hat{y}(x_t)) - f(x_t, y_*)}{2\mu_y} \\
&\leq \frac{f(x_t, \hat{y}(x_t)) - f(x_*, y_*) + f(x_*, y_*) - f(x_t, y_*)}{2\mu_y} \\
&\leq \frac{f(x_t, \hat{y}(x_t)) - f(x_*, y_*)}{2\mu_y} 
= \frac{P(x_t) - P(x_*)}{2\mu_y} \leq \frac{\hat{\epsilon}}{2\mu_y}.
\end{split}
\end{align}

Thus,
\begin{align}
\begin{split}
\|y_t - y_*\|^2 \overset{(a)}{\leq} 2\|y_t - \hat{y}(x_t)\|^2 + 2\|\hat{y}(x_t) - y_*\|^2 \leq \frac{11\hat{\epsilon}}{\mu_y},
\end{split}
\end{align}
where $(a)$ holds since $\|\mathbf{a} - \mathbf{b}\|^2 = \|\mathbf{a} - \mathbf{c} + \mathbf{c} - \mathbf{b}\|^2 \leq 2\|\mathbf{a} - \mathbf{c}\|^2 + 2\|\mathbf{c} - \mathbf{b}\|^2$.
Since $f(\cdot, \cdot)$ is $\ell$-smooth and $f(\cdot, y)$ satisfies $\mu_x$-PL condition for any $y$, we know $D(y) = \min\limits_{x'} f(x', y)$ is smooth with coefficient $ \ell + \frac{\ell^2}{\mu_x} \leq \frac{2\ell^2}{\mu_x}$
\citep{nouiehed2019solving,yang2020global}.
Thus, 
\begin{align}
\begin{split}
f(x_*, y_*) - f(\hat{x}(y_t), y_t)  = D(y_*) - D(y_t) 
\leq \frac{2 \ell^2}{2\mu_x} \|y_t - y_*\|^2 \leq \frac{11\ell^2\hat{\epsilon}}{\mu_x\mu_y}, 
\end{split} 
\end{align} 
where the first equality holds by Lemma A.5 of \citep{nouiehed2019solving}. 

Then we know the duality gap is 
\begin{align}
\begin{split}
f(x_t, \hat{y}(x_t)) - f(\hat{x}(y_t), y_t) 
&= f(x_t, \hat{y}(x_t)) - f(x_*, y_*) + f(x_*, y_*) - f(\hat{x}(y_t), y_t)  \\
&\leq \hat{\epsilon} + \frac{11\ell^2\hat{\epsilon}}{\mu_x \mu_y}.
\end{split} 
\end{align}
To make the duality gap less than $\epsilon$, we need $\hat{\epsilon} \leq O\left(\frac{\mu_x \mu_y \epsilon}{\ell^2}\right)$. 
Therefore, it takes $O\left(\frac{\ell^7}{\mu^2\mu_x\mu_y^5\epsilon}\right)$ stochastic first-order oracle calls to have a $\epsilon$-duality gap for the Algorithm Stoc-AGDA that has been proposed in \citep{yang2020global}. 
\end{proof} 
\section{Convergence Analysis of PES-SGDA} 
We present the convergence rate of primal gap and duality gap if SGDA update is used in Algorithm \ref{alg:subroutine}.
Since the proof is similar to the version with Option II: OGDA
as update, we include the proof in later sections together with the version using OGDA update.

\begin{theorem}
Consider Algorithm \ref{alg:main} that uses Option I: SGDA update in subroutine Algorithm \ref{alg:subroutine}.
Suppose Assumption \ref{ass1}, \ref{ass5}, \ref{ass3} hold. 
Assume $\E\|\nabla_x f(x, y; \xi)\|^2\leq B^2$ and \\
$\E\|\nabla_y f(x, y; \xi)\|^2\leq B^2$.
Take $\gamma = 2\rho$ and
denote $\hat{L} = L + 2\rho$ and $c = 4\rho+\frac{248}{53} \hat{L} \in O(L + \rho)$.
Define $\Delta_k = P(x_0^k) - P(x_*) + \frac{8\hat{L}}{53c}\emph{\text{Gap}}_k(x_0^k, y_0^k)$  and $\epsilon_0=\emph{\text{Gap}}(\bx_0, \by_0)$.
Then we can set $\eta_k = \eta_0  \exp(-(k-1)\frac{2\mu}{c+2\mu})\leq \frac{1}{\rho}$, $T_k = \left\lceil\frac{212 C_1}{\eta_0 \min\{\rho, \mu_y\}}\exp\left((k-1)\frac{2\mu}{c+2\mu}\right)\right\rceil$. 
After $K = \left\lceil\max\left\{\frac{c+2\mu}{2\mu}\log \frac{4\epsilon_0}{\epsilon},
\frac{c+2\mu}{2\mu} \log \frac{80 \eta_0 \hat{L}KB_2}{(c+2\mu)\epsilon} \right\}\right\rceil$ stages, we can have $\Delta_{K+1} \leq \epsilon$. 
The total stochastic first-order oracle call complexity is 
$\widetilde{O}\left(
\frac{(L + \rho)^2 B^2} {\mu^2 \min\{\rho,\mu_y\} \epsilon}\right)$. 
\label{thm:gda_primal}
\end{theorem}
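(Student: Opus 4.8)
The plan is to run the same epoch-wise proximal-point argument used to prove Theorem~\ref{thm:ogda_primal}, replacing only the one-epoch guarantee of the inner solver (OGDA $\to$ SGDA) and the noise bookkeeping (bounded variance $\sigma^2\to$ bounded second moment $B^2$). There are three modules: (i) a one-epoch expected-duality-gap bound for the SGDA iterates on the proximal function $f_k$; (ii) a Lyapunov recursion $\E[\Delta_{k+1}]\le\frac{c}{c+2\mu}\E[\Delta_k]+(\text{epoch noise})$; (iii) unrolling the recursion under the prescribed geometric schedule.

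\textbf{Module (i).} With $\gamma=2\rho$ and Assumption~\ref{ass5}, $f_k(x,y)=f(x,y)+\rho\|x-x_0^k\|^2$ is $\rho$-strongly convex in $x$ and $\mu_y$-strongly concave in $y$, so it has a unique saddle point $z_{k,*}=(x_{k,*},y_{k,*})$. I would write the standard primal--dual one-step inequality for the SGDA (proximal-gradient) update: for every $z\in Z$, $\eta\langle F_k(z_t),z_t-z\rangle\le\tfrac12\|z_{t-1}-z\|^2-\tfrac12\|z_t-z\|^2+\eta^2\|\mathcal G(z_{t-1};\xi_{t-1})\|^2+M_t$ with $M_t$ a martingale-difference term; the only difference from OGDA is that $\mathcal G$ sits at $z_{t-1}$, which costs an extra $\ell\|z_t-z_{t-1}\|^2$, but $\|z_t-z_{t-1}\|\le\eta\|\mathcal G\|\le\sqrt2\,\eta B$ under the bounded-gradient hypothesis, so this is just another $O(\eta^2B^2)$ term and no step-size restriction beyond $\eta_k\le1/\rho$ is needed. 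Summing over $t$, dividing by $T_k$, using convexity--concavity of $f_k$ to pass to the averaged iterate $(x_k,y_k)$, and using the strong convexity/concavity of $f_k$ to handle the unbounded $x$-domain when taking the supremum in $y$ and infimum in $x$, I get $\E[\text{Gap}_k(x_k,y_k)]\le\frac{O(1)}{\eta_kT_k}\E\|z_0^k-z_{k,*}\|^2+O(\eta_kB^2)$. Finally I bound $\E\|z_0^k-z_{k,*}\|^2$ by the potential: $2\rho$-strong convexity of $P_k$ gives $\rho\|x_0^k-x_{k,*}\|^2\le P_k(x_0^k)-P_k(x_{k,*})=P(x_0^k)-P_k(x_{k,*})\le P(x_0^k)-P(x_*)$, and strong concavity in $y$ plus Lipschitzness of $\hat y(\cdot)$ turn $\|y_0^k-y_{k,*}\|^2$ into $O(1/\min\{\rho,\mu_y\})$ times $(P(x_0^k)-P(x_*))+\text{Gap}_k(x_0^k,y_0^k)$, i.e. $O(\Delta_k/\min\{\rho,\mu_y\})$.

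\textbf{Modules (ii)--(iii).} Plugging the choice $T_k\asymp\frac{(\text{large const})}{\eta_0\min\{\rho,\mu_y\}}\exp\!\big(\tfrac{2(k-1)\mu}{c+2\mu}\big)$ makes $\eta_kT_k$ a large constant $\asymp1/\min\{\rho,\mu_y\}$, so the $\frac{1}{\eta_kT_k}\|z_0^k-z_{k,*}\|^2$ contribution to $\Delta_{k+1}$ is a controllable small fraction of $\Delta_k$, while the additive term $O(\eta_kB^2)$ decays geometrically in $k$. Combining this with the decomposition $P(x_k)-P(x_*)\le[P_k(x_k)-P_k(x_{k,*})]+[P_k(x_{k,*})-P(x_*)]$ (the first bracket $\le\text{Gap}_k(x_k,y_k)$, the second contracted via $\mu$-PL applied to the prox step using the stationarity $\nabla P(x_{k,*})=\gamma(x_0^k-x_{k,*})$ and $L$-smoothness of $P$), together with the transfer of $\text{Gap}_{k+1}(x_k,y_k)$ (prox re-centered at $x_k$) back to $\text{Gap}_k(x_k,y_k)$ at the cost of $\rho\|x_k-x_0^k\|^2$-type terms that are themselves dominated by $\text{Gap}_k$, and with the weight $\tfrac{8\hat L}{53c}$ in $\Delta_k$ chosen precisely so all cross-terms cancel, yields $\E[\Delta_{k+1}]\le\tfrac{c}{c+2\mu}\E[\Delta_k]+(\text{geometrically-shrinking noise})$ --- the same algebra as in Theorem~\ref{thm:ogda_primal}. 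Unrolling $K$ stages, summing the geometric series, and choosing $K$ as in the statement gives $\E[\Delta_{K+1}]\le\epsilon$; the total iteration count is $\sum_{k\le K}T_k=O\!\big(\tfrac{c+2\mu}{2\mu}T_K\big)$, which after substitution is $\widetilde O\!\big(\tfrac{(L+\rho)^2B^2}{\mu^2\min\{\rho,\mu_y\}\epsilon}\big)$.

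\textbf{Main obstacle.} The delicate step is Module (i) on the unbounded domain $\R^d$: since SGDA (unlike OGDA/extragradient) evaluates the gradient at the current iterate, I must obtain the duality-gap guarantee without a diameter bound, relying on strong convexity/concavity of $f_k$ and the bounded-gradient assumption, and I must then track the absolute constants carefully so that the contraction factor $\tfrac{c}{c+2\mu}$, the Lyapunov weight $\tfrac{8\hat L}{53c}$, and the constant prefactor in $T_k$ are mutually consistent. A secondary nuisance is making the two proximal-recentering corrections (the $\text{Gap}_k\to\text{Gap}_{k+1}$ shift and the PL-prox contraction) telescope with compatible constants; beyond that, the argument is essentially the bookkeeping already carried out for Theorem~\ref{thm:ogda_primal}.
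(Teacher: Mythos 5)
Your overall architecture is exactly the paper's: the paper proves this theorem by citing a one-epoch SGDA bound (Lemma~4 of \cite{yan2020sharp}, giving $\E[\text{Gap}_k(x_k,y_k)]\le 5\eta_k B^2+\frac{2}{\eta_k T_k}\E[\|\hat{x}_k(y_k)-x_0^k\|^2+\|\hat{y}_k(x_k)-y_0^k\|^2]$) and plugging $C_1=2$, $C_2=5B^2$, $\eta_0\le 1/\rho$ into the unified recursion of Theorem~\ref{thm:appendix_unify_3}, whose internals (weak convexity of $P$, $\mu$-PL, $\hat L$-smoothness of $P_k$, Lemma~\ref{lem:Yan5} for the $\text{Gap}_k\to\text{Gap}_{k+1}$ transfer, geometric unrolling) are what your Modules (ii)--(iii) describe. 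Your treatment of the SGDA-vs-OGDA difference (the extra $O(\eta^2B^2)$ term from evaluating the gradient at $z_{t-1}$, absorbed by the bounded-second-moment hypothesis) is also consistent with how that lemma works.

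The genuine gap is in Module (i), and it is precisely the step you flag as delicate. The standard telescoped primal--dual inequality bounds $f_k(\bar x,y)-f_k(x,\bar y)\le\frac{O(1)}{\eta T}\|z_0-z\|^2+O(\eta B^2)$ for a \emph{fixed} comparison point $z$; to obtain the duality gap $\text{Gap}_k(x_k,y_k)=f_k(x_k,\hat y_k(x_k))-f_k(\hat x_k(y_k),y_k)$ you must substitute the \emph{output-dependent} best responses $z=(\hat x_k(y_k),\hat y_k(x_k))$, not the saddle point $z_{k,*}$ of $f_k$. Substituting $z_{k,*}$ as you propose only controls $f_k(x_k,y_{k,*})-f_k(x_{k,*},y_k)$, which is a lower bound on the duality gap, i.e.\ the wrong direction; upgrading it to the full gap via strong convexity/concavity and smoothness costs extra factors of order $\ell^2/(\rho\mu_y)$ that would spoil the claimed complexity. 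The paper's resolution, which you should adopt, is Lemma~\ref{lem:Yan1}: $\frac{\rho}{4}\|\hat x_k(y_k)-x_0^k\|^2+\frac{\mu_y}{4}\|\hat y_k(x_k)-y_0^k\|^2\le\text{Gap}_k(x_0^k,y_0^k)+\text{Gap}_k(x_k,y_k)$, which makes the distance term at the best-response point self-bounding in $\text{Gap}_k(x_k,y_k)$; with $\frac{C_1}{\eta_kT_k}\le\frac{\min\{\rho,\mu_y\}}{212}$ the resulting coefficient $\frac{1}{53}$ on $\text{Gap}_k(x_k,y_k)$ is moved to the left-hand side, and no Lipschitzness of $\hat y(\cdot)$ or separate bound on $\|z_0^k-z_{k,*}\|^2$ is needed. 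With that substitution the rest of your argument goes through as written.
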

\textbf{Remark.} The bounded stochastic gradient assumption i.e., $\E[\|\nabla_x f(x, y; \xi)\|^2]\leq B^2$ and $\E[\|\nabla_y f(x, y; \xi)\|^2]\leq B^2$ is only used for the analysis of our algorithm employing the SGDA update (Option I), and it is not used for other updates. 
It is notable that in min-max optimization it is an open question to get rid of the bounded stochastic gradient assumption for the vanilla SGDA updates in order to establish convergence bound for the duality gap. To the best of our knowledge, in the existing works over the gap convergence of stochastic min-max optimization that can achieve state-of-the-art complexity, they either use this bounded stochastic gradient assumption \citet{nemirovski2009robust,yan2020sharp}, or use some extra steps other than simple SGDA \citep{juditsky2011solving,zhao2022accelerated,hsieh2019convergence,zhao2019optimal,yang2020global}.

\begin{corollary}
Under the same setting as in Theorem \ref{thm:gda_primal} and suppose Assumption \ref{ass4} holds aw well.
To reach an $\epsilon$-duality gap, it takes a total stochastic first-order oracle call complexity of $\widetilde{O}\left(
\frac{(L + \rho)^2(\rho/\mu_x+1) B^2} {\mu^2 \min\{\rho,\mu_y\} \epsilon}\right)$. 
\label{cor:gda_duality}
\end{corollary}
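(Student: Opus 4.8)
The plan is to reduce the duality-gap guarantee to the primal/proximal guarantee already proved in Theorem~\ref{thm:gda_primal}, by establishing a \emph{deterministic} inequality of the form $\text{Gap}(x_K,y_K)=O\big((\rho/\mu_x+1)\big)\,\Delta_{K+1}$, and then running Theorem~\ref{thm:gda_primal} to a correspondingly smaller target accuracy. Write $P(x)=\max_{y'\in Y}f(x,y')$, $D(y)=\min_{x'}f(x',y)$ (finite by Assumption~\ref{ass4}), $f_{K+1}(x,y)=f(x,y)+\frac{\gamma}{2}\|x-x_K\|^2$, and $D_{K+1}(y)=\min_{x'}f_{K+1}(x',y)$. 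Since the returned point $(x_K,y_K)=(x_0^{K+1},y_0^{K+1})$ is itself the prox-center of $f_{K+1}$, the quadratic term vanishes in the $\max$-part, so $\max_{y'}f_{K+1}(x_K,y')=P(x_K)$ and we get the exact identity
\begin{align*}
\text{Gap}(x_K,y_K)=P(x_K)-D(y_K)=\underbrace{\big(P(x_K)-D_{K+1}(y_K)\big)}_{=\ \text{Gap}_{K+1}(x_K,y_K)}\ +\ \big(D_{K+1}(y_K)-D(y_K)\big).
\end{align*}
Thus the only thing to control is the price $D_{K+1}(y_K)-D(y_K)\ge 0$ of having added the proximal regularizer.

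To bound that price, let $\hat x$ be the (unique) minimizer of $x\mapsto f_{K+1}(x,y_K)$, which is $\rho$-strongly convex since $f(\cdot,y_K)$ is $\rho$-weakly convex (Assumption~\ref{ass5}) and $\gamma=2\rho$. Then $D_{K+1}(y_K)-D(y_K)=\big(f(\hat x,y_K)-D(y_K)\big)+\frac{\gamma}{2}\|\hat x-x_K\|^2$ with both terms nonnegative. The key step is to estimate the gap $f(\hat x,y_K)-D(y_K)$ \emph{at $\hat x$} rather than at $x_K$: the optimality condition for $\hat x$ reads $\nabla_x f(\hat x,y_K)=-\gamma(\hat x-x_K)$, so the $x$-side $\mu_x$-PL condition (Assumption~\ref{ass4}) gives $f(\hat x,y_K)-D(y_K)\le\frac{1}{2\mu_x}\|\nabla_x f(\hat x,y_K)\|^2=\frac{\gamma^2}{2\mu_x}\|\hat x-x_K\|^2$, whence $D_{K+1}(y_K)-D(y_K)\le\frac{\gamma}{2}\big(\frac{\gamma}{\mu_x}+1\big)\|\hat x-x_K\|^2$. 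Finally, $\rho$-strong convexity of $f_{K+1}(\cdot,y_K)$ and $f(x_K,y_K)\le P(x_K)$ yield $\|\hat x-x_K\|^2\le\frac{2}{\rho}\big(f_{K+1}(x_K,y_K)-D_{K+1}(y_K)\big)\le\frac{2}{\rho}\text{Gap}_{K+1}(x_K,y_K)$. Substituting $\gamma=2\rho$ gives $D_{K+1}(y_K)-D(y_K)\le 2\big(\tfrac{2\rho}{\mu_x}+1\big)\text{Gap}_{K+1}(x_K,y_K)$, hence $\text{Gap}(x_K,y_K)\le\big(3+\tfrac{4\rho}{\mu_x}\big)\text{Gap}_{K+1}(x_K,y_K)$.

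To finish, note that from $\Delta_{K+1}=P(x_0^{K+1})-P(x_*)+\frac{8\hat L}{53c}\text{Gap}_{K+1}(x_0^{K+1},y_0^{K+1})$ and nonnegativity of the primal-gap term we get $\text{Gap}_{K+1}(x_K,y_K)\le\frac{53c}{8\hat L}\Delta_{K+1}$, and since $c\in O(L+\rho)=O(\hat L)$ with $\rho\le\hat L/2$ the factor $\tfrac{53c}{8\hat L}$ is an absolute constant ($\le 45$). Combining, $\text{Gap}(x_K,y_K)=O\big((\rho/\mu_x+1)\big)\Delta_{K+1}$ pathwise, so $\E[\text{Gap}(x_K,y_K)]=O\big((\rho/\mu_x+1)\big)\E[\Delta_{K+1}]$. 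Now invoke Theorem~\ref{thm:gda_primal} with target accuracy $\hat\epsilon=\Theta\!\big(\epsilon/(\rho/\mu_x+1)\big)$ in place of $\epsilon$ (and the same step-size/epoch schedule with $\epsilon$ replaced by $\hat\epsilon$): this achieves $\E[\Delta_{K+1}]\le\hat\epsilon$ in $\widetilde O\big(\frac{(L+\rho)^2 B^2}{\mu^2\min\{\rho,\mu_y\}\hat\epsilon}\big)=\widetilde O\big(\frac{(L+\rho)^2(\rho/\mu_x+1)B^2}{\mu^2\min\{\rho,\mu_y\}\epsilon}\big)$ iterations, and therefore $\E[\text{Gap}(x_K,y_K)]\le\epsilon$, which is the claim.

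I expect the main obstacle to be precisely the estimate of $D_{K+1}(y_K)-D(y_K)$. The naive route — applying quadratic growth of $f(\cdot,y_K)$ at $x_K$ — produces a self-referential inequality roughly of the form $A\le\frac{\gamma}{\mu_x}(G+A)$ that closes only when $\gamma=2\rho<\mu_x$, which would make the corollary vacuous in the regime $\rho=\ell$ used elsewhere in the paper. Evaluating the PL inequality at the proximal minimizer $\hat x$, where the gradient of $f$ is controlled \emph{exactly} by $\gamma(\hat x-x_K)$, is what removes this restriction and yields the clean $(\rho/\mu_x+1)$ factor; the remaining pieces (the $\text{Gap}$ identity, strong convexity of $f_{K+1}(\cdot,y_K)$, constant bookkeeping, and the reduction to Theorem~\ref{thm:gda_primal}) are routine. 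The same argument carries over verbatim to the OGDA version in Corollary~\ref{cor:ogda_duality}, with $B^2$ replaced by $\sigma^2$.
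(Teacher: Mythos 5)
Your proposal is correct and follows essentially the same route as the paper's proof: the same decomposition $\text{Gap}(x_K,y_K)=\text{Gap}_{K+1}(x_K,y_K)+\big(f(\hat x_{K+1}(y_K),y_K)-D(y_K)\big)+\frac{\gamma}{2}\|\hat x_{K+1}(y_K)-x_K\|^2$, the same key step of applying the $x$-side PL condition at the proximal minimizer where the optimality condition gives $\nabla_x f(\hat x,y_K)=-\gamma(\hat x-x_K)$, and the same final reduction to Theorem~\ref{thm:gda_primal} with target accuracy scaled by $(\rho/\mu_x+1)^{-1}$. The only cosmetic difference is that you bound $\|\hat x-x_K\|^2$ by direct $\rho$-strong convexity of $f_{K+1}(\cdot,y_K)$ (constant $2/\rho$) where the paper invokes Lemma~\ref{lem:Yan1} (constant $8/\rho$); both are valid and affect only absolute constants.
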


\begin{theorem}
Suppose Assumption \ref{ass1}, \ref{ass4}, \ref{ass5} hold and $0<\rho \leq \frac{\mu}{8}$.
Assume $\E\|\nabla_x f(x, y; \xi)\|^2\leq B^2$ and $\E\|\nabla_y f(x, y; \xi)\|^2\leq B^2$.
Take $\gamma = 2\rho$.
Define $\Delta_k = 475(P(x_0^k) - P(x_*)) + 57\emph{\text{Gap}}_k(x_0^k, y_0^k)$  and $\epsilon_0=\emph{\text{Gap}}(\bx_0, \by_0)$.
Then we can set $\eta_k = \eta_0  \exp(-\frac{k-1}{16}) \leq \frac{1}{\rho}$, $T_k = \left\lceil\frac{768}{\eta_0 \min\{\mu/8, \mu_y\}}\exp\left(\frac{k-1}{16}\right)\right\rceil$. 
After $K = \left\lceil\max\left\{16\log \frac{1200\epsilon_0}{\epsilon}, 
16\log \frac{6000\eta_0  K B^2}{\epsilon} \right\}\right\rceil$ stages, we can have $\Delta_{K+1} \leq \epsilon$.  
The total stochastic first-order oracle call complexity is 
$\widetilde{O}\left( \frac{B^2}{\min\{\mu, \mu_y\} \epsilon}\right)$. 
\label{thm:gda_primal_rho<mu}
\end{theorem}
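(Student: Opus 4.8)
The proof runs in parallel with that of Theorem~\ref{thm:ogda_primal_rho<mu} (the excerpt states the two are proved together); the only structural change is that the single-epoch guarantee for the OGDA/mirror-prox subroutine is replaced by the analogous guarantee for the SGDA update, with the second-moment bound $B^2$ carried through in place of the variance $\sigma^2$. The plan is to set up the proximal function $f_k(x,y)=f(x,y)+\frac{\gamma}{2}\|x-x_0^k\|^2$, observe that under Assumptions~\ref{ass1} and~\ref{ass5} and the stated choice of $\gamma$ (taken of order $\mu$, larger than $2\rho$ so that $f_k$ is strongly-convex-strongly-concave, yet at most $\mu/4$, so that $\lambda:=\min\{\gamma-\rho,\mu_y\}=\Theta(\min\{\mu,\mu_y\})$) it has a unique saddle point $z_k^*=(x_k^*,y_k^*)$, and that $x_k^*=\arg\min_x\{P(x)+\frac{\gamma}{2}\|x-x_0^k\|^2\}$ is the proximal point of $P$ at $x_0^k$. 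Throughout we use the $\mu$-PL condition on $P$ (Assumption~\ref{ass3}, which also follows from Assumption~\ref{ass4} by Lemma~\ref{lem:primal_PL}).

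First I would prove the single-epoch SGDA bound. The SGDA iterates are projected stochastic steps for the $\lambda$-strongly monotone, $(\ell+\gamma)$-Lipschitz saddle operator of $f_k$, so the standard analysis of stochastic projected gradient for strongly monotone problems, applied to the averaged output $(x_k,y_k)$ and using $\E\|\mathcal G(z;\xi)\|^2\le 2B^2$, gives
\begin{align*}
\E\big[\text{Gap}_k(x_k,y_k)\big]\ \le\ \frac{\E\|z_0^k-z_k^*\|^2}{2\eta_k T_k}+c_0\,\eta_k B^2\ \le\ \frac{\E[\text{Gap}_k(z_0^k)]}{\lambda\,\eta_k T_k}+c_0\,\eta_k B^2,
\end{align*}
where the last step is the strong-convex-concavity estimate $\tfrac{\lambda}{2}\|z_0^k-z_k^*\|^2\le\text{Gap}_k(z_0^k)$; the bounded-gradient assumption is precisely what makes this inequality telescoping-friendly for the plain SGDA update, and with the stated schedule $\eta_kT_k$ is a constant of order $1/\lambda$, so one epoch multiplies the proximal duality gap by a small absolute constant and adds $O(\eta_kB^2)$. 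Next comes the proximal-point-plus-PL step: combining the stationarity identity $\nabla P(x_k^*)=-\gamma(x_k^*-x_0^k)$, the $\mu$-PL inequality, and the descent inequality $P(x_k^*)+\frac{\gamma}{2}\|x_k^*-x_0^k\|^2\le P(x_0^k)$ yields $P(x_k^*)-P(x_*)\le\frac{\gamma}{\gamma+\mu}(P(x_0^k)-P(x_*))$ and $\frac{\gamma}{2}\|x_k^*-x_0^k\|^2\le P(x_0^k)-P(x_k^*)$; since $\gamma\le\mu/4$ this contraction factor is a constant $\le 1/5$.

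Third, I would bridge epoch $k$ to epoch $k+1$. Strong convexity of $P_k(x)=\max_{y\in Y}f_k(x,y)$ gives $\|x_k-x_k^*\|^2\le\frac{2}{\gamma-\rho}\text{Gap}_k(x_k,y_k)$ and $\|y_k-y_k^*\|^2\le\frac{2}{\mu_y}\text{Gap}_k(x_k,y_k)$; then $L$-smoothness of $P$ near $x_k^*$ combined with the preceding step bounds $P(x_k)-P(x_*)$ by a weighted combination of $P(x_0^k)-P(x_*)$ and $\text{Gap}_k(x_k,y_k)$, while $(\ell+\gamma)$-smoothness of the shifted function $f_{k+1}=f+\frac{\gamma}{2}\|x-x_k\|^2$, together with the fact that its saddle point $z_{k+1}^*$ lies within a controlled distance of $(x_k^*,y_k^*)$ (via $x_{k+1}^*=\mathrm{prox}(x_k)$ and nonexpansiveness of the proximal map), bounds $\text{Gap}_{k+1}(x_k,y_k)$ similarly. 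Feeding these and the single-epoch bound into $\Delta_{k+1}=475(P(x_0^{k+1})-P(x_*))+57\,\text{Gap}_{k+1}(x_0^{k+1},y_0^{k+1})$ and taking expectations yields a recursion $\E[\Delta_{k+1}]\le e^{-1/16}\E[\Delta_k]+c_1\eta_kB^2$ (the diagonal contractions are in fact $\le 1/5$; the weaker rate $e^{-1/16}$ is used so that it matches the geometric step-size decay and the accumulated noise telescopes cleanly); the weights $475$ and $57$ are exactly what makes the diagonal contractions dominate the off-diagonal coupling terms, and $\rho\le\mu/8$ is used in an essential way here since it decouples $\gamma$ from $\rho$ so that $\gamma+\mu$, $\gamma-\rho$ and $\ell+\gamma$ all have the expected orders.

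Finally, unrolling with $\eta_k=\eta_0e^{-(k-1)/16}$ and $T_k\propto\eta_0^{-1}e^{(k-1)/16}$ gives $\E[\Delta_{K+1}]\le e^{-K/16}\Delta_1+O(\eta_0KB^2e^{-(K-1)/16})$, with $\Delta_1=O(\epsilon_0)$ because $P(x_0)-P(x_*)\le\text{Gap}(x_0,y_0)\le\epsilon_0$ (Assumption~\ref{ass1}(iv)) and $\text{Gap}_1(x_0,y_0)\le\text{Gap}(x_0,y_0)$; choosing $K$ as in the statement forces both terms below $\epsilon/2$, and $\sum_{k=1}^KT_k=O(\eta_0^{-1}\lambda^{-1}e^{K/16})$ becomes $\widetilde O(B^2/(\min\{\mu,\mu_y\}\epsilon))$ after substituting $\eta_0=\Theta(1/\ell)$ and $\lambda=\Theta(\min\{\mu,\mu_y\})$. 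I expect the main obstacle to be the third step together with the constant bookkeeping: handling the reference-point shift $x_0^k=x_{k-1}\mapsto x_0^{k+1}=x_k$ so that $\text{Gap}_{k+1}(x_k,y_k)$ --- which is not a native quantity of epoch $k$'s subproblem --- is dominated by $\text{Gap}_k(x_k,y_k)$ and the primal gap, and verifying that with the weights $475,57$ the induced two-dimensional iteration map has spectral radius at most $e^{-1/16}$.
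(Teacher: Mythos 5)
Your overall architecture matches the paper's: the statement is proved by plugging the SGDA one-epoch bound (with constants $C_1=O(1)$, $C_2=O(B^2)$ replacing the OGDA pair $(1,13\sigma^2)$) into the same unified epoch recursion $\E[\Delta_{k+1}]\le e^{-1/16}\E[\Delta_k]+O(\eta_kB^2)$ for the Lyapunov function $\Delta_k=475(P(x_0^k)-P(x_*))+57\,\text{Gap}_k(x_0^k,y_0^k)$, followed by the same unrolling; your identification of the regularization level ($\gamma=\Theta(\mu)$, at most $\mu/4$) is in fact more faithful to the paper's unified theorem for the regime $\rho\le\mu/8$ than the theorem statement's own ``$\gamma=2\rho$''. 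Two points of divergence deserve attention. First, your one-epoch bound is stated with the distance $\|z_0^k-z_k^*\|^2$ to the saddle point of $f_k$; what the averaged-iterate analysis actually controls is the distance to the \emph{best responses} $(\hat{x}_k(y_k),\hat{y}_k(x_k))$ of the output, and converting that to duality gaps (the paper's Lemma~\ref{lem:Yan1}) puts $\text{Gap}_k(x_k,y_k)$ on both sides of the inequality, which must then be absorbed with a small coefficient ($1/96$ in the paper); this is fixable but is not the inequality you wrote. Second, and more substantively, the epoch-to-epoch bridge --- lower-bounding $\text{Gap}_k(x_k,y_k)$ by a positive multiple of $\text{Gap}_{k+1}(x_0^{k+1},y_0^{k+1})$ plus the primal progress $P(x_0^{k+1})-P(x_0^k)$, with constants compatible with the weights $475$ and $57$ --- is the entire content of the argument, and you defer it. The paper does not derive it from proximal-map nonexpansiveness and smoothness of the shifted subproblem as you propose; it invokes Lemma~8 of \cite{yan2020sharp} (Lemma~\ref{lem:Yan8}), which gives $\text{Gap}_k(x_k,y_k)\ge(3-2/\alpha)\text{Gap}_{k+1}(x_0^{k+1},y_0^{k+1})-\frac{\mu\alpha}{8(1-\alpha)}\|x_0^{k+1}-x_0^k\|^2$ together with $\text{Gap}_k(x_k,y_k)\ge P(x_0^{k+1})-P(x_0^k)+\frac{\mu}{8}\|x_k-x_0^k\|^2$, combined at $\alpha=5/6$, and it replaces your prox-point contraction $P(x_k^*)-P(x_*)\le\frac{\gamma}{\gamma+\mu}(P(x_0^k)-P(x_*))$ by the direct PL-based inequality $P(x_0^k)-P(x_*)\le\frac{16}{15}\text{Gap}_k(x_0^k,y_0^k)$. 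Your alternative route through the exact saddle point $z_k^*$ is plausible in spirit, but until you produce the explicit two-by-two contraction with those weights, the proof is incomplete precisely at the step you flagged as the main obstacle.
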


\begin{corollary} 
Under the same setting as in Theorem \ref{thm:gda_primal_rho<mu} and suppose Assumption \ref{ass4} holds as well. 
To reach an $\epsilon$-duality gap, it takes total stochastic first-order oracle call complexity of 
$\widetilde{O}\left(\frac{(\mu/\mu_x+1) B^2}{\min\{\mu, \mu_y\} \epsilon}\right)$.  
\label{cor:gda_duality_rho<mu} 
\end{corollary}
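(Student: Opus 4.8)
The plan is to obtain Corollary~\ref{cor:gda_duality_rho<mu} from Theorem~\ref{thm:gda_primal_rho<mu} in exactly the way Corollary~\ref{cor:ogda_dual_rho<mu} is obtained from its primal counterpart: (a) run the primal PES-SGDA method to a \emph{rescaled} target accuracy, and (b) prove a deterministic \emph{transfer inequality} bounding the true duality gap $Gap(x_K,y_K)$ by the Lyapunov quantity $\Delta_{K+1}$ up to a multiplicative factor $O(\mu/\mu_x+1)$, using the $x$-side $\mu_x$-PL condition (Assumption~\ref{ass4}). Concretely, first invoke Theorem~\ref{thm:gda_primal_rho<mu} with $\epsilon' = c\,\mu_x\epsilon/(\mu+\mu_x)$ for a suitable absolute constant $c$. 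Since $\Delta_k = 475\big(P(x_0^k)-P(x_*)\big) + 57\,\text{Gap}_k(x_0^k,y_0^k)$ has positive coefficients on both terms, $\E[\Delta_{K+1}]\le\epsilon'$ simultaneously gives $\E[P(x_K)-P(x_*)]\le \epsilon'/475$ and $\E[\text{Gap}_K(x_K,y_K)]\le \epsilon'/57$. Replacing $\epsilon$ by $\epsilon'$ only inflates the epoch count $K$ by an additive $O(\log(\mu/\mu_x+1))$ (absorbed into $\widetilde{O}$) and multiplies the sample complexity $\widetilde{O}(B^2/(\min\{\mu,\mu_y\}\epsilon))$ of Theorem~\ref{thm:gda_primal_rho<mu} by $(\mu+\mu_x)/\mu_x=\mu/\mu_x+1$, which is precisely the claimed bound. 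So everything reduces to the transfer inequality.

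For the transfer inequality I would argue as follows. Write $Gap(x_K,y_K)=P(x_K)-\min_{x'}f(x',y_K)$; comparing with the duality gap of the proximal function $f_K(x,y)=f(x,y)+\tfrac{\gamma}{2}\|x-x_0^K\|^2$ and using optimality of the proximal minimizer, one gets the elementary inequality $Gap(x_K,y_K)\le \text{Gap}_K(x_K,y_K)+\tfrac{\gamma}{2}\|x^\flat-x_0^K\|^2$, where $x^\flat=\arg\min_{x'}f(x',y_K)$ (whose minimum value is attained/approached thanks to Assumption~\ref{ass4}). It then remains to bound $\|x^\flat-x_0^K\|^2$, which I would split by the triangle inequality into (i) $\|x^\flat-x_K\|^2$, (ii) $\|x_K-x_K^\star\|^2$, and (iii) $\|x_K^\star-x_0^K\|^2$, where $(x_K^\star,y_K^\star)$ is the saddle point of the strongly-convex--strongly-concave $f_K$. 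Term (ii) is controlled by $\text{Gap}_K(x_K,y_K)$ via the quadratic-growth property of $f_K$ (strong convexity modulus $\gamma-\rho$ in $x$, $\mu_y$ in $y$). Term (iii) is controlled by the primal gap: since $x_K^\star$ minimizes $P(x)+\tfrac{\gamma}{2}\|x-x_0^K\|^2$ and $x_0^K=x_{K-1}$, the optimality comparison gives $\tfrac{\gamma}{2}\|x_K^\star-x_0^K\|^2\le P(x_0^K)-P(x_K^\star)\le P(x_0^K)-P(x_*)$, which at stage $K$ is $\widetilde{O}(\epsilon')$. Term (i) is controlled by the $x$-side $\mu_x$-PL condition, which (via \cite{karimi2016linear}) implies the quadratic-growth estimate $\tfrac{\mu_x}{2}\|x_K-x^\flat\|^2\le f(x_K,y_K)-f(x^\flat,y_K)\le Gap(x_K,y_K)$; this term therefore carries an $O(\gamma/\mu_x)=O(\rho/\mu_x)=O(\mu/\mu_x)$ coefficient in front of $Gap(x_K,y_K)$ itself. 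Resolving the resulting self-referential inequality (moving the small self-term to the left, which is legitimate in this regime where $\gamma$ is $O(\mu)$-small relative to $\mu_x$, possibly after taking $\gamma=\min\{2\rho,\mu_x/4\}$, which does not affect the stated rate since $\min\{\rho,\mu_y\}$ already appears), one obtains $Gap(x_K,y_K)\le O(1)\big(\text{Gap}_K(x_K,y_K)+\tfrac{\gamma}{\mu_x}(P(x_0^K)-P(x_*))+\tfrac{\gamma}{\min\{\rho,\mu_y\}}\text{Gap}_K(x_K,y_K)\big)$, i.e. an $O(\mu/\mu_x+1)$ multiple of $\Delta_{K+1}$ in expectation.

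The hard part, I expect, is not any single estimate but pinning the amplification constant in the transfer inequality to order $\mu/\mu_x+1$ rather than something lossier such as $\ell^2/(\mu_x\min\{\rho,\mu_y\})$: this needs the facts that the regime $\rho\le\mu/8$ makes the proximal regularization $\gamma$ mild (so all $\gamma$-dependent terms are $O(\mu)$-scale) and that the epoch-wise geometric schedule keeps the reference point $x_0^k=x_{k-1}$ near-stationary for $P$, so both the proximal displacement $\|x_k^\star-x_0^k\|$ and the residual term $\gamma\|x_k-x_0^k\|$ are of the same small order as the primal gap rather than merely bounded by crude smoothness constants. Once this bookkeeping is in place, combining the transfer inequality with $\E[\Delta_{K+1}]\le\epsilon'$ and the choice of $\epsilon'$ above yields $\E[Gap(x_K,y_K)]\le\epsilon$, and summing $T_k$ over the $K$ epochs gives the stated $\widetilde{O}\big((\mu/\mu_x+1)B^2/(\min\{\mu,\mu_y\}\epsilon)\big)$ complexity; the proof of the deterministic inequalities is identical to the OGDA case (only the subroutine guarantee used for $\text{Gap}_K$ changes, with $\sigma^2$ replaced by $B^2$), so this corollary is stated without a separate proof alongside Corollary~\ref{cor:ogda_dual_rho<mu}. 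I would also note in passing that the alternative route through smoothness of $D(y)=\min_x f(x,y)$, as in Lemma~\ref{lem:agda}, produces a strictly worse dependence and is therefore avoided here.
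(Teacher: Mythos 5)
Your high-level plan---run the primal method of Theorem~\ref{thm:gda_primal_rho<mu} to a rescaled accuracy $\hat\epsilon = O\big((\mu/\mu_x+1)^{-1}\epsilon\big)$ and then prove a transfer inequality from the Lyapunov quantity to the true duality gap---is exactly the paper's structure. But the transfer inequality you propose does not go through. Your term (i) bounds $\|x^\flat-x_K\|^2$ by $\tfrac{2}{\mu_x}\,Gap(x_K,y_K)$ via quadratic growth, which after multiplying by $\tfrac{\gamma}{2}$ (and the factor from the triangle inequality) produces a self-term of order $\tfrac{\gamma}{\mu_x}Gap(x_K,y_K)=O(\mu/\mu_x)\,Gap(x_K,y_K)$ on the right-hand side. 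By Lemma~\ref{lem:primal_PL} one always has $\mu\geq\mu_x$, so this coefficient is never small: when it exceeds $1$ the inequality is vacuous, and when it is below $1$ absorbing it would prove a bound \emph{without} the $\mu/\mu_x$ factor, which is too strong. Your proposed remedy $\gamma=\min\{2\rho,\mu_x/4\}$ changes the algorithm and can destroy the strong convexity of $f_k$ in $x$ (the modulus $\gamma-\rho$ can be negative when $\rho>\mu_x/4$, and no relation between $\rho$ and $\mu_x$ is assumed), so it invalidates the primal theorem you are invoking. There is also an indexing slip: $\Delta_{K+1}$ controls $\text{Gap}_{K+1}(x_K,y_K)$ (proximal center $x_0^{K+1}=x_K$), not $\text{Gap}_K(x_K,y_K)$; the paper exploits the fact that the center coincides with $x_K$ so that the extra $\tfrac{\gamma}{2}\|x_K-x_0^{K+1}\|^2$ term vanishes.

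The missing idea is that the paper never estimates any distance to the unregularized minimizer $x^\flat$. Instead it works at the proximal minimizer $\hat x_{K+1}(y_K)=\arg\min_x f_{K+1}(x,y_K)$: first-order optimality gives $\nabla_x f(\hat x_{K+1}(y_K),y_K)=-\gamma(\hat x_{K+1}(y_K)-x_K)$ \emph{exactly}, Lemma~\ref{lem:Yan1} with the $\tfrac{\mu}{8}$-strong convexity of $f_{K+1}$ gives $\E\|\hat x_{K+1}(y_K)-x_K\|^2\leq 64\hat\epsilon/\mu$, hence $\E\|\nabla_x f(\hat x_{K+1}(y_K),y_K)\|^2\leq\gamma^2\cdot 64\hat\epsilon/\mu=4\mu\hat\epsilon$ with $\gamma=\mu/4$. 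Applying the $x$-side PL inequality in its gradient-dominance form (not quadratic growth) at this point yields $\E[f(\hat x_{K+1}(y_K),y_K)-\min_x f(x,y_K)]\leq 2\mu\hat\epsilon/\mu_x$, and the decomposition $Gap(x_K,y_K)=\text{Gap}_{K+1}(x_K,y_K)+\big(f(\hat x_{K+1}(y_K),y_K)-\min_x f(x,y_K)\big)+\tfrac{\gamma}{2}\|\hat x_{K+1}(y_K)-x_K\|^2$ gives $\E[Gap(x_K,y_K)]\leq\hat\epsilon+2\mu\hat\epsilon/\mu_x+8\hat\epsilon$ with no self-referential term. Setting $\hat\epsilon=O\big((\mu/\mu_x+1)^{-1}\epsilon\big)$ and invoking Theorem~\ref{thm:gda_primal_rho<mu} then yields the stated complexity.
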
 

\section{One Stage Analysis of PES-OGDA}
We need the following lemmas from \citep{nemirovski2004prox}.
\begin{lemma}[Lemma 3.1 of \citep{nemirovski2004prox}]
\label{nemi2004_lemma3.1}
For $z_0 \in \mathcal{Z}$, let $w_1 = \Pi_{z_0}(\zeta_1)$, $w_2 = \Pi_{z_0}(\zeta_2)$.
For any $z \in \mathcal{Z}$, 
\begin{align}
\begin{split}
\langle \zeta_2, w_1 - z \rangle \leq 
\frac{1}{2} \|z-z_0\|^2 - \frac{1}{2}\|w_2 -z\|^2 
- \frac{1}{2} \|w_1 - z_0\|^2 - \frac{1}{2} \|w_1 - w_2\|^2 + \|\zeta_1 - \zeta_2\|^2. \hfill
\end{split} 
\end{align} 
\end{lemma}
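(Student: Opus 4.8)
The statement is the classical three-point inequality for prox-mappings that underlies the analysis of the mirror-prox / OGDA scheme, so the plan is to derive it from the first-order optimality (variational inequality) conditions of the two prox-mappings. Since $\Pi_{z_0}(\zeta)=\arg\min_{z\in Z}\langle\zeta,z\rangle+\frac{1}{2}\|z-z_0\|^2$ minimizes a $1$-strongly convex function over the convex set $Z$, its minimizer $w$ satisfies $\langle\zeta+(w-z_0),\,u-w\rangle\ge 0$ for every $u\in Z$. First I would write this down for $w_1$ (to be used later with the test point $u=w_2$) and for $w_2$ (to be used later with the test point $u=z$), and record the elementary three-point identity $\langle a-b,\,c-a\rangle=\frac{1}{2}\big(\|c-b\|^2-\|a-c\|^2-\|a-b\|^2\big)$.

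Next I would split $\langle\zeta_2,w_1-z\rangle=\langle\zeta_2,w_1-w_2\rangle+\langle\zeta_2,w_2-z\rangle$. For the second term, the optimality inequality of $w_2$ at $u=z$ gives $\langle\zeta_2,w_2-z\rangle\le\langle w_2-z_0,\,z-w_2\rangle$, and the identity with $(a,b,c)=(w_2,z_0,z)$ turns the right-hand side into $\frac{1}{2}\|z-z_0\|^2-\frac{1}{2}\|w_2-z\|^2-\frac{1}{2}\|w_2-z_0\|^2$. For the first term I write $\langle\zeta_2,w_1-w_2\rangle=\langle\zeta_1,w_1-w_2\rangle+\langle\zeta_2-\zeta_1,w_1-w_2\rangle$; the optimality inequality of $w_1$ at $u=w_2$ gives $\langle\zeta_1,w_1-w_2\rangle\le\langle w_1-z_0,\,w_2-w_1\rangle$, which the identity with $(a,b,c)=(w_1,z_0,w_2)$ rewrites as $\frac{1}{2}\|w_2-z_0\|^2-\frac{1}{2}\|w_1-w_2\|^2-\frac{1}{2}\|w_1-z_0\|^2$. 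Adding the two bounds, the $\pm\frac{1}{2}\|w_2-z_0\|^2$ terms cancel, leaving
\[
\langle\zeta_2,w_1-z\rangle\le \frac{1}{2}\|z-z_0\|^2-\frac{1}{2}\|w_2-z\|^2-\frac{1}{2}\|w_1-z_0\|^2-\frac{1}{2}\|w_1-w_2\|^2+\langle\zeta_2-\zeta_1,\,w_1-w_2\rangle .
\]

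It then remains to bound the residual cross term $\langle\zeta_2-\zeta_1,\,w_1-w_2\rangle$ by $\|\zeta_1-\zeta_2\|^2$. Here I would invoke non-expansiveness of the prox-mapping: adding the optimality inequality of $w_1$ at $u=w_2$ to that of $w_2$ at $u=w_1$ yields $\|w_1-w_2\|^2\le\langle\zeta_2-\zeta_1,\,w_1-w_2\rangle\le\|\zeta_1-\zeta_2\|\,\|w_1-w_2\|$, hence $\|w_1-w_2\|\le\|\zeta_1-\zeta_2\|$ (equivalently, $\Pi_{z_0}(\zeta)$ is the Euclidean projection of $z_0-\zeta$ onto $Z$, which is $1$-Lipschitz). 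Then Cauchy--Schwarz once more gives $\langle\zeta_2-\zeta_1,\,w_1-w_2\rangle\le\|\zeta_1-\zeta_2\|\,\|w_1-w_2\|\le\|\zeta_1-\zeta_2\|^2$, and substituting this into the displayed inequality completes the proof. There is no genuine obstacle; the one point needing care is to close the cross-term estimate through the non-expansiveness bound $\|w_1-w_2\|\le\|\zeta_1-\zeta_2\|$ rather than a bare Young's inequality, since the latter would not simultaneously preserve the $-\frac{1}{2}\|w_1-w_2\|^2$ term and the constant $1$ in front of $\|\zeta_1-\zeta_2\|^2$.
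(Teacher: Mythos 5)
Your proof is correct. Note that the paper does not prove this statement at all---it imports it verbatim as Lemma 3.1 of the cited Nemirovski (2004) prox-method paper---so there is no in-paper argument to compare against. Your derivation (variational-inequality optimality conditions for the two prox-mappings, the three-point identity, cancellation of the $\pm\frac{1}{2}\|w_2-z_0\|^2$ terms, and closing the cross term via the non-expansiveness bound $\|w_1-w_2\|\le\|\zeta_1-\zeta_2\|$ followed by Cauchy--Schwarz) is the standard proof from that reference, and your remark about why a bare Young's inequality would not simultaneously retain the $-\frac{1}{2}\|w_1-w_2\|^2$ term and the unit constant on $\|\zeta_1-\zeta_2\|^2$ is exactly the right point of care.
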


\begin{lemma}[Corollary 2 of \citep{juditsky2011solving}]
Let $\zeta_1, \zeta_2, ...$ be a sequence, we define a corresponding sequence $\{v_t \in \mathcal{Z} \}_{t=0}^{T}$ as 
\begin{align}
\begin{split} 
v_t = \Pi_{v_{t-1}} (\zeta_{t}), v_0 \in \mathcal{Z},
\end{split}
\end{align}
we have for any $u \in \mathcal{Z}$,
\begin{align}
\begin{split}
\sum\limits_{t=1}^{T} \left\langle \zeta_t, v_{t-1}-u  \right\rangle \leq \frac{1}{2}\|v_0 - u\|^2 +  \frac{1}{2}\sum\limits_{t=1}^{T}\|\zeta_t\|^2. 
\end{split}
\end{align}
\label{nemirovski:cor2}
\end{lemma}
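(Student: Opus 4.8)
The plan is to reduce the claim to a one-step ``prox inequality'' and then telescope. Recall that in the Euclidean setup $P_{v_{t-1};Z}(\zeta_t)=\arg\min_{z\in Z}\langle \zeta_t,z\rangle+\frac12\|z-v_{t-1}\|^2$, so $v_t$ is the minimizer of a convex program over the convex set $Z$. First I would write down its first-order optimality (variational) condition: for every $u\in Z$,
\[
\langle \zeta_t + v_t - v_{t-1},\; u - v_t\rangle \ge 0,
\]
which rearranges to $\langle \zeta_t,\, v_t - u\rangle \le \langle v_{t-1}-v_t,\; v_t-u\rangle$. Expanding the right-hand side with the three-point identity $\langle a-b,\,b-c\rangle=\frac12\|a-c\|^2-\frac12\|a-b\|^2-\frac12\|b-c\|^2$, taking $a=v_{t-1}$, $b=v_t$, $c=u$, gives
\[
\langle \zeta_t,\, v_t - u\rangle \;\le\; \tfrac12\|v_{t-1}-u\|^2 - \tfrac12\|v_t-u\|^2 - \tfrac12\|v_{t-1}-v_t\|^2 .
\]

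Next I would split $\langle \zeta_t,\,v_{t-1}-u\rangle=\langle \zeta_t,\,v_{t-1}-v_t\rangle+\langle \zeta_t,\,v_t-u\rangle$ and control the first term by Young's inequality, $\langle \zeta_t,\,v_{t-1}-v_t\rangle\le\frac12\|\zeta_t\|^2+\frac12\|v_{t-1}-v_t\|^2$. Adding this to the previous display, the two $\frac12\|v_{t-1}-v_t\|^2$ terms cancel and I obtain the per-iteration bound
\[
\langle \zeta_t,\, v_{t-1}-u\rangle \;\le\; \tfrac12\|\zeta_t\|^2 + \tfrac12\|v_{t-1}-u\|^2 - \tfrac12\|v_t-u\|^2 .
\]

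Finally, summing this inequality over $t=1,\dots,T$, the distance terms telescope: $\sum_{t=1}^T\big(\frac12\|v_{t-1}-u\|^2-\frac12\|v_t-u\|^2\big)=\frac12\|v_0-u\|^2-\frac12\|v_T-u\|^2\le\frac12\|v_0-u\|^2$, which yields exactly $\sum_{t=1}^T\langle \zeta_t,\,v_{t-1}-u\rangle\le\frac12\|v_0-u\|^2+\frac12\sum_{t=1}^T\|\zeta_t\|^2$ as claimed. This mirrors the argument behind Lemma~\ref{nemi2004_lemma3.1} but is simpler, since here the steps are taken sequentially rather than twice from the same anchor point. I do not expect any real obstacle: the only points needing care are getting the sign of the optimality condition right (it stems from minimizing over the convex set $Z$) and choosing the correct substitution in the three-point identity; the rest is routine algebra and a telescoping sum.
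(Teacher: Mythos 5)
Your proof is correct and complete. The paper itself does not prove this lemma --- it imports it verbatim as Corollary 2 of \cite{juditsky2011solving} --- and your argument (prox optimality condition, three-point identity, Young's inequality to absorb $\frac12\|v_{t-1}-v_t\|^2$, then telescoping) is exactly the standard derivation from that reference, specialized to the Euclidean prox $\Pi_{\bar z}$ used here, so nothing further is needed.
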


Next we present the lemma that guarantees the converge of one call of Algorithm \ref{alg:subroutine} with Option II: OGDA update.
\begin{lemma}
\label{lem:ogda_sub}  
Suppose $f(x, y)$ is convex-concave and Assumption \ref{ass2} holds.
By running Algorithm \ref{alg:subroutine} with OGDA update and input $(f, x_0, y_0, \eta\leq \frac{1}{4\sqrt{3}\ell}, T)$, we have 
\begin{align} 
\begin{split} 
\E[f(\bar{x}, \hat{y}(\bar{x})) - f(\hat{x}(\bar{y}), \bar{y})]
&{\leq} \frac{1}{\eta T}\E(\|\hat{x}(\bar{y}) - x_{0}\|^2 + \|\hat{y}(\bar{x}) - y_{0}\|^2)  
+ 13 \eta\sigma^2.
\end{split} 
\end{align} 
\end{lemma}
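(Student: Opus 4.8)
\textbf{Proof proposal for Lemma~\ref{lem:ogda_sub}.}

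The plan is to treat the OGDA update as the mirror-prox / extragradient recursion in the primal-dual space $z=(x,y)$, and to exploit that $f_k$ (or $f$ under the stated convex-concave assumption) gives a monotone operator $F(z) = (\nabla_x f, -\nabla_y f)$. First I would write out the two projection steps defining $z_t = \Pi_{\tilde z_{t-1}}(\eta \mathcal G(z_{t-1};\xi_{t-1}))$ and $\tilde z_t = \Pi_{\tilde z_{t-1}}(\eta \mathcal G(z_t;\xi_t))$, and apply Lemma~\ref{nemi2004_lemma3.1} with $z_0 \leftarrow \tilde z_{t-1}$, $\zeta_1 \leftarrow \eta \mathcal G(z_{t-1};\xi_{t-1})$, $\zeta_2 \leftarrow \eta \mathcal G(z_t;\xi_t)$, $w_1 \leftarrow z_t$, $w_2 \leftarrow \tilde z_t$. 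This yields, for any fixed $u\in Z$,
\begin{align*}
\eta\langle \mathcal G(z_t;\xi_t), z_t - u\rangle \leq \tfrac12\|\tilde z_{t-1}-u\|^2 - \tfrac12\|\tilde z_t - u\|^2 - \tfrac12\|z_t - \tilde z_{t-1}\|^2 - \tfrac12\|z_t - \tilde z_t\|^2 + \eta^2\|\mathcal G(z_t;\xi_t) - \mathcal G(z_{t-1};\xi_{t-1})\|^2.
\end{align*}
Then I would telescope over $t=1,\dots,T$, so the first two terms collapse to $\tfrac12\|\tilde z_0 - u\|^2 = \tfrac12\|z_0-u\|^2$.

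The next step is to control the error term $\eta^2\|\mathcal G(z_t;\xi_t)-\mathcal G(z_{t-1};\xi_{t-1})\|^2$. I would split each stochastic gradient as $\mathcal G(z_t;\xi_t) = F(z_t) + (\mathcal G(z_t;\xi_t)-F(z_t))$, use $\|a+b+c\|^2 \le 3\|a\|^2+3\|b\|^2+3\|c\|^2$ to get $\|\mathcal G(z_t;\xi_t)-\mathcal G(z_{t-1};\xi_{t-1})\|^2 \le 3\ell^2\|z_t - z_{t-1}\|^2 + 3\|\mathcal G(z_t;\xi_t)-F(z_t)\|^2 + 3\|\mathcal G(z_{t-1};\xi_{t-1})-F(z_{t-1})\|^2$ using $\ell$-Lipschitzness of $F$ (Assumption~\ref{ass1}(i)). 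The deterministic part $3\eta^2\ell^2\|z_t-z_{t-1}\|^2$ must be absorbed by the negative quadratic terms $-\tfrac12\|z_t-\tilde z_{t-1}\|^2 - \tfrac12\|z_t - \tilde z_t\|^2$ (bounding $\|z_t - z_{t-1}\|^2$ by a constant times $\|z_t - \tilde z_{t-1}\|^2 + \|z_{t-1}-\tilde z_{t-1}\|^2 + \ldots$ and reindexing); the condition $\eta \le \frac{1}{4\sqrt 3 \ell}$ is exactly what makes $3\eta^2\ell^2$ small enough for this absorption to succeed. The stochastic parts contribute $6\eta^2\sigma^2$ per step in expectation by Assumption~\ref{ass2}, giving roughly $\le 13\eta^2 T\sigma^2$ after also handling the leftover boundary term — this accounting produces the constant $13$.

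Finally I would convert the regret-type bound $\sum_t \eta\langle \mathcal G(z_t;\xi_t), z_t - u\rangle$ into a duality-gap bound. Taking expectations and using unbiasedness, the cross term between the noise $\mathcal G(z_t;\xi_t)-F(z_t)$ and $z_t - u$ does \emph{not} vanish directly because $z_t$ depends on $\xi_t$ in the OGDA update; the standard fix (as in \cite{juditsky2011solving}) is to introduce the auxiliary sequence $v_t = \Pi_{v_{t-1}}(\eta(\mathcal G(z_t;\xi_t)-F(z_t)))$ and apply Lemma~\ref{nemirovski:cor2} to handle $\sum_t \eta\langle \mathcal G(z_t;\xi_t)-F(z_t), v_{t-1}-u\rangle$, then use $\langle \mathcal G(z_t;\xi_t)-F(z_t), z_t - v_{t-1}\rangle$ which \emph{is} zero-mean since $v_{t-1}$ is independent of $\xi_t$. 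After this, $\sum_t \langle F(z_t), z_t - u\rangle \ge \sum_t [f(x_t,y_u) - f(x_u,y_t)]$ by convexity-concavity, and Jensen applied to the averaged iterate $\bar z = (\bar x,\bar y)$ gives $f(\bar x, y_u) - f(x_u, \bar y) \le \frac{1}{\eta T}[\ldots]$; choosing $u = (\hat x(\bar y), \hat y(\bar x))$ turns the left side into the duality gap $f(\bar x,\hat y(\bar x)) - f(\hat x(\bar y),\bar y)$ and the right side into $\frac{1}{\eta T}\E[\|\hat x(\bar y)-x_0\|^2 + \|\hat y(\bar x)-y_0\|^2] + 13\eta\sigma^2$.

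The main obstacle I anticipate is the bookkeeping in the second step: making the negative quadratic terms genuinely dominate $3\eta^2\ell^2\|z_t-z_{t-1}\|^2$ requires carefully relating $\|z_t-z_{t-1}\|$ to the available $\|z_t-\tilde z_{t-1}\|$, $\|z_{t-1}-\tilde z_{t-1}\|$, $\|z_{t-1}-\tilde z_{t-2}\|$ terms across consecutive iterations and reindexing the telescoped sum, and this is where the precise numerical constants ($4\sqrt 3$, $13$) are pinned down. The martingale-difference argument via the $v_t$ sequence is also a subtle point but is entirely standard given Lemma~\ref{nemirovski:cor2}.
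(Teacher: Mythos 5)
Your proposal follows essentially the same route as the paper's proof: the same application of Lemma~\ref{nemi2004_lemma3.1} to the two-step prox updates, the same three-way splitting of $\|\mathcal G(z_t;\xi_t)-\mathcal G(z_{t-1};\xi_{t-1})\|^2$ into a smoothness term absorbed by the negative quadratics (via the reindexing inequality and the condition $\eta\le\frac{1}{4\sqrt3\ell}$), and the same auxiliary sequence plus Lemma~\ref{nemirovski:cor2} to handle the non-martingale cross term, finishing with convexity-concavity and the choice $u=(\hat x(\bar y),\hat y(\bar x))$. The only cosmetic difference is that you bound $\|F(z_t)-F(z_{t-1})\|^2$ by $\ell^2\|z_t-z_{t-1}\|^2$ directly from Assumption~\ref{ass1}(i), whereas the paper rederives a $4\ell^2$ bound componentwise; this does not change the argument.
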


\begin{proof}[Proof of Lemma \ref{lem:ogda_sub}]
Applying Lemma \ref{nemi2004_lemma3.1} with $z_0 = \Tilde{z}_{t-1}$,  
$\zeta_1 = \eta \G(z_{t-1}; \xi_{t-1})$, 
$\zeta_2 = \eta \G(z_t; \xi_t)$, 
and accordingly $w_1 = z_t$, $w_2 = \tilde{z}_t$, 
we get for any $z\in \mathcal{Z}$,
\begin{align}
\begin{split}
\langle \G(z_t;\xi_t), z_t - z\rangle 
\leq&  \frac{1}{2\eta} [\|z-\tilde{z}_{t-1}\|^2  
- \|\Tilde{z}_t-z\|^2] 
- \frac{1}{2\eta}[\|z_t - \Tilde{z}_{t-1}\|^2 
+ \|z_t - \Tilde{z}_{t} \|^2 ]\\ 
& + \eta \|\G(z_{t-1}; \xi_{t-1}) - \G(z_t; \xi_t)\|^2.
\end{split} 
\end{align} 

Taking average over $t = 1, ..., T$ and by the convexity of $f(x, y)$ in $x$,
we have for any $x \in \mathcal{X}$,
\begin{align}
\begin{split}
&\frac{1}{T} \sum\limits_{t=1}^{T} \langle \G(z_t;\xi_t),
z_t - z \rangle \leq 
\frac{\|z - \tilde{z}_0\|^2 }{2\eta T}
- \frac{1}{2\eta T} \sum\limits_{t=1}^{T} (\|z_t - \Tilde{z}_{t-1}\|^2 
+ \|z_t - \Tilde{z}_t\|^2 ) \\ 
&~~~~~~~~~~~~~~~~~~~~~~~~~~~~~~~~
+ \frac{\eta}{T}\sum\limits_{t=1}^{T} \|\G(z_{t-1};\xi_{t-1}) - \G(z_t; \xi_t)\|^2 \\ 
& \leq  \frac{\|z - z_0\|^2 }{2\eta T} 
- \frac{1}{2\eta T} \sum\limits_{t=1}^{T} (\|z_t - \Tilde{z}_{t-1}\|^2 
+ \|z_t - \Tilde{z}_t\|^2 ) 
+ \frac{3\eta }{T} \sum\limits_{t=1}^{T} 
\|F(z_{t-1}) - F(z_t)\|^2  \\
&~~~~~ + \frac{3\eta }{T} \sum\limits_{t=1}^{T} ( \|\G(z_{t}; \xi_{t}) - F(z_t)\|^2  
+ \|\G(z_{t-1};\xi_{t-1}) - F(z_{t-1})\|^2 ), 
\end{split}
\label{local:sgdae_grad_x_1}
\end{align} 
where the last inequality is due to $\left\|\sum\limits_{k=1}^{K} \mathbf{a}_k\right\|^2 \leq K\sum\limits_{k=1}^{K}\left\|\mathbf{a}_k\right\|^2 $. 
Note that
\begin{align} 
\begin{split} 
&\sum\limits_{t=1}^T (\|z_t - \Tilde{z}_{t-1}\|^2 + \|z_t - \Tilde{z}_t\|^2) 
=\sum\limits_{t=0}^{T-1} \|z_{t+1} - \Tilde{z}_t\|^2
 + \sum\limits_{t=1}^T \|z_t - \Tilde{z}_t\|^2 \\
& = \sum\limits_{t=1}^{T-1} \|z_{t+1} - \Tilde{z}_t\|^2 + \|z_1 - \Tilde{z}_0\|^2
 + \sum\limits_{t=1}^{T-1} \|z_t - \Tilde{z}_t\|^2  
 \geq \frac{1}{2}\sum\limits_{t=1}^{T-1} \|z_t - z_{t+1}\|^2 + \|z_1 - \Tilde{z}_0\|^2\\
& \geq \frac{1}{2}\sum\limits_{t=0}^{T-1} \|z_t - z_{t+1}\|^2 
= \frac{1}{2}\sum\limits_{t=1}^{T} \|z_{t-1} - z_{t}\|^2.
\end{split}
\label{local:yi_ineuqality_x}  
\end{align}

By the $\ell$-smoothness of $f(x, y)$, we have 
\begin{align*}
\begin{split}
\|F(z_{t-1}) - F(z_t)\|^2 = & 
\|\nabla_x f(x_t, y_t) - \nabla_x f(x_{t-1}, y_{t-1})\|^2 
+ \|\nabla_y f(x_t, y_t) - \nabla_y f(x_{t-1},  y_{t-1})\|^2 \\
\leq &
2 \|\nabla_x f(x_t, y_t) - \nabla_x f(x_t, y_{t-1})\|^2 + 2\|\nabla_x f(x_t, y_{t-1}) - \nabla_x f(x_{t-1}, y_{t-1})\|^2\\ 
&+  2 \|\nabla_y f(x_t, y_t) - \nabla_y f(x_t, y_{t-1})\|^2 + 2\|\nabla_y f(x_t, y_{t-1}) - \nabla_y f(x_{t-1}, y_{t-1})\|^2 \\
\leq & 2\ell^2 \|y_t - y_{t-1}\|^2 + 2\ell^2 \|x_t - x_{t-1}\|^2 + 2\ell^2 \|y_t - y_{t-1}\|^2 + 2\ell^2 \|x_t - x_{t-1}\|^2 \\
= & 4\ell^2 \|z_{t-1} - z_{t}\|^2.
\end{split} 
\end{align*} 

Denote $\Theta_{t} = F(z_t) - \G(z_t;\xi_t)$.
With the above two inequalities, (\ref{local:sgdae_grad_x_1}) becomes
\begin{align} 
\begin{split}
& \frac{1}{T} \sum\limits_{t=1}^{T} \langle \G(z_t; \xi_t),
z_t - z \rangle\\ 
& \leq \frac{\|z - z_0\|^2 }{2\eta T}
- \frac{1}{4\eta T} \sum\limits_{t=1}^{T} \|z_{t-1} - z_t\|^2
+ \frac{12 \eta \ell^2}{ T} \sum\limits_{t=1}^{T} \|z_{t-1} - z_{t}\|^2  
 + \frac{3\eta }{T} \sum\limits_{t=1}^{T} ( \|\Theta_{t}\|^2 
+ \|\Theta_{t-1}\|^2 ) \\ 
&\leq \frac{\|z - z_0\|^2 }{2\eta T}
+ \frac{3\eta }{T} \sum\limits_{t=1}^{T} ( \|\Theta_{t}\|^2 
+ \|\Theta_{t-1}\|^2 ),
\end{split} 
\label{local:sgdae_grad_x_2} 
\end{align} 
where the last inequality holds because $\eta \leq \frac{1}{4\sqrt{3}\ell}$.

Define a virtual sequence $\{\hat{z}_t \in \mathcal{X}\}_{t=0}^T$ as 
\begin{align}
\begin{split}
\hat{z}_t = \Pi_{\hat{z}_{t-1}}(\eta\Theta_t),  \hat{z}_0 = z_0. 
\end{split} 
\end{align} 
Applying Lemma \ref{nemirovski:cor2} with $\zeta_t =\eta \Theta_{t}= \eta( F(z_t) - \G(z_t; \xi_t)), v_t = \hat{z}_t$ and $u = z$, we have for any $z\in \mathcal{Z}$, 
\begin{align} 
\begin{split} 
\frac{1}{T} \sum\limits_{t=1}^T \langle \Theta_{t}, \hat{z}_{t-1} - z\rangle \leq \frac{1}{2\eta T} \|z_0 - z\|^2 
+ \frac{\eta}{2T} \sum\limits_{t=1}^{T} \|\Theta_{t}\|^2.  
\end{split}
\label{local:sgdae_grad_to_stoc_x} 
\end{align}
Using (\ref{local:sgdae_grad_x_2}) and  (\ref{local:sgdae_grad_to_stoc_x}), we get 
\begin{align}
\begin{split}
& \frac{1}{T} \sum\limits_{t=1}^T \langle F(z_t), z_t - z \rangle
=  \frac{1}{T} \sum\limits_{t=1}^T 
[\langle\G(z_t; \xi_t), z_{t} - z\rangle \!
+ \langle \Theta_{t}, z_t - z\rangle]\\ 
& = \frac{1}{T} \sum\limits_{t=1}^T \langle\G(z_t; \xi_t), z_{t} - z\rangle \!
+ \frac{1}{T} \sum\limits_{t=1}^T  \langle\Theta_{t}, z_{t}\! -\! \hat{z}_{t-1}\rangle 
 +\! \frac{1}{T} \sum\limits_{t=1}^T  \langle\Theta_{t}, \hat{z}_{t-1}\! - z\rangle  \\
& \leq \frac{1}{\eta T} \|z_0 - z\|^2 
+ \frac{\eta}{T} \sum\limits_{t=1}^{T}  \left(\frac{7}{2}\|\Theta_{x, t}\|^2 
+ 3\|\Theta_{x, t-1}\|^2 \right) +\! \frac{1}{T} \sum\limits_{t=1}^T \langle\Theta_{x,t}, x_{t}\! -\! \hat{x}_{t-1}\rangle.
\end{split} 
\label{local:sgdae_grad_x_before_expecation}
\end{align} 

Note 
\begin{align*}
\E[\langle\Theta_{t}, z_t - \hat{z}_{t-1} \rangle | z_t, \hat{z}_{t-1}, \Theta_{t-1}, ..., \Theta_{0}] = 0,
\end{align*}
and by Assumption \ref{ass2} 
\begin{align*}
\E[ \|\Theta_{t}\|^2|z_t,\tilde{z}_{t-1}, \Theta_{t-1}, ...,  \Theta_{0}] \leq 2\sigma^2.
\end{align*}
Thus, taking expectation on both sides of (\ref{local:sgdae_grad_x_before_expecation}), we get
\begin{small}
\begin{align}
\begin{split} 
\E\left[\frac{1}{T} \sum\limits_{t=1}^T \langle F(z_t), z_{t} - z\rangle \right] 
\leq& \frac{1}{\eta T} \E\left[ \|z_0 - z\|^2 \right]
+ 13\eta\sigma^2.
\end{split}
\label{local:sgdae_grad_x}
\end{align}
\end{small}

By the fact $f(x, y)$ is convex-concave,
\begin{align}
\begin{split} 
\E[f(\Bar{x}, y) - f(x, \Bar{y})] \leq & \E\left[\frac{1}{T}\sum\limits_{t=1}^{T} (f(x_t, y) - f(x, y_t))\right] \\
=& \E\left[\frac{1}{T}\sum\limits_{t=1}^{T} (f(x_t, y) - f(x_t, y_t) + f(x_t, y_t) - f(x, y_t))\right] \\
\leq & \E\left[\frac{1}{T}\sum\limits_{t=1}^{T} (
\langle-\nabla_y f(x_t, y_t), y_{t}\! -\! y\rangle  +
\langle\nabla_x f(x_t, y_t), x_{t}\! -\! x\rangle
)\right] \\
= & \E\left[\frac{1}{T}\sum\limits_{t=1}^{T}\langle F(z_t), z_t - z\rangle\right] 
\leq \frac{1}{\eta T}E[\|z_0 - z\|^2] + 13\eta \sigma^2. 
\end{split}
\label{local:sgdae_grad_y} 
\end{align} 
Then we can conclude by plugging in $z = (x, y) = (\hat{x}(\bar{y}), \hat{y}(\bar{x}))$. 
\end{proof}


\section{Proof of Theorem \ref{thm:ogda_primal} and  Theorem \ref{thm:gda_primal}} 
Before we prove these two theorems, we first present two lemmas from \citep{yan2020sharp} and we introduce Theorem \ref{thm:appendix_unify_3} that unifies the proof of  Theorem \ref{thm:ogda_primal} and  Theorem \ref{thm:gda_primal}.

\begin{lemma} [Lemma 1 of \citep{yan2020sharp}]
Suppose a function $h(x, y)$ is $\lambda_1$-strongly convex in $x$ and $\lambda_2$-strongly concave in $y$. 
Consider the following problem 
\begin{align*}
\min\limits_{x\in \mathcal{X}} \max\limits_{y\in \mathcal{Y}} h(x, y), 
\end{align*}
\label{lem:Yan1}
where $\mathcal{X}$ and $\mathcal{Y}$ are convex sets.
Denote $\hat{x}_h (y) = \arg\min\limits_{x'\in \mathcal{X}} h(x', y)$ 
and $\hat{y}_h(x) = \arg\max\limits_{y' \in \mathcal{Y}} h(x, y')$.
Suppose we have two solutions $(x_0, y_0)$ and $(x_1, y_1)$.
Then the following relation between variable distance and duality gap holds
\begin{align}
\begin{split}
\frac{\lambda_1}{4} \|\hat{x}_h(y_1)-x_0\|^2 + \frac{\lambda_2}{4}\|\hat{y}_h(x_1) - y_0\|^2 \leq& 
\max\limits_{y' \in \mathcal{Y}} h(x_0, y') - \min\limits_{x' \in \mathcal{X}}h(x', y_0) \\ 
&+ \max\limits_{y' \in \mathcal{Y}} h(x_1, y') - \min\limits_{x' \in \mathcal{X}} h(x', y_1).
\end{split}
\end{align}
\end{lemma}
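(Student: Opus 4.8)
The plan is to bound each of the two squared distances in isolation by a one-sided optimality inequality, and then to recognize that the two resulting right-hand sides combine into the sum of the two duality gaps of $h$ at $(x_0,y_0)$ and $(x_1,y_1)$.

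First I would treat the term $\|\hat{x}_h(y_1)-x_0\|^2$. Since $\hat{x}_h(y_1)$ minimizes the $\lambda_1$-strongly convex map $x'\mapsto h(x',y_1)$ over the convex set $X$, the first-order optimality condition gives $\langle \nabla_x h(\hat{x}_h(y_1),y_1),\, x_0-\hat{x}_h(y_1)\rangle\ge 0$ (using a subgradient in the nonsmooth case), and $\lambda_1$-strong convexity then yields $h(x_0,y_1)-\min_{x'\in X}h(x',y_1)\ge \tfrac{\lambda_1}{2}\|x_0-\hat{x}_h(y_1)\|^2$. Upper-bounding $h(x_0,y_1)$ by the larger quantity $\max_{y'\in Y}h(x_0,y')$ on the left-hand side, I get $\tfrac{\lambda_1}{2}\|x_0-\hat{x}_h(y_1)\|^2 \le \max_{y'\in Y}h(x_0,y') - \min_{x'\in X}h(x',y_1)$, a ``mixed'' gap pairing the first coordinate of $(x_0,y_0)$ with the second coordinate of $(x_1,y_1)$.

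Symmetrically, using $\lambda_2$-strong concavity of $y'\mapsto h(x_1,y')$ and optimality of its maximizer $\hat{y}_h(x_1)$, I would obtain $\tfrac{\lambda_2}{2}\|y_0-\hat{y}_h(x_1)\|^2 \le \max_{y'\in Y}h(x_1,y') - h(x_1,y_0) \le \max_{y'\in Y}h(x_1,y') - \min_{x'\in X}h(x',y_0)$, the complementary mixed gap. Adding the two inequalities, the four terms on the right regroup exactly as $\bigl(\max_{y'}h(x_0,y')-\min_{x'}h(x',y_0)\bigr) + \bigl(\max_{y'}h(x_1,y')-\min_{x'}h(x',y_1)\bigr) = \text{Gap}_h(x_0,y_0)+\text{Gap}_h(x_1,y_1)$; since the constants produced this way are in fact $\lambda_1/2$ and $\lambda_2/2$, the stated bound with $\lambda_i/4$ follows a fortiori. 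The only step requiring care is this final regrouping: one has to keep the bookkeeping straight, noting that the minimization index and the maximization index in each mixed gap come from different points, so that the leftover pieces assemble into the standard duality gaps rather than a weaker surrogate. Beyond that the argument is routine --- convexity of $X$ and $Y$ (together with strong convexity/concavity, which already guarantees existence of $\hat{x}_h$ and $\hat{y}_h$) is all that is actually used --- so I do not anticipate a genuine obstacle.
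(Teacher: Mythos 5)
Your argument is correct, and it is also self-contained in a way the paper is not: the paper simply imports this statement as Lemma~1 of \cite{yan2020sharp} without reproducing a proof, so there is no in-paper derivation to compare against. The chain you propose is the natural one: for fixed $y_1$ use strong convexity of $x'\mapsto h(x',y_1)$ together with the variational optimality condition at $\hat{x}_h(y_1)$ to get $\tfrac{\lambda_1}{2}\|x_0-\hat{x}_h(y_1)\|^2\le h(x_0,y_1)-\min_{x'\in X}h(x',y_1)\le \max_{y'\in Y}h(x_0,y')-\min_{x'\in X}h(x',y_1)$, symmetrically $\tfrac{\lambda_2}{2}\|y_0-\hat{y}_h(x_1)\|^2\le \max_{y'\in Y}h(x_1,y')-\min_{x'\in X}h(x',y_0)$, and the bookkeeping you were worried about indeed closes, since
\begin{align*}
&\Bigl(\max_{y'\in Y}h(x_0,y')-\min_{x'\in X}h(x',y_1)\Bigr)+\Bigl(\max_{y'\in Y}h(x_1,y')-\min_{x'\in X}h(x',y_0)\Bigr)\\
&\quad=\Bigl(\max_{y'\in Y}h(x_0,y')-\min_{x'\in X}h(x',y_0)\Bigr)+\Bigl(\max_{y'\in Y}h(x_1,y')-\min_{x'\in X}h(x',y_1)\Bigr).
\end{align*}
Note that this actually yields the sharper constants $\lambda_1/2$ and $\lambda_2/2$; the $\lambda_i/4$ in the quoted statement is simply a looser bound, and your proof delivers it a fortiori.
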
 

\begin{lemma}[Lemma 5 of \citep{yan2020sharp}]
We have the following lower bound for $\emph{\text{Gap}}_k(\bx_k, \by_k)$ 
\begin{align*} 
\emph{\text{Gap}}_k(\bx_k, \by_k) \geq \frac{3}{50}\emph{\text{Gap}}_{k+1}(x_0^{k+1}, y_0^{k+1}) + \frac{4}{5}(P(x_0^{k+1}) - P(x_0^k)), 
\end{align*} 
\label{lem:Yan5} 
\end{lemma}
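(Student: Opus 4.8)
The plan is to exploit the explicit form of the proximal functions, $f_k(x,y)=f(x,y)+\frac{\gamma}{2}\|x-x_0^k\|^2$ with $x_0^k=x_{k-1}$, and the fact that the next epoch restarts at $x_0^{k+1}=x_k$, $y_0^{k+1}=y_k$, so that the claimed inequality reads $\text{Gap}_k(x_k,y_k)\ge\frac{3}{50}\text{Gap}_{k+1}(x_k,y_k)+\frac{4}{5}\big(P(x_k)-P(x_{k-1})\big)$. I would combine two elementary lower bounds on $\text{Gap}_k(x_k,y_k)$. Since the quadratic term does not involve $y$, one has $\max_{y'\in Y} f_k(x_k,y')=P(x_k)+\frac{\gamma}{2}\|x_k-x_{k-1}\|^2$ and $\max_{y'\in Y} f_{k+1}(x_k,y')=P(x_k)$; and because Assumption~\ref{ass5} together with $\gamma\ge 2\rho$ (which holds in every setting where this lemma is invoked) makes $f_k(\cdot,y)$ $(\gamma-\rho)$-strongly convex, the dual functions $D_k(y):=\min_{x'} f_k(x',y)$ and $D_{k+1}(y):=\min_{x'} f_{k+1}(x',y)$ are finite and attained.

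The first bound is an ``inexact proximal step'' estimate: for every $x'$, $f_k(x',y_k)\le\max_{y'} f_k(x',y')$, hence $D_k(y_k)\le\min_{x'}\max_{y'} f_k(x',y')\le\max_{y'} f_k(x_{k-1},y')=P(x_{k-1})$, which gives $\text{Gap}_k(x_k,y_k)=P(x_k)+\frac{\gamma}{2}\|x_k-x_{k-1}\|^2-D_k(y_k)\ge P(x_k)-P(x_{k-1})+\frac{\gamma}{2}\|x_k-x_{k-1}\|^2$. The second bound tracks the shift of the proximal centre from $x_{k-1}$ to $x_k$: writing $x^{k+1}=\arg\min_{x'} f_{k+1}(x',y_k)$ and plugging it into the minimization defining $D_k(y_k)$ gives $D_k(y_k)\le f(x^{k+1},y_k)+\frac{\gamma}{2}\|x^{k+1}-x_{k-1}\|^2$; subtracting $D_{k+1}(y_k)=f(x^{k+1},y_k)+\frac{\gamma}{2}\|x^{k+1}-x_k\|^2$ and expanding $\|x^{k+1}-x_{k-1}\|^2=\|x^{k+1}-x_k\|^2+2\langle x^{k+1}-x_k,\,x_k-x_{k-1}\rangle+\|x_k-x_{k-1}\|^2$ collapses the quadratics and yields the clean inequality $\text{Gap}_k(x_k,y_k)\ge\text{Gap}_{k+1}(x_k,y_k)+\gamma\langle x^{k+1}-x_k,\,x_{k-1}-x_k\rangle$.

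To finish I would take the convex combination $\frac{4}{5}\times(\text{first bound})+\frac{1}{5}\times(\text{second bound})$; this produces the coefficient $\frac{4}{5}$ in front of $P(x_k)-P(x_{k-1})$, a term $\frac{1}{5}\text{Gap}_{k+1}(x_k,y_k)$, and a quadratic remainder $\frac{2\gamma}{5}\|x_k-x_{k-1}\|^2+\frac{\gamma}{5}\langle x^{k+1}-x_k,\,x_{k-1}-x_k\rangle$. Bounding the cross term by Young's inequality with the parameter chosen so the $\|x_k-x_{k-1}\|^2$ contributions cancel exactly leaves only $-\frac{\gamma}{40}\|x^{k+1}-x_k\|^2$, which I absorb into $\text{Gap}_{k+1}$ using the $(\gamma-\rho)$-strong convexity of $f_{k+1}(\cdot,y_k)$ and $f(x_k,y_k)\le P(x_k)$: these give $\|x^{k+1}-x_k\|^2\le\frac{2}{\gamma-\rho}\text{Gap}_{k+1}(x_k,y_k)\le\frac{4}{\gamma}\text{Gap}_{k+1}(x_k,y_k)$ since $\gamma\ge 2\rho$. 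Then the coefficient of $\text{Gap}_{k+1}(x_k,y_k)$ is at least $\frac{1}{5}-\frac{1}{10}=\frac{1}{10}\ge\frac{3}{50}$, and since $\text{Gap}_{k+1}\ge 0$ this is exactly the stated bound. The only real difficulty is the bookkeeping of constants: one must tune the combination weight and the Young parameter so that every $\|x_k-x_{k-1}\|^2$ term cancels and the residual $\text{Gap}_{k+1}$ coefficient stays above $3/50$ in each regime where the lemma is used — which works precisely because $\gamma\ge 2\rho$ forces $\gamma-\rho\ge\gamma/2$.
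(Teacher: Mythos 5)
Your proof is correct, and it is worth noting at the outset that the paper does not actually prove this lemma: it is imported verbatim from~\cite{yan2020sharp}, so there is no internal proof in this manuscript to compare against. Your self-contained derivation is the right kind of argument and every step checks out. In particular, the two auxiliary bounds are both valid: writing $D_k(y)=\min_{x'}f_k(x',y)$, you correctly obtain $D_k(y_k)\le P(x_{k-1})$ (so $\text{Gap}_k(x_k,y_k)\ge P(x_k)-P(x_{k-1})+\tfrac{\gamma}{2}\|x_k-x_{k-1}\|^2$), and the telescoping of the quadratic prox terms when shifting the centre from $x_{k-1}$ to $x_k$ does indeed collapse to $\text{Gap}_k(x_k,y_k)\ge\text{Gap}_{k+1}(x_k,y_k)+\gamma\langle x^{k+1}-x_k,\,x_{k-1}-x_k\rangle$. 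With weights $\tfrac{4}{5}$ and $\tfrac{1}{5}$, the Young parameter $\beta=1/4$ cancels the $\|x_k-x_{k-1}\|^2$ terms exactly, leaving a residual $-\tfrac{\gamma}{40}\|x^{k+1}-x_k\|^2$, and your absorption step $\|x^{k+1}-x_k\|^2\le\tfrac{2}{\gamma-\rho}\text{Gap}_{k+1}(x_k,y_k)\le\tfrac{4}{\gamma}\text{Gap}_{k+1}(x_k,y_k)$ is correct by $(\gamma-\rho)$-strong convexity of $f_{k+1}(\cdot,y_k)$ together with $f(x_k,y_k)\le P(x_k)$.

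Two small observations. First, your argument requires $\gamma\ge 2\rho$; as you say, this holds at every point where the paper invokes Lemma~\ref{lem:Yan5} ($\gamma=2\rho$ in Theorem~\ref{thm:appendix_unify_3}, and $\gamma=\mu/4\ge 2\rho$ when $\rho\le\mu/8$), but it is worth making that hypothesis explicit since the lemma as quoted does not state it; the original result in~\cite{yan2020sharp} may be tuned to cover a weaker constraint on $\gamma$, which would explain its looser constant $3/50$. Second, under $\gamma\ge 2\rho$ your chain actually yields the stronger coefficient $\tfrac{1}{10}=\tfrac{5}{50}$ in front of $\text{Gap}_{k+1}$; one then passes to $\tfrac{3}{50}$ simply because the duality gap of the strongly-convex--strongly-concave regularized problem is nonnegative, which you correctly use as the final step.
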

where $x_0^{k+1} = \bx_k$ and $y_0^{k+1} = \by_k$.

We will introduce the following theorem that can unify the proof of Theorem \ref{thm:ogda_primal} and Theorem \ref{thm:gda_primal} since their have pretty similar forms of bounds in solving the subproblem.
\begin{theorem} 
Suppose Assumption \ref{ass1} and Assumption  \ref{ass3} hold. 
Assume we have a subroutine in the $k$-th stage of Algorithm \ref{alg:main} that can return $\bx_k, \by_k$ such that
\begin{align}
\begin{split}
\E[\emph{\text{Gap}}_k(\bx_k, \by_k)] 
\leq \frac{C_1}{\eta_k T_k} \E[\|\hat{x}_k(\bar{y}_k) - x_0^k\|^2 + \|\hat{y}_k(\bar{x}_k) - y_0^k\|^2]
+ \eta_k C_2,
\end{split} 
\label{equ:thm:appendix_unify_3_sub}
\end{align}
where $C_1$ and $C_2$ are constants corresponding to the specific subroutine. 
Take $\gamma = 2\rho$ and
denote $\hat{L} = L + 2\rho$ and $c = 4\rho+\frac{248}{53} \hat{L} \in O(L + \rho)$.
Define $\Delta_k = P(x_0^k) - P(x_*) + \frac{8\hat{L}}{53c}\emph{\text{Gap}}_k(x_0^k, y_0^k)$  and $\epsilon_0=\emph{\text{Gap}}(\bx_0, \by_0)$.
Then we can set $\eta_k = \eta_0  \exp(-(k-1)\frac{2\mu}{c+2\mu})$, $T_k = \left\lceil\frac{212 C_1}{\eta_0 \min\{\rho, \mu_y\}}\exp\left((k-1)\frac{2\mu}{c+2\mu}\right)\right\rceil$. 
After $K = \left\lceil\max\left\{\frac{c+2\mu}{2\mu}\log \frac{4\epsilon_0}{\epsilon},
\frac{c+2\mu}{2\mu} \log \frac{16\eta_0 \hat{L}KC_2}{(c+2\mu)\epsilon} \right\}\right\rceil$ stages, we can have $\Delta_{K+1} \leq \epsilon$. 
The total stochastic first-order oracle call complexity is 
$\widetilde{O}\left(\max\left\{\frac{(L+\rho)C_1 \epsilon_0}{\eta_0 \mu\min\{\rho, \mu_y\} \epsilon}, 
\frac{(L + \rho)^2 C_2} {\mu^2 \min\{\rho,\mu_y\} \epsilon} \right\}\right)$. 
\label{thm:appendix_unify_3}
\end{theorem}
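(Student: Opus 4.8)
The plan is to prove a one–epoch contraction for the Lyapunov function $\Delta_k = P(x_0^k)-P(x_*) + \tfrac{8\hat L}{53c}\,\text{Gap}_k(x_0^k,y_0^k)$ and then unroll it against the geometric schedule for $(\eta_k,T_k)$. First I would fix the geometry of the $k$-th subproblem: with $\gamma=2\rho$, the regularized objective $f_k = f + \rho\|x-x_0^k\|^2$ is $\rho$-strongly convex in $x$ (Assumption~\ref{ass5}) and $\mu_y$-strongly concave in $y$ (Assumption~\ref{ass1}(ii)), so it has a unique saddle point $(\hat x_k,\hat y_k)$, and $P_k(x):=\max_{y'\in Y}f_k(x,y') = P(x)+\rho\|x-x_0^k\|^2$ is $\rho$-strongly convex and $\hat L$-smooth with $\hat L=L+2\rho$. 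Then I apply the assumed subroutine bound~(\ref{equ:thm:appendix_unify_3_sub}) at the best responses $(x,y)=(\hat x_{f_k}(y_k),\hat y_{f_k}(x_k))$ and invoke Lemma~\ref{lem:Yan1} with $h=f_k$, $\lambda_1=\rho$, $\lambda_2=\mu_y$, reference pair $(x_0^k,y_0^k)$ and second pair $(x_k,y_k)$, to get $\|\hat x_{f_k}(y_k)-x_0^k\|^2+\|\hat y_{f_k}(x_k)-y_0^k\|^2 \le \tfrac{4}{\min\{\rho,\mu_y\}}\big(\text{Gap}_k(x_0^k,y_0^k)+\text{Gap}_k(x_k,y_k)\big)$. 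Substituting, and using that the schedule makes $\eta_k T_k=\tfrac{212C_1}{\min\{\rho,\mu_y\}}$ so the prefactor collapses to $\tfrac{4}{212}=\tfrac1{53}$, a rearrangement yields $\E[\text{Gap}_k(x_k,y_k)] \le \tfrac1{52}\E[\text{Gap}_k(x_0^k,y_0^k)] + \tfrac{53}{52}\eta_k C_2$.

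Next I would pass between epochs. Lemma~\ref{lem:Yan5} gives $\text{Gap}_k(x_k,y_k)\ge \tfrac3{50}\text{Gap}_{k+1}(x_0^{k+1},y_0^{k+1}) + \tfrac45\big(P(x_0^{k+1})-P(x_0^k)\big)$, which (with $\text{Gap}_{k+1}\ge 0$) converts the solved subproblem's gap into progress on $P$ plus a fraction of the next subproblem's initial gap. In parallel I need a genuine decrease of $P$: from $\rho$-strong convexity of $P_k$, $\|x_k-\hat x_k\|^2\le \tfrac2\rho\,\text{Gap}_k(x_k,y_k)$; from stationarity of the exact prox step, $\nabla P(\hat x_k)=-2\rho(\hat x_k-x_0^k)$, so Assumption~\ref{ass3} yields the exact-prox contraction $P(\hat x_k)-P_*\le \tfrac{2\rho}{2\rho+\mu}\big(P(x_0^k)-P_*\big)$, while $\rho\|\hat x_k-x_0^k\|^2\le P(x_0^k)-P(\hat x_k)$ controls the regularizer that re-enters $P(x_k)=P_k(x_k)-\rho\|x_k-x_0^k\|^2$. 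Keeping (rather than discarding) this descent term and tracking the inexactness via $\hat L$-smoothness of $P_k$ produces a bound of the form $\E[P(x_0^{k+1})-P_*]\le \tfrac{4\rho+\mu}{4\rho+2\mu}\E[P(x_0^k)-P_*] + O(1)\,\E[\text{Gap}_k(x_k,y_k)]$.

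Combining these three ingredients — substituting the gap recursion and the $P$-descent bound into $\Delta_{k+1}$ after using Lemma~\ref{lem:Yan5} to eliminate $\text{Gap}_{k+1}(x_0^{k+1},y_0^{k+1})$ in favor of $P(x_0^k)-P_*$, $\text{Gap}_k(x_0^k,y_0^k)$ and $\eta_kC_2$ — the particular choices $\gamma=2\rho$, $c=4\rho+\tfrac{248}{53}\hat L$ and Lyapunov weight $\tfrac{8\hat L}{53c}$ are exactly what cancels/absorbs the residual prox-descent and smoothness-inflation terms, leaving the one-epoch contraction $\E[\Delta_{k+1}]\le \tfrac{c}{c+2\mu}\E[\Delta_k] + O\!\big(\tfrac{\hat L}{c+2\mu}\big)\eta_k C_2$. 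Then, since $\tfrac{c}{c+2\mu}=1-\tfrac{2\mu}{c+2\mu}\le e^{-2\mu/(c+2\mu)}$ and $\eta_k=\eta_0 e^{-(k-1)2\mu/(c+2\mu)}$ decays at the matching rate, iterating gives $\E[\Delta_{K+1}]\le e^{-2\mu K/(c+2\mu)}\big(\Delta_1 + O(\tfrac{\hat L K\eta_0 C_2}{c+2\mu})\big)$. By Assumption~\ref{ass1}(iv) together with $\text{Gap}_1(x_0,y_0)\le \text{Gap}(x_0,y_0)\le \epsilon_0$ and $P(x_0)-P_*\le \text{Gap}(x_0,y_0)$ we have $\Delta_1=O(\epsilon_0)$, so the stated $K$ drives both pieces below $\epsilon/2$. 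The iteration count is $\sum_{k=1}^K T_k = \tfrac{212C_1}{\eta_0\min\{\rho,\mu_y\}}\sum_k e^{(k-1)2\mu/(c+2\mu)} = \Theta\!\big(\tfrac{C_1(c+2\mu)}{\eta_0\mu\min\{\rho,\mu_y\}}\,e^{2\mu K/(c+2\mu)}\big)$, and substituting the two branches of $K$ yields the advertised $\widetilde O\!\big(\max\{\tfrac{(L+\rho)C_1\epsilon_0}{\eta_0\mu\min\{\rho,\mu_y\}\epsilon},\ \tfrac{(L+\rho)^2 C_2}{\mu^2\min\{\rho,\mu_y\}\epsilon}\}\big)$.

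The main obstacle is the middle step: arranging that the exact-prox contraction factor $\tfrac{2\rho}{2\rho+\mu}$, once degraded by the re-introduced regularizer and by the subroutine inexactness (both entering with $\hat L$-dependent weights), still nets out to a rate no worse than $\tfrac{c}{c+2\mu}$ on the \emph{coupled} pair $\big(P(x_0^k)-P_*,\ \text{Gap}_k(x_0^k,y_0^k)\big)$, while simultaneously leaving enough room to soak up the $\eta_kC_2$ noise. This is a delicate balancing of the numerical constants produced by Lemmas~\ref{lem:Yan1} and~\ref{lem:Yan5}, and it is what forces the precise form of $c$ and of the Lyapunov weight $\tfrac{8\hat L}{53c}$; everything else (the subroutine bound, the conversion of distances to gaps, and the geometric-schedule unrolling) is comparatively mechanical.
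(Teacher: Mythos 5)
Your plan follows the same overall strategy as the paper's proof: bound $\text{Gap}_k(x_k,y_k)$ through the subroutine guarantee~\eqref{equ:thm:appendix_unify_3_sub} combined with Lemma~\ref{lem:Yan1}, push it between epochs with Lemma~\ref{lem:Yan5}, couple the primal gap to the regularized duality gap via the Lyapunov weight $\tfrac{8\hat L}{53c}$, and unroll against the matching geometric schedule. The $\Delta_1 = O(\epsilon_0)$ initialization, the recursion $\E[\Delta_{k+1}]\le \frac{c}{c+2\mu}\E[\Delta_k]+O(\tfrac{\hat L}{c+2\mu})\eta_k C_2$, and the iteration count all match.

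Where you diverge is the middle step that produces contraction in $P$. The paper never passes through the exact proximal point $\hat x_k$: it applies $\rho$-weak convexity of $P$ between $x_{k-1}$ and $x_k$ \emph{directly}, substitutes $x_{k-1}-x_k = -\tfrac{1}{2\rho}(\partial P_k(x_k)-\partial P(x_k))$, and invokes the PL condition at $x_k$ to obtain the clean inequality $(4\rho+2\mu)\bigl(P(x_k)-P_*\bigr)-4\rho\bigl(P(x_{k-1})-P_*\bigr)\le \|\partial P_k(x_k)\|^2$. It then converts $\|\partial P_k(x_k)\|^2$ to $\text{Gap}_k(x_k,y_k)$ via $\hat L$-smoothness of $P_k$, and the key non-obvious trick is writing $\text{Gap}_k(x_k,y_k)=4\text{Gap}_k(x_k,y_k)-3\text{Gap}_k(x_k,y_k)$: the $4\text{Gap}_k$ piece is absorbed by the subroutine bound, the $-3\text{Gap}_k$ piece is fed to Lemma~\ref{lem:Yan5}. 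That split is precisely what makes the $\text{Gap}_{k+1}$ coefficient come out to $\tfrac{93\hat L}{265}$, and the coupling inequality $(c+2\mu)\tfrac{8\hat L}{53c}\le \tfrac{93\hat L}{265}$ (using $\hat L\ge\mu$) is what certifies that $\tfrac{8\hat L}{53c}$ is a valid Lyapunov weight. Your route instead bounds $P(\hat x_k)-P_*\le\tfrac{2\rho}{2\rho+\mu}(P(x_0^k)-P_*)$ at the exact prox point and then tracks the inexactness $\|x_k-\hat x_k\|^2\le\tfrac{2}{\rho}\text{Gap}_k(x_k,y_k)$ back through smoothness. This is a valid and perhaps more geometrically transparent decomposition, but it introduces an extra hop ($\hat x_k$) and does not obviously reproduce the exact constants $c=4\rho+\tfrac{248}{53}\hat L$ and $\tfrac{8\hat L}{53c}$; you flag this yourself as ``delicate balancing'' and leave the verification open. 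You also state an intermediate contraction factor $\tfrac{4\rho+\mu}{4\rho+2\mu}$ that is strictly larger than the paper's $\tfrac{4\rho}{4\rho+2\mu}=\tfrac{2\rho}{2\rho+\mu}$; a looser per-epoch factor in the primal gap would need to be reconciled with the same $\tfrac{c}{c+2\mu}$ target. In short: same lemmas and same skeleton, a genuinely different (and a priori lossier) organization of the $P$-contraction step, with the constant-level compatibility — the actual crux — left unverified.
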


\begin{proof}[Proof of Theorem \ref{thm:appendix_unify_3}] 
Since $f(x, y)$ is $\rho$-weakly convex in $x$ for any $y$, $P(x) = \max\limits_{y'\in \mathcal{Y}} f(x, y')$ is also $\rho$-weakly convex. 
Taking $\gamma = 2\rho$, we have
\begin{align}  
\begin{split} 
P(\bx_{k-1}) &\geq P(\bx_k) + \langle \nabla P(\bx_k), \bx_{k-1} - \bx_k\rangle - \frac{\rho}{2} \|\bx_{k-1} - \bx_k\|^2 \\ 
& = P(\bx_k) + \langle \nabla P(\bx_k) + 2 \rho (\bx_k - \bx_{k-1}), \bx_{k-1} - \bx_k\rangle + \frac{3\rho}{2} \|\bx_{k-1} - \bx_k\|^2 \\ 
& \overset{(a)}{=} P(\bx_k) + \langle \nabla P_k(\bx_k), \bx_{k-1} - \bx_k \rangle + \frac{3\rho}{2} \|\bx_{k-1} - \bx_k\|^2 \\ 
& \overset{(b)}{=}  P(\bx_k) - \frac{1}{2\rho} \langle \nabla P_k(\bx_k), \nabla P_k(\bx_k) - \nabla P(\bx_k) \rangle + \frac{3}{8\rho} \|\nabla P_k(\bx_k) - \nabla P(\bx_k)\|^2 \\ 
& = P(\bx_k) - \frac{1}{8\rho} \|\nabla P_k(\bx_k)\|^2
- \frac{1}{4\rho} \langle \nabla P_k(\bx_k), \nabla P(\bx_k)\rangle + \frac{3}{8\rho} \|\nabla P(\bx_k)\|^2,
\end{split} 
\label{local:P_weakly} 
\end{align} 
where $(a)$ and $(b)$ hold by the definition of $P_k(x)$.

Rearranging the terms in (\ref{local:P_weakly}) yields
\begin{align}
\begin{split}
P(\bx_k) - P(\bx_{k-1}) &\leq \frac{1}{8\rho} \|\nabla P_k(\bx_k)\|^2 + \frac{1}{4\rho}\langle \nabla P_k(\bx_k), \nabla P(\bx_k)\rangle - \frac{3}{8\rho} \|\nabla P(\bx_k)\|^2 \\
&\overset{(a)}{\leq} \frac{1}{8\rho} \|\nabla P_k(\bx_k)\|^2
+ \frac{1}{8\rho} (\|\nabla P_k(\bx_k)\|^2 + \|\nabla P(\bx_k)\|^2) - \frac{3}{8\rho} \|P(\bx_k)\|^2 \\
& = \frac{1}{4\rho}\|\nabla P_k(\bx_k)\|^2 - \frac{1}{4\rho}\|\nabla P(\bx_k)\|^2\\
& \overset{(b)}{\leq} \frac{1}{4\rho} \|\nabla P_k(\bx_k)\|^2 - \frac{\mu}{2\rho}(P(\bx_k) - P(x_*)),
\end{split} 
\end{align}
where $(a)$ holds by using $\langle \mathbf{a}, \mathbf{b}\rangle \leq \frac{1}{2}(\|\mathbf{a}\|^2 +  \|\mathbf{b}\|^2)$, and $(b)$ holds by the $\mu$-PL property of $P(x)$.

Thus, we have 
\begin{align}
\left(4\rho+2\mu\right) (P(\bx_k) - P(x_*)) - 4\rho (P(\bx_{k-1}) - P(x_*)) \leq \|\nabla P_k(\bx_k)\|^2. 
\label{local:nemi_thm_nabla_P_k_1} 
\end{align} 

Since $\gamma = 2\rho$, $f_k(x, y)$ is $\rho$-strongly convex in $x$ and $\mu_y$ strong concave in $y$.
Apply Lemma \ref{lem:Yan1} to $f_k$, we know that 
\begin{align} 
\frac{\rho}{4} \|\hat{x}_k(\by_k) - x_0^k\|^2 + \frac{\mu_y}{4} \|\hat{y}_k(\bx_k) - y_0^k\|^2 \leq {\text{Gap}}_k(x_0^k, y_0^k) + {\text{Gap}}_k(\bx_k, \by_k). 
\end{align}

By the setting of $\eta_k = \eta_0  \exp\left(-(k-1)\frac{2\mu}{c+2\mu}\right)$, and 
$T_k = \left\lceil\frac{212 C_1}{\eta_0 \min\{\rho, \mu_y\}} \exp  \left((k-1)\frac{2\mu}{c+2\mu}\right)\right\rceil$, we note that $\frac{C_1}{\eta_k T_k} \leq \frac{\min\{\rho,\mu_y\}}{212}$. 
Applying (\ref{equ:thm:appendix_unify_3_sub}), we have
\begin{align}  
\begin{split} 
&\E[{\text{Gap}}_k(\bx_k, \by_k)] 
\leq \eta_k C_2 
+ \frac{1}{53} \E\left[\frac{\rho}{4}\|\hat{x}_k(\by_k) - x_0^k\|^2 + \frac{\mu_y}{4}\|\hat{y}_k(\bx_k) - y_0^k\|^2 \right]
\\  
& \leq \eta_k C_2 + \frac{1}{53} \E\left[{\text{Gap}}_k(x_0^k, y_0^k) + {\text{Gap}}_k(\bx_k, \by_k)\right]. 
\end{split} 
\end{align} 

Since $P(x)$ is $L$-smooth and $\gamma = 2\rho$, then $P_k(x)$ is $\hat{L} = (L+2\rho)$-smooth. 
According to Theorem 2.1.5 of \citep{DBLP:books/sp/Nesterov04}, we have 
\begin{align}
\begin{split} 
& \E[\|\nabla P_k(\bx_k)\|^2] \leq 2\hat{L}\E(P_k(\bx_k) - \min\limits_{x\in \mathbb{R}^{d}} P_k(x)) \leq 2\hat{L}\E[{\text{Gap}}_k(\bx_k, \by_k)] \\
& = 2\hat{L}\E[4{\text{Gap}}_k(\bx_k, \by_k) - 3{\text{Gap}}_k(\bx_k, \by_k)] \\ 
&\leq 2\hat{L} \E \left[4\left(\eta_k C_2+  \frac{1}{53}\left({\text{Gap}}_k(x_0^k, y_0^k) + {\text{Gap}}_k(\bx_k, \by_k)\right)\right) - 3{\text{Gap}}_k(\bx_k,\by_k)\right] \\
& = 2\hat{L} \E \left[4\eta_k C_2 +   \frac{4}{53}{\text{Gap}}_k(x_0^k, y_0^k) - \frac{155}{53}{\text{Gap}}_k(\bx_k, \by_k)\right].
\end{split}  
\label{local:nemi_thm_P_smooth} 
\end{align}

Applying Lemma \ref{lem:Yan5} to  (\ref{local:nemi_thm_P_smooth}), we have
\begin{align*}
\begin{split} 
& \E[\|\nabla P_k(\bx_k)\|^2] \leq 2\hat{L} \E \bigg[4\eta_k C_k + \frac{4}{53}{\text{Gap}}_k(x_0^k, y_0^k) \\  
&~~~~~~~~~~~~~~~~~~~~~~~~~~~~~~~~~~~~~~~~  
- \frac{155}{53} \left(\frac{3}{50} {\text{Gap}}_{k+1}(x_0^{k+1}, y_0^{k+1}) + \frac{4}{5} (P(x_0^{k+1}) - P(x_0^k))\right) \bigg] \\
& = 2\hat{L}\E \bigg[4\eta_k C_2 \!+ \! \frac{4}{53}{\text{Gap}}_k(x_0^k, y_0^k) \!-\! \frac{93}{530}{\text{Gap}}_{k+1}(x_0^{k+1}, y_0^{k+1}) \!-\! 
\frac{124}{53} (P(x_0^{k+1}) - P(x_0^k)) \bigg]. 
\end{split} 
\end{align*}

Combining this with  (\ref{local:nemi_thm_nabla_P_k_1}), rearranging the terms, and defining a constant $c = 4\rho + \frac{248}{53}\hat{L} \in O(L+\rho)$, we get
\begin{align} 
\begin{split}
&\left(c + 2\mu\right)\E [P(x_0^{k+1}) - P(x_*)] + \frac{93}{265}\hat{L} \E[{\text{Gap}}_{k+1}(x_0^{k+1}, y_0^{k+1})] \\ 
&\leq \left(4\rho + \frac{248}{53} \hat{L}\right) \E[P(x_0^{k}) - P(x_*)] 
+ \frac{8\hat{L}}{53} \E[{\text{Gap}}_k(x_0^k, y_0^k)] 
+ 8 \eta_k \hat{L}C_2  \\ 
& \leq c \E\left[P(x_0^k) - P(x_*) + \frac{8\hat{L}}{53c} {\text{Gap}}_k(x_0^k, y_0^k)\right] + 8 \eta_k \hat{L}C_2.
\end{split}  
\end{align} 

Using the fact that $\hat{L} \geq \mu$,
\begin{align}
\begin{split}
(c+2\mu) \frac{8\hat{L}}{53c} = \left(4\rho + \frac{248}{53}\hat{L} + 2\mu\right)\frac{8\hat{L}}{53(4\rho + \frac{248}{53}\hat{L})} \leq \frac{8\hat{L}}{53} + \frac{16\mu \hat{L}}{248\hat{L}} \leq \frac{93}{265} \hat{L}. 
\end{split}
\end{align}

Then, we have
\begin{align}
\begin{split} 
&(c+2\mu)\E \left[P(x_0^{k+1}) - P(x_*) + \frac{8\hat{L}}{53c}{\text{Gap}}_{k+1}(x_0^{k+1}, y_0^{k+1})\right] \\ 
&\leq c \E \left[P(x_0^{k}) - P(x_*) 
+  \frac{8\hat{L}}{53c}{\text{Gap}}_{k}(x_0^{k},  y_0^{k})\right] 
+ 8 \eta_k \hat{L}C_2. 
\end{split}
\end{align}

Defining $\Delta_k = P(x_0^{k}) - P(x_*) +  \frac{8\hat{L}}{53c}{\text{Gap}}_{k}(x_0^{k}, y_0^{k})$, then
\begin{align}
\begin{split}
&\E[\Delta_{k+1}] \leq \frac{c}{c+2\mu} \E[\Delta_{k}] +  \frac{8\eta_k \hat{L} C_2}{c+2\mu}.
\end{split}
\end{align}

Using this inequality recursively, it yields
\begin{align} 
\begin{split}
& E[\Delta_{K+1}] \leq \left(\frac{c}{c+2\mu}\right)^K E[\Delta_1]
+ \frac{8 \hat{L} C_2}{c+2\mu} \sum\limits_{k=1}^{K} \left(\eta_k \left(\frac{c}{c+2\mu}\right)^{K+1-k} \right).
\end{split}
\end{align} 

By definition,
\begin{align*} 
\begin{split}
\Delta_1 &= P(x_0^1) - P(x_*) + \frac{8\hat{L}}{53c}{\text{Gap}}_1(x_0^1, y_0^1) \\
& = P(\bx_0) - P(x_*) + \left( f(\bx_0, \hat{y}_1(\bx_0)) +  \frac{\gamma}{2}\|\bx_0 - \bx_0\|^2 -  f(\hat{x}_1(\by_0), \by_0) - \frac{\gamma}{2}\|\hat{x}_1(\by_0) - \bx_0\|^2 \right) \\ 
& \leq \epsilon_0 + f(\bx_0, \hat{y}_1(\bx_0)) - f(\hat{x}(\by_0), \by_0) \leq 2\epsilon_0.
\end{split} 
\end{align*}
Using inequality $1-x \leq \exp(-x)$,
we have
\begin{align*}
\begin{split} 
&\E[\Delta_{K+1}] \leq \exp\left(\frac{-2\mu K}{c+2\mu}\right)\E[\Delta_1] + \frac{8 \eta_0 \hat{L} C_2} {c+2\mu} 
\sum\limits_{k=1}^{K}\exp\left(-\frac{2\mu K}{c+2\mu}\right) \\
&\leq 2\epsilon_0 \exp\left(\frac{-2\mu K}{c+2\mu}\right)
+ \frac{8 \eta_0 \hat{L} C_2 }{c+2\mu}  
K\exp\left(-\frac{2\mu K}{c+2\mu}\right). 
\end{split}  
\end{align*}  

To make this less than $\epsilon$, it suffices to make
\begin{align*}
\begin{split}
& 2\epsilon_0 \exp\left(\frac{-2\mu K}{c+2\mu}\right) \leq \frac{\epsilon}{2},\\
&\frac{8 \eta_0 \hat{L} C_2 }{c+2\mu} 
K\exp\left(-\frac{2\mu K}{c+2\mu}\right) \leq \frac{\epsilon}{2}.
\end{split} 
\end{align*}

Let $K$ be the smallest value such that $\exp\left(\frac{-2\mu K}{c+2\mu}\right) \leq \min \{ \frac{\epsilon}{4\epsilon_0}, 
\frac{(c+2\mu)\epsilon}{16 \eta_0 \hat{L} K C_2}\}$. 
We can set $K = \left\lceil\max\bigg\{\frac{c+2\mu}{2\mu}\log \frac{4\epsilon_0}{\epsilon}, 
\frac{c+2\mu}{2\mu}\log \frac{16 \eta_0 \hat{L} K C_2} {(c+2\mu)\epsilon} \bigg\}\right\rceil$. 
Then, the total stochastic first-order oracle call complexity is 
\begin{align*} 
\sum\limits_{k=1}^{K}T_k &\leq O\left( \frac{212 C_1}{\eta_0\min\{\rho,\mu_y\}}
\sum\limits_{k=1}^{K}\exp\left((k-1)\frac{2\mu}{c+2\mu}\right) \right)\\    
& \leq O\bigg(\frac{212C_1}{\eta_0\min\{\rho,\mu_y\}}  \frac{\exp(K\frac{2\mu}{c+2\mu}) -  1}{\exp(\frac{2\mu}{c+2\mu})-1}  \bigg)\\ 
& \overset{(a)}{\leq} \widetilde{O}   \left(\frac{cC_1}{\eta_0\mu\min\{\rho,\mu_y\}}  \max\left\{\frac{\epsilon_0}{\epsilon}, 
\frac{\eta_0 \hat{L} K C_2} 
{(c+2\mu)\epsilon} \right\}\right) \\
& \leq  \widetilde{O}\left(\max\left\{\frac{(L+\rho)C_1 \epsilon_0}{\eta_0 \mu\min\{\rho, \mu_y\} \epsilon}, 
\frac{(L + \rho)^2 C_2} {\mu^2 \min\{\rho,\mu_y\} \epsilon}\right\} \right), 
\end{align*} 
where $(a)$ uses the setting of $K$ and $\exp(x) - 1\geq x$, and $\widetilde{O}$ suppresses logarithmic factors. 
\end{proof} 

\begin{proof}[Proof of Theorem \ref{thm:ogda_primal}]
With the above theorem, Theorem \ref{thm:ogda_primal} directly follows. 
Noting Lemma \ref{lem:ogda_sub}, we can plug in $\eta_0 = \frac{1}{2\sqrt{2}\ell}$, $C_1 = 1$ and $C_2 = 13\sigma^2$ to Theorem \ref{thm:appendix_unify_3}.
\end{proof}

\begin{proof}[Proof of Theorem \ref{thm:gda_primal}]
We need the following lemma to bound the convergence of the subproblem at each stage,

\begin{lemma}[Lemma 4 of \citep{yan2020sharp}]
Suppose Assumption \ref{ass1} holds, \\$\E\|\nabla_x f(x_t, y_t; \xi_t)\|^2 \leq B^2$ and $\E\|\nabla_y f(x_t, y_t; \xi_t)\|^2 \leq B^2$.
Set $\gamma = 2\rho$.
By running Algorithm \ref{alg:main} with Option II: SGDA, it holds  for $k\geq 1$,
\begin{align*} 
\begin{split}
E[\emph{\text{Gap}}_k(\bx_k, \by_k)] \leq 5\eta_k B^2 + \frac{1}{T_k}\bigg\{\left(\frac{1}{\eta_k} + \frac{\rho}{2} \right)E[\|\hat{x}_k(\by_k) - x_0^k\|^2] + \frac{1}{\eta_k}E[\|\hat{y}_k(\bx_k) - y_0^k\|^2] \bigg\}.
\end{split} 
\end{align*}
\label{lem:Yan4}
\end{lemma}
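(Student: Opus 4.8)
The plan is to read Option~II as prox-stochastic-gradient-descent-ascent applied to the penalized saddle function $f_k(x,y)=f(x,y)+\tfrac{\gamma}{2}\|x-x_0^k\|^2$: the $\tfrac{\gamma\eta}{2}\|x-x_0^k\|^2$ piece inside $\Pi^{\gamma\eta}_{z_{t-1},x_0^k}$ is exactly the quadratic part of $f_k$ carried through the prox, and the choice $\gamma=2\rho$ together with Assumption~\ref{ass5} and Assumption~\ref{ass1}(ii) makes $f_k$ $\rho$-strongly convex in $x$ and $\mu_y$-strongly concave in $y$. Hence $\hat x_k(\cdot)$ and $\hat y_k(\cdot)$ are single-valued, $\min_{x'}f_k(x',\bar y)$ is finite (crucial, since $X=\R^d$), and $\text{Gap}_k(x_k,y_k)=f_k(\bar x,\hat y_k(\bar x))-f_k(\hat x_k(\bar y),\bar y)$ for the epoch output $(x_k,y_k)=(\bar x,\bar y)$. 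The proof is then a standard online-to-batch averaging argument for SGDA under bounded stochastic gradients; the displayed $5B^2$ is understood as $5\eta_k B^2$, which is what enters Theorem~\ref{thm:appendix_unify_3} via $C_2=5B^2$.

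First I would write, for each step $t$, the three-point (prox) inequality for each block. The $x$-subproblem $x_t=\arg\min_x \eta\langle g_{x,t-1},x\rangle+\tfrac12\|x-x_{t-1}\|^2+\tfrac{\gamma\eta}{2}\|x-x_0^k\|^2$, with $g_{x,t-1}=\nabla_x f(x_{t-1},y_{t-1};\xi_{t-1})$, is $(1+\gamma\eta)$-strongly convex, so for every $x\in\R^d$
\[
\eta\langle g_{x,t-1},x_t-x\rangle+\tfrac{\gamma\eta}{2}(\|x_t-x_0^k\|^2-\|x-x_0^k\|^2)\le \tfrac12\|x-x_{t-1}\|^2-\tfrac12\|x-x_t\|^2-\tfrac12\|x_t-x_{t-1}\|^2,
\]
and similarly $-\eta\langle g_{y,t-1},y_t-y\rangle\le \tfrac12\|y-y_{t-1}\|^2-\tfrac12\|y-y_t\|^2-\tfrac12\|y_t-y_{t-1}\|^2$ for every $y\in Y$. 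Since the stochastic gradient is taken at $z_{t-1}$, I would replace $\langle g_{x,t-1},x_t-x\rangle$ by $\langle g_{x,t-1},x_{t-1}-x\rangle-\langle g_{x,t-1},x_{t-1}-x_t\rangle$ and bound $\eta|\langle g_{x,t-1},x_{t-1}-x_t\rangle|\le \tfrac{\eta^2}{2}\|g_{x,t-1}\|^2+\tfrac12\|x_{t-1}-x_t\|^2$ (and likewise in $y$), so that the negative iterate-difference terms cancel and each step leaves only a residual of size $\tfrac{\eta^2}{2}(\|g_{x,t-1}\|^2+\|g_{y,t-1}\|^2)\le\eta^2B^2$, using $\|g_{x,t-1}\|^2,\|g_{y,t-1}\|^2\le B^2$. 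Summed over the $T_k$ iterations and normalized, these residuals add up to the $O(\eta_k B^2)$ term, and tracking the constants (also through the averaging step) gives the stated $5\eta_k B^2$.

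Next, at $z_{t-1}$ the stochastic gradient is conditionally unbiased, so writing $g_{\cdot,t-1}=\nabla_\cdot f(x_{t-1},y_{t-1})+\zeta_{\cdot,t-1}$ and invoking convexity of $f(\cdot,y_{t-1})$ and concavity of $f(x_{t-1},\cdot)$ — together with the penalty differences — yields $\langle\nabla_x f_k(x_{t-1},y_{t-1}),x_{t-1}-x\rangle-\langle\nabla_y f_k(x_{t-1},y_{t-1}),y_{t-1}-y\rangle\ge f_k(x_{t-1},y)-f_k(x,y_{t-1})$. Summing the two blocks over $t=1,\dots,T_k$ the $\|x-x_t\|^2$, $\|y-y_t\|^2$ and $\|x_t-x_0^k\|^2$ chains telescope, leaving $\tfrac1{2\eta_k}(\|x-x_0^k\|^2+\|y-y_0^k\|^2)$ plus a boundary term from the $t{-}1$-vs-$t$ index shift of $f_k$, of size $O(1/T_k)$, which is controlled — after re-expressing it through $\rho$-weak convexity and $\|\nabla f\|\le B$ — once one checks (using the penalty, i.e.\ the \emph{strong} convexity of $f_k$ in $x$) that the iterates $x_t$ stay in a bounded neighbourhood of $x_0^k$; this index-shift/penalty estimate also accounts for upgrading the $x$-distance coefficient to $\tfrac1{\eta_k}+\tfrac{\rho}{2}$. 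Dividing by $T_k$ and applying Jensen gives $\tfrac1{T_k}\sum_t(f_k(x_t,y)-f_k(x,y_t))\ge f_k(\bar x,y)-f_k(x,\bar y)$, valid for \emph{arbitrary}, possibly random, $z=(x,y)$. It remains to control the stochastic terms $-\eta_k\langle\zeta_{x,t-1},x_{t-1}-x\rangle+\eta_k\langle\zeta_{y,t-1},y_{t-1}-y\rangle$ with $\|\zeta_{\cdot,t-1}\|\le 2B$: since I will set $x=\hat x_k(\bar y)$, $y=\hat y_k(\bar x)$, which depend on the whole sample path, naive conditioning fails, so — exactly as in the proof of Lemma~\ref{lem:ogda_sub} — I introduce auxiliary sequences $v^x_t,v^y_t$ generated by prox steps driven by $\mp\eta_k\zeta_{\cdot,t-1}$, split $\langle\zeta_{\cdot,t-1},z_{t-1}-z\rangle=\langle\zeta_{\cdot,t-1},z_{t-1}-v_{t-1}\rangle+\langle\zeta_{\cdot,t-1},v_{t-1}-z\rangle$, note the first summand is a zero-mean martingale difference and bound the second uniformly in $z$ by Lemma~\ref{nemirovski:cor2} (which contributes $\tfrac1{2\eta_k T_k}\|z_0-z\|^2$, combining with the telescoped distance to give the $\tfrac1{\eta_k T_k}$-coefficient) plus another $O(\eta_k B^2)$. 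Taking total expectation kills the martingale term; plugging $x=\hat x_k(y_k)$, $y=\hat y_k(x_k)$ turns the left side into $\E[\text{Gap}_k(x_k,y_k)]$ and yields the claimed bound.

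The main obstacle is twofold. First, the stochastic gradient is evaluated at $z_{t-1}$ rather than at the current averaged point, so there is no contraction to exploit and the iterate-difference terms must be absorbed ``for free'' using $\|g\|\le B$; this forces the bounded-gradient hypothesis and the careful index bookkeeping, including the $O(1/T_k)$ boundary term, which is precisely where the penalty's strong convexity — beyond merely making $f_k$ convex — is needed to keep the iterates bounded over the unbounded domain $X=\R^d$. Second, the natural comparator for the duality gap is the \emph{random} best-response pair $(\hat x_k(y_k),\hat y_k(x_k))$, so the noise terms cannot be eliminated by conditioning and require the auxiliary-sequence device imported from the OGDA analysis. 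Matching the numerical constants ($5$ in $5\eta_k B^2$ and $\tfrac1{\eta_k}+\tfrac{\rho}{2}$) is then pure bookkeeping.
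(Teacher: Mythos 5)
The paper never proves this statement — it is imported verbatim as Lemma 4 of \cite{yan2020sharp} and used as a black box inside the proof of Theorem \ref{thm:gda_primal} — so there is no in-paper proof to compare against; the closest internal analogue is the proof of Lemma \ref{lem:ogda_sub} for OGDA. Your sketch is the right proof and follows essentially that template adapted to the proximal SGDA step: three-point inequality for the $(1+\gamma\eta)$-strongly convex prox subproblem, Young's inequality to absorb $\|x_t-x_{t-1}\|^2$ at the cost of $\tfrac{\eta^2}{2}\|g\|^2\le\tfrac{\eta^2}{2}B^2$ per block, weak convexity of $f$ plus the explicit penalty to pass to $f_k$-values, telescoping, and the ghost sequence of Lemma \ref{nemirovski:cor2} to handle the path-dependent comparator $(\hat x_k(y_k),\hat y_k(x_k))$. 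Your constant accounting is also consistent: the descent residual contributes $\eta_kB^2$ and the ghost sequence contributes $\tfrac{\eta_k}{2}\cdot(2B)^2\cdot 2=4\eta_kB^2$, recovering $5\eta_kB^2$ (and you correctly read the displayed $5B^2$ as a typo for $5\eta_kB^2$, which is how Theorem \ref{thm:gda_primal} actually invokes the lemma with $C_2=5B^2$).

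Two soft spots worth tightening, neither fatal. First, the extra $\tfrac{\rho}{2}$ on the $x$-distance does not come from keeping iterates bounded; it comes from absorbing the weak-convexity correction $-\tfrac{\rho}{2}\|x_{t-1}-x\|^2$ into the telescoping, which works because the prox subproblem's coefficient on $-\|x-x_t\|^2$ is $\tfrac{1+\gamma\eta}{2}=\tfrac{1+2\rho\eta}{2}\ge\tfrac{1+\rho\eta}{2}$ — this is a clean algebraic cancellation and needs no boundedness of the trajectory. Second, your function-value bounds naturally land at the indices $t-1=0,\dots,T-1$ while Algorithm \ref{alg:subroutine} returns the average over $t=1,\dots,T$; the resulting boundary discrepancy $\tfrac{1}{T}(f_k(x_0,y)-f_k(x_T,y))$ is waved at rather than controlled, and on the unbounded domain $\R^d$ it does require an argument (e.g.\ that the prox step with the $\gamma$-penalty keeps $\|x_t-x_0^k\|$ bounded under $\|g\|\le B$, or simply redefining the returned average over $t=0,\dots,T-1$). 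With those two points made explicit the proof is complete.
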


Using this lemma , we can set $\gamma = 2\rho$ and 
$\eta_0 = \frac{1}{\rho}$. 
Then it follows that 
\begin{equation*} 
\begin{split}
E[\text{Gap}_k(\bx_k, \by_k)] \leq 5\eta_k B^2
+ 
\frac{2}{\eta_k T_k} \left(E[\|\hat{x}_k(\bar{y}_k) - x_0^k\|^2] + E[\|\hat{y}_k(\bar{x}_k) - y_0^k\|^2\right).
\end{split} 
\end{equation*} 
We plug in $\eta_0 \leq \frac{1}{\rho}$,  $C_1 = 2$ and $C_2 = 5B^2 $ to Theorem \ref{thm:appendix_unify_3}
and the conclusion follows.
\end{proof} 

\section{Proof of Theorem \ref{thm:ogda_primal_rho<mu} and Theorem \ref{thm:gda_primal_rho<mu}}
We first present a lemma by plugging in Lemma 8 of \citep{yan2020sharp}.
And then we a theorem that can unify the proof of Theorem  \ref{thm:ogda_primal_rho<mu} and Theorem \ref{thm:gda_primal_rho<mu}.
In the last, we prove Theorem \ref{thm:ogda_primal_rho<mu} and Theorem \ref{thm:gda_primal_rho<mu}. 

\begin{lemma}[Lemma 8 of \citep{yan2020sharp}]
Suppose $f(x, y)$ is $\frac{\mu}{8}$-weakly convex in $x$ for any $y$ and set $\gamma = \frac{\mu}{4}$. 
Thus, $f_k(x, y)$ is $\frac{\mu}{8}$-strongly convex in $x$. 
Then $\emph{\text{Gap}}_k(\bx_k, \by_k)$ can be lower bounded by the following inequalities 
\begin{align} 
\emph{\text{Gap}}_k(\bar{x}_k, \bar{y}_k) \geq \left(3 - \frac{2}{\alpha}\right) \emph{\text{Gap}}_{k+1}(x_0^{k+1}, y_0^{k+1}) - \frac{\mu \alpha}{8(1-\alpha)}\|x_0^{k+1} - x_0^{k}\|^2, (0<\alpha\leq 1),
\end{align}
and 
\begin{align}
\emph{\text{Gap}}_k(\bar{x}_k, \bar{y}_k) \geq P(x_0^{k+1}) - P(x_0^k) + \frac{\mu}{8} \|\bx_k-x_0^k\|^2, where ~ P(x) = \max\limits_{y' \in \mathcal{Y}} f(x, y').
\end{align} 
\label{lem:Yan8}
\end{lemma}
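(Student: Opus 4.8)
The quickest route is to recognize the statement as Lemma~8 of \cite{yan2020sharp} specialized to the proximal objective $f_k(x,y)=f(x,y)+\frac{\gamma}{2}\|x-x_0^k\|^2$: with $f$ being $\frac{\mu}{8}$-weakly convex in $x$ and $\gamma=\frac{\mu}{4}$, the map $f_k$ is $\frac{\mu}{8}$-strongly convex in $x$ and strongly concave in $y$, so the epoch-to-epoch gap recursion of \cite{yan2020sharp} applies with strong-convexity modulus $\frac{\mu}{8}$, which is exactly the two displayed inequalities. For completeness I would also include the short self-contained derivation sketched below.

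I would first put both gaps in a common form. Since $\max_{y'}f_k(x,y')=P(x)+\frac{\mu}{8}\|x-x_0^k\|^2$ and $x_0^{k+1}=x_k$, $y_0^{k+1}=y_k$,
\[
\text{Gap}_k(x_k,y_k)=P(x_k)+\frac{\mu}{8}\|x_k-x_0^k\|^2-\min_{x'}f_k(x',y_k),\qquad \text{Gap}_{k+1}(x_0^{k+1},y_0^{k+1})=P(x_k)-\min_{x'}f_{k+1}(x',y_k).
\]
The second claimed inequality then follows in one line: taking $x'=x_0^k$ in the inner minimization gives $\min_{x'}f_k(x',y_k)\le f(x_0^k,y_k)\le P(x_0^k)$, hence $\text{Gap}_k(x_k,y_k)\ge P(x_k)-P(x_0^k)+\frac{\mu}{8}\|x_k-x_0^k\|^2=P(x_0^{k+1})-P(x_0^k)+\frac{\mu}{8}\|x_k-x_0^k\|^2$.

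For the first inequality I would introduce $u=\arg\min_{x'}f_{k+1}(x',y_k)$, so that $\min_{x'}f_{k+1}(x',y_k)=f(u,y_k)+\frac{\mu}{8}\|u-x_k\|^2$ while $\min_{x'}f_k(x',y_k)\le f(u,y_k)+\frac{\mu}{8}\|u-x_0^k\|^2$. Substituting these into the two displays, expanding $\|u-x_0^k\|^2$ around $x_k$, and collecting terms yields
\[
\text{Gap}_k(x_k,y_k)-\Big(3-\frac{2}{\alpha}\Big)\text{Gap}_{k+1}(x_0^{k+1},y_0^{k+1})\ \ge\ \Big(\frac{2}{\alpha}-2\Big)\big(P(x_k)-f(u,y_k)\big)+\frac{\mu}{8}\Big[\Big(2-\frac{2}{\alpha}\Big)\|u-x_k\|^2-2\langle u-x_k,\,x_k-x_0^k\rangle\Big].
\]
The crucial quantitative step is to bound $P(x_k)-f(u,y_k)$ sharply from below: the optimality condition for $u$ reads $\frac{\mu}{4}(x_k-u)\in\partial_x f(u,y_k)$, and combining it with the $\frac{\mu}{8}$-weak convexity of $f(\cdot,y_k)$ gives $P(x_k)-f(u,y_k)\ge f(x_k,y_k)-f(u,y_k)\ge\frac{3\mu}{16}\|x_k-u\|^2$. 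Since $\frac{2}{\alpha}-2\ge0$ for $\alpha\le1$, inserting this makes the coefficient of $\|u-x_k\|^2$ equal to $\frac{\mu(1-\alpha)}{8\alpha}$, and the right-hand side reduces to $\frac{\mu}{8}\big(\frac{1-\alpha}{\alpha}\|u-x_k\|^2-2\langle u-x_k,\,x_k-x_0^k\rangle\big)\ge-\frac{\mu\alpha}{8(1-\alpha)}\|x_k-x_0^k\|^2$, the last step being the completion of the square $\frac{1-\alpha}{\alpha}\|a\|^2-2\langle a,b\rangle+\frac{\alpha}{1-\alpha}\|b\|^2=\big\|\sqrt{\tfrac{1-\alpha}{\alpha}}\,a-\sqrt{\tfrac{\alpha}{1-\alpha}}\,b\big\|^2\ge0$; recalling $x_0^{k+1}=x_k$ then finishes it.

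The hard part will be pinning down the constants in the first inequality: a crude $P(x_k)-f(u,y_k)\ge0$ is not enough, so one must (i) extract the sharp $\frac{3\mu}{16}\|x_k-u\|^2$ bound from weak convexity together with the optimality condition defining $u$, and (ii) observe that the leftover quadratic form in $(u-x_k,\ x_k-x_0^k)$ is a perfect square, which is exactly what makes the loss equal to $-\frac{\mu\alpha}{8(1-\alpha)}\|x_k-x_0^k\|^2$ and no worse. Everything else is routine algebra with the definition of $\text{Gap}_k$.
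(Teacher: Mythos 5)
Your derivation is correct. Note first that the paper itself does not prove this statement — it is imported verbatim as Lemma~8 of \cite{yan2020sharp} — so there is no in-paper argument to compare against; what you have supplied is a genuine self-contained proof. I checked the steps: the rewriting $\max_{y'}f_k(x_k,y')=P(x_k)+\frac{\mu}{8}\|x_k-x_0^k\|^2$ and $\text{Gap}_{k+1}(x_0^{k+1},y_0^{k+1})=P(x_k)-\min_{x'}f_{k+1}(x',y_k)$ is right (using $\gamma/2=\mu/8$ and $x_0^{k+1}=x_k$); the second inequality follows exactly as you say by testing the inner minimum at $x'=x_0^k$; for the first, the expansion gives $(\tfrac{2}{\alpha}-2)G-\frac{\mu}{8}(\tfrac{2}{\alpha}-2)\|u-x_k\|^2-\frac{\mu}{4}\langle u-x_k,x_k-x_0^k\rangle$ with $G=P(x_k)-f(u,y_k)$, the optimality condition $\frac{\mu}{4}(x_k-u)\in\partial_x f(u,y_k)$ combined with $\frac{\mu}{8}$-weak convexity indeed yields $G\geq\frac{\mu}{4}\|x_k-u\|^2-\frac{\mu}{16}\|x_k-u\|^2=\frac{3\mu}{16}\|x_k-u\|^2$, the surviving coefficient of $\|u-x_k\|^2$ is $\frac{\mu(1-\alpha)}{8\alpha}$, and the completion of the square gives exactly $-\frac{\mu\alpha}{8(1-\alpha)}\|x_0^{k+1}-x_0^k\|^2$. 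The only loose end is the endpoint $\alpha=1$, where your square-completion divides by zero; there the stated bound is vacuous anyway (the right-hand side is $-\infty$), and the paper only invokes the lemma with $\alpha=5/6$, so this is immaterial.
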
 

\begin{theorem} 
Suppose $0<\rho \leq \frac{\mu}{8}$ 
and suppose Assumption \ref{ass1}, \ref{ass2}, \ref{ass5}, \ref{ass3} hold. 
Assume we have a subroutine in the $k$-th stage of Algorithm \ref{alg:main} that can return $\bx_k, \by_k$ such that
\begin{align}
\begin{split}
\E[\emph{\text{Gap}}_k(\bar{x}_k, \bar{y}_k)] 
\leq \frac{C_1}{\eta_k T_k} \E[\|x - x_0^k\|^2 + \|y - y_0^k\|^2]
+ \eta_k C_2,
\end{split} 
\label{equ:thm:appendix_unify_rho<mu_sub}
\end{align}
where $C_1$ and $C_2$ are constants corresponding to the specific subroutine. 
Take $\gamma = \frac{\mu}{4}$.
Define $\Delta_k = 475(P(x_0^k) - P(x_*)) + 57\emph{\text{Gap}}_k(x_0^k, y_0^k)$  and $\epsilon_0=\emph{\text{Gap}}(\bar{x}_0, \bar{y}_0)$.
Then we can set $\eta_k = \eta_0  \exp(-\frac{k-1}{16}) \leq \frac{1}{2\sqrt{2}\ell}$, $T_k = \left\lceil\frac{384C_1}{\eta_0 \min\{\mu/8, \mu_y\}}\exp\left(\frac{k-1}{16}\right)\right\rceil$. After \\$K = \left\lceil\max\left\{16\log \frac{1200\epsilon_0}{\epsilon},
16\log \frac{1200\eta_0  K C_2}{\epsilon} \right\}\right\rceil$ stages, we can have $\Delta_{K+1} \leq \epsilon$. 
The total stochastic first-order oracle call complexity is 
$\widetilde{O}\left(\max\left\{\frac{C_1 \epsilon_0}{\eta_0\min\{\mu, \mu_y\}\epsilon},  \frac{C_2}{\min\{\mu, \mu_y\} \epsilon}\right\}\right)$.  
\label{thm:appendix_unify_rho<mu}
\end{theorem}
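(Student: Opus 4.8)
The plan is to mirror the proof of Theorem~\ref{thm:appendix_unify_3}: establish a one-epoch contraction of the form $\E[\Delta_{k+1}] \le e^{-1/16}\,\E[\Delta_k] + c_0\,\eta_k C_2$ for the Lyapunov function $\Delta_k = 475(P(x_0^k)-P(x_*)) + 57\,\text{Gap}_k(x_0^k,y_0^k)$, and then unroll it. The only structural change relative to Theorem~\ref{thm:appendix_unify_3} is the choice $\gamma = \mu/4$ instead of $\gamma = 2\rho$. Since $\rho \le \mu/8$, the proximal function $f_k(x,y) = f(x,y) + \tfrac{\gamma}{2}\|x-x_0^k\|^2$ is $(\gamma-\rho)\ge\mu/8$-strongly convex in $x$ and $\mu_y$-strongly concave in $y$, and $P_k(x) = \max_{y\in Y} f_k(x,y)$ is $\mu/8$-strongly convex. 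The decisive point is that $\gamma = \mu/4 = \Theta(\mu)$, so every proximal subproblem is $O(1)$-conditioned relative to the PL modulus $\mu$; this is what removes the smoothness constant $L$ from the epoch count and the final complexity (in contrast to Theorem~\ref{thm:appendix_unify_3}, where $\gamma = 2\rho$ can be as large as $2\ell$ and forces the $\hat L$-dependent constant $c$).

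First I would control the subroutine output. Apply Lemma~\ref{lem:Yan1} to $f_k$ with $\lambda_1 = \mu/8$, $\lambda_2 = \mu_y$ to get $\tfrac{\mu/8}{4}\|\hat x_k(y_k)-x_0^k\|^2 + \tfrac{\mu_y}{4}\|\hat y_k(x_k)-y_0^k\|^2 \le \text{Gap}_k(x_0^k,y_0^k)+\text{Gap}_k(x_k,y_k)$. Plugging $(x,y) = (\hat x_k(y_k),\hat y_k(x_k))$ into the subroutine guarantee~(\ref{equ:thm:appendix_unify_rho<mu_sub}), and using that the prescribed $\eta_k = \eta_0 e^{-(k-1)/16}$ and $T_k = \tfrac{384C_1}{\eta_0\min\{\mu/8,\mu_y\}}e^{(k-1)/16}$ force $\tfrac{C_1}{\eta_k T_k} \le \tfrac{\min\{\mu/8,\mu_y\}}{384}$, I would absorb a fraction of $\text{Gap}_k(x_k,y_k)$ onto the left side, obtaining $\E[\text{Gap}_k(x_k,y_k)] \le O(1)\,\eta_k C_2 + \tfrac{1}{47}\E[\text{Gap}_k(x_0^k,y_0^k)]$. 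Second, I would invoke Lemma~\ref{lem:Yan8}, whose two lower bounds on $\text{Gap}_k(x_k,y_k)$ are themselves $L$-free (they involve $\mu$ only through the $\tfrac{\mu}{8}$-regularization): one ties $\text{Gap}_k(x_k,y_k)$ to $\text{Gap}_{k+1}(x_0^{k+1},y_0^{k+1})$ and the shift $\|x_0^{k+1}-x_0^k\|^2 = \|x_k-x_0^k\|^2$, the other ties it to the primal progress $P(x_0^{k+1})-P(x_0^k)$ plus $\tfrac{\mu}{8}\|x_k-x_0^k\|^2$. Taking a suitable convex combination of these two eliminates, or turns to our advantage, the shift term.

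Third, and this is where Assumption~\ref{ass3} enters, I would convert the $\tfrac{\mu}{8}\|x_k-x_0^k\|^2$ contribution of Lemma~\ref{lem:Yan8} into a genuine contraction of the primal objective gap. Using the identity $\nabla P_k(x) = \nabla P(x) + \tfrac{\mu}{4}(x-x_0^k)$ (hence $x_k-x_0^k = \tfrac{4}{\mu}(\nabla P_k(x_k)-\nabla P(x_k))$), the $\rho$-weak convexity of $P$, the PL inequality $\|\nabla P(x_k)\|^2 \ge 2\mu(P(x_k)-P(x_*))$, and the $\mu/8$-strong convexity of $P_k$ (so that $x_k$ is a near-minimizer of $P_k$ with $P_k(x_k)-\min_x P_k(x)\le\text{Gap}_k(x_k,y_k)$), I would derive a one-epoch bound of the form $5(P(x_0^{k+1})-P(x_*)) - (P(x_0^k)-P(x_*)) \le \text{const}\cdot\text{Gap}_k(x_k,y_k)$ — equivalently a proximal-point / Moreau-envelope contraction $P(x_0^{k+1})-P(x_*) \le (1-c)(P(x_0^k)-P(x_*)) + O(1)\,\text{Gap}_k(x_k,y_k)$ with a universal constant $c>0$, the key being that $\gamma=\mu/4=\Theta(\mu)$ keeps all constants here independent of $L$.

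Combining the three steps with the weights $(475,57)$ — the fraction $\tfrac{1}{47}$, the coefficients $3-\tfrac{2}{\alpha}$ and $\tfrac{\mu\alpha}{8(1-\alpha)}$ from Lemma~\ref{lem:Yan8}, and the primal contraction factor are precisely what these weights are chosen to balance — yields $\E[\Delta_{k+1}]\le e^{-1/16}\E[\Delta_k] + c_0\eta_k C_2$. Since $\Delta_1 \le (475+57)\epsilon_0 \le 1200\epsilon_0$ (because $\text{Gap}_1(x_0,y_0)\le\text{Gap}(x_0,y_0)\le\epsilon_0$ and $P(x_0)-P(x_*)\le\epsilon_0$), unrolling with $\eta_k=\eta_0 e^{-(k-1)/16}$ gives $\E[\Delta_{K+1}]\le e^{-K/16}\Delta_1 + c_0\eta_0 C_2 K e^{-(K-1)/16}$, so taking $K = \max\{16\log\tfrac{1200\epsilon_0}{\epsilon}, 16\log\tfrac{1200\eta_0 K C_2}{\epsilon}\}$ makes each term at most $\epsilon/2$; then $\sum_{k=1}^K T_k = O\big(\tfrac{C_1}{\eta_0\min\{\mu,\mu_y\}}e^{K/16}\big)$ and substituting $e^{K/16} = \max\{\tfrac{1200\epsilon_0}{\epsilon},\tfrac{1200\eta_0 K C_2}{\epsilon}\}$ (the $\eta_0$ cancels in the second term and $K$ is a log factor) produces the claimed $\widetilde O\big(\max\{\tfrac{C_1\epsilon_0}{\eta_0\min\{\mu,\mu_y\}\epsilon},\tfrac{C_2}{\min\{\mu,\mu_y\}\epsilon}\}\big)$. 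The main obstacle is the third step: extracting an $L$-independent primal-gap contraction from the PL condition through the proximal structure, and then checking that a single choice of Lyapunov weights $(475,57)$ simultaneously balances every constant appearing in the three steps so that the combined recursion contracts at rate $e^{-1/16}$; everything else is the same bookkeeping as in the proof of Theorem~\ref{thm:appendix_unify_3}.
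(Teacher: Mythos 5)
Your overall frame (establish $\E[\Delta_{k+1}]\le e^{-1/16}\E[\Delta_k]+c_0\eta_k C_2$, unroll, and size $K$ accordingly) matches the paper, and your Steps 1--2 (subroutine guarantee plus Lemma~\ref{lem:Yan1} to absorb $\text{Gap}_k(x_k,y_k)$, then Lemma~\ref{lem:Yan8} to link across epochs) are what the paper does, up to immaterial constants. Where you diverge is the PL step, and that is exactly the part you flag as the obstacle. The paper does \emph{not} prove a per-epoch contraction of $P(x_0^k)-P(x_*)$ via a gradient identity. It proves the static domination
\begin{equation}
P(x_0^k)-P(x_*)\;\le\;\text{Gap}_k(x_0^k,y_0^k)+\tfrac{\gamma}{2}\|x_*-x_0^k\|^2\;\le\;\text{Gap}_k(x_0^k,y_0^k)+\tfrac{\gamma}{4\mu}\bigl(P(x_0^k)-P(x_*)\bigr),
\end{equation}
so that with $\gamma=\mu/4$ one gets $P(x_0^k)-P(x_*)\le\tfrac{16}{15}\text{Gap}_k(x_0^k,y_0^k)$, and it chooses the split weights $\tfrac{95}{576}+\tfrac{475}{576}$ in Lemma~\ref{lem:Yan8} with $\alpha=\tfrac{5}{6}$ precisely so that the $\|x_k-x_0^k\|^2$ terms cancel \emph{exactly}: $-\tfrac{95}{576}\cdot\tfrac{5\mu}{8}+\tfrac{475}{576}\cdot\tfrac{\mu}{8}=0$. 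The $\tfrac{15}{16}$ contraction for the \emph{primal} coordinate of $\Delta_k$ then comes from writing $-(P(x_0^k)-P(x_*))=-\tfrac{15}{16}(P(x_0^k)-P(x_*))-\tfrac{1}{16}(P(x_0^k)-P(x_*))$ and replacing the $\tfrac{1}{16}$ piece by $\tfrac{1}{15}\text{Gap}_k(x_0^k,y_0^k)$ via the domination above. Nothing in this uses $\nabla P_k$, nor does it need $\|x_k-x_0^k\|^2$ anywhere downstream, which resolves a latent inconsistency in your writeup: Step~2 says the shift term is ``eliminated'' by the convex combination, yet Step~3 says you will ``convert the $\tfrac{\mu}{8}\|x_k-x_0^k\|^2$ contribution'' into a primal contraction. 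Both cannot be true for the same split.

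Your Step~3 gradient route, as sketched, has a real gap. Strong convexity of $P_k$ only \emph{lower}-bounds $\|\nabla P_k(x_k)\|$ in terms of $P_k(x_k)-\min P_k$, but to mimic the proof of Theorem~\ref{thm:appendix_unify_3} one needs an \emph{upper} bound on $\|\partial P_k(x_k)\|^2$ by $O(1)\cdot\text{Gap}_k(x_k,y_k)$, and that upper bound is precisely the smoothness-based inequality $\|\nabla P_k(x_k)\|^2\le 2\hat L(P_k(x_k)-\min P_k)$ that reintroduces $\hat L$ -- the thing you are trying to avoid. The identity $x_k-x_0^k=\tfrac{4}{\mu}(\nabla P_k(x_k)-\nabla P(x_k))$ does not by itself resolve this. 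There \emph{is} a clean $L$-free primal contraction of the flavor you want, but it goes through function values, not gradients: using $P(x_k)\le P_k(x_k)\le\min_x P_k(x)+\text{Gap}_k(x_k,y_k)\le P_k(x_*)+\text{Gap}_k(x_k,y_k)=P(x_*)+\tfrac{\mu}{8}\|x_*-x_0^k\|^2+\text{Gap}_k(x_k,y_k)$ and then the PL-implied quadratic growth to bound $\tfrac{\mu}{8}\|x_*-x_0^k\|^2$ by a fraction of $P(x_0^k)-P(x_*)$. If you replace your gradient argument with this, your alternative route does go through and yields a strictly cleaner (Lyapunov-weight-free) primal recursion than the paper's; but as written your Step~3 would stall. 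Everything else -- the $\Delta_1\le 532\epsilon_0$ bound, the unrolling, the choice of $K$, and the summation $\sum_k T_k=O\bigl(\tfrac{C_1}{\eta_0\min\{\mu,\mu_y\}}e^{K/16}\bigr)$ -- is correct and matches the paper's bookkeeping.
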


\begin{proof}[Proof of Theorem \ref{thm:appendix_unify_rho<mu}]
We have the following relation between $P(x_0^k) - P(x_*)$ and ${\text{Gap}}_k(x_0^k, y_0^k)$,
\begin{align} 
\begin{split} 
&P(x_0^k) - P(x_*) = f(x_0^k, \hat{y}(x_0^k)) - f(x_*, y_*) \leq f(x_0^k, \hat{y}(x_0^k)) - f(x_*, y_0^k) \\ 
&= f(x_0^k, \hat{y}(x_0^k)) + \frac{\gamma}{2}\|x_0^k - x_0^k\|^2 - f(x_*, y_0^k) - \frac{\gamma}{2}\|x_* - x_0^k\|^2 + \frac{\gamma}{2} \|x_* - x_0^k\|^2\\
&= f_k(x_0^k, \hat{y}(x_0^k)) - f_k(x_*, y_0^k) + \frac{\gamma}{2}\|x_* - x_0^k\|^2 \\
& \leq \hat{f}_k(x_0^k, \hat{y}_k(x_0^k)) 
- f_k(\hat{x}_k(y_0^k), y_0^k) + \frac{\gamma}{2} \|x_* - x_0^k\|^2\\ 
&= {\text{Gap}}_k(x_0^k, y_0^k) + \frac{\gamma}{2}\|x_* - x_0^k\|^2\\
&\leq {\text{Gap}}_k(x_0^k, y_0^k) 
+ \frac{\gamma}{4\mu}(P(x_0^k) - P(x_*)),
\end{split}
\end{align}
where the first inequality holds by the Lemma \ref{lem:primal_gradient_danskin}, and the last inequality due to the $\mu$-PL condition of $P(x)$.
Since we take $\gamma = \frac{\mu}{4}$,  we know that $1 - \frac{\gamma}{4\mu} = \frac{15}{16}$.
Then it follows that
\begin{align}
P(x_0^k) - P(x_*) \leq \frac{16}{15}{\text{Gap}}_k(x_0^k, y_0^k).
\label{local:nemi_P_to_Gap_k}
\end{align}

Since $\rho < \frac{\mu}{8}$ and $\gamma = \frac{\mu}{4}$, we know that ${f}_k(x, y)$ is $\lambda_x = \frac{\mu}{8}$-strongly convex in $x$.
By the setting $\eta_k = \eta_0 \exp \left(-\frac{k-1}{16}\right), T_k = \left\lceil\frac{384C_1}{\eta_0\min\{\lambda_x, \mu_y\}}  \exp\left(\frac{k-1}{16}\right)\}\right\rceil$, we note that $\frac{C_1}{\eta_k T_k} \leq \frac{\min\{\lambda_x, \mu_y\}}{384}$. 
Applying \ref{equ:thm:appendix_unify_rho<mu_sub}, we have 
\begin{align}
\begin{split}
\E[{\text{Gap}}_k(\bar{x}_k, \bar{y}_k)] &{\leq} 
\eta_k C_2 
+ \frac{1}{96} \left(\frac{\lambda_x}{4}\E[\|\hat{x}_k(\by_k)-x_0^k\|^2]
+ \frac{\mu_y}{4} \E[\|\hat{y}_k(\bx_k) - y_0^k\|^2] \right) \\
&{\leq} \eta_k C_2
+ \frac{1}{96}E[{\text{Gap}}_k(x_0^k, y_0^k)]
+ \frac{1}{96}E[{\text{Gap}}_k(\bx_k, \by_k)],
\end{split}
\end{align} 
where the last inequality follows from Lemma 
\ref{lem:Yan1}.
Rearranging the terms, we have
\begin{align}
\begin{split}
\frac{95}{96} \E[{\text{Gap}}_k(\bar{x}_k, \bar{y}_k)] 
\leq \eta_k C_2 
+ \frac{1}{96}\E[{\text{Gap}}_k(x_0^k, y_0^k)] .
\end{split}
\label{local:nemi_hat_gap_k}
\end{align}
Since $\rho\leq \frac{\mu}{8}$, $f(x, y)$ is also $\frac{\mu}{8}$-weakly convex in $x$.
Then we use Lemma \ref{lem:Yan8} to lower bound the LHS of (\ref{local:nemi_hat_gap_k}) with $\alpha = \frac{5}{6}$,
\begin{align}
\begin{split} 
&\frac{95}{96} {\text{Gap}}_k(\bx_k, \by_k) =\frac{95}{576}{\text{Gap}}_k(\bx_k, \by_k) + \frac{475}{576} {\text{Gap}}_k(\bx_k, \by_k) \\
& \overset{(a)}{\geq} \frac{95}{576}\left(\frac{3}{5} {\text{Gap}}_{k+1}(x_0^{k+1}, y_0^{k+1}) -\frac{5}{8}\mu \|x_0^{k+1} - x_0^{k}\|^2\right) \\ 
&~~~~~~ + \frac{475}{576} (P(x_0^{k+1}) - P(x_*)) + \frac{475}{576}(P(x_*) - P(x_0^k)) + \frac{475}{576}\frac{\mu}{8}\|x_0^k - x_0^{k+1}\|^2\\
&= \frac{57}{576}{\text{Gap}}_{k+1}(x_0^{k+1}, y_0^{k+1}) + \frac{475}{576}(P(x_0^{k+1})-P(x_*)) \\
& ~~~~~~ - \frac{475}{576}\cdot \frac{15}{16}(P(x_0^k)-P(x_*)) - \frac{475}{576}\left(1-\frac{15}{16}\right)(P(x_0^k) - P(x_*)) \\
&\overset{(b)}{\geq} \frac{57}{576}{\text{Gap}}_{k+1}(x_0^{k+1}, y_0^{k+1}) + \frac{475}{576}(P(x_0^{k+1})-P(x_*)) \\
&~~~~~~ - \frac{475}{576}\cdot \frac{15}{16}(P(x_0^k)-P(x_*)) - \frac{475}{576}\cdot\frac{1}{15} {\text{Gap}}_k(x_0^k, y_0^k),
\end{split} 
\label{local:nemi_LHS_hat_gap_k}
\end{align} 
where $(a)$ uses Lemma \ref{lem:Yan8} and $(b)$ uses (\ref{local:nemi_P_to_Gap_k}). 
Combining (\ref{local:nemi_hat_gap_k}) and (\ref{local:nemi_LHS_hat_gap_k}), we get
\begin{align}
\begin{split}
&\E\left[\frac{475}{576}(P(x_0^{k+1})-P(x_*)) +  \frac{57}{576}{\text{Gap}}_{k+1}(x_0^{k+1}, y_0^{k+1}) \right] \\
&\leq \E\left[ \eta_k C_2 
+ \frac{475}{576}\cdot \frac{15}{16}(P(x_0^k)-P(x_*)) +  \frac{475}{576}\cdot\frac{1}{15} {\text{Gap}}_k(x_0^k, y_0^k)
+\frac{1}{96} {\text{Gap}}_k (x_0^k, y_0^k) \right]\\
&\leq \eta_k C_2 
+ \frac{15}{16}\E\left[\frac{475}{576}(P(x_0^k)-P(x_*)) + \frac{57}{576}{\text{Gap}}_k (x_0^k, y_0^k) \right].
\end{split} 
\end{align} 
Defining $\Delta_k = 475(P(x_0^k)-P(x_*)) + 57{\text{Gap}}_k (x_0^k, y_0^k)$, we have 
\begin{align}
\E[\Delta_{k+1}] \leq 600 \eta_k C_2
+ \frac{15}{16} \E[\Delta_k] 
\leq \exp\left(-1/16\right)\E[\Delta_{k}] + 600 \eta_k C_2,
\end{align} 
and 
\begin{align*}
\begin{split}
&\Delta_1 = 475(P(x_0^1) - P(x_*)) + 57{\text{Gap}}_1(x_0^1, y_0^1) \\
& = 475(P(\bx_0) - P(x_*)) + 57\left( f(\bx_0, \hat{y}_1(\bx_0)) + \frac{\gamma}{2}\|\bx_0 - \bx_0\|^2 - f(\hat{x}_1(\by_0), \by_0) - \|\hat{x}_1(\by_0) - \bx_0\|^2 \right) \\
& \leq 475\epsilon_0 + 57 \left( f(\bx_0, \hat{y}_1(\bx_0)) - f(\hat{x}(\by_0), \by_0) \right)  \leq 600 \epsilon_0.
\end{split} 
\end{align*} 

Thus, 
\begin{align}
\begin{split}
\E[\Delta_{K+1}] &\leq \exp\left(-K/16\right)\Delta_1 + 600 C_2 \sum\limits_{k=1}^{K} \eta_k \exp\left(-(K+1-k)/(16)\right) \\
&= \exp\left(-K/16\right)\Delta_1 + 600C_2\sum\limits_{k=1}^{K} \left(\eta_0  \exp\left(-K/16\right)\right) \\  
&\leq 600\epsilon_0 \exp\left(-K/16\right) + 600 \eta_0 C_2 K\exp\left(-K/16\right).
\end{split} 
\end{align} 
To make this less than $\epsilon$, we just need to make  
\begin{align*} 
\begin{split} 
&600\epsilon_0 \exp\left(-K/16\right) \leq \frac{\epsilon}{2}, \\
&600 \eta_0 C_2 K \exp\left(-K/16\right) \leq \frac{\epsilon}{2}.
\end{split} 
\end{align*} 
Let $K$ be the smallest value such that $\exp\left(\frac{-K}{16}\right) \leq  \min\{\frac{\epsilon}{1200\epsilon_0}, \frac{\epsilon}{1200\eta_0 C_2 K}\}$.  
We can set $K = \left\lceil\max\left\{ 16\log\left(\frac{1200\epsilon_0}{\epsilon}\right) , 16\log\left(\frac{1200\eta_0 
C_2 K}{ \epsilon}\right) \right\}\right\rceil$.  
Then the total stochastic first-order oracle call complexity is
\begin{align}
\begin{split}
\sum\limits_{k=1}^{K} T_k
&\leq O\left(\frac{384 C_1}{\eta_0\min\{\lambda_x, \mu_y\}}  \sum\limits_{k=1}^{K} \exp\left(\frac{k-1}{16}\right) \right) \\ 
&\leq {O}\left( \frac{384C_1}{\eta_0 \min\{\lambda_x, \mu_y\}} \frac{\exp\left(\frac{K}{16}\right) - 1}{\exp\left(\frac{1}{16}\right) - 1}\right) \\ 
&\leq \widetilde{O}\left(\max\left\{\frac{C_1 \epsilon_0}{\eta_0\min\{\mu, \mu_y\}\epsilon},  \frac{KC_2}{\min\{\mu, \mu_y\} \epsilon}\right\}\right)\\ 
&\leq \widetilde{O}\left(\max\left\{\frac{C_1  \epsilon_0}{\eta_0\min\{\mu, \mu_y\}\epsilon},  \frac{C_2}{\min\{\mu, \mu_y\} \epsilon}\right\}\right).
\end{split}  
\end{align} 
\end{proof}
\vspace*{-0.3in}  
\begin{proof}[Proof of Theorem \ref{thm:ogda_primal_rho<mu}]
Plugging in Theorem  \ref{thm:appendix_unify_rho<mu} with $\eta_0 = \frac{1}{2\sqrt{2}\ell}$, $C_1 = 1$ and $C_2 = 5B^2$, we get the conclusion.
\end{proof}
\vspace*{-0.25in}  
\begin{proof}[Proof of Theorem \ref{thm:gda_primal_rho<mu}]
We can plug in $\eta_0 = \frac{1}{\rho}$,  $C_1 = 2$ and $C_2 = 5B^2 $ to Theorem \ref{thm:appendix_unify_3}.
And the conclusion follows.
\end{proof}
\vspace*{-0.3in}  

\section{Analysis of PES-AdaGrad}
In this section, we analyze AdaGrad in solving the strongly convex-strongly concave problem.
Define $\|u\|_H = \sqrt{u^T H u}$, $\psi_0(z) = 0$, $\psi_{T}^{*}$ to be the conjugate of $\frac{1}{\eta} \psi_{T}$, i.e., $\psi_{t}^{*}(z) = \sup\limits_{z'\in \mathcal{Z}} \{\langle z, z'\rangle - \frac{1}{\eta}\psi_{t}(z')\}$.  
We first present a supporting lemma, 
\begin{lemma}
For a sequence $\zeta_1, \zeta_2, ...$, define a sequence $\{u_t \in \mathcal{Z}\}_{t=0}^{T+1}$ as
\begin{align}
\begin{split}
u_{t+1} = \arg\min\limits_{u\in \mathcal{X}} \frac{\eta}{t} \sum\limits_{\tau=1}^{t} \langle \zeta_{\tau}, u\rangle +\frac{1}{t} \psi_{t}(u) , u_0 = z_0,
\end{split}
\end{align} 
where $\psi_{t}(\cdot)$ is defined in Algorithm \ref{alg:subroutine} with Option III: AdaGrad.
Then for any $u\in \mathcal{Z}$, 
\begin{align}
\begin{split} 
\sum\limits_{t=1}^{T}\langle \zeta_{t} , u_t - u\rangle \leq \frac{1}{\eta}\psi_{T}(u) + \frac{\eta}{2} \sum\limits_{t=1}^{T}\|\zeta_{t}\|^2_{\psi_{t-1}^{*}}.
\end{split} 
\end{align} 
\label{lem:ada_sequence} 
\end{lemma}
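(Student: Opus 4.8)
The plan is to recognize the update for $u_{t+1}$ as a mirror-descent / follow-the-regularized-leader (FTRL) step with a time-varying regularizer $\psi_t$, and to run the standard FTRL regret analysis, paying attention to the fact that the regularizers are monotonically increasing in the Loewner order (because $H_t = \delta I + \mathrm{diag}(s_t)$ with the $s_{t,i}=\|g_{1:t,i}\|_2$ nondecreasing in $t$). First I would observe that $u_{t+1} = \arg\min_{u\in Z}\{\eta\sum_{\tau=1}^t\langle\zeta_\tau,u\rangle + \psi_t(u)\}$ (the factor $1/t$ is immaterial to the argmin), so $u_{t+1}$ is exactly the FTRL iterate against the linear losses $\langle\zeta_\tau,\cdot\rangle$ with regularizer $\psi_t$. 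Setting $\Phi_t(u) = \eta\sum_{\tau=1}^t\langle\zeta_\tau,u\rangle + \psi_t(u)$ and noting $\psi_t$ is $1$-strongly convex with respect to $\|\cdot\|_{H_t}$ (indeed $\psi_t(z)=\tfrac12\|z-z_0\|_{H_t}^2$), the first step is the telescoping identity for the ``be-the-leader'' comparison.

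The key steps, in order: (1) write the standard FTRL bound $\sum_{t=1}^T \eta\langle\zeta_t, u_t - u\rangle \le \psi_T(u) - \psi_0(z_0) + \sum_{t=1}^T\big(\Phi_t(u_t) - \Phi_t(u_{t+1})\big) + \sum_{t=1}^T\big(\psi_{t-1}(u_{t+1})-\psi_t(u_{t+1})\big)$, being careful about whether the iterate at time $t$ uses $\psi_{t-1}$ or $\psi_t$; (2) use $\psi_{t-1}\preceq\psi_t$ pointwise (from $H_{t-1}\preceq H_t$, since each coordinate $s_{t,i}$ is nondecreasing) to kill the last sum, i.e. $\psi_{t-1}(u_{t+1})-\psi_t(u_{t+1})\le 0$; (3) bound each per-step term $\Phi_t(u_t)-\Phi_t(u_{t+1})$: since $u_{t+1}$ minimizes $\Phi_t$ and $\Phi_t$ is $1$-strongly convex in $\|\cdot\|_{H_t}$, and $u_t$ minimizes $\Phi_{t-1}$ so that $\Phi_t(u_t) = \Phi_{t-1}(u_t) + \eta\langle\zeta_t,u_t\rangle$ differs from a minimized quantity by one linear term, a standard computation gives $\Phi_t(u_t)-\Phi_t(u_{t+1})\le \frac{\eta^2}{2}\|\zeta_t\|_{\psi_{t-1}^*}^2$ where $\|\cdot\|_{\psi_{t-1}^*}$ is the dual norm $\|\cdot\|_{H_{t-1}^{-1}}$ induced by the regularizer used to define $u_t$; (4) divide through by $\eta$ and collect, obtaining $\sum_{t=1}^T\langle\zeta_t,u_t-u\rangle \le \frac1\eta\psi_T(u) + \frac\eta2\sum_{t=1}^T\|\zeta_t\|_{\psi_{t-1}^*}^2$, which is the claim.

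The main obstacle I anticipate is step (3): cleanly controlling $\Phi_t(u_t) - \Phi_t(u_{t+1})$ by a dual-norm-squared term involving $\psi_{t-1}^*$ rather than $\psi_t^*$. The cleanest route is to first show $\Phi_{t-1}(u_t) - \Phi_{t-1}(u_{t+1}) \ge \frac12\|u_t - u_{t+1}\|_{H_{t-1}}^2$ by strong convexity of $\Phi_{t-1}$ and optimality of $u_t$, combine with $\Phi_t(u_{t+1}) - \Phi_t(u_t) = \Phi_{t-1}(u_{t+1}) - \Phi_{t-1}(u_t) + (\psi_t - \psi_{t-1})(u_{t+1}) - (\psi_t-\psi_{t-1})(u_t) + 0$, and then also use strong convexity at $u_{t+1}$ for $\Phi_t$; a Fenchel–Young inequality $\eta\langle\zeta_t, u_t - u_{t+1}\rangle \le \frac{\eta^2}{2}\|\zeta_t\|_{H_{t-1}^{-1}}^2 + \frac12\|u_t-u_{t+1}\|_{H_{t-1}}^2$ then absorbs the cross term. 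One must be slightly careful that the monotonicity $\psi_{t-1}\preceq\psi_t$ is used with the correct sign so that the extra regularizer-difference terms only help. Once the per-step inequality is in hand the rest is bookkeeping and telescoping, so I would spend the bulk of the writeup making step (3) airtight and treat steps (1), (2), (4) as routine.
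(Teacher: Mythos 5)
Your plan is sound and would produce a correct proof, but it takes a genuinely different route from the paper. You analyze the iterates in the \emph{primal}, via the ``follow-the-regularized-leader'' telescoping identity for $\Phi_t(u)=\eta\sum_{\tau\le t}\langle\zeta_\tau,u\rangle+\psi_t(u)$, and close each step by combining strong convexity of $\Phi_{t-1}$ at its minimizer $u_t$ with a Fenchel--Young/Cauchy--Schwarz bound on $\eta\langle\zeta_t,u_t-u_{t+1}\rangle$. The paper instead works in the \emph{dual}: after the first line it moves everything into the Fenchel conjugate $\psi_T^{*}\left(-\sum_t\zeta_t\right)$, uses the monotonicity $\psi_{t-1}\preceq\psi_t$ to pass from $\psi_T^{*}$ to $\psi_{T-1}^{*}$, and then uses the $\eta$-smoothness of $\psi_{T-1}^{*}$ (the dual of $1$-strong convexity of $\psi_{T-1}$ w.r.t.\ $\|\cdot\|_{H_{T-1}}$) together with the identity $\nabla\psi_{T-1}^{*}\left(-\sum_{t<T}\zeta_t\right)=u_T$ to peel off one term and recurse. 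Both are standard proofs of the adaptive-FTRL regret bound and yield identical constants; the conjugate/potential-function argument is arguably tidier here because the monotonicity of $\psi_t$ enters exactly once and there is no regularizer-difference term $(\psi_t-\psi_{t-1})$ evaluated at iterates to track, whereas your primal route must argue that those extra terms have the right sign (which you do address). Your primal route has the minor advantage of not invoking any facts about Fenchel conjugates beyond Fenchel--Young.

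One small correction to your write-up of step (3): by strong convexity of $\Phi_{t-1}$ and optimality of $u_t$ you get $\Phi_{t-1}(u_{t+1})-\Phi_{t-1}(u_t)\ge\frac12\|u_t-u_{t+1}\|^2_{H_{t-1}}$, i.e.\ $\Phi_{t-1}(u_t)-\Phi_{t-1}(u_{t+1})\le-\frac12\|u_t-u_{t+1}\|^2_{H_{t-1}}$; you wrote the inequality with the opposite sign. It is precisely this negative quadratic that cancels the $\frac12\|u_t-u_{t+1}\|^2_{H_{t-1}}$ produced by Fenchel--Young on $\eta\langle\zeta_t,u_t-u_{t+1}\rangle$, leaving $\frac{\eta^2}{2}\|\zeta_t\|^2_{H_{t-1}^{-1}}=\frac{\eta^2}{2}\|\zeta_t\|^2_{\psi_{t-1}^{*}}$. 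With that sign fixed the argument goes through and, after dividing by $\eta$ and telescoping, reproduces the stated bound.
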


\begin{proof}[Proof of Lemma \ref{lem:ada_sequence}]
\begin{align}
\begin{split} 
\sum\limits_{t=1}^{T} \langle \zeta_{t}, u_t - u\rangle 
& = \sum\limits_{t=1}^{T}\langle \zeta_{t}, u_t \rangle 
- \sum\limits_{t=1}^{T}\langle \zeta_{t}, u \rangle 
- \frac{1}{\eta}\psi_{T}(u) + \frac{1}{\eta} \psi_{T}(u) \\ 
& \leq \frac{1}{\eta} \psi_{T} (x) 
+ \sum\limits_{t=1}^{T}\langle \zeta_{t}, u_t \rangle + \sup\limits_{u\in \mathcal{Z}} 
\bigg\{\left\langle -\sum\limits_{t=1}^{T} \zeta_{t}, u \right\rangle -\frac{1}{\eta} \psi_{T}(u) \bigg\}\\ 
& = \frac{1}{\eta} \psi_{T}(u) 
+ \sum\limits_{t=1}^{T}\langle \zeta_{t}, u_t \rangle 
+ \psi_{T}^{*}
\left(-\sum\limits_{t=1}^{T} \zeta_{t}\right).
\end{split} 
\label{local:ada_sequence_50} 
\end{align}

Note that 
\begin{small} 
\begin{align} 
\begin{split} 
\psi_{T}^{*}\left(-\sum\limits_{t=1}^{T} \zeta_{t}\right) 
& \overset{(a)}{=} \left\langle -\sum\limits_{t=1}^{T} \zeta_{t}, u_{T+1} \right\rangle 
 - \frac{1}{\eta} \psi_{T} (u_{T+1}) 
 \overset{(b)}{\leq} \left\langle -\sum\limits_{t=1}^{T} \zeta_{t}, u_{T+1} \right\rangle 
 - \frac{1}{\eta} \psi_{T-1} (u_{T+1}) \\ 
& \leq \sup\limits_{u\in \mathcal{Z}}\bigg\{\left\langle -\sum\limits_{t=1}^{T}\zeta_{t}, u\right\rangle - \frac{1}{\eta}\psi_{T-1}(u)\bigg\}  
= \psi_{T-1}^{*}\left(-\sum\limits_{t=1}^{T} \zeta_{t} \right) \\
& \overset{(c)}{\leq} \psi_{T-1}^{*}
\left(-\sum\limits_{t=1}^{T-1} \zeta_{t}\right) 
+ \left\langle -\zeta_{T}, \nabla \psi^{*}_{T-1}
\left(-\sum\limits_{t=1}^{T-1} \zeta_{t}\right) \right\rangle + \frac{\eta}{2}\|\zeta_{T}\|^2_{\psi^{*}_{T-1}},
\end{split}
\label{local:ada_sequence_51}
\end{align}
\end{small}
where $(a)$ holds due to the updating rule, $(b)$ holds since $\psi_{t+1}(u)\geq \psi_t(u)$, $(c)$ uses the fact that $\psi_t(u)$ is 1-strongly convex w.r.t. $\|\cdot\|_{\psi_t} = \|\cdot\|_{H_t}$ and hence $\psi_{t}^{*}(\cdot)$ 
is $\eta$-smooth w.r.t. $\|\cdot\|_{\psi_{t}^*}=\|\cdot\|_{(H_{t})^{-1}}$.

Noting $\nabla \psi_{T-1}^{*}\left(-\sum\limits_{t=1}^{T-1}
\zeta_{t}\right) = u_{T}$ and adding $\sum\limits_{t=1}^{T}\langle \zeta_{t}, u_t\rangle$ to both sides of (\ref{local:ada_sequence_51}), 
\begin{align} 
\begin{split}  
\sum\limits_{t=1}^{T}\langle \zeta_{t}, u_t \rangle + \psi_{T}^{*}
\left(-\sum\limits_{t=1}^{T} \zeta_{t}\right) 
\leq \sum\limits_{t=1}^{T-1} \langle \zeta_{t} , u_t\rangle 
+ \psi_{T-1}^{*}\left(-\sum\limits_{t=1}^{T-1} \zeta_{t}\right) 
+ \frac{\eta}{2}\|\zeta_{T}\|^2_{\psi^{*}_{T-1}}.
\end{split}
\label{local:ada_sequence_52} 
\end{align}
Using (\ref{local:ada_sequence_52}) recursively and noting that $\psi_{0}(u) = 0$, we have
\begin{align} 
\begin{split} 
\sum\limits_{t=1}^{T}\langle \zeta_{t}, u_t\rangle + \psi_{x, T}^{*}\left(-\sum\limits_{t=1}^{T} \zeta_{t}\right) \leq \frac{\eta}{2} \sum\limits_{t=1}^{T} \|\zeta_{t}\|^2_{\psi_{t-1}^{*}}.
\end{split} 
\label{local:ada_sequence_53}
\end{align}

Combining (\ref{local:ada_sequence_50}) and (\ref{local:ada_sequence_53}), we have
\begin{align*}
\begin{split} 
\sum\limits_{t=1}^{T}\langle \zeta_{t} , u_t - u\rangle \leq \frac{1}{\eta}\psi_{T}(u) + \frac{\eta}{2} \sum\limits_{t=1}^{T}\|\zeta_{t}\|^2_{\psi_{t-1}^{*}}.
\end{split} 
\end{align*}
\end{proof} 
\begin{lemma}
Suppose $f(x, y)$ is convex-concave.
And also assume $\|\G_{t}\|_{\infty} \leq \delta$. \\
Set $T = M \left\lceil \max\{ \frac{\delta + \max_i \|g_{1:T, i}\|}{m},
m\sum\limits_{i=1}^{d+d'} \|g_{1:T, i}\|\}\right\rceil$.
By running Algorithm \ref{alg:subroutine} with Option III: AdaGrad, with input ($f, x_0, y_0, \eta, T$), we have
\begin{align} 
\begin{split}
E[\emph{\text{Gap}}(\bar{x}, \bar{y})] &\leq  \frac{m}{\eta M}(\|z - z_0\|^2) + \frac{4 \eta}{m M}. 
\end{split} 
\end{align}
\label{lem:adagrad} 
\end{lemma}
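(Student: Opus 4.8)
The plan is to combine the adaptive regret inequality of Lemma~\ref{lem:ada_sequence} with the convex-concave saddle structure, and to strip off the stochastic noise through a virtual-sequence/martingale argument exactly as in the proof of Lemma~\ref{lem:ogda_sub}.

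First I would observe that the iterates $\{z_t\}$ of Algorithm~\ref{alg:adagrad} are precisely the sequence $\{u_t\}$ of Lemma~\ref{lem:ada_sequence} with $\zeta_t=\G(z_t;\xi_t)$ (reading $H_0=\delta I$, i.e.\ $\psi_0(z)=\tfrac{\delta}{2}\|z-z_0\|^2$), since the two recursions are literally the same. Lemma~\ref{lem:ada_sequence} then gives, for any fixed $z\in Z$, $\sum_{t=1}^T\langle\G(z_t;\xi_t),z_t-z\rangle\le\tfrac1\eta\psi_T(z)+\tfrac\eta2\sum_{t=1}^T\|\G(z_t;\xi_t)\|^2_{\psi_{t-1}^*}$. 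Because $\psi_T(z)=\tfrac12(z-z_0)^\top(\delta I+\mathrm{diag}(s_T))(z-z_0)$ with $s_{T,i}=\|g_{1:T,i}\|_2$, the first term is at most $\tfrac1{2\eta}(\delta+\max_i\|g_{1:T,i}\|_2)\|z-z_0\|^2$. For the second term I would use $\|\G_t\|_\infty\le\delta$, which forces $\delta+s_{t,i}\le 2(\delta+s_{t-1,i})$, so that $\|\G(z_t;\xi_t)\|^2_{\psi_{t-1}^*}=\sum_i\G_{t,i}^2/(\delta+s_{t-1,i})\le 2\sum_i\G_{t,i}^2/(\delta+s_{t,i})\le 2\sum_i\G_{t,i}^2/s_{t,i}$, and then the standard telescoping bound $\sum_t a_t^2/\sqrt{\sum_{\tau\le t}a_\tau^2}\le 2\sqrt{\sum_t a_t^2}$ yields $\sum_{t=1}^T\|\G(z_t;\xi_t)\|^2_{\psi_{t-1}^*}\le 4\sum_{i=1}^{d+d'}\|g_{1:T,i}\|_2$.

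Next, writing $z^*=(\hat x(\bar y),\hat y(\bar x))$, convexity of $f$ in $x$ and concavity in $y$ give $\text{Gap}(\bar x,\bar y)=f(\bar x,\hat y(\bar x))-f(\hat x(\bar y),\bar y)\le\tfrac1T\sum_{t=1}^T\langle F(z_t),z_t-z^*\rangle$, by the same Jensen-plus-first-order chain used at the end of the proof of Lemma~\ref{lem:ogda_sub}. Splitting $F(z_t)=\G(z_t;\xi_t)+\Theta_t$ with $\Theta_t=F(z_t)-\G(z_t;\xi_t)$, the $\G$-part is handled by the previous paragraph. For the noise part $\sum_t\langle\Theta_t,z_t-z^*\rangle$, since $z^*$ is correlated with the whole trajectory I would introduce a virtual sequence $\{\hat z_t\}$ generated by the same AdaGrad recursion fed the noise vectors $\Theta_t$ (with its preconditioner built from $\Theta_{1:t}$ so that Lemma~\ref{lem:ada_sequence} still applies); then $\hat z_t$ depends only on $\xi_1,\dots,\xi_{t-1}$, and $\sum_t\langle\Theta_t,z_t-z^*\rangle=\sum_t\langle\Theta_t,z_t-\hat z_t\rangle+\sum_t\langle\Theta_t,\hat z_t-z^*\rangle$. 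The first sum vanishes in expectation because $\E[\Theta_t\mid\xi_{1:t-1}]=0$ while $z_t,\hat z_t$ are $\xi_{1:t-1}$-measurable; the second is again bounded by Lemma~\ref{lem:ada_sequence}, giving in expectation a bound of the same shape $O\big(\tfrac1\eta(\delta+\max_i\|g_{1:T,i}\|_2)\|z^*-z_0\|^2\big)+O\big(\eta\sum_i\|g_{1:T,i}\|_2\big)$ once one charges $\E[\|\Theta_{1:T,i}\|_2]$ against $\|g_{1:T,i}\|_2$ using $\Theta_{t,i}^2\le 2F_i(z_t)^2+2\G_{t,i}^2$ and $F_i(z_t)^2=(\E[\G_{t,i}\mid\xi_{1:t-1}])^2\le\E[\G_{t,i}^2\mid\xi_{1:t-1}]$.

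Putting the pieces together and dividing by $T$, I would obtain $\E[\text{Gap}(\bar x,\bar y)]\le\tfrac{c_1}{\eta T}(\delta+\max_i\|g_{1:T,i}\|_2)\|z^*-z_0\|^2+\tfrac{c_2\eta}{T}\sum_i\|g_{1:T,i}\|_2$ for absolute constants $c_1,c_2$; the two hypotheses on $T$, namely $T\ge M(\delta+\max_i\|g_{1:T,i}\|_2)/m$ and $T\ge Mm\sum_i\|g_{1:T,i}\|_2$, are exactly what is needed to collapse these into $\tfrac{m}{\eta M}\|z-z_0\|^2$ and $\tfrac{4\eta}{mM}$. The hard part will be the third step: cleanly decoupling the trajectory-dependent comparator $z^*$ from the martingale noise via the auxiliary matrix-preconditioned AdaGrad sequence (the analogue of the virtual sequence in Lemma~\ref{lem:ogda_sub}, but now with a variable quadratic prox rather than a Euclidean one), and verifying that the adaptive sum of squared noise terms it produces can still be charged back against $\sum_i\|g_{1:T,i}\|_2$ in expectation. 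Controlling the preconditioner at index $t-1$ rather than $t$ — resolved by $\|\G_t\|_\infty\le\delta$ as above — is the other point requiring care; the remainder is bookkeeping with the adaptive regret inequality.
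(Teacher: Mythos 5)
Your overall skeleton matches the paper: apply Lemma~\ref{lem:ada_sequence} to the actual iterates with $\zeta_t=\G(z_t;\xi_t)$, bound $\psi_T(z)$ and the $\psi^*_{t-1}$-weighted sum of gradient norms, then decompose $F(z_t)=\G(z_t;\xi_t)+\Theta_t$, introduce a virtual sequence $\hat z_t$ to kill $\sum_t\langle\Theta_t, z_t-\hat z_t\rangle$ in expectation, and control $\sum_t\langle\Theta_t,\hat z_t-z\rangle$ by a second application of the adaptive regret lemma. The by-hand telescoping argument you use in place of the paper's citation of Lemma~4 of \cite{duchi2011adaptive} is also fine up to a looser constant.

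The genuine gap is in how you set up the virtual sequence. You propose to drive $\hat z_t$ by the AdaGrad recursion \emph{with a preconditioner built from the noise vectors} $\Theta_{1:t}$, so that the regret bound you get has the form
$\sum_t\langle\Theta_t,\hat z_t-z\rangle\le\tfrac1\eta\tilde\psi_T(z)+\tfrac\eta2\sum_t\|\Theta_t\|^2_{\tilde\psi^*_{t-1}}$
with $\tilde\psi$ made of $\|\Theta_{1:T,i}\|_2$'s. You then need to charge $\max_i\|\Theta_{1:T,i}\|_2$ and $\sum_i\|\Theta_{1:T,i}\|_2$ against the quantities $\max_i\|g_{1:T,i}\|_2$ and $\sum_i\|g_{1:T,i}\|_2$ that appear in the hypothesis on $T$. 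The pointwise estimate $\Theta_{t,i}^2\le 2F_i(z_t)^2+2\G_{t,i}^2$ plus $F_i(z_t)^2\le\E[\G_{t,i}^2\mid\mathcal F_{t-1}]$ gives $\E[\|\Theta_{1:T,i}\|_2^2]\le 4\,\E[\|g_{1:T,i}\|_2^2]$, but you need $\E[\|\Theta_{1:T,i}\|_2]\lesssim\E[\|g_{1:T,i}\|_2]$, and passing from the $L^2$ bound to the $L^1$ bound by Cauchy--Schwarz/Jensen gives $\E[\|\Theta_{1:T,i}\|_2]\le 2\sqrt{\E[\|g_{1:T,i}\|_2^2]}$, which is \emph{not} dominated by $\E[\|g_{1:T,i}\|_2]$; the inequality runs the wrong way. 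There is also a compounding issue with $\tilde\psi_T(z^*)$: the $\max_i\|\Theta_{1:T,i}\|_2$ factor multiplies the random $\|z^*-z_0\|^2$, so you would need to control a product of correlated random quantities, which the hypothesis on $T$ (stated in terms of $\|g_{1:T,i}\|$, not $\|\Theta_{1:T,i}\|$) does not directly handle.

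The paper avoids all of this by reusing the algorithm's own preconditioner $\psi_t$ for the virtual sequence. Lemma~\ref{lem:ada_sequence} is stated for an arbitrary driving sequence $\zeta_t$ with $\psi_t$ \emph{fixed} to the one from Algorithm~\ref{alg:adagrad}; nothing in its proof ties $\psi_t$ to $\zeta_t$ (only monotonicity of $\psi_t$ and its strong convexity are used). Applying it with $\zeta_t=\Theta_t$ gives $\sum_t\langle\Theta_t,\hat z_t-z\rangle\le\tfrac1\eta\psi_T(z)+\tfrac\eta2\sum_t\|\Theta_t\|^2_{\psi^*_{t-1}}$, so the $\psi_T(z)$ term is identical to the one from the $\G$-part and contributes the \emph{same} $\delta+\max_i\|g_{1:T,i}\|$ factor. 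Since $\psi^*_{t-1}$ is $\mathcal F_{t-1}$-measurable, the conditional variance decomposition $\E\bigl[\|\Theta_t\|^2_{\psi^*_{t-1}}\mid\mathcal F_{t-1}\bigr]=\E\bigl[\|\G(z_t;\xi_t)\|^2_{\psi^*_{t-1}}\mid\mathcal F_{t-1}\bigr]-\|F(z_t)\|^2_{\psi^*_{t-1}}\le\E\bigl[\|\G(z_t;\xi_t)\|^2_{\psi^*_{t-1}}\mid\mathcal F_{t-1}\bigr]$ dominates the noise term by the gradient term with \emph{no} factor loss and no Jensen reversal, and then the Duchi bound applies once more. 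So the conclusion falls out cleanly, whereas your $\Theta$-preconditioned virtual sequence introduces a charging step that doesn't close. If you redo your third step with the algorithm's $\psi_t$ in place of $\tilde\psi_t$, the rest of your argument goes through.
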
 
\begin{proof} 
Applying Lemma \ref{lem:ada_sequence} with $\zeta_t = \G_{t}$ and $u_t = z_t$, for any $z\in \mathcal{Z}$, 
\begin{align}
\begin{split} 
\sum\limits_{t=1}^{T}\langle \G_{t} , z_t - z\rangle \leq \frac{1}{\eta}\psi_{T}(z) + \frac{\eta}{2} \sum\limits_{t=1}^{T}\|\G_{t}\|^2_{\psi_{t-1}^{*}}.
\end{split} 
\end{align} 

By Lemma 4 of \citep{duchi2011adaptive}, we know that $\sum\limits_{t=1}^{T}\|\G_{t}\|^2_{\psi_{t-1}^{*}} \leq 2\sum\limits_{i=1}^{d+d'} \|g_{1:T, i}\|$.
Hence, for any $z \in \mathcal{Z}$
\begin{align} 
\begin{split} 
\sum\limits_{t=1}^{T}\langle \G_{t} , z_t - z\rangle &\leq 
\frac{1}{\eta} \psi_{T}(z) +
\eta\sum\limits_{i=1}^{d+d'} \|g_{1:T, i}\|_2 \\
& = \frac{\delta \|z_0 - z\|^2}{2\eta} + \frac{\langle z_0 - z, \text{ diag }(s_{T})(z_0 - z) \rangle}{2\eta} 
+\eta\sum\limits_{i=1}^{d+d'} \|g_{1:T, i}\| \\
& \leq \frac{\delta + \max_i \|g_{1:T, i}\|}{2\eta} \|z_0 - z\|^2 + \eta\sum\limits_{i=1}^{d+d'} 
\|g_{1:T, i}\|.
\end{split} 
\label{local:first_part_55} 
\end{align}

Then, we define the following auxiliary sequence $\{\hat{z}_t \in \mathcal{Z}\}_{t=0}^{T}$,
\begin{align}
\begin{split}
\hat{z}_{t+1} = \arg\min\limits_{z\in \mathcal{Z}} \frac{\eta}{t}\sum\limits_{\tau=1}^{t}
\langle F(z_t) - \G(z_t;\xi_t)), z \rangle
+ \frac{1}{t} \psi_{t}(z), \hat{z}_0 = z_0.
\end{split} 
\end{align} 

Denote $\Theta_{t} = F(z_t) - \G(z_t; \xi_t)$.
Applying Lemma \ref{lem:ada_sequence} with $\zeta_t = \Theta_{t}$ and $u_t = \hat{z}_t$,
we have
\begin{align}
\begin{split} 
\sum\limits_{t=1}^{T}\langle \Theta_{t} , \hat{z}_t - z\rangle &\leq \frac{1}{\eta}\psi_{T}(z) + \frac{\eta}{2} \sum\limits_{t=1}^{T}\|\Theta_{t}\|^2_{\psi_{t-1}^{*}} \\
&\leq \frac{\delta + \max_i \|g_{1:T, i}\|}{2\eta} \|z_0 - z\|^2
+ \frac{\eta}{2} \sum\limits_{t=1}^{T}\|\Theta_{t}\|^2_{\psi_{t-1}^{*}}.
\end{split} 
\label{local:ada_aux_5}
\end{align} 

To deal with the last term in the above inequality, we have in expectation that
\begin{align}
\begin{split} 
&\E\left[\sum\limits_{t=1}^{T}\|\Theta_{t}\|^2_{\psi_{t-1}^{*}} \right]
= \sum\limits_{t=1}^{T} \E\left[\|\Theta_{t}\|^2_{\psi_{t-1}^{*}} \right] \\
&= \sum\limits_{t=1}^{T}  (\E\left[\|\G(z_t;\xi_t)\|^2_{\psi^*_{t-1}}\right] - \|F(z_t)\|_{\psi^*_{t-1}}^2) \\
&\leq \E\left[\sum\limits_{t=1}^{T}\|\G(z_t;\xi_t)\|^2_{\psi_{t-1}^{*}} \right] 
\leq 2\E\left[\sum\limits_{i=1}^{d+d'} \|g_{1:T, i}\| \right],
\end{split} 
\label{local:ada_V^2_to_E^2}
\end{align} 
where the second equality uses the fact that $\E[\G(z_t;\xi_t)] = F(z_t)$ and the last inequality uses Lemma 4 of \citep{duchi2011adaptive}. 

Thus,
\begin{align}
\begin{split}
& \E\left[\frac{1}{T}\sum\limits_{t=1}^{T} \langle F(z_t), z_t - z\rangle\right] 
= \E\left[\frac{1}{T}\sum\limits_{t=1}^{T} \langle \G(z_t;\xi_t), z_t - z\rangle\right] + 
\E\left[\frac{1}{T}\sum\limits_{t=1}^{T} \langle \Theta_{t},  z_t - z\rangle\right] \\
& = \E\left[\frac{1}{T}\sum\limits_{t=1}^{T} \langle \G(z_t;\xi_t), z_t - z\rangle\right] + 
\E\left[\frac{1}{T}\sum\limits_{t=1}^{T} \langle \Theta_{t},  z_t - \hat{z}_t\rangle\right] 
+ \E\left[\frac{1}{T}\sum\limits_{t=1}^{T} \langle \Theta_{t},  \hat{z}_t - z\rangle\right] \\ 
& \overset{(a)}{\leq} E\left[\frac{\delta + \max_i \|g_{1:T, i}\|}{\eta T} \|z_0 - z\|^2\right] + 2\frac{\eta}{T} \E\left[\sum\limits_{i=1}^{d} \|g_{1:T, i}\| \right] 
+ \E\left[\frac{1}{T}\sum\limits_{t=1}^{T} \langle \Theta_{t}, z_t - \hat{z}_t \rangle\right]\\
& = \E\left[\frac{\delta + \max_i \|g_{1:T, i}\|}{\eta T} \|z_0 - z\|^2\right] + 2\frac{\eta}{T} \E\left[\sum\limits_{i=1}^{d+d'} \|g_{1:T, i}\| \right],
\end{split} 
\label{local:ada_grad_x} 
\end{align} 
where the last equality holds because $\E[\langle \Theta_{t}, z_t - \hat{z}_t\rangle| z_{t}, \hat{z}_t, \Theta_{t-1}, ..., \Theta_{0}] = 0$, and $(a)$ uses (\ref{local:first_part_55}), (\ref{local:ada_aux_5}) and (\ref{local:ada_V^2_to_E^2}).
Then for any $x\in \mathcal{X}$ and $y\in \mathcal{Y}$,
\begin{align}
\begin{split}
\E[f(\Bar{x}, y) - f(x, \Bar{y})] \leq & 
\E\left[\frac{1}{T} \sum\limits_{t=1}^T (f(x_t, y) - f(x, y_t)) \right] \\
= & \E\left[\frac{1}{T} \sum\limits_{t=1}^T (f(x_t, y) - f(x_t, y_t) + f(x_t, y_t) - f(x, y_t))\right] \\ 
\leq & \E\left[\frac{1}{T} \sum\limits_{t=1}^T ( \langle - \nabla_y f(x_t, y_t), y_t - y\rangle + \langle \nabla_x f(x_t, y_t), x_t - x\rangle )\right] \\
=& \E \left[ \frac{1}{T}\sum\limits_{t=1}^{T} \langle F(z_t), z_t - z\rangle \right]\\
\overset{(a)}{\leq} & \E\left[\frac{\delta + \max_i \|g_{1:T, i}\|}{\eta T} \|z_0 - z\|^2\right] + 2\frac{\eta}{T} \E\left[\sum\limits_{i=1}^{d+d'} \|g_{1:T, i}\| \right] \\
\overset{(b)}{\leq} & \frac{m}{\eta M}\E[\|x_0 - x\|^2 + \|y_0 -y\|^2] + \frac{4 \eta}{m M},
\end{split} 
\end{align}
where $(a)$ uses (\ref{local:ada_grad_x}), and the last inequality is due to $T \!=\! M \!\left\lceil\max\{ \frac{\delta + \max_i \|g_{1:T, i}\|}{m}, 
m\sum\limits_{i=1}^{d+d'} \|g_{1:T, i}\|
\} \right\rceil$.
Then we can conclude by plugging in $(x, y) = (\hat{x}(\bar{y}), \hat{y}(\bar{x}))$. 
\end{proof} 

Now we formally restate the Theorem \ref{thm:ada_primal} as:
\begin{theorem}[Formal version of Theorem \ref{thm:ada_primal}] 
Suppose Assumption \ref{ass1}, \ref{ass5},  \ref{ass3} hold. Let
$g^k_{1:T_k}$ denote the cumulative matrix of gradients in $k$-th stage. 
Suppose $\|g^k_{1:T_k, i}\|_2 \leq \delta T^{\alpha}_k$ and with $\alpha\in(0,1/2]$. 
Then by setting parameters appropriately, 
$\gamma = 2\rho$, $m = 1/\sqrt{d+d'}$, $\eta_k = 2\eta_0 \exp\left(-\frac{(k-1)}{2}\frac{2\mu}{c+2\mu}\right)$,  $M_k = \frac{212m}{\eta_0 \min(\ell, \mu_y)} \exp\left(\frac{k-1}{2}\frac{2\mu}{c+2\mu}\right)$, and
$T_k =  \bigg\lceil M_k \max\left\{\frac{\delta + \max_i\|g^k_{1:T_k, i}\|_2}{2m},\right. \\  \left.m\sum\limits_{i=1}^{d+d'}\|g^k_{1:T_k, i}\|_2 \right\}\bigg\rceil$, and after $K =\left\lceil \max \left\{ \frac{c+2\mu}{2\mu}\log\left(\frac{4\epsilon_0}{\epsilon}\right), \frac{c+2\mu}{2\mu} \log \left(\frac{16\eta^2_0 \hat{L}  \min(\rho, \mu_y) K}{53 m^2(c+2\mu)\epsilon}\right) \right\} \right\rceil$ stages, we have 
PES-AdaGrad has the total stochastic first-order oracle call complexity of \\
$\widetilde{O}\left( \left(
\frac{\delta^2 (L+\rho)^2 (d+d')}{\mu^2 \min\{\rho, \mu_y\} \epsilon}\right)^{\frac{1}{2(1-\alpha)}} \right)$ in order to have $\E[\Delta_{K+1}] \leq \epsilon$, { where $\Delta_{k}$ is defined as  in Theorem \ref{thm:1}}. 
\end{theorem}
\begin{proof} 
By analysis in proof of Theorem \ref{thm:ogda_primal}, we have the following inequalities that do not depend on the  optimization algorithm 
\begin{align} 
\begin{split} 
\left(1+\frac{\mu}{2\rho}\right)(P(\bx_k) - P(x_*))
- (P(\bx_{k-1}) - P(x_*)) \leq \frac{1}{4\rho}\|\nabla P_k(\bx_k)\|^2,
\end{split} 
\label{local:ada:smooth_weakly}
\end{align} 
\begin{align} 
\begin{split} 
{\text{Gap}}_k(\bx_k, \by_k) 
\geq \frac{3}{50}{\text{Gap}}_{k+1}(x_0^{k+1}, y_0^{k+1})
+ \frac{4}{5}(P(x_0^{k+1}) - P(x_0^k)),
\end{split}
\label{local:G:20_ada}
\end{align}
and 
\begin{align}
\begin{split}
\frac{\rho}{4}\|\hat{x}_k(\by_k) - x_0^k\|^2 
+ \frac{\mu_y}{4} \|\hat{y}_k(\bx_k) - y_0^k\|^2
\leq {\text{Gap}}_k(x_0^k, y_0^k) 
+ {\text{Gap}}_k(\bx_k, \by_k).
\end{split}
\end{align} 
Set $m = 1/\sqrt{d+d'}$, $\eta_k = \eta_0  \exp\left(-\frac{(k-1)}{2}\frac{2\mu}{c+2\mu}\right)$, 
$M_k = \left\lceil\frac{212m }{\eta_0 \min\{\rho, \mu_y\}} 
\exp\left( \frac{(k-1)}{2} \frac{2\mu}{c+2\mu}\right)\right\rceil$.
Note that, 
\begin{align}
\begin{split}
T_k =& \left\lceil M_k \max\left\{ \frac{\delta + \max_i \|g_{1:T, i}\|}{m}, 
m\sum\limits_{i=1}^{d+d'} \|g_{1:T, i}\|
\right\} \right\rceil 
\leq  2\sqrt{d+d'}\delta M_k T_k^{\alpha}. 
\end{split}
\end{align}
Thus, $T_k \leq (2\sqrt{d+d'}\delta M_k) ^{\frac{1}{1-\alpha}}$. 
Noting $\frac{m}{\eta_k M_k} \leq \frac{\min\{\rho, \mu_y\}}{212}$, we can plug in Lemma \ref{lem:adagrad} as
\begin{align}
\begin{split}
\E[{\text{Gap}}_k(\bx_k, \by_k)] \leq 
\E\left[\frac{4\eta_k}{m M_k}\right]
+ \frac{1}{53}\E\left[{\text{Gap}}_k(x_0^k, y_0^k) 
+ {\text{Gap}}_k(\bx_k, \by_k) \right].
\end{split}
\end{align} 

\begin{align*} 
\begin{split}
& \E[\|\nabla P_k(\bx_k)\|^2] \leq 2\hat{L}\E[P_k(\bx_k) - \min\limits_{x\in \mathbb{R}^d}P_k(x)] 
\leq 2\hat{L} \E[{\text{Gap}}_k(\bx_k, \by_k)] \\
& = 2\hat{L}\E[4{\text{Gap}}_k(\bx_k, \by_k) 
    - 3{\text{Gap}}_k(\bx_k, \by_k)] \\
& \leq 2\hat{L} \E\left[ 4\left(\frac{4\eta_k}{m M_k}
+ \frac{1}{53}\left({\text{Gap}}_k(x_0^k, y_0^k) 
+ {\text{Gap}}_k(\bx_k, \by_k) \right) \right)
- 3{\text{Gap}}_k(\bx_k, \by_k)\right] \\ 
& = 2\hat{L} \E \left[ 16 \frac{\eta_k}{m M_k} 
+  \frac{4}{53}{\text{Gap}}_k(x_0^k, y_0^k) 
- \frac{155}{53} {\text{Gap}}_k(\bx_k, \by_k)
\right]\\ 
&\leq 2\hat{L} \E \left[ 16\frac{\eta_k}{m M_k} +  \frac{4}{53}{\text{Gap}}_k(x_0^k, y_0^k) 
-  \frac{93}{530}{\text{Gap}}_{k+1}(x_0^{k+1}, y_0^{k+1})
- \frac{124}{53}(P(x_0^{k+1}) - P(x_0^k)) \right], 
\end{split}
\end{align*}
where the last inequality uses (\ref{local:G:20_ada}).
Combining this with (\ref{local:ada:smooth_weakly}) and arranging terms, with a constant $c = 4\rho + \frac{248}{53}\hat{L}$, we have
\begin{align} 
\begin{split} 
& (c + 2\mu) \E[P(x_0^{k+1}) - P(x_*)]
+ \frac{93\hat{L}}{265}\E[{\text{Gap}}_{k+1}(x_0^{k+1}, y_0^{k+1})] 
\\ 
&\leq c\E[ P(x_0^k) - P(x_*)] + \frac{8\hat{L}}{53}\E [{\text{Gap}}_k(x_0^k, y_0^k)] + \frac{32 \eta_k \hat{L}}{m M_k}.
\end{split} 
\end{align}
Then using the fact that $\hat{L} \geq \mu$, by similar analysis as in proof of Theorem \ref{thm:ogda_primal}, we have
\begin{align} 
\begin{split}
&(c+2\mu)\E \left[P(x_0^{k+1}) - P(x_*) + \frac{8\hat{L}}{53c} {\text{Gap}}_{k+1}(x_0^k, y_0^k)\right] \\
&\leq c \E\left[P(x_0^k) - P(x_*)  
+ \frac{8\hat{L}}{53c}{\text{Gap}}_k(x_0^k, y_0^k) \right] 
+ \frac{32 \eta_k \hat{L}}{m M_k}. 
\end{split}
\end{align} 

Defining $\Delta_k = P(x_0^k) - P(x_*) + \frac{8\hat{L}}{53c}{\text{Gap}}_k(x_0^k, y_0^k)$  and $\epsilon_0={\text{Gap}}(x_0, y_0)$, then
\begin{align} 
\begin{split}
\E[\Delta_{k+1}] \leq \frac{c}{c+2\mu} \E[\Delta_k] 
+ \frac{32 \eta_k \hat{L}}{(c+2\mu) m M_k}.  
\end{split}
\end{align}

Noting $\Delta_1 \leq 2\epsilon_0$ and $(1-x) \leq \exp(-x)$, 
\begin{small}
\begin{align} 
\begin{split} 
&\E [\Delta_{k+1}]  \leq  \left(\frac{c}{c+2\mu}\right)^K \E[\Delta_1 ]
+ \frac{32\hat{L}}{(c+2\mu)m}  
\sum\limits_{k=1}^{K}\frac{\eta_k}{M_k} \left( \frac{c}{c+2\mu} \right)^{K+1-k}\\
& \leq 2\epsilon_0 \exp\left(\frac{-2\mu K}{c+2\mu}\right) 
  + \frac{32\hat{L} \eta_0^2 \min(\rho, \mu_y)}{212m^2 (c+2\mu)} \sum\limits_{k=1}^{K}
  \exp\left((k-1) \frac{2\mu}{c+2\mu} \right)
  \exp\left(-\frac{2\mu (K+1-k)}{c+2\mu}\right) \\ 
& \leq 2\epsilon_0 \exp\left(\frac{-2\mu K}{c+2\mu}\right) 
  + \frac{8\eta^2_0 \hat{L}\min(\rho, \mu_y)}{53m^2 (c+2\mu)} K   \exp\left(-\frac{2\mu K}{c+2\mu}\right). 
\end{split}
\end{align}
\end{small}

To make this less than $\epsilon$, we just need to make
\begin{align}
\begin{split}
&2\epsilon_0 \exp\left(\frac{-2\mu K}{c+2\mu}\right)  \leq \frac{\epsilon}{2}, \\
&\frac{8\eta^2_0 \hat{L}\min(\rho, \mu_y)}{53 m^2 (c+2\mu)} K  
\exp\left(-\frac{2\mu K}{c+2\mu}\right)  \leq \frac{\epsilon}{2}.
\end{split}
\end{align} 
Let $K$ be the smallest integer such that  
$\exp\left(\frac{-2\mu K}{c+2\mu}\right) \leq \min\{\frac{\epsilon}{4\epsilon_0}, 
\frac{53 m^2 (c+2\mu)\epsilon}{16\eta_0^2 
\hat{L}\min(\rho, \mu_y)K} 
\}$.
We can set $K = \left\lceil\max \left\{ \frac{c+2\mu}{2\mu}\log\left(\frac{4\epsilon_0}{\epsilon}\right), 
\frac{c+2\mu}{2\mu} \log \left(\frac{16\eta^2_0 \hat{L}
\min(\rho , \mu_y) K}{53 m^2 (c+2\mu)\epsilon}\right)
\right\}\right\rceil$.  
Recall 
\begin{align}
\begin{split}
T_k \leq  (2 \sqrt{d+d'} \delta M_k)^{\frac{1}{1-\alpha}}
\leq  \left[
\frac{424 \delta}{\eta_0 \min\{\rho, \mu_y\}}\exp\left(\frac{(k-1)}{2} \frac{2\mu}{c+2\mu}\right)\right]^{\frac{1}{1-\alpha}}.
\end{split}
\end{align}

Then the total number of stochastic first-order oracle calls is 
\begin{align*}
\begin{split}
&\sum\limits_{k=1}^{K} T_k \leq {O}
\left( \sum\limits_{k=1}^{K} 
\left[\frac{\delta}{\eta_0 
\min\{\rho, \mu_y\}} \exp\left(\frac{(k-1)}{2} \frac{2\mu}{c+2\mu}\right) \right]^{\frac{1}{1-\alpha}} \right) \\
&\leq {O} \left( \sum\limits_{k=1}^{K} 
\left(\frac{\delta}{\eta_0 \min\{\rho, \mu_y\}}\right)^{\frac{1}{1-\alpha}} \exp\left(\frac{k-1}{2(1-\alpha)} \frac{2\mu}{c+2\mu}\right) \right)\\
& \leq {O} \left(
\left(\frac{\delta}{\eta_0 \min\{\rho, \mu_y\}}\right)^{\frac{1}{1-\alpha}}
\frac{\exp\left(K\frac{2\mu}{2(1-\alpha)(c+2\mu)} - 1\right)} {\exp\left(\frac{2\mu}{2(1-\alpha)(c+2\mu)}\right) - 1} \right)\\
& \overset{(a)}{\leq}  {O} \left(
\left(\frac{\delta}{\eta_0 \min\{\rho, \mu_y\}}\right)^{\frac{1}{1-\alpha}}
\left(\frac{c+2\mu}{2\mu}\right)^{\frac{1}{2(1-\alpha)}}  
\left(\max\left\{\frac{4\epsilon_0}{\epsilon}, \frac{16\eta_0^2 \hat{L} \min(\rho, \mu_y) K}
{53\epsilon m^2 ( c+\mu)} \right\}\right)^{\frac{1}{2(1-\alpha)}} 
\right) \\
&\leq \widetilde{O}\left( \left(\max\left\{
\frac{\delta^2c}{\eta_0^2\mu(\min\{\rho, \mu_y\})^2},
\frac{\delta^2 \hat{L}c (d+d')}{\mu^2 \min\{\rho, \mu_y\} \epsilon} \right\} \right)^{\frac{1}{2(1-\alpha)}} \right) \\
&\leq \widetilde{O}\left( \left(
\frac{\delta^2 (L+\rho)^2 (d+d')}{\mu^2 \min\{\rho, \mu_y\} \epsilon} \right)^{\frac{1}{2(1-\alpha)}} \right),
\end{split} 
\end{align*} 
where $(a)$ uses the inequality that $\exp(ax) - 1 \geq x^a$ for any $0< a <1$ and $x > 0$, noting that $0< \frac{2\mu}{c+2\mu} < 1$ and $\frac{1}{2(1-\alpha)} > 0$. 
\end{proof}

\section{More Analysis on PES-AdaGrad} 
We have already shown in Theorem \ref{thm:ada_primal} about the convergence of primal gap for our Algorithm with Option III: Adagrad update.
In this section, we show a corollary about the convergence of duality gap based on Theorem \ref{thm:ada_primal}.
What is more, in parallel with our analysis on Option II: OGDA update, we show some convergence results under the condition that $\rho\leq \frac{\mu}{8}$.

\begin{corollary}
Under same setting as in Theorem \ref{thm:ada_primal} and suppose Assumption \ref{ass4} holds as well.
To reach an $\epsilon$-duality gap, the total stochastic first-order oracle call complexity  is $\widetilde{O}\left(   \left(\left(\frac{\rho}{\mu_x}+1\right) 
\frac{\delta^2 (L+\rho)^2 (d+d')}{\mu^2 \min\{\rho, \mu_y\} \epsilon}\right)^{\frac{1}{2(1-\alpha)}} 
\right)$.
\label{cor:ada_dual} 
\end{corollary}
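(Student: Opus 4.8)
The plan is to mimic the proof of Corollary~\ref{cor:ogda_duality}: convert the primal/regularized Lyapunov guarantee $\E[\Delta_{K+1}]\le\tilde\epsilon$ of Theorem~\ref{thm:ada_primal} into a bound on the true duality gap $\text{Gap}(x_K,y_K)$ through a \emph{deterministic} comparison inequality, and then choose $\tilde\epsilon$ accordingly. Since the reference point satisfies $x_0^{K+1}=x_K$ and $y_0^{K+1}=y_K$, the regularizer $\tfrac{\gamma}{2}\|x-x_0^{K+1}\|^2$ vanishes at $x=x_K$, so $\max_{y'\in Y}f_{K+1}(x_K,y')=P(x_K)$, while $f_{K+1}(\cdot,y_K)\ge f(\cdot,y_K)$ forces $\min_{x'}f_{K+1}(x',y_K)\ge\min_{x'}f(x',y_K)$. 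This gives the exact identity $\text{Gap}(x_K,y_K)=\text{Gap}_{K+1}(x_K,y_K)+\big(\min_{x'}f_{K+1}(x',y_K)-\min_{x'}f(x',y_K)\big)$ with a nonnegative ``correction'' term. Writing $\hat x_{K+1}(y_K)=\arg\min_{x'\in\R^d}f_{K+1}(x',y_K)$, which is well defined because $f_{K+1}(\cdot,y_K)$ is $\rho$-strongly convex when $\gamma=2\rho$ (by Assumption~\ref{ass5}), the correction equals $\big(f(\hat x_{K+1}(y_K),y_K)-\min_{x'}f(x',y_K)\big)+\tfrac{\gamma}{2}\|\hat x_{K+1}(y_K)-x_K\|^2$.

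The key step is to bound this correction, and this is where Assumption~\ref{ass4} enters. The unconstrained optimality condition for $\hat x_{K+1}(y_K)$ is $\nabla_x f(\hat x_{K+1}(y_K),y_K)=\gamma(x_K-\hat x_{K+1}(y_K))$, and plugging it into the $x$-side $\mu_x$-PL inequality evaluated at $\hat x_{K+1}(y_K)$ gives $f(\hat x_{K+1}(y_K),y_K)-\min_{x'}f(x',y_K)\le\tfrac{\gamma^2}{2\mu_x}\|x_K-\hat x_{K+1}(y_K)\|^2$, hence the correction is at most $\tfrac{\gamma}{2}\big(\tfrac{\gamma}{\mu_x}+1\big)\|\hat x_{K+1}(y_K)-x_K\|^2$. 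To turn the distance into the regularized duality gap I would apply Lemma~\ref{lem:Yan1} to $f_{K+1}$ (which is $\rho$-strongly convex in $x$ and $\mu_y$-strongly concave in $y$) with both reference solutions taken to be $(x_K,y_K)$, which yields $\tfrac{\rho}{4}\|\hat x_{K+1}(y_K)-x_K\|^2\le 2\,\text{Gap}_{K+1}(x_K,y_K)$. Combining everything with $\gamma=2\rho$ produces a deterministic bound of the form $\text{Gap}(x_K,y_K)\le C\big(\tfrac{\rho}{\mu_x}+1\big)\text{Gap}_{K+1}(x_0^{K+1},y_0^{K+1})$ for an absolute constant $C$. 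Since $\Delta_{K+1}=P(x_0^{K+1})-P(x_*)+\tfrac{8\hat L}{53c}\text{Gap}_{K+1}(x_0^{K+1},y_0^{K+1})$ with both summands nonnegative and $c/\hat L=O(1)$, we get $\text{Gap}_{K+1}(x_0^{K+1},y_0^{K+1})=O(\Delta_{K+1})$, and taking expectations gives $\E[\text{Gap}(x_K,y_K)]\le C'\big(\tfrac{\rho}{\mu_x}+1\big)\E[\Delta_{K+1}]$.

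It then suffices to run PES-AdaGrad until $\E[\Delta_{K+1}]\le\tilde\epsilon:=\epsilon/\big(C'(\rho/\mu_x+1)\big)$; substituting $\tilde\epsilon$ into the complexity of Theorem~\ref{thm:ada_primal} gives $\widetilde{O}\Big(\big(\tfrac{\delta^2(L+\rho)^2(d+d')}{\mu^2\min\{\rho,\mu_y\}\tilde\epsilon}\big)^{\frac{1}{2(1-\alpha)}}\Big)=\widetilde{O}\Big(\big(\tfrac{\rho}{\mu_x}+1\big)^{\frac{1}{2(1-\alpha)}}\big(\tfrac{\delta^2(L+\rho)^2(d+d')}{\mu^2\min\{\rho,\mu_y\}\epsilon}\big)^{\frac{1}{2(1-\alpha)}}\Big)$, and finally, because $\alpha\in(0,\tfrac12]$ forces $\tfrac{1}{2(1-\alpha)}\le1$ while $\tfrac{\rho}{\mu_x}+1\ge1$, we may weaken $\big(\tfrac{\rho}{\mu_x}+1\big)^{\frac{1}{2(1-\alpha)}}$ to $\tfrac{\rho}{\mu_x}+1$, which is precisely the claimed rate. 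I expect the main obstacle to be the deterministic comparison in the first two paragraphs: one must invoke the $\mu_x$-PL condition \emph{at the regularized minimizer} $\hat x_{K+1}(y_K)$ and exploit its prox optimality condition, so that the correction term is controlled by $\text{Gap}_{K+1}$ rather than by $\text{Gap}(x_K,y_K)$ itself (which would be circular); a minor technical point is that, if $f(\cdot,y_K)$ has multiple minimizers, the argument should be run with the distance to its minimizer set rather than to a single point.
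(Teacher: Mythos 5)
Your proposal is correct and follows essentially the same route as the paper's own proof: decompose the unregularized duality gap at $(x_K,y_K)=(x_0^{K+1},y_0^{K+1})$ into $\text{Gap}_{K+1}$ plus a correction, bound $\|\hat x_{K+1}(y_K)-x_K\|^2$ via Lemma~\ref{lem:Yan1}, control the correction via the prox optimality condition together with the $\mu_x$-PL inequality at $\hat x_{K+1}(y_K)$, and then substitute $\hat\epsilon \asymp \epsilon/(\rho/\mu_x+1)$ into the complexity of Theorem~\ref{thm:ada_primal}. Your remarks about relating $\text{Gap}_{K+1}$ to $\Delta_{K+1}$ and about weakening $(\rho/\mu_x+1)^{1/(2(1-\alpha))}$ to $\rho/\mu_x+1$ make explicit two steps the paper leaves implicit, but the argument is the same.
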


\begin{theorem}
Suppose Assumption \ref{ass1}, \ref{ass5}, \ref{ass4}, hold and $\rho\leq \frac{\mu}{8}$. 
Define a constant $c = 4\rho + \frac{248}{53}\hat{L} \in O(L + \rho)$.
$g^k_{1:T_k}$ denotes the cumulative matrix of gradients $g_{1:T}$ in $k$-th stage. 
Suppose $\|g^k_{1:T_k, i}\|_2 \leq \delta T^{\alpha}_k$ and with $\alpha\in(0,1/2]$. 
Then by setting $\gamma = 2\rho$, $m = 1/\sqrt{d+d'}$, $\eta_k = 2\eta_0 \exp\left(-\frac{(k-1)}{2}\frac{2\mu}{c+2\mu}\right)$,  $M_k = \frac{212m}{\eta_0 \min(\ell, \mu_y)} \exp\left(\frac{k-1}{2}\frac{2\mu}{c+2\mu}\right)$, and $T_k =  \left\lceil M_k \max\left\{\frac{\delta + \max_i\|g^k_{1:\tau, i}\|_2}{2m}, m\sum\limits_{i=1}^{d+d'}\|g^k_{1:\tau, i}\|_2 \right\}\right\rceil$, and after $K = \bigg\lceil\max \left\{ \frac{c+2\mu}{2\mu}\log\left(\frac{4\epsilon_0}{\epsilon}\right), \right. \\ 
\left.\frac{c+2\mu}{2\mu} \log \left(\frac{16\eta^2_0 \hat{L}
\min(\rho, \mu_y) K}{53 m^2(c+2\mu)\epsilon}\right) 
\right\} \bigg\rceil$ stages, we have  
$\Tilde{O}\left( \left( 
\frac{\delta^2 (d+d') }{\min\{\mu, \mu_y\} \epsilon}  
\right)^{\frac{1}{2(1-\alpha)}} \right)$.
\label{thm:ada_primal_rho<mu}
\end{theorem}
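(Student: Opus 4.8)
This statement is the AdaGrad counterpart of Theorem~\ref{thm:ogda_primal_rho<mu}, in the same way that Theorem~\ref{thm:ada_primal} is the AdaGrad counterpart of Theorem~\ref{thm:ogda_primal}: the plan is to replace the one--epoch OGDA guarantee by the one--epoch AdaGrad guarantee and then feed it through the epoch--level recursion already developed for the regime $\rho\le\mu/8$. First I would apply Lemma~\ref{lem:adagrad} to the proximal function $f_k(x,y)=f(x,y)+\frac{\gamma}{2}\|x-x_0^k\|^2$, which for the prescribed $\gamma$ is strongly convex in $x$ and strongly concave in $y$ (in particular convex--concave, so the lemma applies), and evaluate the resulting bound at $(x,y)=(\hat x_k(y_k),\hat y_k(x_k))$. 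With $m=1/\sqrt{d+d'}$ this gives a one--epoch bound
\[
\E[\text{Gap}_k(x_k,y_k)]\le \frac{m}{\eta_k M_k}\,\E\big[\|\hat x_k(y_k)-x_0^k\|^2+\|\hat y_k(x_k)-y_0^k\|^2\big]+\frac{4\eta_k}{m M_k},
\]
i.e.\ exactly the shape of (\ref{equ:thm:appendix_unify_rho<mu_sub}) with $C_1/(\eta_kT_k)$ read as $m/(\eta_kM_k)$ and $\eta_kC_2$ read as $4\eta_k/(mM_k)$. Since $\rho\le\mu/8$, I can then repeat the argument in the proof of Theorem~\ref{thm:appendix_unify_rho<mu}: use Lemma~\ref{lem:Yan1} applied to $f_k$ to replace the squared--distance terms by $\text{Gap}_k(x_0^k,y_0^k)+\text{Gap}_k(x_k,y_k)$, absorb the self--referential $\text{Gap}_k(x_k,y_k)$, lower bound the resulting quantity by Lemma~\ref{lem:Yan8} (with that lemma's internal parameter set to $5/6$, not to be confused with the gradient--growth exponent $\alpha$), and use $P(x_0^k)-P(x_*)\le\frac{16}{15}\text{Gap}_k(x_0^k,y_0^k)$ from (\ref{local:nemi_P_to_Gap_k}). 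This produces a geometric one--epoch recursion $\E[\Delta_{k+1}]\le \frac{15}{16}\E[\Delta_k]+\mathrm{const}\cdot\frac{\eta_k}{m M_k}$ for a Lyapunov function $\Delta_k$ built from $P(x_0^k)-P(x_*)$ and $\text{Gap}_k(x_0^k,y_0^k)$, with $\Delta_1=O(\epsilon_0)$.

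The second ingredient is the AdaGrad bookkeeping that links the ``virtual'' epoch length $M_k$ appearing in the bound to the actual inner--loop length $T_k$ run by the algorithm. Using $T_k=M_k\max\big\{(\delta+\max_i\|g^k_{1:T_k,i}\|_2)/(2m),\,m\sum_i\|g^k_{1:T_k,i}\|_2\big\}$ together with the growth hypothesis $\|g^k_{1:T_k,i}\|_2\le\delta T_k^{\alpha}$ and $m=1/\sqrt{d+d'}$, one gets $T_k\le 2\sqrt{d+d'}\,\delta M_k T_k^{\alpha}$, hence $T_k\le (2\sqrt{d+d'}\,\delta M_k)^{1/(1-\alpha)}$. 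Plugging in the geometric schedules $\eta_k\propto\exp(-\tfrac{k-1}{2}\tfrac{2\mu}{c+2\mu})$, $M_k\propto\exp(\tfrac{k-1}{2}\tfrac{2\mu}{c+2\mu})$ makes $\frac{m}{\eta_kM_k}$ a fixed small multiple of $\min\{\mu/8,\mu_y\}$ (so the absorption step above is legitimate) while $\frac{\eta_k}{mM_k}$ decays geometrically; iterating the recursion, bounding $1-x\le e^{-x}$, and choosing $K$ as in the statement gives $\E[\Delta_{K+1}]\le\epsilon$. Finally, $\sum_{k=1}^{K}T_k\le\sum_{k=1}^{K}(2\sqrt{d+d'}\,\delta M_k)^{1/(1-\alpha)}$ is a geometric series of ratio $\exp(\tfrac{1}{1-\alpha}\tfrac{\mu}{c+2\mu})$; I would bound it by a constant times its last term, use $\exp(ax)-1\ge x^{a}$ with $a=\tfrac{1}{2(1-\alpha)}$ (as in the proof of Theorem~\ref{thm:ada_primal}), substitute the stopping value of $K$, and simplify with $m^2=1/(d+d')$, $\hat L/(c+2\mu)=O(1)$ (valid because $c\ge\tfrac{248}{53}\hat L$), and the inequalities available when $\rho\le\mu/8$; the quadratic cancellations in $\delta$, $\eta_0$, and $\min\{\mu/8,\mu_y\}$ then collapse everything into $\widetilde O\big((\delta^2(d+d')/(\min\{\mu,\mu_y\}\epsilon))^{1/(2(1-\alpha))}\big)$, which is sharper than Theorem~\ref{thm:ada_primal} exactly as Theorem~\ref{thm:ogda_primal_rho<mu} is sharper than Theorem~\ref{thm:ogda_primal}.

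The main obstacle I expect is the self--referential coupling between $T_k$ and the cumulative gradient matrix $g^k_{1:T_k}$: the convergence bound is naturally in terms of $M_k$, but the algorithm's inner length $T_k$ is \emph{defined} through $\|g^k_{1:T_k,\cdot}\|$, which itself depends on $T_k$. Disentangling this without circular reasoning is precisely what forces the growth assumption $\|g^k_{1:T_k,i}\|_2\le\delta T_k^{\alpha}$ with $\alpha\le1/2$, and it is also the place where the exponent $\tfrac{1}{2(1-\alpha)}$ (rather than $\tfrac{1}{1-\alpha}$) appears: the $1/(1-\alpha)$ power of $M_k$ incurred per epoch is effectively halved because, after summing the geometric series and substituting the stopping rule for $K$, the factor $e^{2\mu K/(c+2\mu)}$ enters to the power $1/(2(1-\alpha))$ while $\delta^2$, $\eta_0^2$ and the step--size constant enter quadratically and cancel against the prefactor. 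A secondary, purely mechanical nuisance is re-deriving the exact constants of the $\rho\le\mu/8$ epoch recursion with the additive term $4\eta_k/(mM_k)$ in place of $\eta_kC_2$, so as to land on precisely the $\eta_k,M_k,T_k,K$ prescribed in the statement.
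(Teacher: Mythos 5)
Your proposal matches the paper's own proof essentially step for step: Lemma~\ref{lem:adagrad} supplies the one-epoch bound of the form required by the $\rho\le\mu/8$ recursion (Lemma~\ref{lem:Yan1}, absorption of the self-referential gap, Lemma~\ref{lem:Yan8} with internal parameter $5/6$, and the bound $P(x_0^k)-P(x_*)\le\frac{16}{15}\text{Gap}_k(x_0^k,y_0^k)$), followed by the same $T_k\le(2\sqrt{d+d'}\,\delta M_k)^{1/(1-\alpha)}$ bookkeeping and geometric summation with $\exp(ax)-1\ge x^a$. The only caveat is one you inherit from the statement itself: the paper's proof actually runs with $\gamma=\mu/4$ and decay rate $1/32$ per epoch (as in Theorem~\ref{thm:ogda_primal_rho<mu}), not the $\gamma=2\rho$ and $\frac{2\mu}{c+2\mu}$ rate printed in the theorem, and your plan correctly defers to the $\rho\le\mu/8$ machinery that requires the former.
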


\begin{corollary} 
Under same setting as in Theorem \ref{thm:ada_primal_rho<mu} and suppose Assumption \ref{ass4} holds as well.
To reach an $\epsilon$-duality gap, the total stochastic first-order oracle call complexity  is 
$\Tilde{O}\left(\left(  \left(\frac{\mu}{\mu_x}+1\right) 
\frac{\delta^2 (d+d') }{\min\{\mu, \mu_y\} \epsilon} 
\right)^{\frac{1}{2(1-\alpha)}} \right)$.
\label{cor:ada_dual_rho<mu} 
\end{corollary}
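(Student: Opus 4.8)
The plan is to derive Corollary~\ref{cor:ada_dual_rho<mu} from Theorem~\ref{thm:ada_primal_rho<mu} by the same two-step recipe used for Corollaries~\ref{cor:2} and~\ref{cor:ada_dual}: first drive the Lyapunov function below a rescaled target, then convert the \emph{regularized} duality gap $\text{Gap}_{K+1}$ of $f_{K+1}$ into the \emph{true} duality gap $\text{Gap}$ of $f$ at the cost of a factor $O(\mu/\mu_x+1)$. Concretely, I would first apply Theorem~\ref{thm:ada_primal_rho<mu} with $\epsilon$ replaced by an auxiliary accuracy $\epsilon'>0$ to be fixed later: after the stated number of epochs one has $\E[\Delta_{K+1}]\le\epsilon'$, and since $(x_0^{K+1},y_0^{K+1})=(x_K,y_K)$ and $\Delta_{K+1}$ contains a positive multiple of $\text{Gap}_{K+1}(x_K,y_K)$, this yields $\E[\text{Gap}_{K+1}(x_K,y_K)]\le c_1\epsilon'$ for an absolute constant $c_1$, at a total iteration cost of $\widetilde O\big((\delta^2(d+d')/(\min\{\mu,\mu_y\}\epsilon'))^{1/(2(1-\alpha))}\big)$.

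The heart of the argument is the gap-conversion inequality
\begin{align*}
\E[\text{Gap}(x_K,y_K)] \le c_2\Big(\tfrac{\mu}{\mu_x}+1\Big)\,\E[\text{Gap}_{K+1}(x_K,y_K)],
\end{align*}
which I would prove pathwise. Write $D(y)=\min_x f(x,y)$ and $D_{K+1}(y)=\min_x\{f(x,y)+\tfrac{\gamma}{2}\|x-x_K\|^2\}$; since the proximal term vanishes at $x=x_K=x_0^{K+1}$ we have $\max_{y'}f_{K+1}(x_K,y')=P(x_K)$, so $\text{Gap}(x_K,y_K)=P(x_K)-D(y_K)$ and $\text{Gap}_{K+1}(x_K,y_K)=P(x_K)-D_{K+1}(y_K)$, hence
\begin{align*}
\text{Gap}(x_K,y_K)=\text{Gap}_{K+1}(x_K,y_K)+\big(D_{K+1}(y_K)-D(y_K)\big),
\end{align*}
with the correction term nonnegative. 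Let $\hat x=\arg\min_x f_{K+1}(x,y_K)\in\R^d$. Because $f(\cdot,y_K)$ is $\rho$-weakly convex (Assumption~\ref{ass5}), $f_{K+1}(\cdot,y_K)$ is $\lambda_x$-strongly convex with $\lambda_x=\gamma-\rho>0$ and $\gamma/\lambda_x=O(1)$ for the choice of $\gamma$ in Theorem~\ref{thm:ada_primal_rho<mu}; strong convexity gives $\|x_K-\hat x\|^2\le\tfrac{2}{\lambda_x}(f_{K+1}(x_K,y_K)-D_{K+1}(y_K))\le\tfrac{2}{\lambda_x}\text{Gap}_{K+1}(x_K,y_K)$, while the unconstrained optimality condition $\nabla_x f(\hat x,y_K)=-\gamma(\hat x-x_K)$ together with the $x$-side $\mu_x$-PL condition (Assumption~\ref{ass4}) gives $f(\hat x,y_K)-D(y_K)\le\tfrac{1}{2\mu_x}\|\nabla_x f(\hat x,y_K)\|^2=\tfrac{\gamma^2}{2\mu_x}\|x_K-\hat x\|^2$. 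Adding the two contributions,
\begin{align*}
D_{K+1}(y_K)-D(y_K)=\big(f(\hat x,y_K)-D(y_K)\big)+\tfrac{\gamma}{2}\|x_K-\hat x\|^2\le\tfrac{\gamma}{\lambda_x}\Big(\tfrac{\gamma}{\mu_x}+1\Big)\text{Gap}_{K+1}(x_K,y_K),
\end{align*}
and since $\rho\le\mu/8$ forces $\gamma/\mu_x=O(\mu/\mu_x)$ and $\mu\ge\mu_x$ (Lemma~\ref{lem:primal_PL}) absorbs the ``$+1$'', the displayed conversion follows after taking expectations.

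To finish, I would set $\epsilon'=\Theta\!\big(\epsilon/(\mu/\mu_x+1)\big)$, so that the two displays above give $\E[\text{Gap}(x_K,y_K)]\le\epsilon$, and substitute this $\epsilon'$ into the iteration complexity of Theorem~\ref{thm:ada_primal_rho<mu}; this produces $\widetilde O\big((\mu/\mu_x+1)^{1/(2(1-\alpha))}(\delta^2(d+d')/(\min\{\mu,\mu_y\}\epsilon))^{1/(2(1-\alpha))}\big)$, and because $\alpha\in(0,1/2]$ makes $1/(2(1-\alpha))\le1$ and $\mu/\mu_x+1\ge1$, we may bound $(\mu/\mu_x+1)^{1/(2(1-\alpha))}\le\mu/\mu_x+1$, which is exactly the claimed bound. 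I expect the main obstacle to be the gap-conversion inequality: the naive route of bounding $D_{K+1}(y_K)-D(y_K)$ through the smoothness of $D$ costs a factor $\Theta(\ell^2/(\mu_x\mu_y))$ (as in Lemma~\ref{lem:agda}), so one must instead exploit that $\hat x$ is an \emph{exact} proximal point — making $\nabla_x f(\hat x,y_K)$ explicitly equal to $-\gamma(\hat x-x_K)$ — and couple the strong convexity of $f_{K+1}(\cdot,y_K)$ with the PL condition of $f(\cdot,y_K)$ to control $\|x_K-\hat x\|$ and $f(\hat x,y_K)-D(y_K)$ simultaneously.
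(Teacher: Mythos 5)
Your proposal is correct and follows essentially the same route as the paper's proof: apply the theorem at an auxiliary accuracy $\hat\epsilon$, use strong convexity of $f_{K+1}(\cdot,y_K)$ to bound $\|\hat x_{K+1}(y_K)-x_K\|^2$ by $O(\text{Gap}_{K+1}/\mu)$, turn the exact proximal optimality condition $\nabla_x f(\hat x_{K+1}(y_K),y_K)=-\gamma(\hat x_{K+1}(y_K)-x_K)$ plus the $x$-side $\mu_x$-PL condition into the bound $f(\hat x_{K+1}(y_K),y_K)-D(y_K)\le \tfrac{\gamma^2}{2\mu_x}\|\hat x_{K+1}(y_K)-x_K\|^2$, decompose $\text{Gap}=\text{Gap}_{K+1}+(D_{K+1}-D)$, and then set $\hat\epsilon=\Theta(\epsilon/(\mu/\mu_x+1))$ and plug into the theorem, using $1/(2(1-\alpha))\le 1$ to pull the $(\mu/\mu_x+1)$ factor outside the power. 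The only cosmetic difference is that you invoke strong convexity of $f_{K+1}(\cdot,y_K)$ directly to control $\|\hat x_{K+1}(y_K)-x_K\|$, whereas the paper cites Lemma~\ref{lem:Yan1} — these are the same estimate up to absolute constants.
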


\begin{proof}[Proof of Theorem \ref{thm:ada_primal_rho<mu} ]
By analysis in the Proof of Theorem  \ref{thm:ogda_primal_rho<mu}, we know that when $\rho < \frac{\mu}{8}$ and $\gamma = \frac{\mu}{4}$,
\begin{align}
\begin{split}
P(x_0^k) - P(x_*) \leq \frac{16}{15}{\text{Gap}}_k(x_0^k, y_0^k)
\end{split}
\end{align}
and ${f}_k(x, y)$ is $\lambda_x = \frac{\mu}{8}$-strongly convex in $x$.

Set $m = 1/\sqrt{d+d'}$, $\eta_k = \eta_0  \exp\left(-\frac{(k-1)}{32}\right)$, 
$M_k = \frac{384 m }{\eta_0 \min\{\lambda_x, \mu_y\}} 
\exp\left( \frac{(k-1)}{32} \right)$.
Note that,
\begin{align}
\begin{split}
T_k =&  \left\lceil M_k \max\left\{ \frac{\delta + \max_i \|g_{1:T, i}\|}{m}, 
m\sum\limits_{i=1}^{d+d'} \|g_{1:T, i}\|
\right\} \right\rceil 
\leq  2\sqrt{d+d'} \delta M_k T_k^{\alpha}.
\end{split}
\end{align}
Thus, $T_k \leq (2\sqrt{d+d'} \delta M_k) ^{\frac{1}{1-\alpha}}$. 
Since $\frac{m}{\eta_k M_k} \leq \frac{\min\{\lambda_x, \mu_y\}}{384}$, we can apply Lemma \ref{lem:adagrad} to get
\begin{align}
\begin{split}
\E[{\text{Gap}}_k(\bx_k, \by_k)] &\leq \frac{4\eta_k}{m M_k} 
+ \frac{1}{96} \left(\frac{\lambda_x}{4}\E[\|\hat{x}_k(\by_k) - x_0^k\|^2] + \frac{\mu_y}{4}\E[\|\hat{y}_k(\bx_k) - y_0^k\|^2]\right) \\ 
&\leq \frac{4\eta_k}{m M_k} 
+ \frac{1}{96} \E[{\text{Gap}}_k(x_0^k, y_0^k)]
+ \frac{1}{96} \E[{\text{Gap}}_k(\bx_k, \by_k)],
\end{split}    
\end{align}
where the last inequality follows from Lemma \ref{lem:Yan1}. Rearranging terms, we have 
\begin{align} 
\begin{split} 
\frac{95}{96} \E[{\text{Gap}}_k (\bx_k, \by_k)] 
\leq \frac{4\eta_k}{m M_k} 
+ \frac{1}{96} \E[{\text{Gap}}_k(x_0^k, y_0^k)].
\end{split}
\label{local:ada_rho<mu_gap_1}
\end{align}

Since $\rho \leq \frac{\mu}{8}$, $f(x, y)$ is also  $\frac{\mu}{8}$-weakly convex in $x$.
Then, similar to the analysis in proof of Theorem \ref{thm:ogda_primal_rho<mu}, we use Lemma \ref{lem:Yan8} to lower bound the LHS of 
(\ref{local:ada_rho<mu_gap_1}) with $\alpha = \frac{5}{6}$,
\begin{align} 
\begin{split} 
\frac{95}{96} {\text{Gap}}_k(\bx_k, \by_k) \geq& 
\frac{57}{576}{\text{Gap}}_{k+1}(x_0^{k+1}, y_0^{k+1})
+ \frac{475}{576}(P(x_0^{k+1}) - P(x_*)) \\
& - \frac{475}{576}\cdot \frac{15}{16} (P(x_0^k) - P(x_*))
- \frac{475}{576} \cdot   \frac{1}{15}{\text{Gap}}_k(x_0^k, y_0^k)
\end{split}
\label{local:ada_rho<mu_gap_2}
\end{align}
Combining (\ref{local:ada_rho<mu_gap_1}) and (\ref{local:ada_rho<mu_gap_2}), we get
\begin{align*} 
\begin{split} 
&\E\left[\frac{475}{576}(P(x_0^{k+1}) - P(x_*))  
+ \frac{57}{576} {\text{Gap}}_{k+1}(x_0^{k+1}, y_0^{k+1}) \right]\\ 
&\leq \frac{4\eta_k}{m M_k} 
+ \frac{475}{576} \cdot \frac{15}{16} \E[P(x_0^k) - P(x_*)]
+ \frac{475}{576}\cdot \frac{1}{15} \E[{\text{Gap}}_k(x_0^k, y_0^k) ]
+ \frac{1}{96} \E[{\text{Gap}}_k(x_0^k, y_0^k)]\\
&\leq \frac{4\eta_k}{m M_k} 
+ \frac{15}{16} \E \left[\frac{475}{576} (P(x_0^k) - P(x_*))
+ \frac{57}{576} {\text{Gap}}_k(x_0^k, y_0^k) \right].
\end{split}
\end{align*}
Defining $\Delta_k = 475(P(x_0^k) - P(x_*)) + 57{\text{Gap}}_k(x_0^k, y0^k)$  and $\epsilon_0={\text{Gap}}(x_0, y_0)$, we have
\begin{align}
\begin{split}
\E[\Delta_{k+1}] \leq \frac{15}{16} \E[\Delta_k] + \frac{4\eta_k}{m M_k} \leq \exp\left(-\frac{1}{16}\right)\E[\Delta_k] +  \frac{4\eta_k}{m M_k} 
\end{split}
\end{align}
and $\Delta_1 \leq 600\epsilon_0$.
Thus, 
\begin{align}
\begin{split}
\E[\Delta_{K+1}] \leq &  \exp\left(-\frac{K}{16}\right)\Delta_1
+ \frac{4}{m}\sum\limits_{k=1}^{K} \frac{\eta_k}{M_k} \exp\left(-\frac{K+1-k}{16}\right) \\
= & \exp\left(-\frac{K}{16}\right)\Delta_1 
+ \frac{\eta_0^2 \min\{\lambda_x, \mu_y\}}{96 m^2} 
\sum\limits_{k=1}^{K} \exp\left(-\frac{K}{16}\right) \\
\leq & 600\epsilon_0 \exp\left(-\frac{K}{16}\right) 
+ \frac{\eta_0^2 \min\{\lambda_x, \mu_y\}}{96 m^2} 
K \exp\left(-\frac{K}{16}\right).
\end{split}
\end{align} 
To make this less than $\epsilon$, we just need to make
\begin{align}
\begin{split}
&600\epsilon_0 \exp\left(-\frac{K}{16}\right)  \leq \frac{\epsilon}{2}, \\
&\frac{\eta_0^2 \min\{\rho, \mu_y\}}{96 m^2} 
K \exp\left(-\frac{K}{16}\right) \leq \frac{\epsilon}{2}.
\end{split}
\end{align}
Let $K$ be the smallest integer such that $\exp\left(\frac{-K}{16}\right) \leq \min\{\frac{\epsilon}{1200\epsilon_0}, 
\frac{48m^2 \epsilon}{\eta_0^2 \min\{\rho, \mu_y\}K}\}$. 
Recall $T_k \leq  (2\sqrt{d+d'}\delta M_k) ^{\frac{1}{1-\alpha}}  =  \left(\frac{768\delta}{\eta_0
\min\{\lambda_x, \mu_y\}} \exp\left(\frac{k-1}{32} \right)\right)^{\frac{1}{1-\alpha}}$.  
Then the total stochastic first-order oracle call complexity is
\begin{align*}
\begin{split}
\sum\limits_{k=1}^{K} T_k\leq & O 
\left(\sum\limits_{k=1}^{K} \left[\frac{\delta}{\eta_0 \min\{\lambda_x, \mu_y\}} \exp \left(\frac{k-1}{32}\right) \right]^{\frac{1}{1-\alpha}} \right) \\
\leq & O\left( \sum\limits_{k=1}^{K} 
\left(\frac{\delta}{\eta_0 \min\{\lambda_x,  \mu_y\}}\right)^{\frac{1}{1-\alpha}} 
\exp \left(\frac{k-1}{32(1-\alpha)}\right) \right)\\ 
\leq& O\left(\left(\frac{\delta}{\eta_0 \min\{\lambda_x,  \mu_y\}}\right)^{\frac{1}{1-\alpha}} 
\frac{\exp\left(\frac{K}{2(1-\alpha)\cdot 16}\right) - 1}
{\exp\left(\frac{1}{2(1-\alpha) \cdot 16}\right) - 1} \right) \\
\leq & \Tilde{O} \left(\left(\frac{\delta}{\eta_0 \min\{\lambda_x,  \mu_y\}}\right)^{\frac{1}{1-\alpha}} \left( \max\left\{ 
\frac{\epsilon_0}{\epsilon},
\frac{\eta_0^2 \min\{\lambda_x, \mu_y\} K}{m^2 \epsilon} \right\}\right)^{\frac{1}{2(1-\alpha)}} \right)  \\
\leq & \Tilde{O}\left( \left(  \max\left\{
\frac{\delta^2 \epsilon_0}{\eta_0^2 (\min\{\mu, \mu_y\})^2\epsilon}
,
\frac{\delta^2 (d+d') }{\min\{\mu, \mu_y\} \epsilon} \right\}
\right)^{\frac{1}{2(1-\alpha)}} \right) \\
\leq & \Tilde{O}\left( \left(
\frac{\delta^2 (d+d') }{\min\{\mu, \mu_y\} \epsilon}
\right)^{\frac{1}{2(1-\alpha)}} \right). 
\end{split} 
\end{align*}
\end{proof}

\section{Proof of Corollary \ref{cor:ogda_duality},
\ref{cor:gda_duality}, \ref{cor:ada_dual}
} 
\begin{proof}  
Let $(x_*, y_*)$ denote a saddle point solution of $\min\limits_{x\in \mathbb{R}^d} \max\limits_{y\in \mathcal{Y}} f(x, y)$.

Note that $x_0^{K+1} = x_{K}, y_0^{K+1} = \by_K$.
Suppose we have $\E[{\text{Gap}}_{K+1}(x_0^{K+1}, y_0^{K+1})] \leq \hat{\epsilon}$ after $K$ stages.
Noting $\gamma = 2\rho$,
$f_k(x, y)$ is $\rho$-strongly convex and $\mu_y$-strongly concave.
By Lemma \ref{lem:Yan1}, we know that
\begin{align} 
\begin{split} 
\E[\|\hat{x}_{K+1}(y_0^{K+1}) - x_0^{K+1}\|^2] \leq \frac{4}{\rho} 2 \E[{\text{Gap}}_{K+1}(x_0^{K+1}, y_0^{K+1})] \leq \frac{8\hat{\epsilon}}{\rho}. 
\end{split} 
\end{align} 

Since $\nabla_x {f}_{K+1}(\hat{x}_{K+1}(y_0^{K+1}), y_0^{K+1}) = \nabla_x f(\hat{x}_{K+1}(y_0^{K+1}), y_0^{K+1}) + \gamma (\hat{x}_{K+1}(y_0^{K+1}) - x_0^{K+1}) = 0$, we have 
\begin{align*}
\begin{split}
\E[\|\nabla_x f(\hat{x}_{K+1}(y_0^{K+1}), y_0^{K+1})\|^2] = \gamma^2 \E[\|\hat{x}_{K+1}(y_0^{K+1}) - x_0^{K+1}\|^2] \leq 32\rho\hat{\epsilon} 
\end{split}
\end{align*}

Using the $\mu_x$-PL condition of $f(\cdot, y_0^{K+1})$ in $x$,
\begin{align*} 
\begin{split} 
\E \left[ f(\hat{x}_{K+1}(y_0^{K+1}), y_0^{K+1}) - f(\hat{x}(y_0^{K+1}),  y_0^{K+1}) \right] 
 \leq \E\left[ \frac{\|\nabla_x f(\hat{x}_{K+1}(y_0^{K+1}), y_0^{K+1})\|^2}{2\mu_x} \right] \leq \frac{16\rho\hat{\epsilon}}{\mu_x}.
\end{split}
\end{align*}

Hence,
\begin{small} 
\begin{align*}
\begin{split} 
&\E \left[{\text{Gap}}(x_0^{K+1}, y_0^{K+1})\right] = \E\left[ f(x_0^{K+1}, \hat{y}(x_0^{K+1})) - f(\hat{x}(y_0^{K+1}), y_0^{K+1}) \right] \\ 
& = \E\bigg\{ f(x_0^{K+1}, \hat{y}(x_0^{K+1}))+\frac{\gamma}{2} \|x_0^{K+1} - x_0^{K+1}\|^2 - f(\hat{x}_{K+1}(y_0^{K+1}), y_0^{K+1}) - \frac{\gamma}{2} \|\hat{x}_{K+1}(y_0^{K+1}) - x_0^{K+1}\|^2 \\
&~~~+ f(\hat{x}_{K+1}(y_0^{K+1}), y_0^{K+1})- f(\hat{x}(y_0^{K+1}), y_0^{K+1}) + \frac{\gamma}{2} \|\hat{x}_{K+1}(y_0^{K+1}) - x_0^{K+1}\|^2 \bigg\} \\ 
& = \E[{\text{Gap}}_{K+1}(x_0^{K+1}, y_0^{K+1})] +  \E[f(\hat{x}_{K+1}(y_0^{K+1}), y_0^{K+1})- f(\hat{x}(y_0^{K+1}), y_0^{K+1})] \\
&~~~ + \frac{\gamma}{2} \E[\|\hat{x}_{K+1}(y_0^{K+1}) - x_0^{K+1}\|^2] \\
& \leq \hat{\epsilon} +  \frac{16 \rho \hat{\epsilon}}{\mu_x} + 8\hat{\epsilon} \leq O\left(\frac{\rho\hat{\epsilon}}{\mu_x} +  \hat{\epsilon}\right).
\end{split} 
\end{align*} 
\end{small}

To have ${\text{Gap}}(x_0^{K+1}, y_0^{K+1})  \leq \epsilon$, we need $\hat{\epsilon} \leq O\left(\left(\frac{\rho}{\mu_x} + 1\right)^{-1}\epsilon\right)$. 
Plug $\hat{\epsilon}$ into Theorem \ref{thm:ogda_primal},  \ref{thm:ada_primal}, \ref{thm:gda_primal},
we can prove Corollary \ref{cor:ogda_duality},  \ref{cor:ada_dual}, \ref{cor:gda_duality}, respectively. 
\end{proof} 
\vspace{-0.2in} 
\section{Proof of Corollary \ref{cor:ogda_dual_rho<mu}, \ref{cor:gda_duality_rho<mu}, \ref{cor:ada_dual_rho<mu}}
\begin{proof}  
Let $(x_*, y_*)$ denote a saddle point solution of  $\min\limits_{x\in \mathbb{R}^d} \max\limits_{y\in \mathcal{Y}} f(x, y)$ and $x_{K+1}^* = \min\limits_{x\in \mathbb{R}^d} P_k(x)$.
Note that $x_0^{K+1} = x_{K}, y_0^{K+1} = \by_K$.
Suppose $\E[{\text{Gap}}_{K+1}(x_0^{K+1}, y_0^{K+1})] \leq \hat{\epsilon}$ after $K$ stages.

By the setting $\rho \leq \frac{\mu}{8}$ and $\gamma = \frac{\mu}{4}$,
$f_k(x, y)$ is $\frac{\mu}{8}$-strongly convex and $\mu_y$-strongly concave. 
By Lemma \ref{lem:Yan1}, we know that 
\begin{align} 
\begin{split} 
\E[\|\hat{x}_{K+1}(y_0^{K+1}) - x_0^{K+1}\|^2] \leq \frac{64}{\mu} \E[{\text{Gap}}_{K+1}(x_0^{K+1}, y_0^{K+1})] \leq \frac{64\hat{\epsilon}}{\mu}. 
\end{split} 
\end{align} 
Since $\nabla_x {f}_{K+1}(\hat{x}_{K+1}(y_0^{K+1}), y_0^{K+1}) = \nabla_x f(\hat{x}_{K+1}(y_0^{K+1}), y_0^{K+1}) + \gamma (\hat{x}_{K+1}(y_0^{K+1}) - x_0^{K+1}) = 0$, we have 
\begin{align}
\begin{split}
\E[\|\nabla_x f(\hat{x}_{K+1}(y_0^{K+1}), y_0^{K+1})\|^2] = \gamma^2 \E[\|\hat{x}_{K+1}(y_0^{K+1}) - x_0^{K+1}\|^2] \leq 4\mu\hat{\epsilon}. 
\end{split} 
\end{align}

Using the $\mu_x$-PL condition of $f(\cdot, y_0^{K+1})$ in $x$,
\begin{align*} 
\begin{split} 
\E \left[ f(\hat{x}_{K+1}(y_0^{K+1}), y_0^{K+1}) - f(\hat{x}(y_0^{K+1}),  y_0^{K+1}) \right] 
 \leq \E\left[ \frac{\|\nabla_x f(\hat{x}_{K+1}(y_0^{K+1}), y_0^{K+1})\|^2}{2\mu_x} \right] \leq  \frac{2\mu\hat{\epsilon}}{\mu_x}. 
\end{split}
\end{align*}

Hence,
\begin{align*}
\begin{split} 
&\E \left[{\text{Gap}}(x_0^{K+1}, y_0^{K+1})\right] = \E\left[ f(x_0^{K+1}, \hat{y}(x_0^{K+1})) - f(\hat{x}(y_0^{K+1}), y_0^{K+1}) \right] \\ 
& = \E\bigg\{ f(x_0^{K+1}, \hat{y}(x_0^{K+1}))+\frac{\gamma}{2} \|x_0^{K+1} - x_0^{K+1}\|^2 - f(\hat{x}_{K+1}(y_0^{K+1}), y_0^{K+1}) - \frac{\gamma}{2} \|\hat{x}_{K+1}(y_0^{K+1}) - x_0^{K+1}\|^2 \\
&~~~+ f(\hat{x}_{K+1}(y_0^{K+1}), y_0^{K+1})- f(\hat{x}(y_0^{K+1}), y_0^{K+1}) + \frac{\gamma}{2} \|\hat{x}_{K+1}(y_0^{K+1}) - x_0^{K+1}\|^2 \bigg\} \\ 
& = \E[{\text{Gap}}_{K+1}(x_0^{K+1}, y_0^{K+1})] +  \E[f(\hat{x}_{K+1}(y_0^{K+1}), y_0^{K+1})- f(\hat{x}(y_0^{K+1}), y_0^{K+1})] \\
&~~~ + \frac{\gamma}{2} \E[\|\hat{x}_{K+1}(y_0^{K+1}) - x_0^{K+1}\|^2] \\ 
& \leq \hat{\epsilon} +  \frac{2 \mu \hat{\epsilon}}{\mu_x} + 8\hat{\epsilon} \leq O\left(\frac{\mu\hat{\epsilon}}{\mu_x} +   \hat{\epsilon}\right).
\end{split} 
\end{align*} 

To have ${\text{Gap}}(x_0^{K+1}, y_0^{K+1})  \leq \epsilon$, we need $\hat{\epsilon} \leq O\left(\left(\frac{\mu}{\mu_x} +  1\right)^{-1}\epsilon\right)$. 
Plug $\hat{\epsilon}$ into Theorem\ref{thm:2}, \ref{thm:gda_primal_rho<mu}, \ref{thm:ada_primal_rho<mu}
we can prove Corollary \ref{cor:ogda_dual_rho<mu},  \ref{cor:gda_duality_rho<mu}, \ref{cor:ada_dual_rho<mu}, respectively.
\end{proof} 

\section{Analysis of Option IV: PES-Storm} 
In this section, we present the formal version of Theorem \ref{thm:storm_primal_informal} in the Theorem \ref{thm:storm_primal} and show its proof. 
Denote $d_t = (v_t, u_t)$, where the component $v_t$ is corresponding to primal variable $x$ and the component $u_t$ is corresponding to dual variable. Also denote $\eta = (\eta^x, \eta^y)$, $a=(a_x, a_y)$.

\subsection{Auxiliary Lemmas}
In this subsection, we show some lemmas that are needed to prove Theorem \ref{thm:storm_primal}. 

\begin{lemma}
In Algorithm \ref{alg:subroutine} with Option IV: Storm. 
setting $0<\eta^x \leq \frac{1}{2L}$, we have 
\begin{equation}
\begin{split} 
P(x_{t+1}) - P(x_{t}) \leq -\frac{\eta^x}{4} \|v_t\|^2 + \eta^x \ell^2\|\hat{y}(x_t)-y_t\|^2 + \eta^x \|\nabla_x f(x_t, y_t) - v_t\|^2. 
\end{split} 
\end{equation} 
\label{lem:storm_P}
\end{lemma} 

\begin{proof}
Using the $L$-smoothness of $P(x) = \max\limits_{y' \in \mathcal{Y}} f(x, y')$, 
\begin{equation*} 
\begin{split} 
P(x_{t+1}) &\leq P(x_t) + \langle \nabla P(x_t), x_{t+1}-x_t\rangle
 + \frac{L}{2}\|x_{t+1} - x_t\|^2 \\
&= P(x_t) + \langle \nabla P(x_t) - \nabla_x f(x_t, y_t), x_{t+1}-x_t\rangle + \langle \nabla_x f(x_t, y_t) - v_t, x_{t+1} - x_t\rangle \\
& ~~~ + \langle v_t, x_{t+1} - x_t\rangle + \frac{L (\eta^x)^2}{2} \|v_t\|^2 \\
& \leq P(x_t) + \eta^x \|\nabla P(x_t) - \nabla_x f(x_t, y_t)\|^2
+\frac{1}{4 \eta^x} \|x_{t+1} - x_t\|^2 \\
& ~~~ + \eta^x \| \nabla_x f(x_t, y_t) - v_t \|^2 + \frac{1}{4\eta^x} \|x_{t+1}-x_t\|^2+ \langle v_t, x_{t+1} - x_t\rangle + \frac{L (\eta^x)^2}{2} \|v_t\|^2 \\
& = P(x_t) + \eta^x \|\nabla P(x_t) - \nabla_x f(x_t, y_t)\|^2
    + \frac{\eta^x}{4}\|v_t\|^2 + \eta^x \| \nabla_x f(x_t, y_t) - v_t \|^2 \\
&~~~  + \frac{\eta^x}{4}\|v_t\|^2 
    - \eta^x \|v_t\|^2 + \frac{L (\eta^x)^2}{2} \|v_t\|^2 \\
&\leq P(x_t) + \eta^x \ell^2 \|y_t - \hat{y}(x_t)\|^2 
+ \eta^x \| \nabla_x f(x_t, y_t) - v_t \|^2 
- \frac{\eta^x}{4}\|v_t\|^2,
\end{split} 
\label{primal_recursion} 
\end{equation*} 
where the last inequality uses the setting $\eta^x \leq \frac{1}{2L}$. 
\end{proof}
\begin{lemma} 
\label{lem:storm_var_recur}
In Algorithm \ref{alg:subroutine} with Option IV: Storm, setting $0< a_x, a_y < 1$, we have 
\begin{equation*} 
\begin{split} 
&\E\|\nabla_x f(x_{t+1}, y_{t+1}) - v_{t+1}\|^2 \\ 
&\leq (1-a_x)\E\|\nabla_x f(x_t, y_t) - v_t \|^2 +   8(1-a_x)^2\ell^2 (\|x_{t+1} - x_t\|^2 + \|y_{t+1}-y_t\|^2) 
+ 2a_x^2 \sigma^2,
\end{split}
\end{equation*}
and
\begin{equation*}
\begin{split} 
&\E\|\nabla_y f(x_{t+1}, y_{t+1}) - u_{t+1}\|^2 \\ 
&\leq (1-a_y)\E\|\nabla_y f(x_t, y_t) - u_t \|^2 +   8(1-a_y)^2\ell^2 (\|x_{t+1} - x_t\|^2 + \|y_{t+1}-y_t\|^2) 
+ 2a_y^2 \sigma^2. 
\end{split} 
\end{equation*}
\end{lemma}
\begin{proof} 
By the update rule of $v$, we get
\begin{equation*} 
\begin{split}
&\E\|\nabla_x f(x_{t+1}, y_{t+1}) - v_{t+1}\|^2 \\
&=\E\| \nabla_x f(x_{t+1}, y_{t+1}; \xi_{t+1}) + (1-a_x) v_t -(1-a_x) \nabla_x f(x_{t}, y_t; \xi_{t+1}) - \nabla_x f(x_{t+1}, y_{t+1}) \|^2  \\ 
&\leq \E\|(1-a_x) (v_t - \nabla_x f(x_t, y_t)) 
+(1-a_x) [\nabla_x f(x_t, y_t) - \nabla_x f(x_t, y_t;\xi_{t+1})] \\
&~~~~~~~~~ - [\nabla_x f(x_{t+1}, y_{t+1}) - 
\nabla_x f(x_{t+1}, y_{t+1}; \xi_{t+1})\|^2]  \\ 
&= \E\|(1-a_x) (v_t - \nabla_x f(x_t, y_t)) \|^2 
\\
&~~~ + \E \| (1-a_x)[\nabla_x f(x_t, y_t) - \nabla_x f(x_t, y_t;\xi_{t+1})] 
- [\nabla_x f(x_{t+1}, y_{t+1}) - 
\nabla_x f(x_{t+1}, y_{t+1}; \xi_{t+1})]\|^2 \\
&= \E\|(1-a_x) (v_t - \nabla_x f(x_t, y_t)) \|^2 
\\
&~~~ + \E \| (1-a_x)[\nabla_x f(x_t, y_t) - \nabla_x f(x_t, y_t;\xi_{t+1})] 
- (1-a_x)[\nabla_x f(x_{t+1}, y_{t+1}) - 
\nabla_x f(x_{t+1}, y_{t+1}; \xi_{t+1})] \\
&~~~~~~~~~~ - a_x [\nabla_x f(x_{t+1}, y_{t+1}) - 
\nabla_x f(x_{t+1}, y_{t+1}; \xi_{t+1})] \|^2 \\ 
\end{split} 
\end{equation*}
Then using $\E[\nabla_x f(x_t, y_t)-\nabla_x f(x_t, y_t; \xi_{t+1})] = 0$ and 
$\E[\nabla_x f(x_{t+1}, y_{t+1})-\nabla_x f(x_{t+1}, y_{t+1}; \xi_{t+1})] = 0$, we continue the above inequality as 
\begin{equation*} 
\begin{split}
&\E\|\nabla_x f(x_{t+1}, y_{t+1}) - v_{t+1}\|^2  \leq (1-a_x)\E\|v_t - \nabla_x f(x_t, y_t) \|^2 \\
&~~~ + 2 (1-a_x)^2 \E \|[\nabla_x f(x_t, y_t) - \nabla_x f(x_t, y_t;\xi_{t+1})]  - [\nabla_x f(x_{t+1}, y_{t+1}) - 
\nabla_x f(x_{t+1}, y_{t+1}; \xi_{t+1})] \\
&~~~+ 2a_x^2 \E\| \nabla_x f(x_{t+1}, y_{t+1}) - 
\nabla_x f(x_{t+1}, y_{t+1}; \xi_{t+1})  \|^2 \\
&\leq (1-a_x)\E\|v_t - \nabla_x f(x_t, y_t) \|^2 
+ 4(1-a_x)^2 \E\|\nabla_x f(x_t, y_t) - \nabla_x f(x_{t+1}, y_{t+1})\|^2 \\
&~~~+ 4(1-a_x)^2 \E\|\nabla_x f(x_t, y_t; \xi_{t+1}) - \nabla_x f(x_{t+1}, y_{t+1}; \xi_{t+1})\|^2 + 2a_x^2\sigma^2 \\ 
& \leq (1-a_x)\E\|\nabla_x f(x_t, y_t) - v_t \|^2 + 8(1-a_x)^2\ell^2 (\|x_{t+1} - x_t\|^2 + \|y_{t+1}-y_t\|^2) 
+ 2a_x^2 \sigma^2.
\end{split} 
\label{x_storm}
\end{equation*}  

By similar analysis on $y$-side, we have 
\begin{equation*}
\begin{split} 
&\E\|\nabla_y f(x_{t+1}, y_{t+1}) - u_{t+1}\|^2 \\ 
&\leq (1-a_y)\E\|\nabla_y f(x_t, y_t) - u_t \|^2 +   8(1-a_y)^2\ell^2 (\|x_{t+1} - x_t\|^2 + \|y_{t+1}-y_t\|^2) 
+ 2a_y^2 \sigma^2. 
\end{split}
\label{y_storm} 
\end{equation*}
\end{proof}

The next lemma follows from Lemma 18 of \citep{DBLP:journals/corr/abs-2008-08170}. We include the proof for the sake of completeness. 
\begin{lemma} 
In Algorithm \ref{alg:subroutine} with Option IV, setting $\eta^y \leq \min\{1, \frac{1}{6\ell} \}, \lambda = \frac{1}{6\ell}$, we have 
\begin{equation*}
\begin{split}
\|y_{t+1} - \hat{y}(x_{t+1})\|^2 \leq&  (1-\frac{\mu_y \eta^y \lambda}{4}) \|y_t - \hat{y}(x_t)\|^2 - \frac{3\eta^y \lambda^2}{4} \|u_t\|^2
+ \frac{5\eta^y\lambda}{\mu_y} \|\nabla_y f(x_t, y_t) - u_t\|^2 \\
&+ \frac{5 \ell^2(\eta^x)^2}{\eta^y \lambda \mu_y^3} \|v_t\|^2 .  
\end{split} 
\end{equation*} 
\label{lem:storm_y_star_y}
\end{lemma}

\begin{proof}
Using $\mu_y$-strong concavity of $f(x, y)$ in $y$,
\begin{equation}
\begin{split} 
f(x_t, y) &\leq f(x_t, y_t) + \langle \nabla_y f(x_t, y_t), y-y_t\rangle - \frac{\mu_y}{2} \|y-y_t\|^2 \\ 
&=f(x_t, y_t) + \langle u_t, y-\tilde{y}_{t+1}\rangle 
+ \langle \nabla_y f(x_t, y_t) - u_t, y-\ty_{t+1}\rangle \\
&~~~ + \langle \nabla_y f(x_t, y_t), \ty_{t+1}-y_t\rangle - \frac{\mu_y}{2} \|y - y_t\|^2.  
\end{split}    
\end{equation} 

Using $\ell$-smoothness of $f(x, y)$,
\begin{equation}
\begin{split}
-f(x_t, \ty_{t+1}) \leq -f(x_t, y_t) - \langle \nabla_y f(x_t, y_t), \ty_{t+1} - y_t\rangle + \frac{\ell}{2} \|\ty_{t+1} - y_t\|^2.  
\end{split}    
\end{equation}  

Adding the above two inequalities, we get 
\begin{equation}
\begin{split}
&f(x_t, y) - f(x_t, \ty_{t+1}) \leq \\
&\langle u_t, y-\ty_{t+1}\rangle 
+ \langle \nabla_y f(x_t, y_t) - u_t, y-\ty_{t+1}\rangle - \frac{\mu_y}{2} \|y - y_t\|^2 + \frac{\ell}{2} \|\ty_{t+1} - y_t\|^2. 
\end{split}
\end{equation}

Note that the update of $y$ is
\begin{equation}
\begin{split}
&\Tilde{y}_{t+1} = \mathcal{P}_{\mathcal{Y}}(y_t + \lambda u_t), \\
&y_{t+1} = y_t + \eta^y (\Tilde{y}_{t+1} - y_t), 
\end{split}
\end{equation}
where $\lambda = \frac{1}{6\ell}$.  
Since $\ty_{t+1} = \mathcal{P}_{\mathcal{Y}}(y_t+\lambda u_t) = \arg\min_{y\in \mathcal{Y}} \frac{1}{2} \|y-y_t - \lambda u_t \|^2  $ and $\frac{1}{2}\|y - y_t - \lambda u_t\|^2$ is convex in $y$, we have  
\begin{equation} 
\langle \ty_{t+1}-y_t-\lambda u_t, y-\ty_{t+1}\rangle \geq 0, y\in \mathcal{Y}. 
\end{equation}

Then we get
\begin{equation*}
\begin{split} 
&\langle u_t, y - \ty_{t+1}\rangle \leq \frac{1}{\lambda} 
\langle \ty_{t+1}-y_t, y-\ty_{t+1} \rangle 
= \frac{1}{\lambda} 
\langle \ty_{t+1}-y_t, y_t-\ty_{t+1} \rangle + \frac{1}{\lambda} \langle \ty_{t+1}-y_t, y-y_{t} \rangle \\
& = - \frac{1}{\lambda} \| \ty_{t+1}-y_t \|^2 +  \frac{1}{\lambda} \langle \ty_{t+1}-y_t, y-y_{t} \rangle.  
\end{split} 
\end{equation*} 

Thus, 
\begin{equation}
\begin{split}
f(x_t, y) - f(x_t, \ty_{t+1})  & \leq - \left(\frac{1}{\lambda} - \frac{\ell}{2} \right) \|\ty_{t+1}-y_t\|^2 +  \frac{1}{\lambda} \langle \ty_{t+1}-y_t, y-y_{t} \rangle \\ 
&~~~ + \langle \nabla_y f(x_t, y_t) - u_t,  y-\ty_{t+1}\rangle - \frac{\mu_y}{2} \|y - y_t\|^2. 
\end{split} 
\end{equation}  

Plugging in $y = \hat{y}(x_t)$, 
\begin{equation}
\begin{split}
0\leq f(x_t, \hat{y}(x_t)) - f(x_t, \ty_{t+1}) & \leq - \left(\frac{1}{\lambda} - \frac{\ell}{2}\right) \|\ty_{t+1}-y_t\|^2 + \frac{1}{\lambda} \langle \ty_{t+1}-y_t, \hat{y}(x_t)-y_{t} \rangle \\
&~~~ + \langle \nabla_y f(x_t, y_t) - u_t, \hat{y}(x_t) - \ty_{t+1}\rangle - \frac{\mu_y}{2} \|\hat{y}(x_t) - y_t\|^2.
\end{split} 
\end{equation}  

By $y_{t+1} = y_t + \eta^y (\Tilde{y}_{t+1} - y_t)$, we have 
\begin{equation*}
\begin{split}
&\|y_{t+1} - \hat{y}(x_t)\|^2 = \|y_t + \eta^y (\ty_{t+1}-y_t) - \hat{y}(x_t)\|^2 \\
&=\|y_t - \hat{y}(x_t)\|^2 + 2 \eta^y\langle \ty_{t+1}-y_t, y_{t} -  \hat{y}(x_t) \rangle + (\eta^y)^2 \|\ty_{t+1} - y_t\|^2 \\ 
& \leq \|y_t  - \hat{y}(x_t)\|^2 + (\eta^y)^2 \|\ty_{t+1} - y_t\|^2 
-\eta^y (2 - \ell\lambda) \|\ty_{t+1} - y_t\|^2 \\
&~~~ + 2 \eta^y \lambda \langle \nabla_y f(x_t, y_t) - u_t,  \hat{y}(x_t) - \ty_{t+1}\rangle - \mu_y \eta^y \lambda \|\hat{y}(x_t) - y_t\|^2 \\ 
&\leq \|y_t  - \hat{y}(x_t)\|^2 - (2\eta^y -(\eta^y)^2 - \ell \lambda \eta^y) \|\ty_{t+1}-y_t\|^2 \\ 
&~~~ +2\eta^y \lambda \left[ \frac{2}{\mu_y} \|\nabla_y f(x_t, y_t) - u_t\|^2 + \frac{\mu_y}{8}\|\hat{y}(x_t) - y_{t+1}\|^2 \right] - {\mu_y \eta^y \lambda} \|\hat{y}(x_t) - y_t\|^2 \\  
& \leq (1-{\mu_y \eta^y \lambda}) \|y_t  - \hat{y}(x_t)\|^2 - (2\eta^y-(\eta^y)^2-\ell\lambda\eta^y) \|\ty_{t+1} - y_t\|^2 + \frac{\eta^y \mu_y \lambda}{2}\|\hat{y}(x_t) - y_t\|^2 \\ 
&~~~ + \frac{\eta^y \mu_y \lambda}{2}\|y_t - \ty_{t+1}\|^2 +  \frac{4\eta^y \lambda}{\mu_y} \|\nabla_y f(x_t, y_t) - u_t\|^2 \\ 
&\leq (1-\frac{\mu_y \eta^y \lambda}{2})\|y_t-\hat{y}(x_t)\|^2 -  (2\eta^y - (\eta^y)^2 - \ell\lambda \eta^y - \frac{\eta^y \mu_y \lambda}{2}) \|\ty_{t+1} - y_t\|^2 + \frac{4\eta^y \lambda}{\mu_y} \|\nabla_y f(x_t, y_t) - u_t\|^2 \\ 
&\leq (1-\frac{\mu_y \eta^y \lambda}{2})\|y_t-\hat{y}(x_t)\|^2 - \frac{3}{4} \eta^y \|\ty_{t+1} - y_t\|^2 + \frac{4\eta^y \lambda}{\mu_y} \|\nabla_y f(x_t, y_t) - u_t\|^2 \\
&=(1-\frac{\mu_y \eta^y \lambda}{2})\|y_t-\hat{y}(x_t)\|^2 - \frac{3 \eta^y \lambda^2}{4} \|u_t\|^2 + \frac{4\eta^y  \lambda}{\mu_y} \|\nabla_y f(x_t, y_t) - u_t\|^2,
\end{split}    
\end{equation*}  
where the last inequality holds because $\mu_y\leq \ell, \eta^y \leq \min\{1, \frac{1}{6\ell}\}$.
Using  the above inequalities, we get
\begin{equation*}
\begin{split}
&\|y_{t+1} - \hat{y}(x_{t+1})\|^2 = \|y_{t+1} - \hat{y}(x_t) + \hat{y}(x_t) - \hat{y}(x_{t+1})\|^2 \\
&\leq (1+\frac{\eta^y \mu_y \lambda}{4})\|y_{t+1} - \hat{y}(x_t)\|^2 + (1+\frac{4}{\eta^y \mu_y \lambda})\|\hat{y}(x_t) - \hat{y}(x_{t+1})\|^2 \\ 
&\leq (1+\frac{\eta^y \mu_y \lambda}{4})\|y_{t+1} - \hat{y}(x_t)\|^2 
+(1+\frac{4}{\eta^y \mu_y \lambda}) \frac{\ell^2}{\mu_y^2} \|x_{t+1} - x_t\|^2 \\  
&\leq (1-\frac{\mu_y \eta^y \lambda}{2}) (1+\frac{\eta^y \mu_y \lambda}{4}) \|y_t - \hat{y}(x_t)\|^2 - \frac{3\eta^y\lambda^2}{4} \|u_t\|^2 \\ 
&~~~+ (1+\frac{\eta^y \mu_y \lambda}{4}) \frac{4\eta^y \lambda }{\mu_y} \|\nabla_y f(x_t, y_t) - u_t\|^2 +
(1+\frac{4}{\eta^y \mu_y \lambda}) \frac{\ell^2}{\mu_y^2} (\eta^x)^2 \|v_t\|^2 \\
&\leq (1-\frac{\mu_y \eta^y \lambda}{4}) \|y_t - \hat{y}(x_t)\|^2 - \frac{3\eta^y \lambda^2}{4} \|u_t\|^2
+ \frac{5\eta^y\lambda}{\mu_y} \|\nabla_y f(x_t, y_t) - u_t\|^2 
+ \frac{5 \ell^2 (\eta^x)^2}{\eta^y \lambda \mu_y^3} \|v_t\|^2,
\end{split}    
\label{y_recursion} 
\end{equation*}  
where the second inequality is because $\hat{y}(\cdot)$ is $\frac{\ell}{\mu_y}$-Lipshitz \citep{lin2019gradient}. 
\end{proof} 

The following lemma analyze the convergence of one stage in PES-Storm.

\begin{lemma}
By setting $\eta^x = \frac{\mu_2^2}{1000 \ell^2} \eta^y, a_x = \frac{800\ell}{\mu_y} (\eta^y)^2, a_y = \frac{800\ell}{\mu_y}  (\eta^y)^2 $, $\eta^y \leq O(\frac{1}{30}\sqrt{\frac{\mu_y}{\ell}})$ to ensure $0< a_x, a_y < 1$, one stage of Algorithm \ref{alg:subroutine} with Option IV: Storm returns an solution $(x_\tau, y_\tau)$ such that 
\begin{equation*}
\begin{split}
&\E\|y_\tau - \hat{y}(x_\tau)\|^2 + \frac{\eta^y}{\eta^x} \E[\|\nabla_x f(x_\tau, y_\tau) - v_\tau\|^2]  
+  \frac{\eta^y}{\eta^x} \E[\|\nabla_y f(x_\tau, y_\tau) - v_\tau\|^2] + \frac{1}{8} \E\|v_\tau\|^2 ]\\
&\leq \frac{\Gamma_1 - \Gamma_{T+1}}{\eta^x T} + \frac{4C \ell (\eta^y)^3 \sigma^2 }{\mu_y \eta^x}, 
\end{split}
\end{equation*}
where $C=1600$ and $\tau$ is sampled from $1, ..., T$. 
\end{lemma}

\begin{proof}
Defining a Lyapunov function as in \citep{DBLP:journals/corr/abs-2008-08170}, 
\begin{equation*} 
\Gamma_t = P(x_t) + \frac{\mu_y}{\ell} 
\left( 9\ell^2 \|y_t - \hat{y}(x_t)\|^2 + \frac{1}{\eta^y} \|\nabla_x f(x_t, y_t) - v_t\|^2 + \frac{1}{\eta^y} \|\nabla_y f(x_t, y_t) - u_t\|^2 \right). 
\end{equation*}

Then we have
\begin{equation*}
\begin{split}
&\Gamma_{t+1} - \Gamma_t \\
&= P(x_{t+1})-P(x_t) 
+ \frac{9\mu_y}{\ell} \ell^2 (\|y_{t+1}-\hat{y}(x_{t+1})\|^2 - \|y_t - \hat{y}(x_t)\|^2) \\
&~~~+ \frac{\mu_y}{\ell} \left(\frac{1}{\eta^y} \|\nabla_x f(x_{t+1}, y_{t+1}) - v_{t+1}\|^2-\frac{1}{\eta^y}  \|\nabla_x f(x_t, y_t) - v_t\|^2 \right) \\
&~~~ + \frac{\mu_y}{\ell} \left(\frac{1}{\eta^y} \|\nabla_y f(x_{t+1}, y_{t+1}) - u_{t+1}\|^2-\frac{1}{\eta}  \|\nabla_y f(x_t, y_t) - u_t\|^2 \right) \\ 
&\leq -\frac{\eta^x}{4} \|v_t\|^2 +  \eta^x \ell^2 \|\hat{y}(x_t)-y_t\|^2 
+ \eta^x \|\nabla_x f(x_t, y_t)-v_t\|^2 \\  
&~~~ + \frac{9\mu_y}{\ell} \ell^2 \left(-\frac{\mu_y\eta^y\lambda}{4}\|y_t - \hat{y}(x_t)\|^2 - \frac{3\eta^y \lambda^2 }{4} \|u_t\|^2 + \frac{5\eta^y \lambda}{\mu_y} \|\nabla_y f(x_t, y_t) - u_t\|^2 +  \frac{5\ell^2(\eta^x)^2}{\eta^y \lambda \mu_y^3} \|v_t\|^2 \right) \\ 
&~~~ - \frac{\mu_y a_x}{ 16 \ell \eta^y} \E[\|\nabla_x f(x_t, y_t)-v_t\|^2] 
+ \frac{\mu_y \ell^2}{2 \ell \eta^y} ((\eta^x)^2 \E[\|v_t\|^2] + (\eta^y)^2\lambda^2 \E[\|u_t\|^2]) + \frac{\mu_y a_x^2 \sigma^2}{8 \ell \eta^y} \\  
&~~~ -\frac{\mu_y a_y}{16 \ell \eta^y} \E[\|\nabla_y f(x_t, y_t)-u_t\|^2] 
+ \frac{\mu_y \ell^2}{2 \ell \eta^y} ((\eta^x)^2 \E[\|v_t\|^2] + (\eta^y)^2 \lambda^2 \E[\|u_t\|^2]) + \frac{\mu_y a_y^2 \sigma^2}{8 \ell \eta^y} \\
&\leq (-\frac{9\mu_y^2 \ell \lambda \eta^y}{4} + \eta^x\ell^2) \|y_t -  \hat{y}(x_t)\|^2  \\ 
& +  (\eta^x - \frac{\mu_y a_x}{16 \ell\eta^y} )  \E[\|\nabla_x f(x_t, y_t) - v_t\|^2] 
+ (45 \eta^y \ell \lambda - \frac{\mu_y a_y }{16 \ell \eta^y})  \E[\|\nabla_y f(x_t, y_t)-u_t\|^2] \\  
& - (\frac{\eta^x}{4} - \frac{\mu_y \ell (\eta^x)^2}{\eta^y} 
- \frac{45 \ell^3 (\eta^x)^2}{\eta^y \lambda \mu_y^2} ) \E \|v_t\|^2 
+ ( \mu_y \ell \eta^y  - 9\mu_y \ell \frac{3\eta^y}{4})\lambda^2 \E[\|u_t\|^2] + \frac{2\mu_y a_x^2 \sigma^2}{\ell\eta^y} + \frac{2\mu_y a_y^2 \sigma^2}{\ell \eta^y},    \\ 
\end{split}    
\end{equation*} 
where the first inequality uses Lemma \ref{lem:storm_P}, Lemma \ref{lem:storm_var_recur}, Lemma \ref{lem:storm_y_star_y}. 

Taking $\eta^x = \frac{\mu_2^2}{1000 \ell^2} \eta^y, a_x = \frac{800\ell}{\mu_y} (\eta^y)^2, a_y = \frac{800\ell}{\mu_y}  (\eta^y)^2 $, $\eta^y \leq O(\sqrt{\frac{\mu_y}{\ell}})$ to ensure $0< a_x, a_y < 1$,  we get  
\begin{equation}
\begin{split}
\Gamma_{t+1} - \Gamma_t \leq&  - \eta^x  
\|y_t - \hat{y}(x_t)\|^2 - \eta^y \E[\|\nabla_x f(x_t, y_t) - v_t\|^2]  \\
& - \eta^y \E[\|\nabla_y f(x_t, y_t) - v_t\|^2] 
- \frac{1}{8} \eta^x \E\|v_t\|^2 + \frac{4 C \ell (\eta^y)^3 \sigma^2}{\mu_y},
\end{split} 
\end{equation}
where $C=1600$. 

Thus, 
\begin{equation}
\begin{split}
&\eta^x \|y_t - \hat{y}(x_t)\|^2 +\eta^y \E[\|\nabla_x f(x_t, y_t) - v_t\|^2]  
+  \eta^y \E[\|\nabla_y f(x_t, y_t) - v_t\|^2] + \frac{1}{8} \eta^x \E\|v_t\|^2 \\
&\leq \Gamma_t - \Gamma_{t+1} + \frac{4 C \ell (\eta^y)^3 \sigma^2 }{\mu_y}. 
\end{split}
\end{equation}

Taking average over $t=1,.., T$,
\begin{equation}
\begin{split}
&\frac{1}{T} \sum\limits_{t=1}^{T} [
\|y_t - \hat{y}(x_t)\|^2 + \frac{\eta^y}{\eta^x} \E[\|\nabla_x f(x_t, y_t) - v_t\|^2]  
+  \frac{\eta^y}{\eta^x} \E[\|\nabla_y f(x_t, y_t) - v_t\|^2] + \frac{1}{8} \E\|v_t\|^2 ] \\
&\leq \frac{\Gamma_1 - \Gamma_{T+1}}{\eta^x T} + \frac{4C \ell (\eta^y)^3 \sigma^2 }{\mu_y \eta^x} .
\end{split}
\end{equation}

Randomly sample $\tau$ from $1, ..., T$, we obtain
\begin{equation*}
\begin{split}
&\E\|y_\tau - \hat{y}(x_\tau)\|^2 + \frac{\eta^y}{\eta^x} \E[\|\nabla_x f(x_\tau, y_\tau) - v_\tau\|^2]  
+  \frac{\eta^y}{\eta^x} \E[\|\nabla_y f(x_\tau, y_\tau) - v_\tau\|^2] + \frac{1}{8} \E\|v_\tau\|^2 ] \\
&\leq \frac{\Gamma_1 - \Gamma_{T+1}}{\eta^x T} + \frac{4C \ell (\eta^y)^3 \sigma^2 }{\mu_y \eta^x}. 
\end{split}
\end{equation*} 
\end{proof}

Theorem \ref{thm:storm_primal_informal} is formally restated as follows: 
\begin{theorem}[Formal version of Theorem \ref{thm:storm_primal_informal}] 
Assume Assumption \ref{ass1}, \ref{ass2}, \ref{ass3}, \ref{ass6} hold. Define a constant $\epsilon_1 = \frac{ C \ell^2 \sigma^2 }{2\mu \mu^2_y}$ and $\epsilon_k = \epsilon_1/2^k$, where $C=1600$. 
By setting $\eta^y_{k} = \min\{\frac{1}{30} \sqrt{\frac{\mu_y}{\ell}},   \sqrt{\frac{\mu\mu_y^3\epsilon_k}{320C \ell^3 \sigma^2 }}\}$, $\eta^x_{k} = \frac{\mu_y^2}{1000\ell^2} \eta^y_{k}$,
$T_k = O\left(\max\{ \frac{1}{\mu \eta^x_{k}}, \frac{\mu_y^3}{\ell^3 \eta^x_{k} \eta^y_{k}}\} \right)$,
after $K=O(log(\epsilon_1/\epsilon))$ stages, $\E[P(\bx_k) - P(x_*)] \leq \epsilon$. The total stochastic first-order oracle call complexity is 
$\widetilde{O} \left( \frac{\ell^{7/2}}{\mu^{3/2} \mu_y^{7/2} \epsilon^{1/2}} +  \frac{\ell^2}{\mu \mu_y^2 \epsilon}\right)$.
\label{thm:storm_primal}
\end{theorem} 

\begin{proof}
Without loss of generality, let us assume that the initialization of the first stage $P(x_0^1) - P(x_*) = \epsilon_0 \geq \frac{C \ell^2 \sigma^2 }{2\mu \mu^2_y}$, i.e. 
$\sqrt{\frac{\mu \mu_y^3 \epsilon_0}{320 C\ell^3 \sigma^2}} > \frac{1}{30}\sqrt{\frac{\mu_y}{\ell} }$. 
The case where $\epsilon_0 \leq \frac{320 C \ell^2 \sigma^2 }{\mu \mu^2_y}$ can be simply covered by our proof.
Then denote $\epsilon_1 = \frac{ C \ell^2 \sigma^2 }{2 \mu \mu^2_y}$ and $\epsilon_k = \epsilon_1/2^k$. 

Let's consider the first stage, we have initialization such that $P(x_0) - P(x_*) = \epsilon_0$ and $\E[\|\nabla_x f(x_0, y_0) - v_0\|^2 + \|\nabla_y f(x_0, y_0) - u_0\|^2]\leq \sigma^2$.

We bound the error of the first stage's output as follows
\begin{equation*}  
\begin{split}  
&\E\ell^2 \|\by_1 - \hat{y}(\bx_1)\|^2 + \frac{\eta^y_1}{\eta^x_1} \E[\|\nabla_x f(\bx_1, \by_1) - \bar{v}_1\|^2]  
+  \frac{\eta^y_1}{\eta^x_1} \E[\|\nabla_y f(\bx_1, \by_1) - \bar{v}_1\|^2] + \frac{1}{8} \E\|\bar{v}_1\|^2 ] \\
&\leq \frac{\E[P(x_{0})] - P(x_*)}{\eta^x{1} T_1} + 
\frac{\mu_y \ell^2 \|y_{0}-\hat{y}(x_{0})\|^2 }{\ell \eta^x_{1} T_1}
+ \frac{\mu_y}{\ell \eta^y_{1} \eta^x_{1} T_1} \|\nabla_x f(\bx_1, \by_1) - \bar{v}_1\|^2 \\
&~~~~ 
+ \frac{\mu_y}{\ell \eta^y_{1} \eta^x_{1} T_1} \|\nabla_y f(\bx_1, \by_1) - \bar{u}_1\|^2 
+ \frac{4C \ell (\eta^y_{1})^3 \sigma^2 }{\mu_y \eta^x_{1}} \\ 
&\leq \frac{\mu \epsilon_1}{16}, 
\end{split}
\end{equation*}
where the last inequality is by the setting $\eta^x_{1} = \frac{\mu_y^2}{1000 \ell^2} \eta^y_{1}$, $\eta^y_{1} = \sqrt{\frac{\mu_y}{\ell}}$ and \\
$T_1 = O\left( \max\{\frac{\epsilon_0}{\eta^x_{1} \mu \epsilon_1}, \frac{\mu_y \sigma^2}{\mu \eta^x_{1} \eta^y_{1} \epsilon_1}, \frac{\mu_y\ell D}{\eta^x_{1} \mu \epsilon_1} \} \right)$, which is in the order of a constant and where $D$ denotes the diameter of $\mathcal{Y}$.  This result implies that
\begin{equation} 
\begin{split} 
& \E\ell^2\|\by_1 - \hat{y}(\bx_1)\|^2 \leq \frac{\mu \epsilon_1}{16}, \\
& \E[\|\nabla_x f(x_\tau, y_\tau) - v_\tau\|^2]  
+   \E[\|\nabla_y f(x_\tau, y_\tau) - v_\tau\|^2] \leq \frac{\mu \mu_y^2 \epsilon_1}{16 \ell^2},   \\
& \E\|\bar{v}_1\|^2 \leq \frac{\mu\epsilon_1}{2}. 
\end{split}
\end{equation}

Using the $\mu$-PL condition of $P(x)$,
\begin{equation*} 
\begin{split} 
P(\bx_1) - P(x_*) &\leq \frac{1}{2\mu} \|\nabla P(\bx_1)\|^2
= \frac{1}{2\mu} \|\nabla P(\bx_1) - \nabla_x f(\bx_1, \by_1) + \nabla_x f(\bx_1, \by_1) - \bar{v}_1 + \bar{v}_1\|^2 \\
&\leq \frac{1}{2\mu} (3\ell^2 \|\by_1 - \hat{y}(\bx_1)\|^2 + 3\|\nabla_x f(x_t, y_t) - v_t\|^2 + 3\|v_t\|^2) \leq \epsilon_1,
\end{split}
\end{equation*}
where the second inequality has used $\nabla P(x) = \nabla f(x, \hat{y}(x))$, which is by the Lemma \ref{lem:primal_gradient_danskin}.

Starting from the second stage, we will prove by induction.
Suppose the initialization of $k$-th stage ($k\geq 2$) satisfies 
$\E[P(\bx_{k-1}) - P(x_*)] \leq \epsilon_{k-1}$, 
$\E[\|\nabla_x f(\bx_{k-1}, \by_{k-1}) - v_{k-1}\|^2 + \|\nabla_y f(\bx_{k-1}, \by_{k-1}) - u_{k-1}\|^2] \leq \frac{\mu \mu_y^2 \epsilon_{k-1}}{\ell^2}$. 
The error of the output of $k$-th stage can be bounded as 
\begin{equation}
\begin{split}
&\E\ell^2 \|\by_k - \hat{y}(\bx_k)\|^2 + \frac{\eta^y_{k}}{\eta^x_{k}} \E[\|\nabla_x f(\bx_k, \by_k) - v_k\|^2]  
+  \frac{\eta^y_{k}}{\eta^x_{k}} \E[\|\nabla_y f(\bx_k, \by_k) - v_k\|^2] + \frac{1}{8} \E\|v_k\|^2 ] \\
&\leq \frac{\E[P(\bx_{k-1})] - P(x_*)}{\eta^x_{k} T_k} + 
\frac{\mu_y \ell^2 \|\by_{k-1}-\hat{y}(\bx_{k-1})\|^2 }{\ell \eta^x_{k} T_k}
+ \frac{\mu_y}{\ell \eta^y_{k} \eta^x_{k} T_k} \|\nabla_x f(\bx_k, \by_k) - v_k\|^2 \\
&~~~~ 
+ \frac{\mu_y}{\ell \eta^y_{k} \eta^x_{k} T_k} \|\nabla_y f(\bx_k, \by_k) - u_k\|^2 
+ \frac{4C \ell (\eta^y_{k})^3 \sigma^2 }{\mu_y \eta^x_{k}} \\
& 
\leq \frac{\epsilon_{k-1}}{\eta^x_{k} T_k} + \frac{\mu_y \ell^2 \mu \epsilon_{k-1} }{\ell \eta^x_{k} T_k}
+ \frac{\mu \mu_y^3 \epsilon_{k-1}}{\ell^3 \eta^y_{k} \eta^x_{k} T_k}
+ \frac{\mu \mu_y^3 \epsilon_{k-1}}{\ell^3 \eta^y_{k} \eta^x_{k} T_k } 
+ \frac{4C \ell (\eta^y_{k})^3 \sigma^2 }{\mu_y \eta^x_{k}} \\ 
& \leq \frac{\mu \epsilon_k}{16},  
\end{split}
\end{equation}
where the last inequality is due to the setting $\eta^y_{k} = \sqrt{ \frac{\mu \mu_y^3 \epsilon_k}{320 C \ell^3 \sigma^2} }$, $\eta^x_{k} = \frac{\mu_y^2}{1000 \ell^2} \eta^y_{k}$, $T_k=O\left(\max\left\{\frac{1}{\mu  \eta^x_{k}}, \frac{\mu_y^3}{\ell^3 \eta^x_{k} \eta^y_{k}} \right\}  \right)$. 
 
Similar to in the analysis of first stage, this result implies that
\begin{equation} 
\begin{split} 
& \E[\ell^2\|\by_k - \hat{y}(\bx_k)\|^2] \leq \frac{\mu \epsilon_{k-1}}{16}, \\ 
& \E[\|\nabla_x f(\bx_k, \by_k) - v_k\|^2]  
+   \E[\|\nabla_y f(\bx_k, \by_k) - v_k\|^2] \leq \frac{\mu \mu_y^2 \epsilon_k}{16 \ell^2},   \\
& \E\|v_k\|^2 \leq \frac{\mu\epsilon_k}{2}.  
\end{split}
\end{equation}
Using the $\mu$-PL condition of $P(x)$, we obtain 
\begin{equation}
\begin{split}
P(\bx_k) - P(x_*) &\leq \frac{1}{2\mu} \|\nabla P(\bx_k)\|^2
= \frac{1}{2\mu} \|\nabla P(\bx_k) - \nabla_x f(\bx_k, \by_k) + \nabla_x f(\bx_k, \by_k) - v_k + v_k\|^2 \\
&\leq \frac{1}{2\mu} (3\ell^2 \|\by_k - \hat{y}(\bx_k)\|^2 + 3\|\nabla_x f(\bx_k, \by_k) - v_k\|^2 + 3\|v_k\|^2) \leq \epsilon_k . 
\end{split}
\end{equation}

By induction we know that after $K = 1+\log(\epsilon_1/\epsilon)$ stages, $P(\bx_K) - P(x_*) \leq 0$. 
Total complexity is  
\begin{equation}
\begin{split} 
\sum\limits_{k=1}^{K} T_k 
&= O\left(  \sum\limits_{k=2}^K \left( \frac{1}{\mu \eta^x_{k}} + \frac{ \mu_y^3}{\ell^3 \eta^x_{k} \eta^y_{k}} \right)  \right) \\ 
&= O\left(  \sum\limits_{k=2}^K \left( \frac{\ell^2}{\mu \mu_y^2 \sqrt{\mu \mu_y^3 \epsilon_k / \ell^3} } + \frac{ \mu_y^3 \ell^2}{ \mu_y^2 \mu \mu_y^3 \epsilon_k } \right)  \right) \\ 
&= \widetilde{O} \left( \frac{\ell^{7/2}}{\mu^{3/2} \mu_y^{7/2} \epsilon^{1/2}} +  \frac{\ell^2}{\mu \mu_y^2 \epsilon}\right).   
\end{split}
\end{equation}

\end{proof}

In the following corollary, we analyze the convergence of duality gap by PES-Storm.

\begin{corollary}
Under the same setting as in Theorem \ref{thm:storm_primal} and suppose Assumption \ref{ass6} as well. To achieve $\E[\emph{\text{Gap}}(\bx_K, \by_K)] \leq \epsilon$, the total number of stochastic first-order oracle calls is $\widetilde{O}\left(\frac{\ell^{9/2}}{\mu^{3/2}\mu_x^{1/2}\mu_y^{9/2} \epsilon^{1/2}} + \frac{\ell^4}{\mu \mu_x \mu_y^3 \epsilon}\right)$.
\label{cor:storm_duality}
\end{corollary}

\begin{proof}
Assume after $K$ stages, we have the output $\bx_K,\by_K$ such that 
\begin{equation}
\begin{split}
P(\bx_K) - P(x_*) &\leq \frac{1}{2\mu} \| \nabla P(\bx_K)\|^2  \\
&\leq \frac{1}{2\mu}\left[ 3\ell^2 \|\by_K - \hat{y}(\bx_K)\|^2 + 3\|\nabla_x f(\bx_K, \by_K) - v_K\|^2 + 3\|v_K\|^2 \right]\\ 
&\leq  \hat{\epsilon},
\end{split} 
\end{equation} 
and 
\begin{equation}
\E\|\by_K - \hat{y}(\bx_K)\|^2 \leq \frac{\mu\hat{\epsilon}}{16\ell^2}. 
\end{equation}

Hence, by the strong concavity,
\begin{equation} 
\begin{split}
\|\hat{y}(\bx_K) - y_*\|^2 &\leq \frac{f(\bx_K, \hat{y}(\bx_K)) - f(\bx_K, \by_K)}{2\mu_y} \\
&\leq \frac{f(\bx_K, \hat{y}(\bx_K)) - f(x_*, y_*) + f(x_*, y_*) - f(\bx_K, \by_K)}{2\mu_y} \\
&\leq  \frac{f(\bx_K, \hat{y}(\bx_K)) - f(x_*, y_*)}{2\mu_y} \\
&\leq \frac{P(\bx_K) - P(x_*)}{2\mu_y} \\
&\leq \frac{\hat{\epsilon}}{2\mu_y}.   
\end{split}
\end{equation}

Thus, 
\begin{equation}
\begin{split}
\|\by_K - y_*\|^2 \leq 2\|\by_K - \hat{y}(\bx_K)\|^2 + 2\|\hat{y}(\bx_K) - y_*\|^2 
\leq \frac{\hat{\epsilon}}{\mu_y}.  
\end{split}
\end{equation}

And the dual function $D(y) = \min_{x'} f(x', y)$ is $\ell + \frac{\ell^2}{\mu_x}\leq \frac{2\ell^2}{\mu_x}$, where $\mu_x$ is the $x$-side PL condition coefficient. 
Therefore, we have
\begin{equation}
\begin{split} 
f(x_*, y_*) - f(\hat{x}(\by_K), \by_K) = D(y_*) - D(\by_K) \leq \frac{2\ell^2}{\mu_x} \|\by_K - y_*\|^2 \leq \frac{\ell^2 \hat{\epsilon}}{\mu_x \mu_y}. 
\end{split}
\end{equation}

Then we know the duality gap is 
\begin{align}
\begin{split}
f(\bx_K, \hat{y}(\bx_K)) - f(\hat{x}(\by_K), \by_K) 
&= f(\bx_K, \hat{y}(\bx_K)) - f(x_*, y_*) + f(x_*, y_*) - f(\hat{x} (\by_K), \by_K)  \\
&\leq \hat{\epsilon} + \frac{\ell^2\hat{\epsilon}}{\mu_x \mu_y}. 
\end{split} 
\end{align}
To make the duality gap less than $\epsilon$, we need $\hat{\epsilon} \leq O(\frac{\mu_x\mu_y \epsilon}{\ell^2})$. Therefore, it takes\\ $\widetilde{O}\left(\frac{\ell^{9/2}}{\mu^{3/2}\mu_x^{1/2}\mu_y^{9/2} \epsilon^{1/2}} + \frac{\ell^4}{\mu \mu_x \mu_y^3 \epsilon}\right)$ to have a  $\epsilon$-duality gap.
\end{proof}

\section{Justification of PL condition}  
In this section, we show the analysis of cases where the $x$-side PL condition can hold, and show the properties that follow from the $x$-side PL condition.
We need to introduce a auxiliary lemma as follows.
\begin{lemma}[Corallary 5.1 of \citep{li2018calculus}]
\label{lem:guoyin_PL}
Suppose $h(x) = g(Ax)$, where $A$ is a matrix and $g(\cdot)$ is strongly convex, then $h(x)$ satisfies a $\mu$-PL condition. 
\end{lemma}

\subsection{Proof of Lemma \ref{lem:pl_auc}} 
Here we prove the Lemma \ref{lem:pl_auc} which justifies the PL condition for the non-convex AUC maximization problem.
\begin{proof}[Proof of Lemma \ref{lem:pl_auc}] 
Before diving into analyzing the min-max formulation of the AUC maximization problem, we investigate the property of the optimal solution to a AUC maximization problem. Reconstruct a data set  $\{(\a_1, c_1), ..., (\a_i, c_i), ..., (\a_n, c_n)\}$ where $c_i=i$ if $b_i=1$ and $c_i=0$ if $b_i=-1$. Consider the problem
\begin{equation}
\begin{split}
\min_{\w} F_1(\w) := \sum\limits_{i=1}^{n} (h(\w; \a_i) - b_i)^2. 
\end{split}
\end{equation}
By Theorem 1 and Theorem 3 of \cite{allen2018convergence}, we know that  for $\w_* = \arg\min F_1(\w)$, $\|\w_*-\w_0\|_2 \leq \omega$ and 
$F_1(\w_*) = 0$ where $\omega = O(\frac{\log m}{\sqrt{m}})$ and $\w_0$ is a random initialization. Then we know that $\w_*$ is a optimal solution to problem (\ref{prob:auc_square}) as well with the optimal objective to be 0. Therefore, $\w_*$ is also a optimal solution of the Problem (\ref{auc_min-max-1}). 

Then let us consider the min-max formulation of the AUC maximization problem. 
For the $n$ input data points, the problem (\ref{auc_min-max-1}) can be written as 
\begin{equation} 
\label{auc_min-max-2} 
\min\limits_{(\w, s, r)} \max\limits_{\alpha\in \mathbb{R}} f(\w, s, r, y)=\frac{1}{n} \sum\limits_{i=1}^{n} F(\mathbf{w}, s, r, y, \z_i). 
\end{equation}

From Section 12 and Section 13 of \citep{allen2018convergence}, we know that $h(\w; \a) \leq O(\log m)$. Then by a similar analysis of Lemma 7 and Lemma 8 of \citep{guo2020communication}, it holds that $\max\{|s|, |r|, |y|\} \leq O(\log m)$.   
    
By Theorem 5 of \citep{allen2018convergence}, for $\|\w - \w_0\|\leq \omega$, with probability at least  $1-e^{-\widetilde{\Omega}(m\omega^{2/3}\Tilde{L})}$, 
\begin{equation} 
h(\w;\a) = h(\w_0;\a) + \langle\nabla h(\w_0;\a), \w-\w_0\rangle \pm \widetilde{O}(\Tilde{L}^3 \omega^{4/3} \sqrt{m}). 
\end{equation}   

Then for any fixed $y$ and for $\|\w-\w_0\|\leq \omega$, with probability at least  $1-e^{-\widetilde{\Omega}(m\omega^{2/3}\Tilde{L})}$, 
\begin{equation}   
\begin{split}      
&f(x, y) = f(\w, s, r, y) \\
&=\frac{1}{n}\sum\limits_{i=1}^{n} 
\bigg[ (1-p)(h(\w;\a_i)-s)^2 \I_{[b_i=1]} +p(h(\w;\a_i)-r)^2\I_{[b_i=-1]} \\ 
&~~~~~~~~~~~~~  
+2(1+y)(ph(\w;\a_i)\I_{[b_i=-1]}-(1-p)h(\w;\a_i)\I_{[b_i=1]})  -p(1-p)y^2 \bigg] \\
&=\frac{1}{n}\sum\limits_{i=1}^{n} 
\bigg[ (1-p)(h(\w_0;\a_i) + \langle\nabla h(\w_0; \a_i), \w-\w_0 \rangle + \widetilde{O}(\tilde{L}^3 \omega^{4/3} \sqrt{m} )-s)^2 \I_{[b_i=1]} \\
&~~~~~~~~~~~~
+p(h(\w_0; \a_i) + \langle \nabla h(\w_0; \a_i), \w-\w_0\rangle +\widetilde{O}(\Tilde{L}^3\omega^{4/3}\sqrt{m}) -r)^2 \I_{[b_i=-1]} \\
&~~~~~~~~~~~~
+2(1+y)p (h(\w_0;\a_i) + \langle\nabla h(\w_0; \a_i), \w-\w_0 \rangle + \widetilde{O}(\tilde{L}^3 \omega^{4/3} \sqrt{m} )) \I_{[b_i=-1]} \\   
&~~~~~~~~~~~~
-2(1+y)(1-p)  (h(\w_0;\a_i) + \langle\nabla h(\w_0; \a_i), \w-\w_0 \rangle + \widetilde{O}(\tilde{L}^3 \omega^{4/3} \sqrt{m} )) \I_{[b_i=1]} \\  
&~~~~~~~~~~~~ - p(1-p)y^2 
\bigg]. 
\end{split} 
\label{eq:ntk_approx}
\end{equation} 
Then, 
\begin{equation}
\begin{split}
\hat{y}(x) = &\frac{1}{(1-p)n} \sum\limits_{i=1}^{n} (h(\w_0; \a_i) + \langle \nabla h(\w_0; \a_i), \w - \w_0\rangle) \I_{[b_i=-1]} \\
&-  \frac{1}{p n} \sum\limits_{i=1}^{n} (h(\w_0; \a_i) + \langle \nabla h(\w_0; \a_i), \w - \w_0\rangle)\I_{[b_i=1]} + \widetilde{O}(\tilde{L}^3 \omega^{4/3} \sqrt{m} ).
\end{split}
\end{equation}

Thus,
\begin{equation}
\begin{split}
&P(x) = \max_{y} f(x, y) = \frac{1}{n}\sum\limits_{i=1}^{n}\bigg[
(1-p)(\langle \nabla h(\w_0;\a_i),\w-\w_0\rangle - s)^2 \I_{[b_i=1]} \\
&~~~~~~~~~~~~ +p(\langle \nabla h(\w_0;\a_i),\w-\w_0\rangle-r)^2\I_{[b_i=-1]} \\
& + \frac{1}{p(1-p)} \big(p\langle  \frac{1}{n}\sum\limits_{j=1}^{n} \nabla h(\w_0;\a_j), \w-\w_0\rangle\I_{[b_j=-1]}  - (1-p)\langle \nabla \frac{1}{n}\sum\limits_{j=1}^{n}  h(\w_0;\a_j), \w-\w_0 \rangle \I_{[b_j=1]}\big)^2  
\bigg]\\
& + \widetilde{O}(\Tilde{L}^3 \omega^{4/3} \sqrt{m}) \\
& = \|Hx - c\|^2 + \widetilde{O}(\Tilde{L}^3 \omega^{4/3} \sqrt{m}) 
\end{split} 
\end{equation} 
where $H \in \mathbb{R}^{(n+1)\times 3}$ and $c\in \mathbb{R}^{n+1}$. For $0\leq i \leq n$, the $i$-th row is $H_i = (\sqrt{1-p}\nabla h(\w_0; \a_i), -1, 0)$ if $b_i = 1$ and $H_i=(\sqrt{p}\nabla h(\w_0; \a_i), 0, -1)$ if $b_i=-1$; and the $i$-th element of $c$ is $c_i=\sqrt{1-p} \langle \nabla h(\w_0; \a_i), \w_0\rangle $.
The last row of $H$ is $(\frac{1}{\sqrt{p(1-p)}}(p\frac{1}{n} \sum\limits_{i=1}^{n}\nabla h(\w_0; \a_i)\I_{[b_i=-1]} - (1-p) \frac{1}{n} \sum\limits_{i=1}^{n}\nabla h(\w_0; \a_i)\I_{[b_i=1]}), 0, 0)$ and the last element of $c$ is $\frac{1}{\sqrt{p(1-p)}}\langle(p\frac{1}{n} \sum\limits_{i=1}^{n}\nabla h(\w_0; \a_i)\I_{[b_i=-1]} - (1-p) \frac{1}{n} \sum\limits_{i=1}^{n}\nabla h(\w_0; \a_i)\I_{[b_i=1]}), \w_0\rangle$. 

With (\ref{eq:ntk_approx}) and Lemma \ref{lem:guoyin_PL}, we know that for some $\mu>0$ and any $y$, i.e., 
\begin{equation}
\begin{split}
2\mu(P(x) - P(x_*))\leq \|\nabla P(x)\|^2 + \widetilde{O}(\Tilde{L}^3 \omega^{4/3} \sqrt{m}). 
\end{split}
\end{equation}
Since $\omega = O(\frac{\log m}{\sqrt{m}})$, by the choice of $m$, we know that 
\begin{equation}
\begin{split}
\|\nabla P(x)\|^2 \geq 2\mu(P(x) - P(x_*) - \epsilon). 
\end{split} 
\end{equation}

\end{proof}

\subsection{An Example of $x$-side-PL-Strongly-Concave Problem}
\begin{lemma}  
\label{lem:x_side_PL_case}
Let $x, y \in \mathbb{R}$, $f(x, y) = \frac{1}{2}x^2 + \sin^2 x \sin^2 y - 2y^2$. We have that $f(x, y)$ satisfies a $x$-side $\frac{1}{12}$ PL condition, is $2$-strongly concave in $y$ and has a saddle point $(0, 0)$. 
\end{lemma}
\begin{proof}
For any $x$,
\begin{equation}
    \nabla_y^2 f(x, y) = 2\sin^2 x \cos (2y) - 4 \in [-6, -2]. 
\label{eq:hessian_x_boun}
\end{equation}
Thus, $f(x, y)$ is $2$-strongly concave in $y$.

For any $y$, we know that $\hat{x}(y) = 0$, and 
\begin{equation}
|\nabla_x^2 f(x, y)| = |1 + 2\cos (2x)\sin^2 y|^2 \leq 3, 
\end{equation}
which together with (\ref{eq:hessian_x_boun}) implies that $f(x, y)$ is $6$-smooth. 

We also get
\begin{equation}
\frac{|\nabla_x f(x, y)|}{|x - \hat{x}(y)|} = \frac{|x + \sin(2x)\sin^2 y|}{|x|} \geq \frac{1}{2}, 
\end{equation}
which together with the $6$-smoothness we know that $f(x, y)$ satisfies a $x$-side $\frac{1}{12}$-PL condition by Appendix A of \citep{karimi2016linear}. 

Also it is easy to verify that 
\begin{equation}
f(0, y) \leq f(0, 0) \leq f(x, 0),
\end{equation}
therefore $(0, 0)$ is a saddle point.

\end{proof}

\subsection{Existence of a saddle point} 
\begin{proof}[Proof of Lemma \ref{lem:saddle_point}] 
Since $x_* = \arg\min_{x'} P(x')$, $\nabla P(x_*) = \nabla_x f(x_*, \hat{y}(x_*)) = 0$, where the first equality holds by the Lemma \ref{lem:primal_gradient_danskin}. Then noting the $x$-side PL condition $2\mu_x(f(x_*, \hat{y}(x_*)) - \min_{x'} f(x', \hat{y}(x_*))) \leq \|\nabla_x f(x_*, \hat{y}(x_*))\|^2 = 0$, we have
\begin{equation} 
x_* \in \hat{x}(\hat{y}(x_*)),
\end{equation} 
which is one of the optimal $x$ corresponding to the $\hat{y}(x_*)$.

Then we can conclude that $(x_*, \hat{y}(x_*))$ is a saddle point of $f(x,y)$, i.e., for any $x$ and $y\in\mathcal{Y}$,
\begin{equation}
f(x_*, y) \leq f(x_*, \hat{y}(x_*)) \leq f(x,  \hat{y}(x_*)).
\end{equation} 
\end{proof}

\section{Using Different Step Sizes for Primal and Dual Variables}

In previous sections, we used the same  step size for for primal and dual Variables in order to  simplify the analysis. Actually, step sizes for primal and dual variables can be set different. In this section, we provide an analysis and rewrite the algorithm in Algorithm \ref{alg:main_xy_diff}, \ref{alg:subroutine_xy_diff}. Similar as before, we first provide a unified theorem.

\begin{algorithm}[htbp]
\caption {Proximal Stage Stochastic Method: PES-$\mathcal A$}
\begin{algorithmic}[1]
\STATE{Initialization: $\bx_0\in \R^d, \by_0 \in \mathcal{Y}, \gamma, T_1, \eta^x_1, \eta^y_1, a$. } 
\FOR{$k=1,2, ..., K$} 
\STATE{$x_0^k = \bx_{k-1}$, $y_0^k = \by_{k-1}$;} 
\STATE{$ (\bx_k, \by_k)$ = $\mathcal A(f, x^k_0, y^k_0, \eta^x_k,\eta^y_k, T_k, \gamma)$;} 
\STATE{$\eta^x_{k+1}=\eta^x_k/a$,$\eta^y_{k+1}=\eta^y_k/a$,
$T_{k+1}=a T_k$;} 
\ENDFOR 
\STATE{\textbf{return} $(\bx_K, \by_K)$.}
\end{algorithmic}
\label{alg:main_xy_diff}
\end{algorithm}

\begin{algorithm}[htbp]
\caption {Stochastic Algorithm $\mathcal A$($f, x_0, y_0, \eta^x, \eta^y, T, \gamma, u_0, v_0)$}  
\begin{algorithmic}
\STATE{Initialization: 
$(x_0, y_0)$} 
\STATE Let $\{\xi_0, \xi_1,\ldots, \xi_T\}$
be independent random variables. 
\FOR{$t=1, ..., T$} 
\STATE {~~~~~$x_t = \Pi^{\gamma}_{x_{t-1}, x_0}(\eta^x \mathcal G(x_{t-1}; \xi_{t-1})) $;}
\STATE {~~~~~$y_t = \Pi_{y_{t-1}}(\eta^y \mathcal G(y_{t-1}; \xi_{t-1})) $;} 
\vspace*{0.1in}
\ENDFOR  
\STATE{\textbf{return} $\bar{x} = \frac{1}{T}  \sum\limits_{t=1}^{T} x_t, \bar{y} = \frac{1}{T}  \sum\limits_{t=1}^{T} y_t$.} 
\end{algorithmic} 
\label{alg:subroutine_xy_diff}  
\end{algorithm}

\begin{theorem} 
Suppose Assumption \ref{ass1} and Assumption  \ref{ass3} hold. 
Assume we have a subroutine in the $k$-th stage of Algorithm \ref{alg:main_xy_diff} that can return $\bx_k, \by_k$ such that
\begin{align}
\begin{split}
\E[\emph{\text{Gap}}_k(\bx_k, \by_k)] 
\leq \E[\frac{C_1}{\eta^x_k T_k}\|\hat{x}_k(\bar{y}_k) - x_0^k\|^2 + \frac{C_1}{\eta^y_k T_k}\|\hat{y}_k(\bar{x}_k) - y_0^k\|^2] 
+ (\eta^x_k+\eta^x_k) C_2,
\end{split} 
\label{equ:thm:appendix_unify_3_sub_xydiff}
\end{align}
where $C_1$ and $C_2$ are constants corresponding to the specific subroutine. 
Take $\gamma = 2\rho$ and
denote $\hat{L} = L + 2\rho$ and $c = 4\rho+\frac{248}{53} \hat{L} \in O(L + \rho)$.
Define $\Delta_k = P(x_0^k) - P(x_*) + \frac{8\hat{L}}{53c}\emph{\text{Gap}}_k(x_0^k, y_0^k)$  and $\epsilon_0=\emph{\text{Gap}}(\bx_0, \by_0)$.
Then we can set $\eta^x_k = \eta^x_0  \exp(-(k-1)\frac{2\mu}{c+2\mu})$,  $\eta^y_k = \eta^y_0  \exp(-(k-1)\frac{2\mu}{c+2\mu})$, $T_k = \left\lceil\frac{212 C_1}{\min\{\eta^x_0\rho, \eta^y_0\mu_y\}}\exp\left((k-1)\frac{2\mu}{c+2\mu}\right)\right\rceil$. 
After $K = \left\lceil\max\left\{\frac{c+2\mu}{2\mu}\log \frac{4\epsilon_0}{\epsilon},
\frac{c+2\mu}{2\mu} \log \frac{16(\eta^x_0+\eta^y_0) \hat{L}KC_2}{(c+2\mu)\epsilon} \right\}\right\rceil$ stages, we can have $\Delta_{K+1} \leq \epsilon$. 
The total stochastic first-order oracle call complexity is 
$\widetilde{O}\left(\max\left\{\frac{(L+\rho)C_1 \epsilon_0}{\min\{\eta^x_0\rho, \eta^y_0\mu_y\} \mu \epsilon}, (\frac{1}{\rho} + \frac{\eta^x_0}{\eta^y_0 \mu_y} + \frac{\eta^y_0}{\eta^x_0\rho} + \frac{1}{\mu_y})
\frac{(L + \rho)^2 C_2} {\mu^2 \epsilon} \right\}\right)$. 
\label{thm:appendix_unify_3_xy_diff} 
\end{theorem} 
\textbf{Remark.} As long as $O(\eta^x_0)\leq \eta^y_0 \leq O(\frac{\eta^x_0}{\mu_y})$, $\eta^x_0\geq O(\mu \mu_y)$, and $\eta^y_0\geq O(\mu)$, then $\eta^x_0$, $\eta^y_0$ can be separately tuned without harming the complexity bound. 

\begin{proof}[Proof of Theorem \ref{thm:appendix_unify_3_xy_diff}] 
Since $f(x, y)$ is $\rho$-weakly convex in $x$ for any $y$, $P(x) = \max\limits_{y'\in \mathcal{Y}} f(x, y')$ is also $\rho$-weakly convex. 
Taking $\gamma = 2\rho$, we have
\begin{align}  
\begin{split} 
P(\bx_{k-1}) &\geq P(\bx_k) + \langle \nabla P(\bx_k), \bx_{k-1} - \bx_k\rangle - \frac{\rho}{2} \|\bx_{k-1} - \bx_k\|^2 \\ 
& = P(\bx_k) + \langle \nabla P(\bx_k) + 2 \rho (\bx_k - \bx_{k-1}), \bx_{k-1} - \bx_k\rangle + \frac{3\rho}{2} \|\bx_{k-1} - \bx_k\|^2 \\ 
& \overset{(a)}{=} P(\bx_k) + \langle \nabla P_k(\bx_k), \bx_{k-1} - \bx_k \rangle + \frac{3\rho}{2} \|\bx_{k-1} - \bx_k\|^2 \\ 
& \overset{(b)}{=}  P(\bx_k) - \frac{1}{2\rho} \langle \nabla P_k(\bx_k), \nabla P_k(\bx_k) - \nabla P(\bx_k) \rangle + \frac{3}{8\rho} \|\nabla P_k(\bx_k) - \nabla P(\bx_k)\|^2 \\ 
& = P(\bx_k) - \frac{1}{8\rho} \|\nabla P_k(\bx_k)\|^2
- \frac{1}{4\rho} \langle \nabla P_k(\bx_k), \nabla P(\bx_k)\rangle + \frac{3}{8\rho} \|\nabla P(\bx_k)\|^2,
\end{split} 
\label{local:P_weakly_2} 
\end{align} 
where $(a)$ and $(b)$ hold by the definition of $P_k(x)$.

Rearranging the terms in (\ref{local:P_weakly_2}) yields
\begin{align}
\begin{split}
P(\bx_k) - P(\bx_{k-1}) &\leq \frac{1}{8\rho} \|\nabla P_k(\bx_k)\|^2 + \frac{1}{4\rho}\langle \nabla P_k(\bx_k), \nabla P(\bx_k)\rangle - \frac{3}{8\rho} \|\nabla P(\bx_k)\|^2 \\
&\overset{(a)}{\leq} \frac{1}{8\rho} \|\nabla P_k(\bx_k)\|^2
+ \frac{1}{8\rho} (\|\nabla P_k(\bx_k)\|^2 + \|\nabla P(\bx_k)\|^2) - \frac{3}{8\rho} \|P(\bx_k)\|^2 \\
& = \frac{1}{4\rho}\|\nabla P_k(\bx_k)\|^2 - \frac{1}{4\rho}\|\nabla P(\bx_k)\|^2\\
& \overset{(b)}{\leq} \frac{1}{4\rho} \|\nabla P_k(\bx_k)\|^2 - \frac{\mu}{2\rho}(P(\bx_k) - P(x_*)),
\end{split} 
\end{align}
where $(a)$ holds by using $\langle \mathbf{a}, \mathbf{b}\rangle \leq \frac{1}{2}(\|\mathbf{a}\|^2 +  \|\mathbf{b}\|^2)$, and $(b)$ holds by the $\mu$-PL property of $P(x)$.

Thus, we have 
\begin{align}
\left(4\rho+2\mu\right) (P(\bx_k) - P(x_*)) - 4\rho (P(\bx_{k-1}) - P(x_*)) \leq \|\nabla P_k(\bx_k)\|^2. 
\label{local:nemi_thm_nabla_P_k_2} 
\end{align} 

Since $\gamma = 2\rho$, $f_k(x, y)$ is $\rho$-strongly convex in $x$ and $\mu_y$ strong concave in $y$.
Apply Lemma \ref{lem:Yan1} to $f_k$, we know that 
\begin{align} 
\frac{\rho}{4} \|\hat{x}_k(\by_k) - x_0^k\|^2 + \frac{\mu_y}{4} \|\hat{y}_k(\bx_k) - y_0^k\|^2 \leq {\text{Gap}}_k(x_0^k, y_0^k) + {\text{Gap}}_k(\bx_k, \by_k). 
\end{align}

By the setting of $\eta^x_k = \eta^x_0  \exp\left(-(k-1)\frac{2\mu}{c+2\mu}\right)$, $\eta^y_k = \eta^y_0  \exp\left(-(k-1)\frac{2\mu}{c+2\mu}\right)$, and 
$T_k = \left\lceil\frac{212 C_1}{\min\{\eta^x_0\rho, \eta^y_0\mu_y\}} \exp  \left((k-1)\frac{2\mu}{c+2\mu}\right)\right\rceil$, we note that $\frac{C_1}{\eta^x_k T_k} \leq \frac{\rho}{212}$ and $\frac{C_1}{\eta^y_k T_k} \leq \frac{\mu_y}{212}$. 
Applying (\ref{equ:thm:appendix_unify_3_sub_xydiff}), we have  
\begin{align}  
\begin{split} 
&\E[{\text{Gap}}_k(\bx_k, \by_k)] 
\leq (\eta^x_k + \eta^y_k) C_2 
+ \frac{1}{53} \E\left[\frac{\rho}{4}\|\hat{x}_k(\by_k) - x_0^k\|^2 + \frac{\mu_y}{4}\|\hat{y}_k(\bx_k) - y_0^k\|^2 \right]
\\  
& \leq (\eta^x_k+\eta^y_k) C_2 + \frac{1}{53} \E\left[{\text{Gap}}_k(x_0^k, y_0^k) + {\text{Gap}}_k(\bx_k, \by_k)\right]. 
\end{split} 
\end{align} 

Since $P(x)$ is $L$-smooth and $\gamma = 2\rho$, then $P_k(x)$ is $\hat{L} = (L+2\rho)$-smooth. 
According to Theorem 2.1.5 of \citep{DBLP:books/sp/Nesterov04}, we have 
\begin{align}
\begin{split} 
& \E[\|\nabla P_k(\bx_k)\|^2] \leq 2\hat{L}\E(P_k(\bx_k) - \min\limits_{x\in \mathbb{R}^{d}} P_k(x)) \leq 2\hat{L}\E[{\text{Gap}}_k(\bx_k, \by_k)] \\
& = 2\hat{L}\E[4{\text{Gap}}_k(\bx_k, \by_k) - 3{\text{Gap}}_k(\bx_k, \by_k)] \\ 
&\leq 2\hat{L} \E \left[4\left((\eta^x_k+\eta^y_k) C_2+  \frac{1}{53}\left({\text{Gap}}_k(x_0^k, y_0^k) + {\text{Gap}}_k(\bx_k, \by_k)\right)\right) - 3{\text{Gap}}_k(\bx_k,\by_k)\right] \\
& = 2\hat{L} \E \left[4(\eta^x_k+\eta^y_k) C_2 +   \frac{4}{53}{\text{Gap}}_k(x_0^k, y_0^k) - \frac{155}{53}{\text{Gap}}_k(\bx_k, \by_k)\right].
\end{split}  
\label{local:nemi_thm_P_smooth_2}
\end{align}

Applying Lemma \ref{lem:Yan5} to  (\ref{local:nemi_thm_P_smooth_2}), we have
\begin{align*}
\begin{split} 
& \E[\|\nabla P_k(\bx_k)\|^2] \leq 2\hat{L} \E \bigg[4(\eta^x_k+\eta^y_k) C_k + \frac{4}{53}{\text{Gap}}_k(x_0^k, y_0^k) \\  
&~~~~~~~~~~~~~~~~~~~~~~~~~~~~~~~~~~~~~~~~  
- \frac{155}{53} \left(\frac{3}{50} {\text{Gap}}_{k+1}(x_0^{k+1}, y_0^{k+1}) + \frac{4}{5} (P(x_0^{k+1}) - P(x_0^k))\right) \bigg] \\
& = 2\hat{L}\E \bigg[4(\eta^x_k+\eta^y_k) C_2 \!+ \! \frac{4}{53}{\text{Gap}}_k(x_0^k, y_0^k) \!-\! \frac{93}{530}{\text{Gap}}_{k+1}(x_0^{k+1}, y_0^{k+1}) \!-\! 
\frac{124}{53} (P(x_0^{k+1}) - P(x_0^k)) \bigg]. 
\end{split} 
\end{align*}

Combining this with  (\ref{local:nemi_thm_nabla_P_k_2}), rearranging the terms, and defining a constant $c = 4\rho + \frac{248}{53}\hat{L} \in O(L+\rho)$, we get
\begin{align} 
\begin{split}
&\left(c + 2\mu\right)\E [P(x_0^{k+1}) - P(x_*)] + \frac{93}{265}\hat{L} \E[{\text{Gap}}_{k+1}(x_0^{k+1}, y_0^{k+1})] \\ 
&\leq \left(4\rho + \frac{248}{53} \hat{L}\right) \E[P(x_0^{k}) - P(x_*)] 
+ \frac{8\hat{L}}{53} \E[{\text{Gap}}_k(x_0^k, y_0^k)] 
+ 8 (\eta^x_k+\eta^y_k) \hat{L}C_2  \\ 
& \leq c \E\left[P(x_0^k) - P(x_*) + \frac{8\hat{L}}{53c} {\text{Gap}}_k(x_0^k, y_0^k)\right] + 8 (\eta^x_k+\eta^y_k) \hat{L}C_2.
\end{split}  
\end{align} 

Using the fact that $\hat{L} \geq \mu$,
\begin{align}
\begin{split}
(c+2\mu) \frac{8\hat{L}}{53c} = \left(4\rho + \frac{248}{53}\hat{L} + 2\mu\right)\frac{8\hat{L}}{53(4\rho + \frac{248}{53}\hat{L})} \leq \frac{8\hat{L}}{53} + \frac{16\mu \hat{L}}{248\hat{L}} \leq \frac{93}{265} \hat{L}. 
\end{split}
\end{align}

Then, we have
\begin{align}
\begin{split} 
&(c+2\mu)\E \left[P(x_0^{k+1}) - P(x_*) + \frac{8\hat{L}}{53c}{\text{Gap}}_{k+1}(x_0^{k+1}, y_0^{k+1})\right] \\ 
&\leq c \E \left[P(x_0^{k}) - P(x_*) 
+  \frac{8\hat{L}}{53c}{\text{Gap}}_{k}(x_0^{k},  y_0^{k})\right] 
+ 8 (\eta^x_k+\eta^y_k) \hat{L}C_2. 
\end{split}
\end{align}

Defining $\Delta_k = P(x_0^{k}) - P(x_*) +  \frac{8\hat{L}}{53c}{\text{Gap}}_{k}(x_0^{k}, y_0^{k})$, then
\begin{align}
\begin{split}
&\E[\Delta_{k+1}] \leq \frac{c}{c+2\mu} \E[\Delta_{k}] +  \frac{8(\eta^x_k+\eta^y_k) \hat{L} C_2}{c+2\mu}.
\end{split}
\end{align}

Using this inequality recursively, it yields
\begin{align} 
\begin{split}
& E[\Delta_{K+1}] \leq \left(\frac{c}{c+2\mu}\right)^K E[\Delta_1]
+ \frac{8 \hat{L} C_2}{c+2\mu} \sum\limits_{k=1}^{K} \left((\eta^x_k+\eta^y_k) \left(\frac{c}{c+2\mu}\right)^{K+1-k} \right).
\end{split}
\end{align} 

By definition,
\begin{align*} 
\begin{split}
\Delta_1 &= P(x_0^1) - P(x_*) + \frac{8\hat{L}}{53c}\text{Gap}_1(x_0^1, y_0^1) \\
& = P(\bx_0) - P(x_*) + \left( f(\bx_0, \hat{y}_1(\bx_0)) +  \frac{\gamma}{2}\|\bx_0 - \bx_0\|^2 -  f(\hat{x}_1(\by_0), \by_0) - \frac{\gamma}{2}\|\hat{x}_1(\by_0) - \bx_0\|^2 \right) \\ 
& \leq \epsilon_0 + f(\bx_0, \hat{y}_1(\bx_0)) - f(\hat{x}(\by_0), \by_0) \leq 2\epsilon_0.
\end{split} 
\end{align*}
Using inequality $1-x \leq \exp(-x)$,
we have
\begin{align*}
\begin{split} 
&\E[\Delta_{K+1}] \leq \exp\left(\frac{-2\mu K}{c+2\mu}\right)\E[\Delta_1] + \frac{8 (\eta^x_0 + \eta^y_0) \hat{L} C_2} {c+2\mu} 
\sum\limits_{k=1}^{K}\exp\left(-\frac{2\mu K}{c+2\mu}\right) \\
&\leq 2\epsilon_0 \exp\left(\frac{-2\mu K}{c+2\mu}\right)
+ \frac{8 (\eta^x_0+\eta^y_0) \hat{L} C_2 }{c+2\mu}  
K\exp\left(-\frac{2\mu K}{c+2\mu}\right). 
\end{split}  
\end{align*}  

To make this less than $\epsilon$, it suffices to make
\begin{align*}
\begin{split}
& 2\epsilon_0 \exp\left(\frac{-2\mu K}{c+2\mu}\right) \leq \frac{\epsilon}{2},\\
&\frac{8 (\eta^x_0+\eta^y_0) \hat{L} C_2 }{c+2\mu} 
K\exp\left(-\frac{2\mu K}{c+2\mu}\right) \leq \frac{\epsilon}{2}.
\end{split} 
\end{align*}

Let $K$ be the smallest value such that $\exp\left(\frac{-2\mu K}{c+2\mu}\right) \leq \min \{ \frac{\epsilon}{4\epsilon_0}, 
\frac{(c+2\mu)\epsilon}{16 (\eta^x_0+\eta^y_0) \hat{L} K C_2}\}$. 
We can set $K = \left\lceil\max\bigg\{\frac{c+2\mu}{2\mu}\log \frac{4\epsilon_0}{\epsilon}, 
\frac{c+2\mu}{2\mu}\log \frac{16 (\eta^x_0+\eta^y_0) \hat{L} K C_2} {(c+2\mu)\epsilon}  \bigg\}\right\rceil$. 
Then, the total stochastic first-order oracle call complexity is 
\begin{align*} 
\sum\limits_{k=1}^{K}T_k &\leq O\left( \frac{212 C_1}{\min\{\eta^x_0,\eta^y_0\}\min\{\rho,\mu_y\}}
\sum\limits_{k=1}^{K}\exp\left((k-1)\frac{2\mu}{c+2\mu}\right) \right)\\    
& \leq O\bigg(\frac{212C_1}{\min\{\eta^x_0,\eta^y_0\}\min\{\rho,\mu_y\}}  \frac{\exp(K\frac{2\mu}{c+2\mu}) -  1}{\exp(\frac{2\mu}{c+2\mu})-1}  \bigg)\\ 
& \overset{(a)}{\leq} \widetilde{O}   \left(\frac{cC_1}{\min\{\eta^x_0\rho,\eta^y_0\mu_y\}\mu}  \max\left\{\frac{\epsilon_0}{\epsilon}, 
\frac{(\eta^x_0+\eta^y_0) \hat{L} K C_2} 
{(c+2\mu)\epsilon} \right\}\right) \\
& \leq  \widetilde{O}\left(\max\left\{\frac{(L+\rho)C_1 \epsilon_0}{\min\{\eta^x_0\rho,\eta^y_0\mu_y\} \mu \epsilon}, 
\frac{(\eta^x_0 + \eta^y_0)(L + \rho)^2 C_2} {\min\{\eta^x_0\rho, \eta^y_0\mu_y\}\mu^2 \epsilon}\right\} \right) \\
&\leq \widetilde{O}\left(\max\left\{\frac{(L+\rho)C_1 \epsilon_0}{\min\{\eta^x_0\rho,\eta^y_0\mu_y\} \mu \epsilon}, 
(\frac{1}{\rho} + \frac{\eta^x_0}{\eta^y_0 \mu_y} + \frac{\eta^y_0}{\eta^x_0\rho} + \frac{1}{\mu_y})\frac{(L + \rho)^2 C_2} {\mu^2 \epsilon}\right\} \right), 
\end{align*} 
where $(a)$ uses the setting of $K$ and $\exp(x) - 1\geq x$, and $\widetilde{O}$ suppresses logarithmic factors. 
\end{proof}

\begin{theorem}
Consider Algorithm \ref{alg:main_xy_diff} that uses Algorithm \ref{alg:subroutine_xy_diff} as a subroutine.
Suppose Assumption \ref{ass1}, \ref{ass5}, \ref{ass3} hold.
Assume $\E\|\nabla_x f(x, y; \xi)\|^2\leq B^2$ and $\E\|\nabla_y f(x, y; \xi)\|^2\leq B^2$.
Take $\gamma = 2\rho$ and
denote $\hat{L} = L + 2\rho$ and $c = 4\rho+\frac{248}{53} \hat{L} \in O(L + \rho)$.
Define $\Delta_k = P(x_0^k) - P(x_*) + \frac{8\hat{L}}{53c}\emph{\text{Gap}}_k(x_0^k, y_0^k)$  and $\epsilon_0=\emph{\text{Gap}}(\bx_0, \by_0)$.
Then we can set $\eta^x_k = \eta^x_0  \exp(-(k-1)\frac{2\mu}{c+2\mu})\leq \frac{1}{\rho}$,$\eta^y_k = \eta^y_0  \exp(-(k-1)\frac{2\mu}{c+2\mu})$, $T_k = \left\lceil\frac{212 C_1}{ \min\{\eta^x_0\rho, \eta^y_0\mu_y\}}\exp\left((k-1)\frac{2\mu}{c+2\mu}\right)\right\rceil$. 
After $K =\left\lceil \max\left\{\frac{c+2\mu}{2\mu}\log \frac{4\epsilon_0}{\epsilon},
\frac{c+2\mu}{2\mu} \log \frac{80 \eta_0 \hat{L}KB_2}{(c+2\mu)\epsilon} \right\}\right\rceil$ stages, we can have $\Delta_{K+1} \leq \epsilon$. 
The total stochastic first-order oracle call complexity is 
$\widetilde{O}\left(
\frac{(L + \rho)^2 B^2} {\mu^2 \min\{\rho,\mu_y\} \epsilon}\right)$. 
\label{thm:gda_primal_diff_eta} 
\end{theorem}

\begin{proof}[Proof of Theorem \ref{thm:gda_primal_diff_eta}] 
Using Lemma \ref{lem:Yan4}, we can set $\gamma = 2\rho$ and 
$\eta_0 = \frac{1}{\rho}$. 
Then it follows that 
\begin{equation*} 
\begin{split}
E[\text{Gap}_k(\bx_k, \by_k)] \leq 
\frac{2}{\eta^x_k T_k} E[\|\hat{x}_k(\bar{y}_k) - x_0^k\|^2 + \frac{2}{\eta^y_k T_k} E[\|\hat{y}_k(\bar{x}_k) - y_0^k\|^2 + \frac{5(\eta^x_k+\eta^y_k) B^2}{2}.
\end{split} 
\end{equation*} 
We plug in $C_1 = 2$ and $C_2 = 5B^2/2 $ to Theorem \ref{thm:appendix_unify_3_xy_diff}
and the conclusion follows.
\end{proof} 
\end{document}